\providecommand{\tabularnewline}{\\}
\providecommand{\algorithmname}{Algorithm}
\setlist[itemize]{leftmargin=1em}
\setlist[enumerate]{leftmargin=1em}
\newcommand{\bx}{\bm{x}}
\newcommand{\cS}{{\mathcal{S}}}
\newcommand{\EE}{\mathbb{E}}
\newcommand{\PP}{\mathbb{P}}
\newcommand{\RR}{\mathbb{R}}
\DeclareMathOperator{\ind}{\mathds{1}}  
\definecolor{yxc}{RGB}{255,0,0}
\definecolor{yjc}{RGB}{125,0,0}
\definecolor{cm}{RGB}{0,0,200}
\definecolor{kzw}{RGB}{0,150,0}
\begin{document}
\theoremstyle{plain} \newtheorem{lemma}{\textbf{Lemma}} \newtheorem{prop}{\textbf{Proposition}}\newtheorem{theorem}{\textbf{Theorem}}\setcounter{theorem}{0}
\newtheorem{corollary}{\textbf{Corollary}} \newtheorem{assumption}{\textbf{Assumption}}
\newtheorem{example}{\textbf{Example}} \newtheorem{definition}{\textbf{Definition}}
\newtheorem{fact}{\textbf{Fact}} \theoremstyle{definition}

\theoremstyle{remark}\newtheorem{remark}{\textbf{Remark}}\newtheorem{condition}{Condition}\newtheorem{claim}{Claim}

\title{Gradient Descent with Random Initialization: \\
 Fast Global Convergence for Nonconvex Phase Retrieval\footnotetext{Author names are sorted alphabetically.}}
\author
{
	Yuxin Chen\thanks{Department of Electrical Engineering, Princeton University, Princeton, NJ 08544, USA; Email:
		\texttt{yuxin.chen@princeton.edu}.}
	\and Yuejie Chi\thanks{Department of Electrical and Computer Engineering, Carnegie Mellon University, Pittsburgh, PA 15213, USA; Email:
		\texttt{yuejiechi@cmu.edu}. }
	\and Jianqing Fan\thanks{Department of Operations Research and Financial Engineering, Princeton University, Princeton, NJ 08544, USA; Email:
		\texttt{\{jqfan, congm\}@princeton.edu}.}
	\and Cong Ma\footnotemark[3]
}

\date{March 2018; \quad Revised April 2019}

\maketitle

\begin{abstract}
This paper considers the problem of solving systems of quadratic equations,
namely, recovering an object of interest $\bm{x}^{\natural}\in\mathbb{R}^{n}$
from $m$ quadratic equations\,/\,samples $y_{i}=(\bm{a}_{i}^{\top}\bm{x}^{\natural})^{2}$,
$1\leq i\leq m$. This problem, also dubbed as phase retrieval, spans
multiple domains including physical sciences and machine learning.

We investigate the efficacy of gradient descent (or Wirtinger flow)
designed for the nonconvex least squares problem. We prove that
under Gaussian designs, gradient descent --- when randomly initialized
--- yields an $\epsilon$-accurate solution in $O\big(\log n+\log(1/\epsilon)\big)$
iterations given nearly minimal samples, thus achieving near-optimal
computational and sample complexities at once. This provides the first
global convergence guarantee concerning vanilla gradient descent for
phase retrieval, without the need of (i) carefully-designed initialization,
(ii) sample splitting, or (iii) sophisticated saddle-point escaping
schemes. All of these are achieved by exploiting the statistical models
in analyzing optimization algorithms, via a leave-one-out approach
that enables the decoupling of certain statistical dependency between
the gradient descent iterates and the data.

\end{abstract}

\tableofcontents

\section{Introduction\label{sec:Introduction}}

Suppose we are interested in learning an unknown object $\bm{x}^{\natural}\in\mathbb{R}^{n}$,
but only have access to a few quadratic equations of the form
\begin{equation}
y_{i}=\left(\bm{a}_{i}^{\top}\bm{x}^{\natural}\right)^{2},\qquad1\leq i\leq m,\label{eq:quadratic-systems}
\end{equation}
where $y_{i}$ is the sample we collect and $\bm{a}_{i}$ is the design
vector known \emph{a priori}. Is it feasible to reconstruct $\bm{x}^{\natural}$
in an accurate and efficient manner?

The problem of solving systems of quadratic equations (\ref{eq:quadratic-systems})
is of fundamental importance and finds applications in numerous contexts.
Perhaps one of the best-known applications is the so-called \emph{phase
retrieval} problem arising in physical sciences \cite{candes2013phase,shechtman2015phase}.
In X-ray crystallography, due to the ultra-high frequency of the X-rays,
the optical sensors and detectors are incapable of recording the phases
of the diffractive waves; rather, only intensity measurements are
collected. The phase retrieval problem comes down to reconstructing
the specimen of interest given intensity-only measurements. If one
thinks of $\bm{x}^{\natural}$ as the specimen under study and uses
$\{y_{i}\}$ to represent the intensity measurements, then phase retrieval
is precisely about inverting the quadratic system (\ref{eq:quadratic-systems}).

Moving beyond physical sciences, the above problem also spans various
machine learning applications. One example is \emph{mixed linear regression},
where one wishes to estimate two unknown vectors $\bm{\beta}_{1}$
and $\bm{\beta}_{2}$ from unlabeled linear measurements \cite{chen2014convex}.
The acquired data $\{\bm{a}_{i},b_{i}\}_{1\leq i\leq m}$ take the
form of either $b_{i}\approx\bm{a}_{i}^{\top}\bm{\beta}_{1}$ or $b_{i}\approx\bm{a}_{i}^{\top}\bm{\beta}_{2}$,
without knowing which of the two vectors generates the data. In a
simple symmetric case with $\bm{\beta}_{1}=-\bm{\beta}_{2}=\bm{x}^{\natural}$
(so that $b_{i}\approx\pm\bm{a}_{i}^{\top}\bm{x}^{\natural}$), the
squared measurements $y_{i}=b_{i}^{2}\approx(\bm{a}_{i}^{\top}\bm{x}^{\natural})^{2}$
become the sufficient statistics, and hence mixed linear regression
can be converted to learning $\bm{x}^{\natural}$ from $\{\bm{a}_{i},y_{i}\}$.
Furthermore, the quadratic measurement model in (\ref{eq:quadratic-systems}) allows to represent a single neuron associated with a quadratic activation function, where $\{\bm{a}_{i},y_{i}\}$
are the data and $\bm{x}^{\natural}$ encodes the parameters to be learned. As described
in \cite{soltanolkotabi2017theoretical,li2017algorithmic}, \emph{learning
neural nets with quadratic activations} involves solving systems
of quadratic equations.

\subsection{Nonconvex optimization via gradient descent}

A natural strategy for inverting the system of quadratic equations
(\ref{eq:quadratic-systems}) is to solve the following nonconvex least squares estimation problem
\begin{equation}
\text{minimize}_{\bm{x}\in\mathbb{R}^{n}}\quad f(\bm{x}):=\frac{1}{4m}\sum_{i=1}^{m}\left[\left(\bm{a}_{i}^{\top}\bm{x}\right)^{2}-y_{i}\right]^{2}.\label{eq:PR-loss}
\end{equation}
Under Gaussian designs where $\bm{a}_{i}\overset{\text{i.i.d.}}{\sim}\mathcal{N}(\bm{0},\bm{I}_{n})$,
the solution to (\ref{eq:PR-loss}) is known to be exact \textemdash{}
up to some global sign \textemdash{} with high probability, as soon as the number $m$ of
equations (samples) exceeds the order of the number $n$ of unknowns \cite{bandeira2014saving}.
However, the loss function in (\ref{eq:PR-loss}) is highly nonconvex,
thus resulting in severe computational challenges. With this issue
in mind, can we still hope to find the global minimizer of (\ref{eq:PR-loss})
via low-complexity algorithms which, ideally, run in time proportional
to that taken to read the data?

Fortunately, in spite of nonconvexity, a variety of optimization-based
methods are shown to be effective in the presence of proper statistical
models. Arguably, one of the simplest algorithms for solving (\ref{eq:PR-loss})
is vanilla gradient descent (GD), which attempts recovery via the
update rule
\begin{equation}
\bm{x}^{t+1}=\bm{x}^{t}-\eta_{t}\nabla f\left(\bm{x}^{t}\right),\qquad t=0,1,\cdots\label{eq:gradient_update-WF}
\end{equation}
with $\eta_{t}$ being the stepsize$\,$/$\,$learning rate. The above
iterative procedure is also dubbed \emph{Wirtinger flow} for phase
retrieval, which can accommodate the complex-valued case as well \cite{candes2014wirtinger}.
This simple algorithm is remarkably efficient under Gaussian designs:
in conjunction with carefully-designed initialization and stepsize
rules, GD provably converges to the truth $\bm{x}^{\natural}$ at
a linear rate\footnote{An iterative algorithm is said to enjoy linear convergence if the
iterates $\{\bm{x}^{t}\}$ converge geometrically fast to the minimizer~$\bm{x}^{\natural}$. }, provided that the ratio $m/n$ of the number of equations to the
number of unknowns exceeds some logarithmic factor \cite{candes2014wirtinger,soltanolkotabi2014algorithms,ma2017implicit}.

One crucial element in prior convergence analysis is initialization.
In order to guarantee linear convergence, prior works typically recommend
spectral initialization or its variants \cite{candes2014wirtinger,ChenCandes15solving,wang2017solving,zhang2017reshaped,ma2017implicit,lu2017phase,mondelli2017fundamental}.
Specifically, the spectral method forms an initial estimate $\bm{x}^{0}$
using the (properly scaled) leading eigenvector of a certain data
matrix. Two important features are worth emphasizing:
\begin{itemize}
\item $\bm{x}^{0}$ falls within a local $\ell_{2}$-ball surrounding $\bm{x}^{\natural}$ with a reasonably small radius,
where $f(\cdot)$ enjoys strong convexity;
\item $\bm{x}^{0}$ is incoherent with all the design vectors $\{\bm{a}_{i}\}$ --- in the sense that $|\bm{a}_{i}^{\top}\bm{x}^{0}|$ is reasonably small
for all $1\leq i\leq m$ --- and hence $\bm{x}^{0}$ falls within a region where $f(\cdot)$
enjoys desired smoothness conditions.
\end{itemize}
These two properties taken collectively allow gradient descent to
converge rapidly from the very beginning.

\subsection{Random initialization?\label{subsec:Random-initialization}}

The enormous success of spectral initialization gives rise to a curious
question: is carefully-designed initialization necessary for achieving
fast convergence? Obviously, vanilla GD cannot start from arbitrary
points, since it may get trapped in undesirable stationary points
(e.g.~saddle points). However, is there any \emph{simpler} initialization
approach that avoids such stationary points and works
equally well as spectral initialization?

A strategy that practitioners often like to employ is to initialize
GD randomly. The advantage is clear: compared with spectral methods,
random initialization is model-agnostic and is usually more robust
vis-a-vis model mismatch. Despite its wide use in practice, however,
GD with random initialization is poorly understood in theory. One
way to study this method is through a geometric lens \cite{sun2016geometric}:
under Gaussian designs, the loss function $f(\cdot)$ (cf.~(\ref{eq:PR-loss}))
does not have any spurious local minima as long as the sample size
$m$ is on the order of $n\log^{3}n$. Moreover, all saddle points
are strict \cite{ge2015escaping}, meaning that the associated Hessian
matrices have at least one negative eigenvalue if they are not local
minima. Armed with these two conditions, the theory of Lee et al.~\cite{lee2016gradient}
implies that vanilla GD converges {\em almost surely} to the truth. However,
the convergence rate remains unsettled.  In fact, we are not aware
of any theory that guarantees polynomial-time convergence of vanilla
GD for phase retrieval in the absence of carefully-designed initialization.

Motivated by this, we aim to pursue a formal understanding about the
convergence properties of GD with random initialization. Before embarking
on theoretical analyses, we first assess its practical efficiency
through numerical experiments. Generate the true object $\bm{x}^{\natural}$
and the initial guess $\bm{x}^{0}$ randomly as
\[
\bm{x}^{\natural}\sim\mathcal{N}(\bm{0},n^{-1}\bm{I}_{n})\qquad\text{and}\qquad\bm{x}^{0}\sim\mathcal{N}(\bm{0},n^{-1}\bm{I}_{n}).
\]
We vary the number $n$ of unknowns (i.e.~$n=100,200,500,800,1000$),
set $m=10n$, and take a constant stepsize $\eta_{t}\equiv0.1$. Here the measurement vectors are generated from Gaussian distributions, i.e.~$\bm{a}_{i}\overset{\mathrm{i.i.d.}}{\sim} \mathcal{N}(\bm{0},\bm{I}_{n})$ for $1\leq i \leq m$. The
relative $\ell_{2}$ errors $\text{dist}(\bm{x}^{t},\bm{x}^{\natural})/\|\bm{x}^{\natural}\|_{2}$
of the GD iterates in a random trial are plotted in Figure~\ref{fig:Relative-error-GD-Gaussian},
where
\begin{equation}
\mathrm{dist}(\bm{x}^{t},\bm{x}^{\natural}):=\min\big\{\|\bm{x}^{t}-\bm{x}^{\natural}\|_{2},\|\bm{x}^{t}+\bm{x}^{\natural}\|_{2}\big\}\label{eq:defn-dist}
\end{equation}
represents the $\ell_{2}$ distance between $\bm{x}^{t}$ and $\bm{x}^{\natural}$
modulo the unrecoverable global sign.

In all experiments carried out in Figure~\ref{fig:Relative-error-GD-Gaussian},
we observe two stages for GD: (1) Stage 1: the relative error
of $\bm{x}^{t}$ stays nearly flat; (2) Stage 2: the relative error
of $\bm{x}^{t}$ experiences geometric decay. Interestingly, Stage
1 lasts only for a few tens of iterations. These numerical findings taken
together reveal appealing computational efficiency of GD in the presence
of random initialization \textemdash{} it attains 5-digit accuracy
within about 200 iterations!

\begin{figure}
\centering

\includegraphics[width=0.4\textwidth]{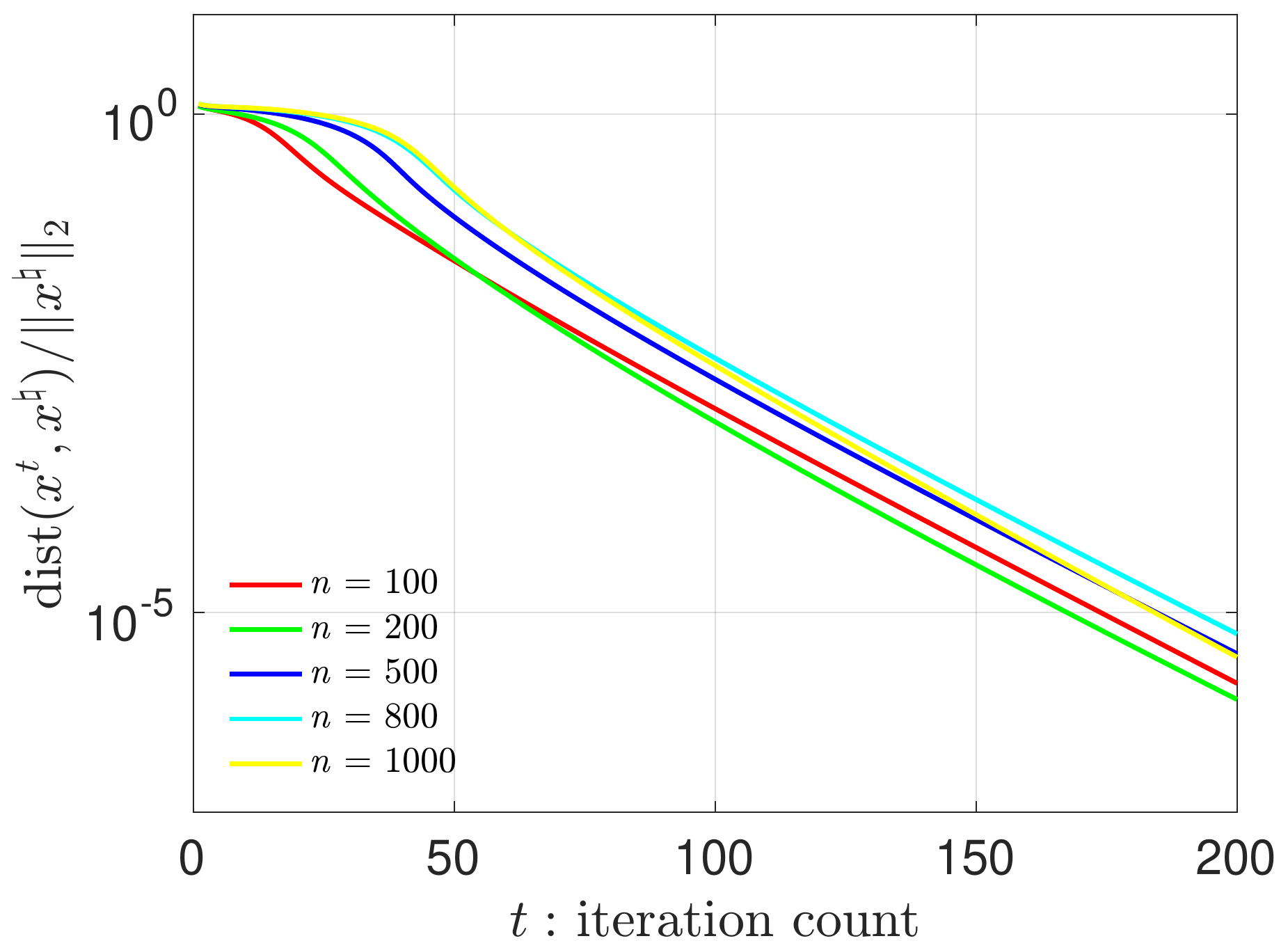}\caption{The relative $\ell_{2}$ error vs.~iteration count for GD with random
initialization, plotted semi-logarithmically. The results are shown for $n=100,200,500,800,1000$
with $m=10n$ and $\eta_{t}\equiv0.1$. \label{fig:Relative-error-GD-Gaussian}}
\end{figure}

To further illustrate this point, we take a closer inspection of the
signal component $\langle\bm{x}^{t},\bm{x}^{\natural}\rangle\bm{x}^{\natural}$
and the orthogonal component $\bm{x}^{t}-\langle\bm{x}^{t},\bm{x}^{\natural}\rangle\bm{x}^{\natural}$,
where we normalize $\|\bm{x}^{\natural}\|_{2}=1$ for simplicity.
Denote by $\|\bm{x}_{\perp}^{t}\|_{2}$ the $\ell_{2}$ norm of the
orthogonal component. We highlight two important and somewhat surprising
observations that allude to why random initialization works.
\begin{itemize}
\item \emph{The strength ratio of the signal to the
orthogonal components grows exponentially.} The ratio, $|\langle\bm{x}^{t},\bm{x}^{\natural}\rangle|\,/\,\|\bm{x}_{\perp}^{t}\|_{2}$,
grows exponentially fast throughout the execution of the algorithm,
as demonstrated in Figure~\ref{fig:SNR-GD-Gaussian}(a). This metric $|\langle\bm{x}^{t},\bm{x}^{\natural}\rangle|\,/\,\|\bm{x}_{\perp}^{t}\|_{2}$ in some sense captures the signal-to-noise ratio of the
running iterates.
\item \emph{Exponential growth of the signal strength in Stage 1}. While
the $\ell_{2}$ estimation error of $\bm{x}^{t}$ may not drop significantly
during Stage 1, the size $|\langle\bm{x}^{t},\bm{x}^{\natural}\rangle|$
of the signal component increases exponentially fast and becomes the
dominant component within several tens of iterations, as demonstrated in Figure~\ref{fig:SNR-GD-Gaussian}(b). This helps explain why Stage 1 lasts only for a short duration.
\end{itemize}
The central question then amounts to whether one can develop a mathematical
theory to interpret such intriguing numerical performance. In particular,
how many iterations does Stage 1 encompass, and how fast can the algorithm
converge in Stage 2?

\begin{figure}
\centering

\begin{tabular}{cc}
\includegraphics[width=0.4\textwidth]{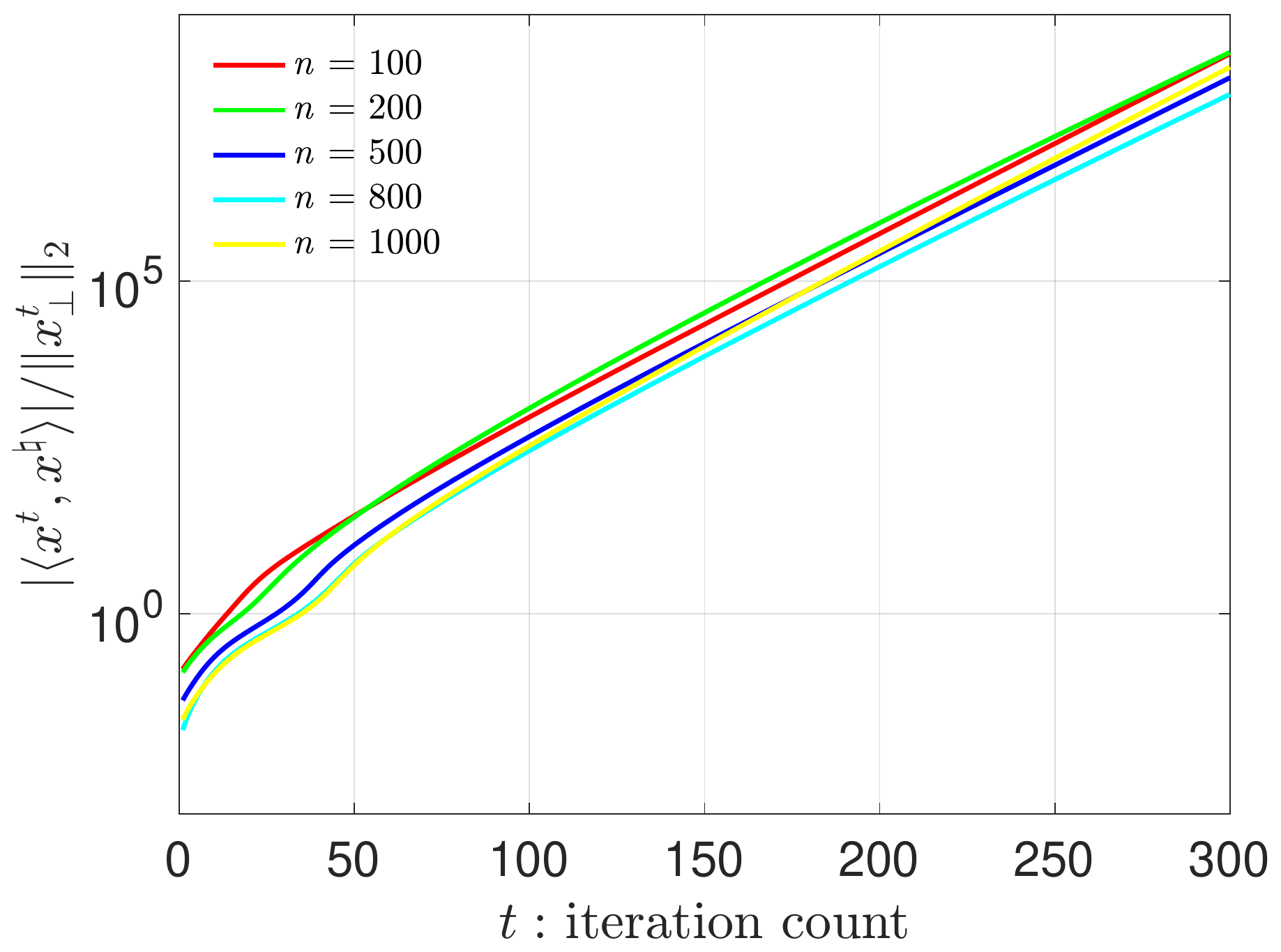} $\quad$ & $\quad$\includegraphics[width=0.4\textwidth]{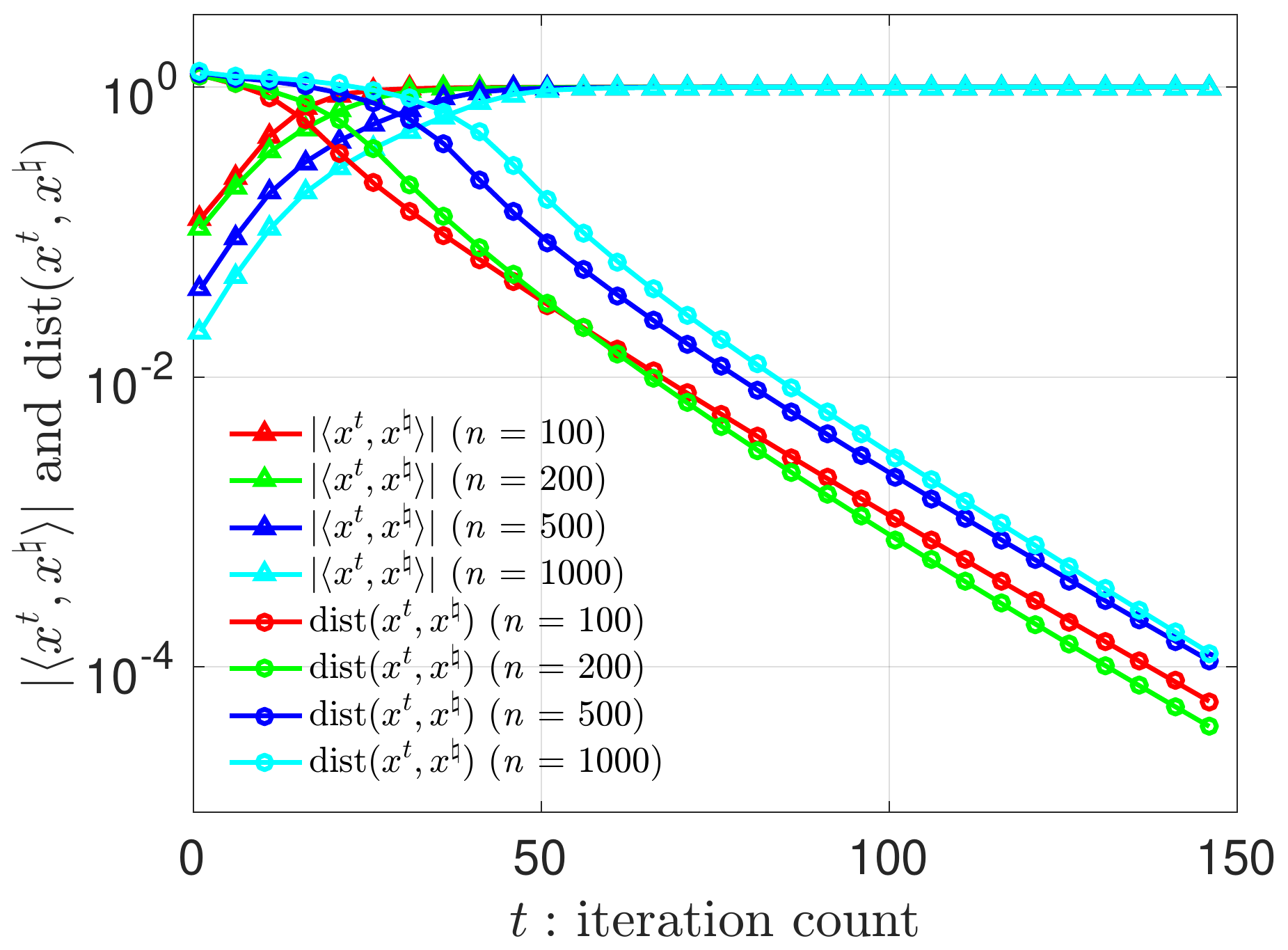}\tabularnewline
(a)  & (b)\tabularnewline
\end{tabular}\caption{(a) The ratio $|\langle\bm{x}^{t},\bm{x}^{\natural}\rangle|\,/\,\|\bm{x}_{\perp}^{t}\|_{2}$, and 
(b) the size $|\langle\bm{x}^{t},\bm{x}^{\natural}\rangle|$ of
the signal component and the $\ell_{2}$ error vs.~iteration count, both plotted on semilogarithmic scales.
The results are shown for $n=100,200,500,800,1000$ with $m=10n$,
$\eta_{t}\equiv0.1$, and $\|\bm{x}^{\natural}\|_{2}=1$. \label{fig:SNR-GD-Gaussian}}
\end{figure}

\subsection{Main findings}

The objective of the current paper is to demystify the computational
efficiency of GD with random initialization, thus bridging the gap
between theory and practice. Assuming a tractable random design model
in which $\bm{a}_{i}$'s follow Gaussian distributions, our main findings
are summarized in the following theorem. Here and throughout, the
notation $f(n)\lesssim g(n)$ or $f(n)=O(g(n))$ (resp.~$f(n)\gtrsim g(n)$,
$f(n)\asymp g(n)$) means that there exist constants $c_{1},c_{2}>0$
such that $f(n)\leq c_{1}g(n)$ (resp.~$f(n)\geq c_{2}g(n)$, $c_{1}g(n)\leq f(n)\leq c_{2}g(n)$).

\begin{theorem}\label{thm:intro}Fix $\bm{x}^{\natural}\in\mathbb{R}^{n}$
with $\|\bm{x}^{\natural}\|_{2}=1$. Suppose that $\bm{a}_{i}\overset{\mathrm{i.i.d.}}{\sim}\mathcal{N}(\bm{0},\bm{I}_{n})$ for $1\leq i \leq m$,
$\bm{x}^{0}\sim\mathcal{N}(\bm{0},n^{-1}\bm{I}_{n})$, and $\eta_{t}\equiv\eta=c/\|\bm{x}^{\natural}\|_{2}^{2}$
for some sufficiently small constant $c>0$. Then with probability
approaching one, there exist some sufficiently small constant $0<\gamma<1$
and $T_{\gamma}\lesssim\log n$ such that the GD iterates (\ref{eq:gradient_update-WF})
obey
\[
\mathrm{dist}\big(\bm{x}^{t},\bm{x}^{\natural}\big)\leq\gamma(1-\rho)^{t-T_{\gamma}},\qquad\forall\,t\geq T_{\gamma}
\]
for some absolute constant $0<\rho<1$, provided that the sample size $m\gtrsim n\,\mathrm{poly}\log(m)$.
\end{theorem}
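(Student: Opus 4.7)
By rotational invariance of the Gaussian design, I would assume without loss of generality that $\bm{x}^{\natural}=\bm{e}_{1}$ and decompose $\bm{x}^{t}=\alpha_{t}\bm{e}_{1}+\bm{x}_{\perp}^{t}$ with $\beta_{t}:=\|\bm{x}_{\perp}^{t}\|_{2}$. A direct computation under $\bm{a}\sim\mathcal{N}(\bm{0},\bm{I}_{n})$ gives the population gradient $\mathbb{E}[\nabla f(\bm{x})]=(3\|\bm{x}\|_{2}^{2}-1)\bm{x}-2(\bm{x}^{\top}\bm{e}_{1})\bm{e}_{1}$, which yields the two-dimensional deterministic recursion (writing $s_{t}=\alpha_{t}^{2}+\beta_{t}^{2}$)
\begin{align*}
\alpha_{t+1} & =\alpha_{t}\bigl(1+3\eta(1-s_{t})\bigr),\\
\beta_{t+1} & =\beta_{t}\bigl(1-\eta(3s_{t}-1)\bigr).
\end{align*}
I would first analyze this ``population'' dynamical system in closed form. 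At initialization, standard Gaussian concentration gives $|\alpha_{0}|\asymp 1/\sqrt{n}$ and $\beta_{0}\asymp 1$, and $s_{t}$ stays within a constant of $1$. The key observation is that the ratio $\alpha_{t}/\beta_{t}$ is amplified by a factor $1+\tfrac{2\eta}{1+\eta-3\eta s_{t}}=1+\Theta(\eta)$ per step, so it reaches a constant after $T_{\gamma}\asymp \eta^{-1}\log n\lesssim\log n$ iterations (Stage 1). Once $|\alpha_{t}|$ has grown to an order-one constant while $\beta_{t}$ is still $O(1)$, the iterates enter a neighborhood of $\pm\bm{x}^{\natural}$, on which the standard local regularity condition for the Wirtinger flow loss delivers geometric contraction at a constant rate $(1-\rho)$ (Stage 2), giving the stated bound $\mathrm{dist}(\bm{x}^{t},\bm{x}^{\natural})\leq\gamma(1-\rho)^{t-T_{\gamma}}$.

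\textbf{Transferring population to finite-sample dynamics via leave-one-out.} The core difficulty is that $\bm{x}^{t}$ is a complicated function of $\{\bm{a}_{i}\}$, so one cannot invoke Gaussian concentration directly, and the coupling error must be controlled at a resolution finer than $|\alpha_{t}|=O(1/\sqrt{n})$ throughout Stage~1. I would mimic the leave-one-out approach: for each $l\in\{1,\dots,m\}$ construct an auxiliary trajectory $\bm{x}^{t,(l)}$ that runs GD on the loss $f^{(l)}(\bm{x})=\tfrac{1}{4m}\sum_{i\neq l}[(\bm{a}_{i}^{\top}\bm{x})^{2}-y_{i}]^{2}$ with the same random initialization $\bm{x}^{0}$; by construction $\bm{x}^{t,(l)}$ is independent of $\bm{a}_{l}$. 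Because the signal component is so small, I would also introduce a second auxiliary sequence that realigns the $\bm{e}_{1}$-component to a deterministic surrogate so as to decouple $\bm{a}_{l,1}$ from the signal dynamics. The proof then proceeds by a joint induction that maintains, for all $t\leq O(\log n+\log(1/\epsilon))$ and all $l$, (i) closeness of $\bm{x}^{t}$ to $\bm{x}^{t,(l)}$ and to its auxiliary version up to a geometrically shrinking poly$\log(m)/\sqrt{m}$ perturbation; (ii) incoherence $|\bm{a}_{l}^{\top}\bm{x}^{t}|\lesssim\sqrt{\log m}\,\|\bm{x}^{t}\|_{2}$, used to ensure that the gradient stays in a smoothness region; and (iii) quantitative closeness of $\alpha_{t},\beta_{t}$ to the population trajectory. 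Each per-iteration bound follows by using independence of $\bm{a}_{l}$ from $\bm{x}^{t,(l)}$ to invoke Gaussian/Hanson--Wright concentration, then transferring it to $\bm{x}^{t}$ through (i).

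\textbf{Where the main obstacle lies.} The hard part is unquestionably the Stage 1 tracking. Since $\alpha_{t}$ is of order $1/\sqrt{n}$ and only amplified by $1+\Theta(\eta)$ per iteration, the induction has essentially no slack: any perturbation argument that loses a polynomial factor per step would swamp the signal long before $t\sim\log n$. The leave-one-out decomposition must therefore deliver near-tight bounds that compound only as $(1+O(\eta))^{t}$ over $O(\log n)$ iterations, which is exactly why a plain union bound or a naive $\epsilon$-net argument is insufficient and why the sample size needs an extra poly$\log(m)$ factor. Once Stage 1 is closed, Stage 2 reduces to a standard local strong-convexity/smoothness argument on a neighborhood of $\pm\bm{x}^{\natural}$, largely paralleling existing Wirtinger-flow analyses, so the novelty and bulk of the work lie in the Stage 1 leave-one-out bookkeeping.
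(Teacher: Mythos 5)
Your overall architecture coincides with the paper's: decompose $\bm{x}^{t}$ into signal and orthogonal components, derive the two-dimensional population state evolution, show the ratio $\alpha_t/\beta_t$ grows geometrically, and transfer this to the finite-sample dynamics via a leave-one-out coupling, with Stage 2 handled by a standard local regularity argument (the paper invokes \cite{ma2017implicit}). However, the Stage 1 bookkeeping that you identify as the crux is where your plan has genuine gaps.

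The main problem is the second auxiliary sequence. You propose a sequence that ``realigns the $\bm{e}_1$-component to a deterministic surrogate so as to decouple $\bm{a}_{l,1}$ from the signal dynamics.'' That is not what is needed and I do not see how it helps. The obstruction in Stage 1 is controlling Gaussian-polynomial terms such as $I_4 = \frac{1}{m}\sum_i \big(\bm{a}_{i,\perp}^{\top}\bm{x}_{\perp}^{t}\big)^{3}a_{i,1}$ to resolution $O(1/(\sqrt{n}\,\mathrm{poly}\log m))$, a factor $\sqrt{n}$ finer than a uniform bound over $\bm{x}$ can deliver. Replacing $\alpha_t$ by a deterministic value does not touch the dependency between $\bm{x}_{\perp}^t$ and $a_{i,1}$, which is where the problem lives. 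The paper's device is structurally different: it exploits the symmetry $y_i = a_{i,1}^2 = \big(\xi_i^{\mathrm{sgn}}|a_{i,1}|\big)^2$ to construct a full random-sign GD trajectory $\bm{x}^{t,\mathrm{sgn}}$ (driven by modified designs $\bm{a}_i^{\mathrm{sgn}}$) that is \emph{statistically independent of the signs} $\xi_i = \mathrm{sgn}(a_{i,1})$ yet stays within $O\big(\alpha_t\sqrt{n\,\mathrm{poly}\log(m)}/\sqrt{m}\big)$ of $\bm{x}^t$. With this in hand, $I_4 \approx \frac{1}{m}\sum_i \xi_i w_i$ with $w_i = (\bm{a}_{i,\perp}^{\top}\bm{x}_{\perp}^{t,\mathrm{sgn}})^3|a_{i,1}|$ independent of $\{\xi_j\}$, and Bernstein gives the $1/\sqrt{m}$ rate; the leave-one-out sequences $\{\bm{x}^{t,(l)}\}$ serve only to bound the weights $|w_i|$. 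Closing the induction moreover requires a third auxiliary family $\{\bm{x}^{t,\mathrm{sgn},(l)}\}$ combining both operations, which your proposal omits; without it one cannot establish the required ``double-difference'' hypothesis.

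Two further inaccuracies would likely derail a careful attempt. First, you assert the leave-one-out distances are ``geometrically shrinking'' perturbations. In Stage 1 they in fact \emph{grow}: the paper's induction hypotheses are of the form $\|\bm{x}^t - \bm{x}^{t,(l)}\|_2 \lesssim \beta_t\big(1+\tfrac{1}{\log m}\big)^t \cdot \sqrt{n\log^5 m}/m$ and the signed and double differences scale like $\alpha_t\big(1+\tfrac{1}{\log m}\big)^t$ — they track $\alpha_t$ or $\beta_t$ and only contract once Stage 2 begins. Getting this scaling right (and why the $\alpha_t$-scaled differences start out $\sqrt{n}$ smaller than the $\beta_t$-scaled one) is exactly what makes the induction close; a geometrically-contracting hypothesis would be both false and, because the errors it bounds need to compound over $\Theta(\log n)$ steps, structurally the wrong shape. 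Second, the paper deliberately does not attempt to show $(\alpha_t,\beta_t)$ stays close to the \emph{population} trajectory; it proves an ``approximate state evolution'' with per-step multiplicative error $o(1/\log m)$ and then treats the resulting perturbed two-dimensional recursion abstractly (Lemma~\ref{lemma:iterative}). This matters because direct trajectory-to-trajectory comparison is unstable near the saddle $(\alpha,\beta)=(0,1/\sqrt{3})$, where the dynamics slows to a $1+\Theta(\eta^2)$ rate — slower than your stated uniform $1+\Theta(\eta)$ rate — and a naive comparison would lose track of the true iterates.
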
\begin{remark}The readers are referred to Theorem \ref{thm:main}
for a more general statement. \end{remark}
\begin{figure}
\centering

\begin{tabular}{cc}
\includegraphics[width=0.4\textwidth]{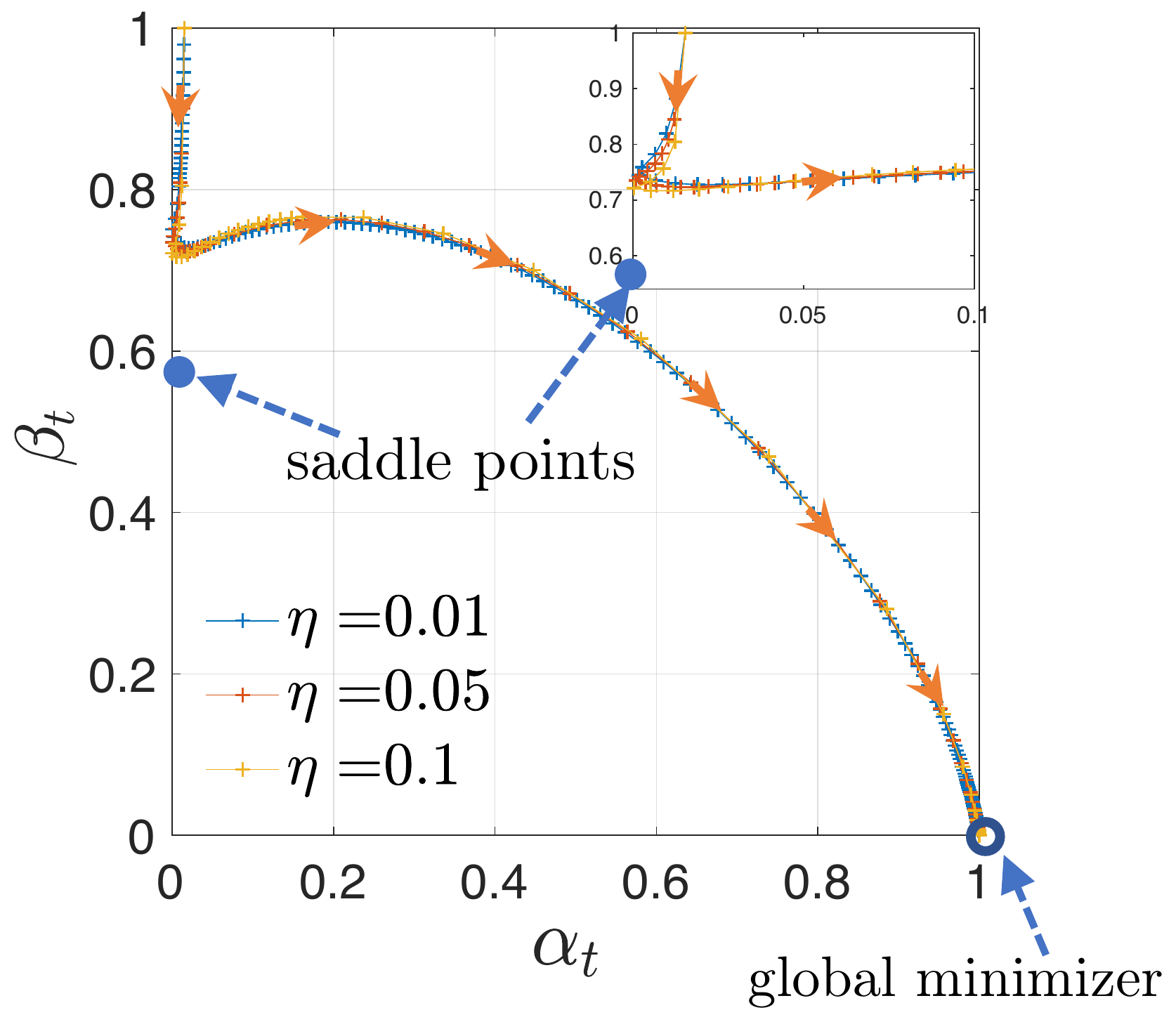}
$\quad$ & $\quad$\includegraphics[width=0.4\textwidth]{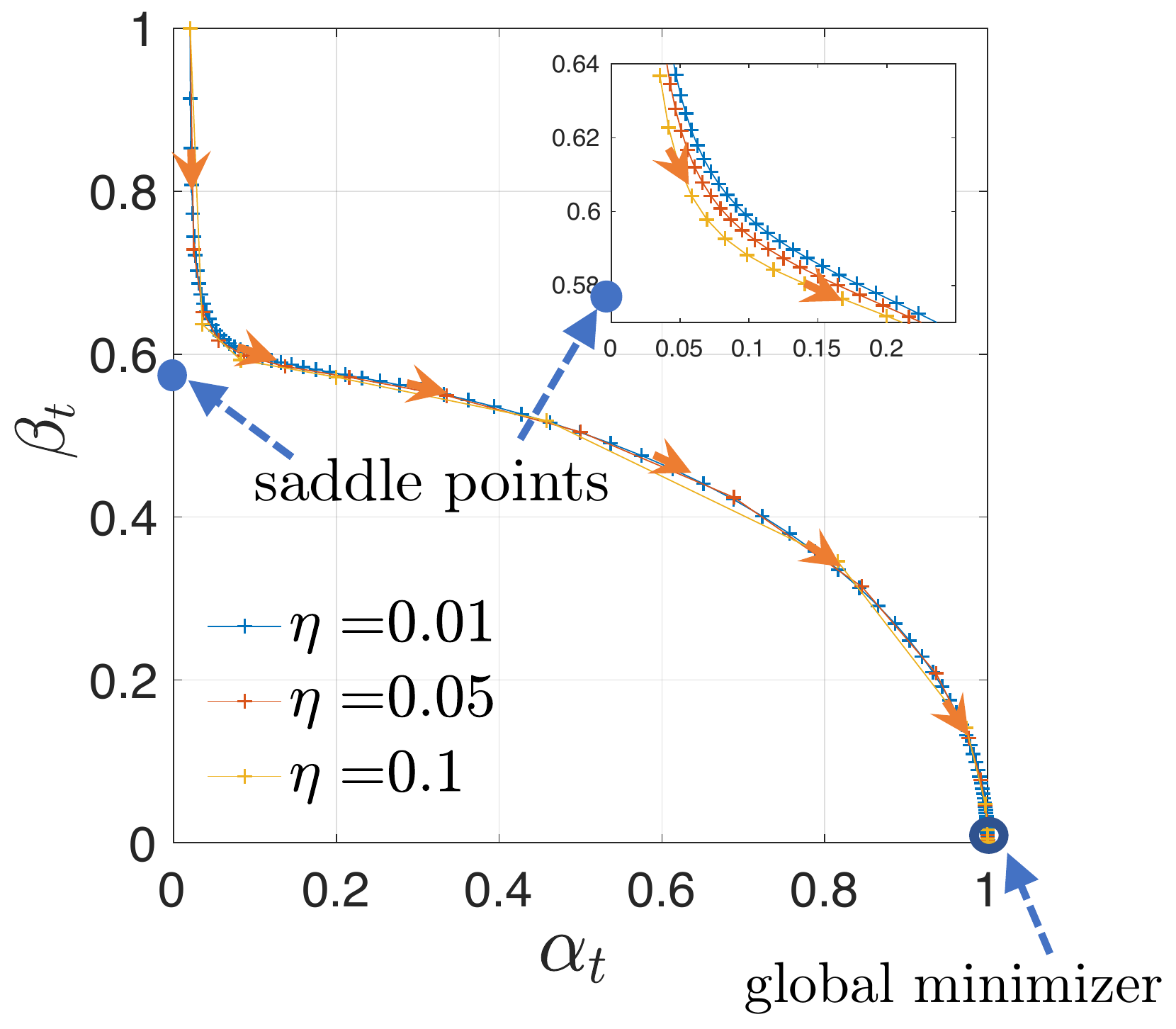}\tabularnewline
(a)  & (b)\tabularnewline
\end{tabular}\caption{The trajectory of $(\alpha_{t},\beta_{t})$, where $\alpha_{t}=|\langle\bm{x}^{t},\bm{x}^{\natural}\rangle|$
and $\beta_{t}=\|\bm{x}^{t}-\langle\bm{x}^{t},\bm{x}^{\natural}\rangle\bm{x}^{\natural}\|_2$
represent respectively the size of the signal component and that of the orthogonal
component of the GD iterates (assume $\|\bm{x}^{\natural}\|_{2}=1$).
(a) The results are shown for $n=1000$ with $m=10n$, and $\eta_{t}=0.01,0.05,0.1$.
(b) The results are shown for $n=1000$ with $m$ approaching infinity,
and $\eta_{t}=0.01,0.05,0.1$. The blue filled circles represent the
population-level saddle points, and the orange arrows indicate the directions
of increasing $t$. \label{fig:scatter_alpha_beta}}
\end{figure}
Here, the stepsize is taken to be a fixed constant throughout all
iterations, and we reuse the same data across all iterations (i.e.~no
sample splitting is needed to establish this theorem). The GD trajectory
is divided into 2 stages: (1) Stage 1 consists of the first $T_{\gamma}$
iterations, corresponding to the first tens of iterations discussed
in Section~\ref{subsec:Random-initialization}; (2) Stage 2 consists
of all remaining iterations, where the estimation error contracts
linearly. Several important implications$\,$/$\,$remarks follow
immediately.
\begin{itemize}
\item \emph{Stage 1 takes $O(\log n)$ iterations.} When seeded with a random
initial guess, GD is capable of entering a local region surrounding
$\bm{x}^{\natural}$ within $T_{\gamma}\lesssim\log n$ iterations,
namely,
\[
\mathrm{dist}\big(\bm{x}^{T_{\gamma}},\bm{x}^{\natural}\big)\leq\gamma
\]
 for some sufficiently small constant $\gamma>0$. Even though Stage
1 may not enjoy linear convergence in terms of the estimation error,
it is of fairly short duration.
\item \emph{Stage 2 takes $O(\log(1/\epsilon))$ iterations.} After entering
the local region, GD converges linearly to the ground truth $\bm{x}^{\natural}$
with a contraction rate $1-\rho$. This tells
us that GD reaches $\epsilon$-accuracy (in a relative sense) within
\emph{$O(\log(1/\epsilon))$} iterations.
\item \emph{Near linear-time computational complexity.} Taken collectively, these imply
that the iteration complexity of GD with random initialization is
\[
O\left(\log n+\log\frac{1}{\epsilon}\right).
\]
Given that the cost of each iteration mainly lies in calculating the
gradient $\nabla f(\bm{x}^{t})$, the whole algorithm takes nearly
linear time, namely, it enjoys a computational complexity proportional
to the time taken to read the data (modulo some logarithmic factor).

\item \emph{Near-minimal sample complexity. }The preceding computational
guarantees occur as soon as the sample size exceeds $m\gtrsim n\,\mathrm{poly}\log(m)$.
Given that one needs at least $n$ samples to recover $n$ unknowns,
the sample complexity of randomly initialized GD is optimal up to
some logarithmic factor.

\item \emph{Saddle points?} The GD iterates never hit the saddle points
(see Figure \ref{fig:scatter_alpha_beta} for an illustration). In
fact, after a constant number of iterations at the very beginning,
GD will follow a path that increasingly distances itself from the set
of saddle points as the algorithm progresses. There is no need to
adopt sophisticated saddle-point escaping schemes developed in generic optimization
theory (e.g.~cubic regularization \cite{nesterov2006cubic}, perturbed GD \cite{jin2017escape}).

\item \emph{Weak dependency w.r.t.~the design vectors. }As we will elaborate in Section \ref{sec:Analysis}, the statistical dependency
between the GD iterates $\{\bm{x}^{t}\}$ and certain components of the design vectors $\{\bm{a}_{i}\}$
stays at an exceedingly weak level. Consequently, the GD iterates
$\{\bm{x}^{t}\}$ proceed \emph{as if} fresh samples were employed
in each iteration. This statistical observation plays a crucial role
in characterizing the dynamics of the algorithm without the need of
sample splitting.
\end{itemize}
It is worth emphasizing that the entire trajectory of GD is automatically
confined within a certain region enjoying favorable geometry. For
example, the GD iterates are always incoherent with the design vectors,
stay sufficiently away from any saddle point, and exhibit desired
smoothness conditions, which we will formalize in Section \ref{sec:Analysis}.
Such delicate geometric properties underlying the GD trajectory are
not explained by prior papers. In light of this, convergence analysis
based on global geometry \cite{sun2016geometric} \textemdash{} which
provides valuable insights into algorithm designs with \emph{arbitrary}
initialization \textemdash{} results in suboptimal (or even pessimistic)
computational guarantees when analyzing a specific algorithm like
GD. In contrast, the current paper establishes near-optimal performance
guarantees by paying particular attention to finer dynamics of the
algorithm. As will be seen later, this is accomplished by heavily
exploiting the statistical properties in each iterative update.

\section{Why random initialization works?\label{sec:why}}

Before diving into the proof of the main theorem, we pause to develop
intuitions regarding why gradient descent with random initialization
is expected to work. We will build our understanding step by step:
(i) we first investigate the dynamics of the population gradient sequence
(the case where we have infinite samples); (ii) we then turn to the
finite-sample case and present a heuristic argument assuming independence
between the iterates and the design vectors; (iii) finally, we argue
that the true trajectory is remarkably close to the one heuristically
analyzed in the previous step, which arises from a key property concerning
the ``near-independence'' between $\{\bm{x}^{t}\}$ and the design
vectors $\{\bm{a}_{i}\}$.

Without loss of generality, we assume $\bm{x}^{\natural}=\bm{e}_{1}$
throughout this section, where $\bm{e}_{1}$ denotes the first standard
basis vector. For notational simplicity, we denote by
\begin{equation}
x_{\parallel}^{t}:=x_{1}^{t}\qquad\text{and}\qquad\bm{x}_{\perp}^{t}:=[x_{i}^{t}]_{2\leq i\leq n}\label{eq:defn-xperp-xpara}
\end{equation}
the first entry and the 2nd through the $n$th entries of $\bm{x}^{t}$,
respectively. Since $\bm{x}^{\natural}=\bm{e}_{1}$, it is easily
seen that
\begin{equation}
\underset{\text{signal component}}{\underbrace{x_{\parallel}^{t}\bm{e}_{1}=\langle\bm{x}^{t},\bm{x}^{\natural}\rangle\bm{x}^{\natural}}}\qquad\text{and}\qquad\underset{\text{orthogonal component}}{\underbrace{\left[\begin{array}{c}
0\\
\bm{x}_{\perp}^{t}
\end{array}\right]=\bm{x}^{t}-\langle\bm{x}^{t},\bm{x}^{\natural}\rangle\bm{x}^{\natural}}}\label{eq:defn-xperp-xpara-full}
\end{equation}
represent respectively the components of $\bm{x}^{t}$ along and orthogonal
to the signal direction. In what follows, we focus our attention on
the following two quantities that reflect the sizes of the preceding
two components\footnote{Here, we do not take the absolute value of $x_{\parallel}^{t}$. As
we shall see later, the $x_{\parallel}^{t}$'s are of the same sign
throughout the execution of the algorithm.}
\begin{equation}
\alpha_{t}:=x_{\parallel}^{t}\qquad\text{and}\qquad\beta_{t}:=\left\Vert \bm{x}_{\perp}^{t}\right\Vert _{2}.\label{eq:alpha-beta-defn}
\end{equation}
Without loss of generality, assume that $\alpha_{0}>0$.

\subsection{Population dynamics}

To start with, we consider the unrealistic case where the iterates
$\{\bm{x}^{t}\}$ are constructed using the population gradient (or
equivalently, the gradient when the sample size $m$ approaches infinity), i.e.
\[
\bm{x}^{t+1}=\bm{x}^{t}-\eta\nabla F(\bm{x}^{t}).
\]
Here, $\nabla F(\bm{x})$ represents the population gradient given
by
\[
\nabla F(\bm{x}):=(3\|\bm{{x}}\|_{2}^{2}-1)\bm{x}-2(\bm{x}^{\natural\top}\bm{x})\bm{x}^{\natural},
\]
which can be computed by $\nabla F(\bm{x})=\mathbb{E}[\nabla f(\bm{x})]=\mathbb{E}\big[\{(\bm{a}_{i}^{\top}\bm{x})^{2}-(\bm{a}_{i}^{\top}\bm{x}^{\natural})^{2}\}\bm{a}_{i}\bm{a}_{i}^{\top}\bm{x}\big]$
assuming that $\bm{x}$ and the $\bm{a}_{i}$'s are independent. Simple
algebraic manipulation reveals the dynamics for both the signal and
the orthogonal components: \begin{subequations}\label{subeq:population-dynamics}
\begin{align}
x_{\parallel}^{t+1} & =\left\{ 1+3\eta\left(1-\|\bm{x}^{t}\|_{2}^{2}\right)\right\} x_{\parallel}^{t};\label{eq:population-alpha-t}\\
\bm{x}_{\perp}^{t+1} & =\left\{ 1+\eta\left(1-3\|\bm{x}^{t}\|_{2}^{2}\right)\right\} \bm{x}_{\perp}^{t}.\label{eq:population-beta-t}
\end{align}
\end{subequations}Assuming that $\eta$ is sufficiently small and
recognizing that $\|\bm{x}^{t}\|_{2}^{2}=\alpha_{t}^{2}+\beta_{t}^{2}$,
we arrive at the following population-level state evolution for both
$\alpha_{t}$ and $\beta_{t}$ (cf.~(\ref{eq:alpha-beta-defn})):
\begin{subequations}\label{subeq:population-iterative}
\begin{align}
\alpha_{t+1} & =\left\{ 1+3\eta\left[1-\left(\alpha_{t}^{2}+\beta_{t}^{2}\right)\right]\right\} \alpha_{t};\label{eq:state-evolution-population-alpha}\\
\beta_{t+1} & =\left\{ 1+\eta\left[1-3\left(\alpha_{t}^{2}+\beta_{t}^{2}\right)\right]\right\} \beta_{t}.\label{eq:state-evolution-population-beta}
\end{align}
\end{subequations}This recursive system has three \emph{fixed points}:
\[
(\alpha,\beta)=(1,0),\qquad(\alpha,\beta)=(0,0),\qquad\text{and}\qquad(\alpha,\beta)=(0,1/\sqrt{3}),
\]
which correspond to the global minimizer, the local maximizer, and
the saddle points, respectively, of the population objective function.

We make note of the following key observations in the presence of
a randomly initialized $\bm{x}^{0}$, which will be formalized later
in Lemma \ref{lemma:iterative}:
\begin{itemize}
\item the ratio $\alpha_{t}/\beta_{t}$ of the size of the signal component to that of the
orthogonal component increases exponentially fast;
\item the size $\alpha_{t}$ of the signal component keeps growing until
it plateaus around $1$;
\item the size $\beta_{t}$ of the orthogonal component eventually drops towards
zero.
\end{itemize}
In other words, when randomly initialized, $(\alpha^{t},\beta^{t})$
converges to $(1, 0)$ rapidly, thus indicating rapid convergence of $\bm{x}^{t}$
to the truth $\bm{x}^{\natural}$, without getting stuck at any undesirable
saddle points. We also illustrate these phenomena numerically. Set
$n=1000$, $\eta_{t}\equiv0.1$ and $\bm{x}^{0}\sim\mathcal{N}(\bm{0},n^{-1}\bm{I}_{n})$.
Figure~\ref{fig:intuition-population} displays the dynamics of $\alpha_{t}/\beta_{t}$,
$\alpha_{t}$, and $\beta_{t}$, which are precisely as discussed
above.

\begin{figure}
\centering

\begin{tabular}{cc}
\includegraphics[width=0.4\textwidth]{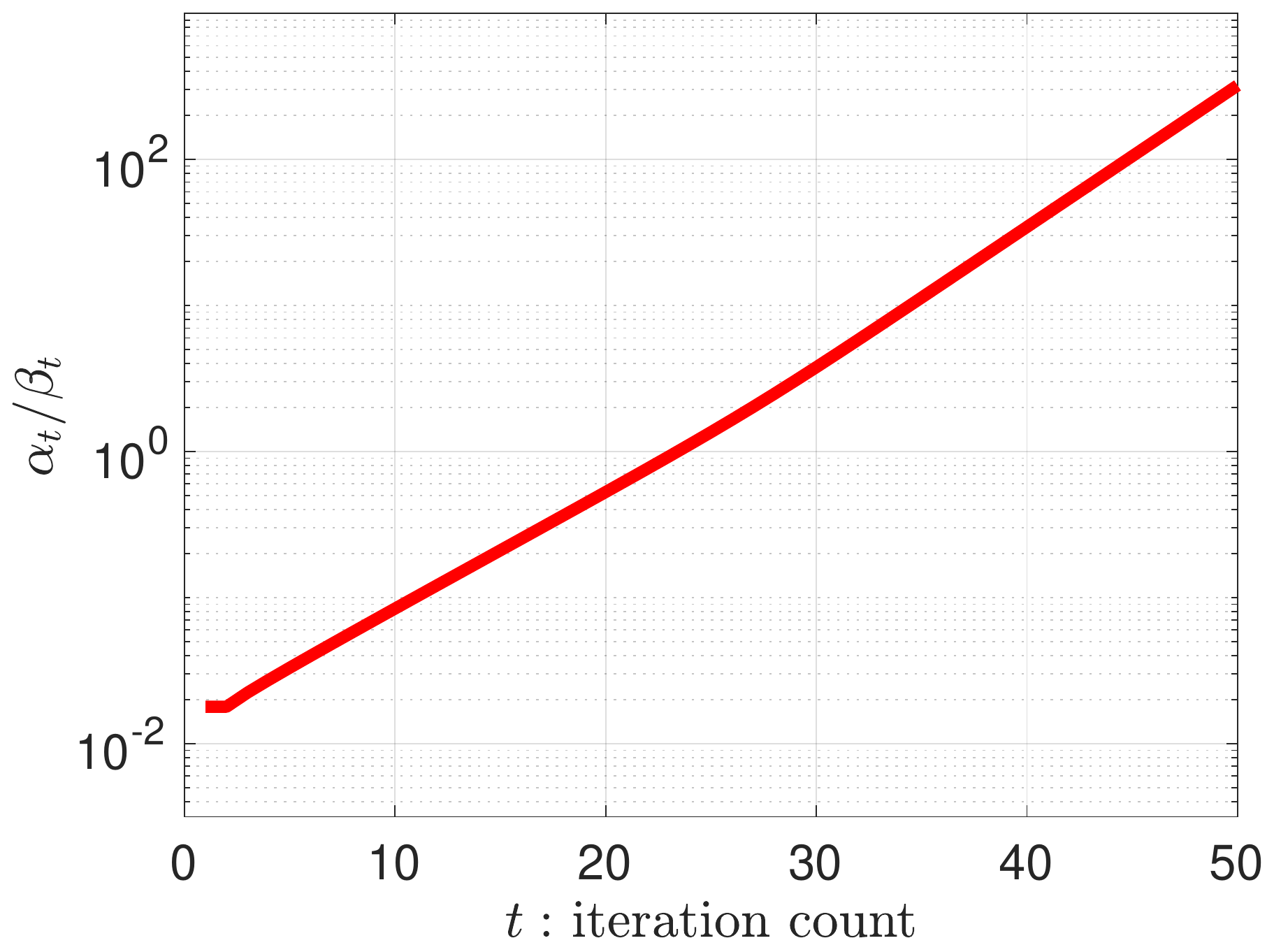} $\quad$ & $\quad$\includegraphics[width=0.4\textwidth]{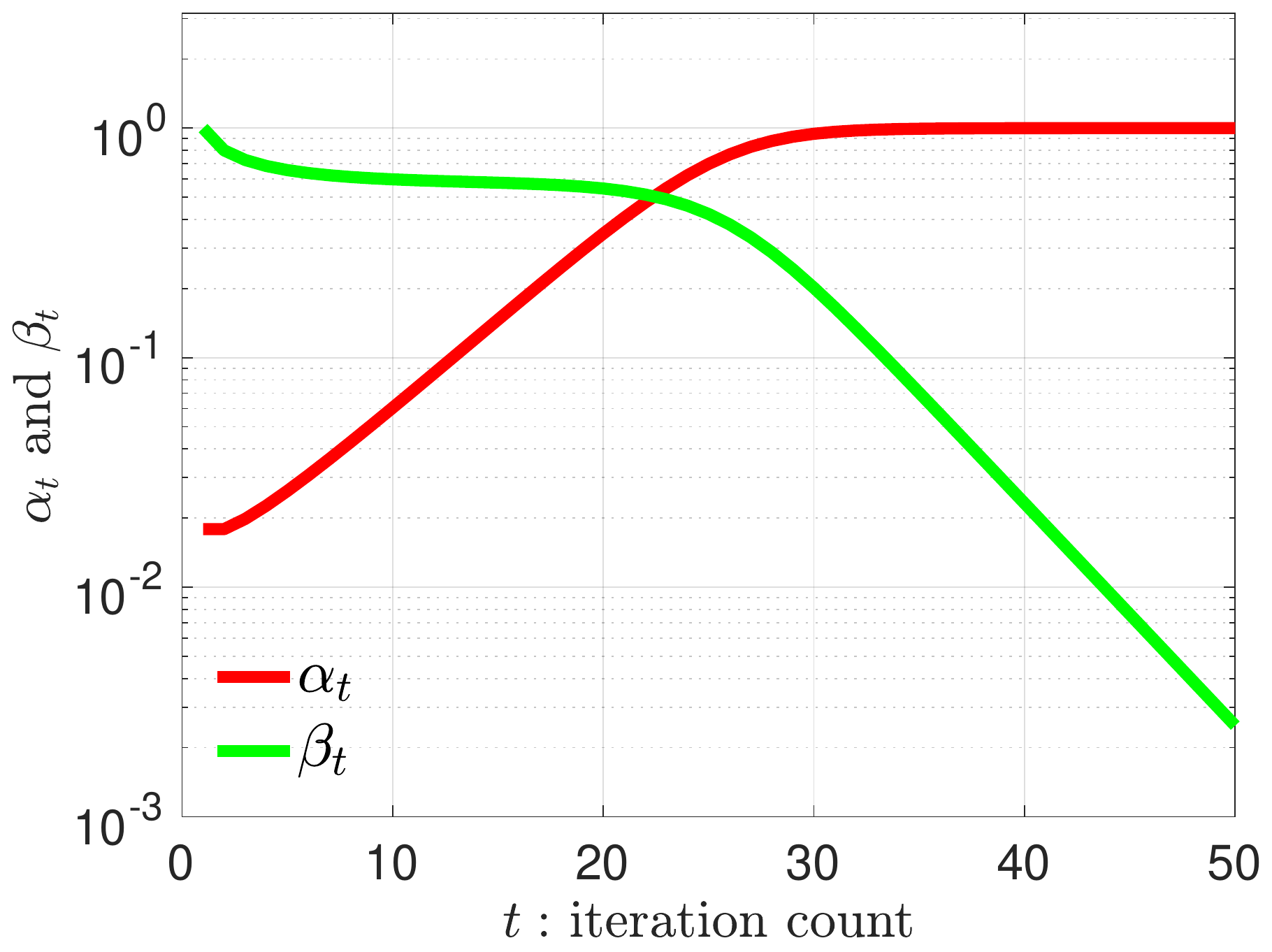}\tabularnewline
(a) $\alpha_{t}/\beta_{t}$  & (b) $\alpha_{t}$ and $\beta_{t}$\tabularnewline
\end{tabular}\caption{Population-level state evolution, plotted semilogarithmically: (a) the ratio $\alpha_{t}/\beta_{t}$
vs.~iteration count, and (b) $\alpha_{t}$ and $\beta_{t}$ vs.~iteration count. The results
are shown for $n=1000$, $\eta_{t}\equiv0.1$, and $\bm{x}^{0}\sim\mathcal{N}(\bm{0},n^{-1}\bm{I}_{n})$
(assuming $\alpha_{0}>0$ though). \label{fig:intuition-population}
}
\end{figure}

\subsection{Finite-sample analysis: a heuristic treatment\label{subsec:Finite-sample-analysis:-heuristi}}

We now move on to the finite-sample regime, and examine how many samples
are needed in order for the population dynamics to be reasonably accurate.
Notably, the arguments in this subsection are heuristic in nature,
but they are useful in developing insights into the true dynamics
of the GD iterates.

Rewrite the gradient update rule (\ref{eq:gradient_update-WF}) as
\begin{align}
\bm{x}^{t+1} & =\bm{x}^{t}-\eta\nabla f(\bm{x}^{t})=\bm{x}^{t}-\eta\nabla F(\bm{x}^{t})-\eta\big(\underset{:=\bm{r}(\bm{x}^{t})}{\underbrace{\nabla f(\bm{x}^{t})-\nabla F(\bm{x}^{t})}}\big),\label{eq:grad-difference-r}
\end{align}
where $\nabla f(\bm{x})= m^{-1}\sum_{i=1}^{m}[(\bm{a}_{i}^{\top}\bm{x})^{2}-(\bm{a}_{i}^{\top}\bm{x}^{\natural})^{2}]\bm{a}_{i}\bm{a}_{i}^{\top}\bm{x}$.
Assuming (unreasonably) that the iterate $\bm{x}^{t}$ is \emph{independent
of} $\{\bm{a}_{i}\}$, the central limit theorem (CLT) allows us to
control the size of the fluctuation term $\bm{r}(\bm{x}^{t})$. Take
the signal component as an example: simple calculations give
\begin{align*}
x_{\|}^{t+1} & =x_{\|}^{t}-\eta\left(\nabla F(\bm{x}^{t})\right)_{1}-\eta r_{1}(\bm{x}^{t}),
\end{align*}
where
\begin{equation}
r_{1}(\bm{x}):=\frac{1}{m}\sum_{i=1}^{m}\left[\big(\bm{a}_{i}^{\top}\bm{x}\big)^{3}-a_{i,1}^{2}\big(\bm{a}_{i}^{\top}\bm{x}\big)\right]a_{i,1}-\mathbb{E}\left[\left\{ \big(\bm{a}_{i}^{\top}\bm{x}\big)^{3}-a_{i,1}^{2}\big(\bm{a}_{i}^{\top}\bm{x}\big)\right\} a_{i,1}\right]\label{eq:defn-r1}
\end{equation}
with $a_{i,1}$ the first entry of $\bm{a}_{i}$. Owing to the preceding
independence assumption, $r_{1}$ is the sum of $m$ i.i.d.~zero-mean
random variables. Assuming that $\bm{x}^{t}$ never blows up so that
$\|\bm{x}^{t}\|_{2}=O(1)$, one can apply the CLT to demonstrate that
\begin{equation}
|r_{1}(\bm{x}^{t})|\lesssim\sqrt{\text{Var}(r_{1}(\bm{x}^{t}))\,\mathrm{poly}\log(m)}\lesssim\sqrt{\frac{\mathrm{poly}\log(m)}{m}}\label{eq:why-defn-residual}
\end{equation}
with high probability, which is often negligible compared to the other
terms. For instance, for the random initial guess $\bm{x}^{0}\sim\mathcal{N}(\bm{0},n^{-1}\bm{I}_{n})$
one has $\big|x_{||}^{0}\big|\gtrsim1/\sqrt{n\log n}$ with probability
approaching one,  telling us that
\[
|r_{1}(\bm{x}^{0})|\lesssim\sqrt{\frac{\mathrm{poly}\log(m)}{m}}\ll|x_{||}^{0}|
\]
as long as $m\gtrsim n\,\mathrm{poly}\log(m)$. This combined with the fact that $|x_{||}^{0}-\eta(\nabla F(\bm{x}^{0}))_{1}|\asymp|x_{||}^{0}|$ reveals $|r_{1}(\bm{x}^{0})|\lesssim |x_{||}^{0}-\eta(\nabla F(\bm{x}^{0}))_{1}|$. Similar observations
hold true for the orthogonal component $\bm{x}_{\perp}^{t}$.

In summary, by assuming independence between $\bm{x}^{t}$ and $\{\bm{a}_{i}\}$,
we arrive at an approximate state evolution for the finite-sample
regime: \begin{subequations}\label{subeq:approx-state-evolution}
\begin{align}
\alpha_{t+1} & \approx\left\{ 1+3\eta\left[1-\left(\alpha_{t}^{2}+\beta_{t}^{2}\right)\right]\right\} \alpha_{t};\label{eq:approx-state-evolution-alpha}\\
\beta_{t+1} & \approx\left\{ 1+\eta\left[1-3\left(\alpha_{t}^{2}+\beta_{t}^{2}\right)\right]\right\} \beta_{t},\label{eq:approx-state-evolution-beta}
\end{align}
\end{subequations}
with the proviso that $m\gtrsim n\,\mathrm{poly}\log(m)$.

\subsection{Key analysis ingredients: near-independence and leave-one-out tricks\label{subsec:why-loo}}

\begin{figure}
\centering

\includegraphics[width=0.35\textwidth]{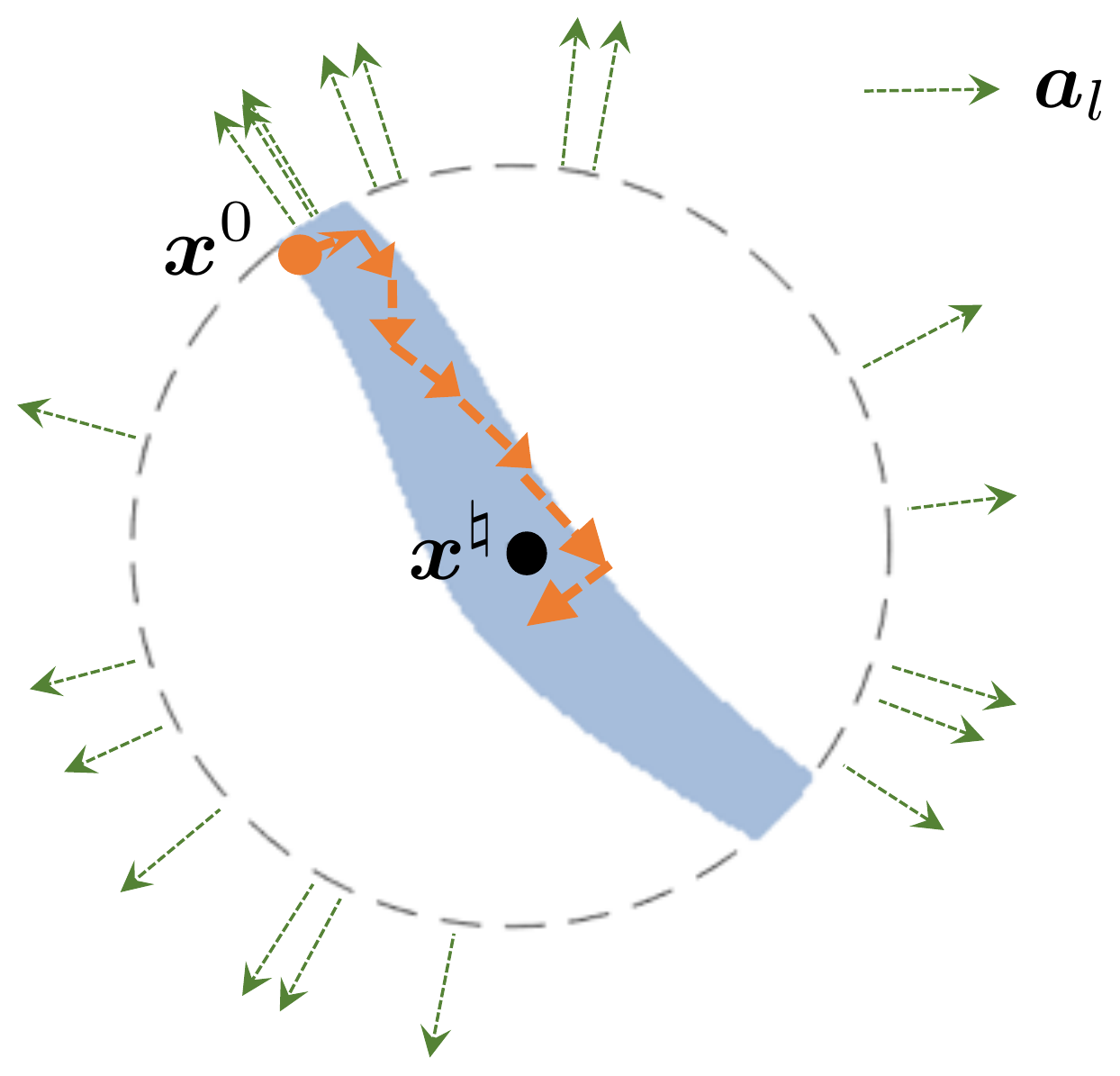}

\caption{Illustration of the region satisfying the ``near-independence''
property. Here, the green arrows represent the directions of $\{\bm{a}_{i}\}_{1\leq i\leq20}$,
and the blue region consists of all points such that the first entry
$r_{1}(\bm{x})$ of the fluctuation $\bm{r}(\bm{x})=\nabla f(\bm{x})-\nabla F(\bm{x})$
is bounded above in magnitude by $|x^{t}_{\parallel}| / 5$ (or $|\langle\bm{x},\bm{x}^{\natural}\rangle|/5$).
\label{fig:Illustration-of-near-independence}}
\end{figure}

The preceding heuristic argument justifies the approximate validity
of the population dynamics, under an independence assumption that
never holds unless we use fresh samples in each iteration. On closer
inspection, what we essentially need is the fluctuation term $\bm{r}(\bm{x}^{t})$
(cf.~(\ref{eq:grad-difference-r})) being well-controlled. For instance,
when focusing on the signal component, one needs $|r_{1}(\bm{x}^{t})|\ll\big|x_{\|}^{t}\big|$
for all $t\geq0$. In particular, in the beginning iterations, $|x_{\|}^{t}|$
is as small as $O(1/\sqrt{n})$. Without the independence assumption,
the CLT types of results fail to hold due to the complicated dependency
between $\bm{x}^{t}$ and $\{\bm{a}_{i}\}$. In fact, one can easily
find many points that result in much larger remainder terms (as large
as $O(1)$) and that violate the approximate state evolution \eqref{subeq:approx-state-evolution}.
See Figure \ref{fig:Illustration-of-near-independence} for a caricature
of the region where the fluctuation term $\bm{r}(\bm{x}^{t})$ is
well-controlled. As can be seen, it only occupies a tiny fraction
of the neighborhood of $\bm{x}^{\natural}$ .

Fortunately, despite the complicated dependency across iterations,
one can provably guarantee that $\bm{x}^{t}$ always stays within
the preceding desirable region in which $\bm{r}(\bm{x}^{t})$ is well-controlled.
The key idea is to exploit a certain ``near-independence'' property
between $\{\bm{x}^{t}\}$ and $\{\bm{a}_{i}\}$. Towards this, we
make use of a leave-one-out trick proposed in \cite{ma2017implicit} for analyzing nonconvex iterative
methods. In particular, we construct auxiliary sequences that are
\begin{enumerate}
\item independent of \emph{certain components} of the design vectors $\{\bm{a}_{i}\}$;
and
\item extremely close to the original gradient sequence $\{\bm{x}^{t}\}_{t\geq0}$.
\end{enumerate}
\begin{figure}
\centering

\begin{tabular}{cccc}
\includegraphics[height=0.16\textheight]{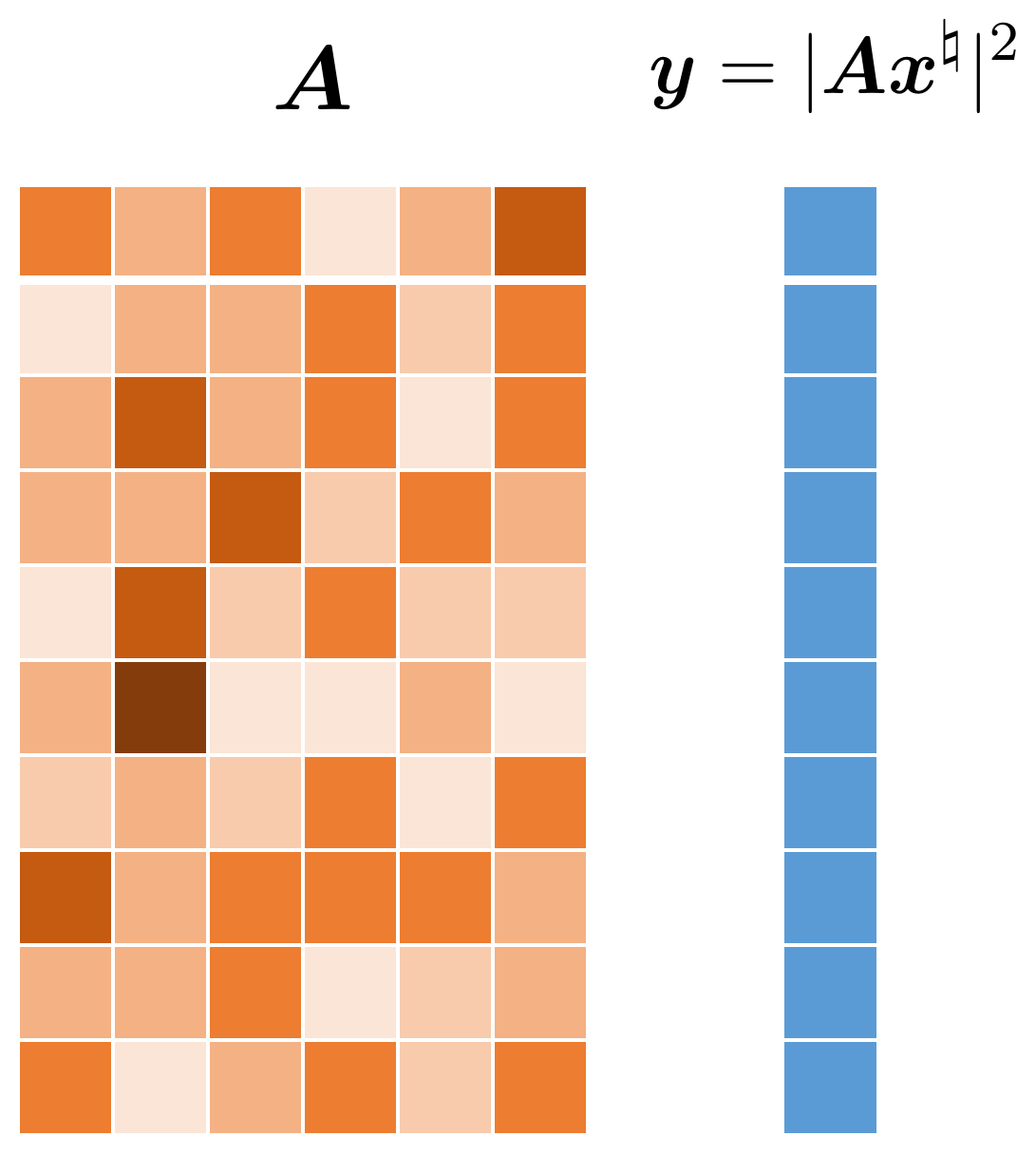} & \includegraphics[height=0.16\textheight]{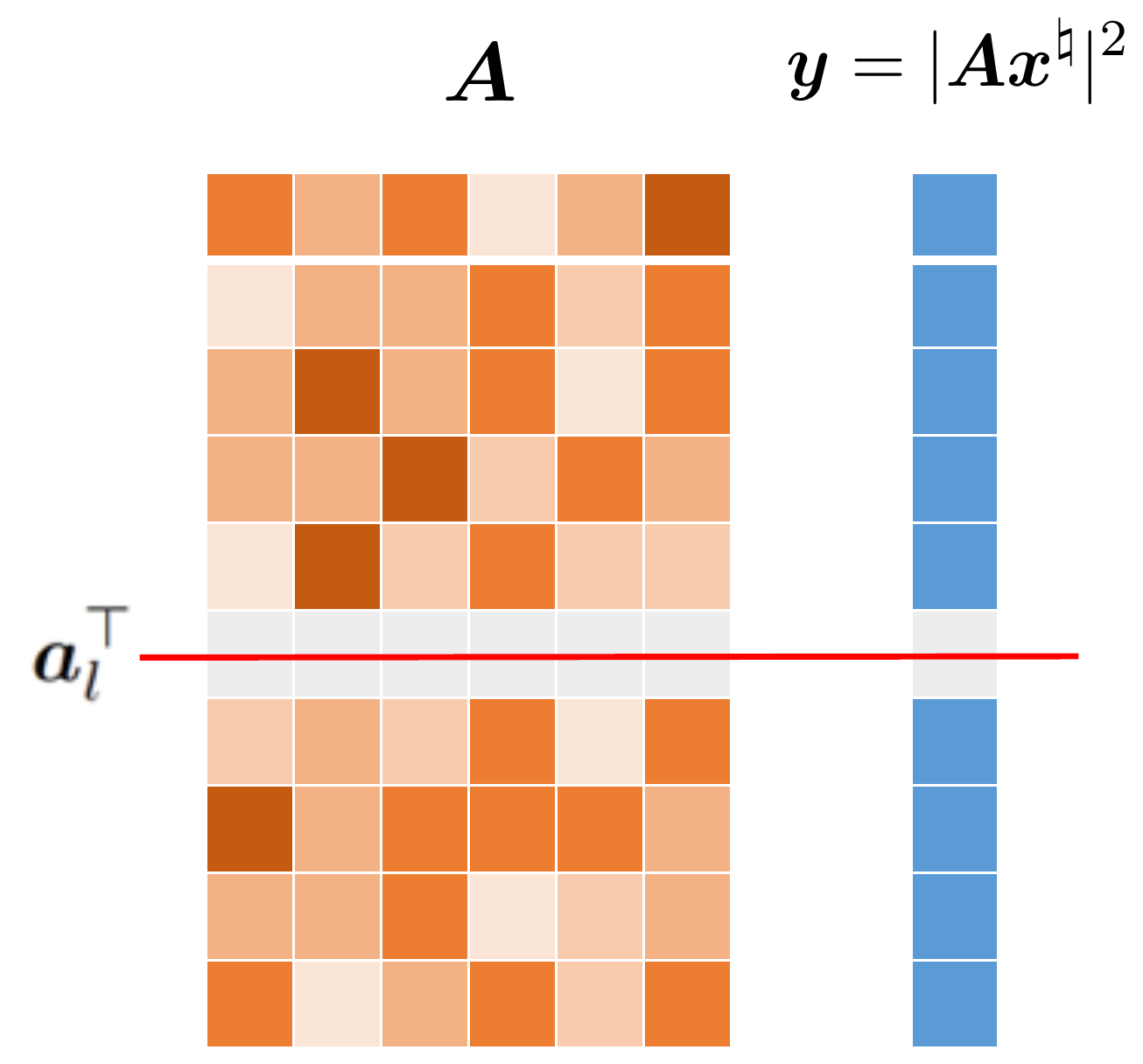} & $\,\,$$\,\,$$\,\,$\includegraphics[height=0.16\textheight]{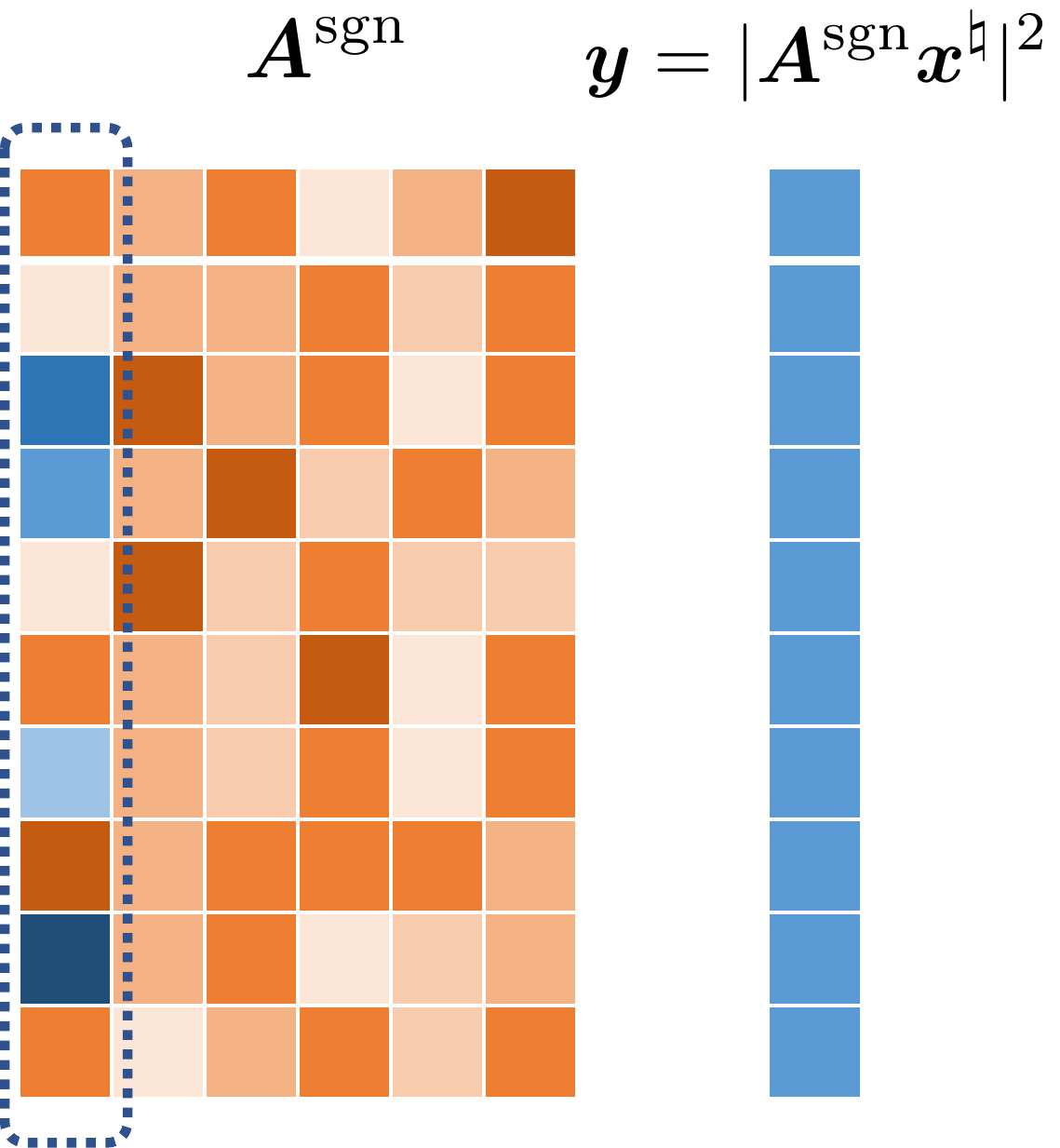} & \includegraphics[height=0.16\textheight]{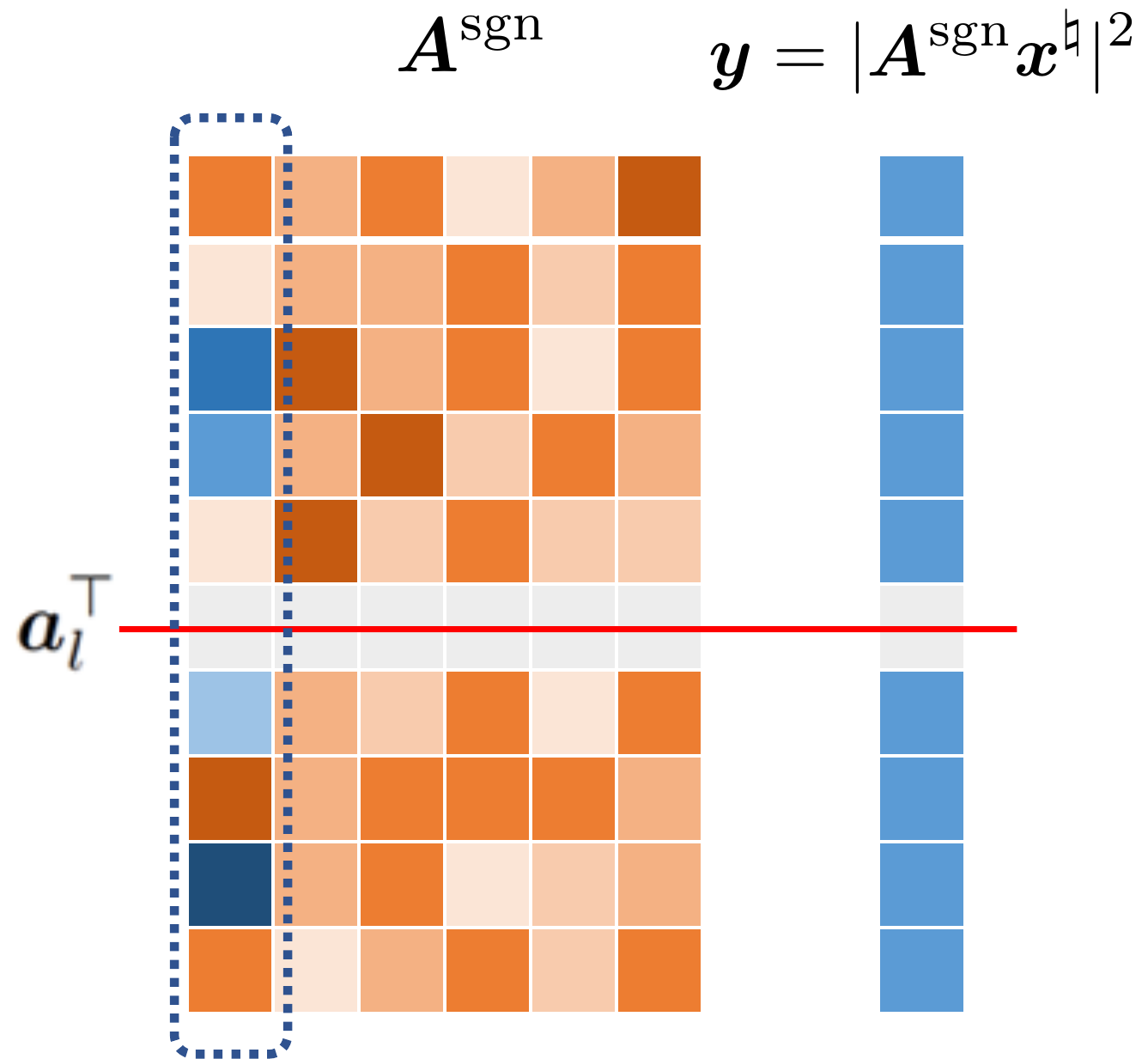}\tabularnewline
(a) $\bm{x}^{t}$  & (b) $\bm{x}^{t,(l)}$ & (c) $\bm{x}^{t,\text{sgn}}$  & (d) $\bm{x}^{t,\text{sgn},(l)}$\tabularnewline
\end{tabular}\caption{Illustration of the leave-one-out and random-sign sequences. (a)
$\{\bm{x}^{t}\}$ is constructed using all data $\{\bm{a}_{i},y_{i}\}$;
(b) $\{\bm{x}^{t,(l)}\}$ is constructed by discarding the $l$th
sample $\{\bm{a}_{l},y_{l}\}$; (c) $\{\bm{x}^{t,\text{sgn}}\}$ is
constructed by using auxiliary design vectors $\{\bm{a}_{i}^{\text{sgn}}\}$,
where $\bm{a}_{i}^{\text{sgn}}$ is obtained by randomly flipping
the sign of the first entry of $\bm{a}_{i}$; (d) $\{ \bm{x}^{t,\text{sgn},(l)}\} $
is constructed by discarding the $l$th sample $\{\bm{a}_{l}^{\text{sgn}},y_{l}\}$.
\label{fig:Illustration-of-leave-one-out}}
\end{figure}

As it turns out, we need to construct several auxiliary sequences
$\{\bm{x}^{t,(l)}\}_{t\geq0}$, $\{\bm{x}^{t,\text{sgn}}\}_{t\geq0}$
and $\{\bm{x}^{t,\text{sgn},(l)}\}_{t\geq0}$, where $\{\bm{x}^{t,(l)}\}_{t\geq0}$
is independent of the $l$th sampling vector $\bm{a}_{l}$, $\{\bm{x}^{t,\text{sgn}}\}_{t\geq0}$
is independent of the sign information of the first entries of all
$\bm{a}_{i}$'s, and $\{\bm{x}^{t,\text{sgn},(l)}\}$ is independent
of both. In addition, these auxiliary sequences are constructed by
slightly perturbing the original data (see Figure \ref{fig:Illustration-of-leave-one-out}
for an illustration), and hence one can expect all of them to stay
close to the original sequence throughout the execution of the algorithm.
Taking these two properties together, one can propagate the above
statistical independence underlying each auxiliary sequence to the
true iterates $\{\bm{x}^{t}\}$, which in turn allows us to obtain
near-optimal control of the fluctuation term $\bm{r}(\bm{x}^{t})$.
The details are postponed to Section \ref{sec:Analysis}.

\section{Related work \label{sec:Related-work}}

Solving systems of quadratic equations, or phase retrieval, has been
studied extensively in the recent literature; see \cite{shechtman2015phase}
for an overview. One popular method is convex relaxation (e.g.~\emph{PhaseLift
}\cite{candes2012phaselift}), which is guaranteed to work as long
as $m/n$ exceeds some large enough constant \cite{candes2012solving,demanet2012stable,chen2013exact,cai2015rop,kueng2017low}.
However, the resulting semidefinite program is computationally prohibitive
for solving large-scale problems. To address this issue, \cite{candes2014wirtinger}
proposed the Wirtinger flow algorithm with spectral initialization,
which provides the first convergence guarantee for nonconvex methods
without sample splitting. Both the sample and computation complexities
were further improved by \cite{ChenCandes15solving} with an adaptive
truncation strategy. Other nonconvex phase retrieval methods include
\cite{netrapalli2013phase,cai2016optimal,soltanolkotabi2017structured,wang2017solving,zhang2017reshaped,wang2017solving2,chi2016kaczmarz,duchi2017solving,gao2016phase,chen2015phase,wei2015solving,bendory2017non,tan2017phase,cai2017fast,zhang2017compressive,qu2017convolutional,zhang2016provable,yang2017misspecified,chen2017robust,zhang2017phase,ma2018optimization,chi2018nonconvex}.
Almost all of these nonconvex methods require carefully-designed initialization
to guarantee a sufficiently accurate initial point. One exception
is the approximate message passing algorithm proposed in \cite{ma2018optimization},
which works as long as the correlation between the truth and the initial
signal is bounded away from zero. This, however, does not accommodate
the case when the initial signal strength is vanishingly small (like
random initialization). Other works \cite{zhang2017phase,li2015phase} explored
the global convergence of alternating minimization\,/\,projection with random initialization
which, however, require fresh samples at least in each of the first $O(\log n)$
iterations in order to enter the local basin. In addition, \cite{li2017algorithmic}
explored low-rank recovery from quadratic measurements with near-zero
initialization. Using a truncated least-squares objective, \cite{li2017algorithmic}
established approximate (but non-exact) recovery of over-parametrized
GD. Notably, if we do not over-parametrize the phase retrieval problem,
then GD with near-zero initialization is (nearly) equivalent to running
the power method for spectral initialization\footnote{More specifically, the GD update
$\bm{x}^{t+1}=\bm{x}^{t}-m^{-1}{\eta_{t}}\sum_{i=1}^{m}\big[(\bm{a}_{i}^{\top}\bm{x}^{t})^{2}-y_{i}\big]\bm{a}_{i}\bm{a}_{i}^{\top}\bm{x}_{t}\approx(\bm{I}+m^{-1}{\eta_{t}}\sum_{i=1}^{m}y_{i}\bm{a}_{i}\bm{a}_{i}^{\top})\bm{x}_{t}$ when $\bm{x}_t\approx \bm{0}$, which is equivalent to a power iteration (without normalization) w.r.t.~the data matrix $\bm{I}+m^{-1}{\eta_{t}}\sum_{i=1}^{m}y_{i}\bm{a}_{i}\bm{a}_{i}^{\top}$.
}, which can be understood
using prior theory.

Another related line of research is the design of generic saddle-point
escaping algorithms, where the goal is to locate a second-order stationary
point (i.e.~the point with a vanishing gradient and a positive-semidefinite
Hessian). As mentioned earlier, it has been shown by \cite{sun2016geometric}
that as soon as $m\gg n\log^{3}n$, all local minima are global and
all the saddle points are strict. With these two geometric properties
in mind, saddle-point escaping algorithms are guaranteed to converge
globally for phase retrieval. Existing saddle-point escaping algorithms include but are not limited to Hessian-based methods \cite{nesterov2006cubic,sun2016geometric}
(see also \cite{agarwal2016finding,allen2017natasha,jin2017escape}
for some reviews), noisy stochastic gradient descent \cite{ge2015escaping},
perturbed gradient descent \cite{jin2017escape}, and normalized gradient
descent \cite{murray2017revisiting}. On the one hand, the results
developed in these works are fairly general: they establish polynomial-time
convergence guarantees under a few generic geometric conditions. On
the other hand, the iteration complexity derived therein may be pessimistic
when specialized to a particular problem.

Take phase retrieval and the perturbed gradient descent algorithm
\cite{jin2017escape} as an example. It has been shown in \cite[Theorem 5]{jin2017escape}
that for an objective function that is $L$-gradient Lipschitz,
$\rho$-Hessian Lipschitz, $(\theta,\gamma,\zeta)$-strict saddle,
and also locally $\alpha$-strongly convex and $\beta$-smooth (see
definitions in \cite{jin2017escape}), it takes\footnote{When applied to phase retrieval with $m\asymp n\,\mathrm{poly}\log n$,
one has $L\asymp n$, $\rho\asymp n$, $\theta\asymp\gamma\asymp1$
(see \cite[Theorem 2.2]{sun2016geometric}), $\alpha\asymp1$, and
$\beta\gtrsim n$ (ignoring logarithmic factors).}
\[
O\Bigg(\frac{L}{\left[\min\left(\theta,\gamma^{2}/\rho\right)\right]^{2}}+\frac{\beta}{\alpha}\log\frac{1}{\epsilon}\Bigg)=O\left(n^{3}+n\log\frac{1}{\epsilon}\right)
\]
iterations (ignoring logarithmic factors) for perturbed gradient descent
to converge to $\epsilon$-accuracy. In fact, even with Nesterov's
accelerated scheme \cite{jin2017accelerated}, the iteration complexity
for entering the local region is at least
\[
O\Bigg(\frac{L^{1/2}\rho^{1/4}}{\left[\min\left(\theta,\gamma^{2}/\rho\right)\right]^{7/4}}\Bigg)=O\left(n^{2.5}\right).
\]
Both of them are much larger than the $O\big(\log n+\log(1/\epsilon)\big)$
complexity established herein. This is primarily due to the following
facts: (i) the Lipschitz constants of both the gradients and the Hessians
are quite large, i.e.~$L\asymp n$ and $\rho\asymp n$ (ignoring
log factors), which are, however, treated as dimension-independent
constants in the aforementioned papers; (ii) the local condition number
is also large, i.e.~$\beta/\alpha\asymp n$. In comparison, as suggested
by our theory, the GD iterates with random initialization are always
confined within a restricted region enjoying much more benign geometry
than the worst-case$\,$/$\,$global characterization.

Furthermore, the above saddle-escaping first-order methods are often
more complicated than vanilla GD. Despite its algorithmic simplicity
and wide use in practice, the convergence rate of GD with random initialization
remains largely unknown. In fact, Du et al.~\cite{du2017gradient}
demonstrated that there exist non-pathological functions such that
GD can take exponential time to escape the saddle points when initialized
randomly. In contrast, as we have demonstrated, saddle points are
not an issue for phase retrieval; the GD iterates with random initialization
never get trapped in the saddle points.

Finally, the leave-one-out arguments have been invoked to analyze
other high-dimensional statistical inference problems including robust
M-estimators \cite{el2013robust,el2015impact}, and
maximum likelihood theory for
logistic regression \cite{sur2017likelihood}, etc. In addition, \cite{zhong2017near,chen2017spectral,abbe2017entrywise}
made use of the leave-one-out trick to derive entrywise perturbation
bounds for eigenvectors resulting from certain spectral methods. The
techniques have also been applied by \cite{ma2017implicit,li2018nonconvex}
to establish local linear convergence of vanilla GD for nonconvex
statistical estimation problems in the presence of proper spectral
initialization.

\section{Analysis\label{sec:Analysis}}

In this section, we first provide a more general version of Theorem
\ref{thm:intro} as follows.  It spells out exactly the conditions on $\bx^0$ in order for vanilla GD with random initialization to succeed.

\begin{theorem}\label{thm:main}Fix $\bm{x}^{\natural}\in\RR^{n}$.
Suppose $\bm{a}_{i}\overset{\mathrm{i.i.d.}}{\sim}\mathcal{N}\left(\bm{0},\bm{I}_{n}\right)$
$(1\leq i\leq m)$ and $m\geq Cn\log^{13}m$  for some sufficiently
large constant $C>0$. Assume that the initialization $\bm{x}^{0}$
is independent of $\{\bm{a}_{i}\}$ and obeys
\begin{equation}
\frac{\big|\langle\bm{x}^{0},\bm{x}^{\natural}\rangle\big|}{\|\bm{x}^{\natural}\|_{2}^{2}}\geq\frac{1}{\sqrt{n\log n}}\qquad\text{and}\qquad\left(1-\frac{1}{\log n}\right)\|\bm{x}^{\natural}\|_{2}\leq\|\bm{x}^{0}\|_{2}\leq\left(1+\frac{1}{\log n}\right)\|\bm{x}^{\natural}\|_{2},\label{eq:initilization-condition}
\end{equation}
and that the stepsize satisfies $\eta_{t}\equiv\eta=c/\|\bm{x}^{\natural}\|_{2}^{2}$
for some sufficiently small constant $c>0$. Then there exist a sufficiently
small absolute constant $0<\gamma<1$ and $T_{\gamma}\lesssim\log n$
such that with probability at least $1-O(m^{2}e^{-1.5n})-O(m^{-9})$,
\begin{enumerate}
\item the GD iterates (\ref{eq:gradient_update-WF}) converge linearly to
$\bm{x}^{\natural}$ after $t\geq T_{\gamma}$, namely,
\begin{align*}
\mathrm{dist}\left(\bm{x}^{t},\bm{x}^{\natural}\right) & \leq\left(1-\frac{\eta}{2}\left\Vert \bm{x}^{\natural}\right\Vert _{2}^{2}\right)^{t-T_{\gamma}}\cdot\gamma\left\Vert \bm{x}^{\natural}\right\Vert _{2},\qquad\forall\,t\geq T_{\gamma};
\end{align*}
\item the strength ratio of the signal component $\frac{\langle\bm{x}^{t},\bm{x}^{\natural}\rangle}{\|\bm{x}^{\natural}\|_{2}^{2}}\bm{x}^{\natural}$
to the orthogonal component $\bm{x}^{t}-\frac{\langle\bm{x}^{t},\bm{x}^{\natural}\rangle}{\|\bm{x}^{\natural}\|_{2}^{2}}\bm{x}^{\natural}$
obeys
\begin{equation}
\frac{\Big\|\frac{\langle\bm{x}^{t},\bm{x}^{\natural}\rangle}{\|\bm{x}^{\natural}\|_{2}^{2}}\bm{x}^{\natural}\Big\|_{2}}{\Big\|\bm{x}^{t}-\frac{\langle\bm{x}^{t},\bm{x}^{\natural}\rangle}{\|\bm{x}^{\natural}\|_{2}^{2}}\bm{x}^{\natural}\Big\|_{2}}\gtrsim\frac{1}{\sqrt{n\log n}}(1+c_{1}\eta^2)^{t},\qquad t=0,1,\cdots\label{eq:SNR-lower-bound}
\end{equation}
for some constant $c_{1}>0$.
\end{enumerate}
\end{theorem}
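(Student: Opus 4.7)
The plan is to exploit rotational invariance of the Gaussian to assume $\bm{x}^{\natural} = \|\bm{x}^\natural\|_2 \bm{e}_1$ and reduce the analysis to the two scalars $\alpha_t := x_1^t$ and $\beta_t := \|[x_i^t]_{i\geq 2}\|_2$ introduced in \eqref{eq:alpha-beta-defn}. The goal is to establish an approximate state evolution mirroring \eqref{subeq:population-iterative},
\[
\alpha_{t+1} \;=\; \bigl\{1+3\eta[\|\bm{x}^\natural\|_2^2-(\alpha_t^2+\beta_t^2)]\bigr\}\alpha_t + \zeta_t, \qquad \beta_{t+1} \;=\; \bigl\{1+\eta[\|\bm{x}^\natural\|_2^2 - 3(\alpha_t^2+\beta_t^2)]\bigr\}\beta_t + \xi_t,
\]
with perturbations $\zeta_t,\xi_t$ small enough that $(\alpha_t,\beta_t)$ shadows the noiseless trajectory, and then split the analysis into Stage 1 (exponential signal amplification) and Stage 2 (local linear convergence).

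For Stage 1, I would track the ratio $\alpha_t/\beta_t$. Dividing the two recursions and using $\|\bm{x}^t\|_2^2 \lesssim \|\bm{x}^\natural\|_2^2$ (which I would verify as an invariant because both coefficients contract when $\|\bm{x}^t\|_2^2$ exceeds $\|\bm{x}^\natural\|_2^2$), this ratio is multiplied by roughly $1 + 2\eta\|\bm{x}^\natural\|_2^2$ per step. Starting from $\alpha_0/\beta_0 \gtrsim 1/\sqrt{n\log n}$ and $\beta_0 \asymp \|\bm{x}^\natural\|_2$ guaranteed by \eqref{eq:initilization-condition}, after $T_\gamma \lesssim \log n$ iterations the ratio exceeds a constant, $\alpha_t$ enters a $\gamma$-neighborhood of $\|\bm{x}^\natural\|_2$, and $\beta_t$ drops below $\gamma\|\bm{x}^\natural\|_2$. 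The geometric growth of $|\langle \bm{x}^t,\bm{x}^\natural\rangle|/\|\bm{x}^t_\perp\|_2$ in \eqref{eq:SNR-lower-bound} would fall out of exactly this ratio analysis. In Stage 2, $(\alpha_t,\beta_t)$ lies in a neighborhood of $(\|\bm{x}^\natural\|_2,0)$ where the loss is locally strongly convex and smooth along the trajectory (modulo the global sign), so a standard one-step contraction of the form $\mathrm{dist}(\bm{x}^{t+1},\bm{x}^\natural)\leq (1-\tfrac{\eta}{2}\|\bm{x}^\natural\|_2^2)\,\mathrm{dist}(\bm{x}^t,\bm{x}^\natural)$ would close the argument.

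The main obstacle is uniformly controlling the perturbations $\zeta_t,\xi_t$ — equivalently the fluctuation $\bm{r}(\bm{x}^t) = \nabla f(\bm{x}^t)-\nabla F(\bm{x}^t)$ — throughout Stage 1, where the required bound on $\zeta_t$ is of order $\eta|\alpha_t|$ and hence as small as $\eta/\sqrt{n\log n}$. A naive union bound over $\bm{x}^t$ fails because the iterates depend on the $\{\bm{a}_i\}$ in a complicated way, and as illustrated in Figure \ref{fig:Illustration-of-near-independence} the desirable region where the CLT heuristic of Section \ref{subsec:Finite-sample-analysis:-heuristi} applies is a vanishing fraction of the ambient space. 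To handle this, I would build the three auxiliary sequences $\{\bm{x}^{t,(l)}\}$, $\{\bm{x}^{t,\mathrm{sgn}}\}$, $\{\bm{x}^{t,\mathrm{sgn},(l)}\}$ of Figure \ref{fig:Illustration-of-leave-one-out}, each constructed from a perturbed data set (discarding $\bm{a}_l$, randomly resigning $a_{i,1}$, or both) and hence genuinely independent of the corresponding component of the design.

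The technical core is then a joint induction on $t$: I would simultaneously maintain (i) $\ell_2$-proximity of each auxiliary sequence to $\bm{x}^t$ at a scale much finer than $1/\sqrt{n\log n}$, (ii) incoherence $\max_i |\bm{a}_i^\top \bm{x}^t|\lesssim \sqrt{\log m}\,\|\bm{x}^t\|_2$, (iii) closeness of the scalar summaries $\alpha_t,\beta_t$ to their leave-one-out/sign-flip analogues, and (iv) the approximate state-evolution bounds on $\alpha_{t+1},\beta_{t+1}$. At each step, the independence enjoyed by the auxiliary sequence allows a CLT-type concentration for the inner products appearing in $\bm{r}(\bm{x}^{t,\cdot})$, which is then transferred back to $\bm{r}(\bm{x}^t)$ via the proximity bound. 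Propagating all invariants at once, and in particular verifying that the leave-one-out perturbations never grow faster than the signal component does in Stage 1, is where the delicate bookkeeping lies.
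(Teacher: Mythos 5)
Your proposal reproduces the paper's architecture almost exactly: the reduction via rotational invariance to the scalars $(\alpha_t,\beta_t)$, the multiplicative approximate state evolution with perturbations controlled well below $1/\log m$, the Stage 1 (signal amplification via the ratio $\alpha_t/\beta_t$) and Stage 2 (local linear convergence) split, and the same three families of auxiliary sequences $\{\bm{x}^{t,(l)}\}$, $\{\bm{x}^{t,\mathrm{sgn}}\}$, $\{\bm{x}^{t,\mathrm{sgn},(l)}\}$ carried through a joint induction with $\ell_2$-proximity, incoherence, and state-evolution invariants. This is the paper's own proof strategy, so there is nothing further to compare.

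One small quantitative remark: you claim the ratio $\alpha_t/\beta_t$ grows by a factor roughly $1+2\eta\|\bm{x}^\natural\|_2^2$ per step. This is actually tighter than the $(1+c_1\eta^2)$ stated in \eqref{eq:SNR-lower-bound}; the paper arrives at $\eta^2$ only because it bounds the two one-step growth factors separately (via $\alpha_{t+1}\geq(1+2\eta^3)\alpha_t$, $\beta_{t+1}\leq(1-2\eta^2)\beta_t$) in the regime $\|\bm{x}^t\|_2^2\approx1$, whereas the direct ratio $A_t/B_t$ with $A_t-B_t=2\eta+\eta(\zeta_t-\rho_t)$ already gives $1+\Omega(\eta)$. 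Both conclusions are consistent and either suffices for $T_\gamma\lesssim\log n$, so this is a point of slack in the paper's Lemma~\ref{lemma:iterative} rather than a gap in your argument. What your sketch does not spell out — and where most of the technical work lives — is that the leave-one-out proximity bounds must be proved at \emph{different} scales for the different auxiliary quantities (e.g.\ $\|\bm{x}^t-\bm{x}^{t,(l)}\|_2\lesssim\beta_t/\sqrt m$ but $|x^t_\parallel-x^{t,(l)}_\parallel|\lesssim\alpha_t/m$, and similarly for the double difference), and that the proportionality constant tracks $\beta_t$ in some cases and $\alpha_t$ in others; this asymmetry is precisely what lets the signal-direction error stay an $\sqrt n$-factor smaller and is essential for the induction to close. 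Your plan of "proximity at a scale much finer than $1/\sqrt{n\log n}$" is a fair summary, but in carrying it out you would need to discover this scale separation.
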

Several remarks regarding Theorem \ref{thm:main} are in order.
\begin{itemize}
	\item Our current sample complexity reads $m\gtrsim n \log^{13}m$, which is optimal up to logarithmic factors. It is possible to further reduce the logarithmic factors using more refined probabilistic tools, which we leave for future work.
	\item We can also prove similar performance guarantees for noisy phase retrieval. For brevity, we do not provide the exact theorem and the detailed proofs. The readers will find them in the last author's Ph.D. thesis. 
	\item The random initialization $\bm{x}^{0}\sim\mathcal{N}(\bm{0},n^{-1}\|\bm{x}^{\natural}\|_{2}^{2}\bm{I}_{n})$
obeys the condition (\ref{eq:initilization-condition}) with probability
exceeding $1-O(1/\sqrt{\log n})$, which in turn establishes Theorem
\ref{thm:intro}.
	\item Theorem \ref{thm:main} requires an initialization $\bm{x}^{0}$ which is independent of the data and the knowledge of $\|\bm{x}^{\natural}\|$, which is not practical.   One possible method is to estimate it from the data, which results in an initial value that depends on the data.  The following theorem demonstrate both independent initial value and known $\|\bm{x}^{\natural}\|$ are not necessary, resulting a practical algorithm.

	\begin{theorem}\label{thm:dependent-init}Let
			\[
	\bm{x}^{0} = \sqrt{\frac{1}{m}\sum_{i=1}^{m}y_{i}} \cdot \bm{u},
\]
where $\bm{u}$ is uniformly distributed over the unit sphere. With probability
at least $1-O(1/\sqrt{\log n})$ all the claims in Theorem \ref{thm:main}
			continue to hold.  \end{theorem}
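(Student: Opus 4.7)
The plan is to reduce Theorem~\ref{thm:dependent-init} to Theorem~\ref{thm:main}, leveraging the fact that the data-dependent scalar $R:=\sqrt{m^{-1}\sum_{i=1}^{m}y_{i}}$ is tightly concentrated while the direction $\bm{u}$ is independent of the design. First I would record two concentration facts. Since $R^{2}=m^{-1}\sum_{i}(\bm{a}_{i}^{\top}\bm{x}^{\natural})^{2}$ is an average of i.i.d.\ sub-exponential random variables with mean $\|\bm{x}^{\natural}\|_{2}^{2}$, a Bernstein bound yields $|R-\|\bm{x}^{\natural}\|_{2}|\lesssim\|\bm{x}^{\natural}\|_{2}\sqrt{\log m/m}$ with probability $1-O(m^{-10})$; combined with $m\gtrsim n\log^{13}m$, this places $R$ within $(1\pm 1/(2\log n))\|\bm{x}^{\natural}\|_{2}$. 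Independently, since $\bm{u}\sim\mathrm{Unif}(\mathbb{S}^{n-1})$ is drawn independently of $\{\bm{a}_{i}\}$, the scalar $\langle\bm{u},\bm{x}^{\natural}/\|\bm{x}^{\natural}\|_{2}\rangle$ has density proportional to $(1-t^{2})^{(n-3)/2}$, so $|\langle\bm{u},\bm{x}^{\natural}/\|\bm{x}^{\natural}\|_{2}\rangle|\geq 2/\sqrt{n\log n}$ with probability at least $1-O(1/\sqrt{\log n})$; this is the source of the overall $O(1/\sqrt{\log n})$ failure probability claimed in the theorem.

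On the intersection of these events, $\bm{x}^{0}=R\bm{u}$ satisfies both the norm bound and the correlation bound required by Theorem~\ref{thm:main}. The only remaining gap is that $\bm{x}^{0}$ is not independent of $\{\bm{a}_{i}\}$, since it depends on them through $R$. To close this gap, I would adapt the leave-one-out constructions of Section~\ref{sec:Analysis}: for each $1\leq l\leq m$, replace the leave-one-out initialization $\bm{x}^{0,(l)}$ with $R^{(l)}\bm{u}$, where $R^{(l)}:=\sqrt{m^{-1}\sum_{i\neq l}y_{i}}$ is independent of $\bm{a}_{l}$, and construct $\bm{x}^{0,\mathrm{sgn}}$ and $\bm{x}^{0,\mathrm{sgn},(l)}$ analogously. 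A short calculation gives $|R-R^{(l)}|=O(\|\bm{x}^{\natural}\|_{2}/m)$, far smaller than the $O(\|\bm{x}^{\natural}\|_{2}/\sqrt{n})$ scale that the original leave-one-out bounds already accommodate; moreover, under the rotational reduction $\bm{x}^{\natural}=\bm{e}_{1}$, the observations $y_{i}=a_{i,1}^{2}$ are invariant under sign flips of $a_{i,1}$, so $R^{\mathrm{sgn}}=R$ identically and no new dependence is introduced into the random-sign auxiliary sequence.

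With these modified auxiliary sequences in place, the entire inductive machinery of the proof of Theorem~\ref{thm:main}---the near-independence between $\{\bm{x}^{t}\}$ and $\{\bm{a}_{i}\}$, the control of the fluctuation $\bm{r}(\bm{x}^{t})=\nabla f(\bm{x}^{t})-\nabla F(\bm{x}^{t})$, and the two-stage convergence analysis---transfers with only the base case $t=0$ requiring reverification to absorb the $O(\|\bm{x}^{\natural}\|_{2}/m)$ perturbation between $\bm{x}^{0}$ and each $\bm{x}^{0,(l)}$ (and similarly for the sign-flipped variants). The main obstacle, and where the bulk of the technical bookkeeping will lie, is precisely this base-case verification: I need to check that every quantity the original proof initializes---initial sizes of signal and orthogonal components, leave-one-out gaps, and sign-flip gaps---still enters the induction with the right order of magnitude when the shared initialization has the slightly coupled form $R\bm{u}$. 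Once this is handled, the conclusions of Theorem~\ref{thm:main} follow on the same high-probability event, thereby establishing Theorem~\ref{thm:dependent-init}.
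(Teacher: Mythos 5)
Your proposal is correct and follows essentially the same route as the paper's proof: observe that only independence of the auxiliary sequences (not $\bm{x}^0$ itself) is used, modify the leave-one-out initializations to $\bm{x}^{0,(l)}=R^{(l)}\bm{u}$, note that the random-sign variants coincide with the originals because $R$ depends only on $\{|a_{i,1}|\}$, and reverify the base case of the induction hypotheses. One small imprecision: $|R-R^{(l)}|\lesssim (\log m/m)\|\bm{x}^{\natural}\|_{2}$ rather than $O(\|\bm{x}^{\natural}\|_{2}/m)$, since $y_{l}\lesssim\log m\cdot\|\bm{x}^{\natural}\|_{2}^{2}$ with high probability, but this extra $\log m$ is harmless for the base-case comparison.
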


\begin{proof}The proof is very similar to that of Theorem \ref{thm:main},
with only a few changes. See Appendix \ref{sec:Proof-of-Theorem-dependent} for detailed explanations.
\end{proof}
	
\end{itemize}

The remainder of this section is then devoted to proving Theorem \ref{thm:main}.
Without loss of generality\footnote{This is because of the rotational invariance of Gaussian distributions.},
we will assume throughout that
\begin{equation}
\bm{x}^{\natural}=\bm{e}_{1}\qquad\text{and}\qquad x_{1}^{0}>0.\label{eq:assumption-x0-xnatural}
\end{equation}
Given this, one can decompose
\begin{equation}
\bm{x}^{t}=x_{\|}^{t}\bm{e}_{1}+\left[\begin{array}{c}
0\\
\bm{x}_{\perp}^{t}
\end{array}\right]\label{eq:xt-decompose-e1}
\end{equation}
where $x_{\|}^{t}=x_{1}^{t}$ and $\bm{x}_{\perp}^{t}=[x_{i}^{t}]_{2\leq i\leq n}$
as introduced in Section \ref{sec:why}. For notational
simplicity, we define
\begin{equation}
\alpha_{t}:=x_{\parallel}^{t}\qquad\text{and}\qquad\beta_{t}:=\|\bm{x}_{\perp}^{t}\|_{2}.\label{eq:defn-alphat-betat}
\end{equation}
Intuitively, $\alpha_{t}$ represents the size of the signal component,
whereas $\beta_{t}$ measures the size of the component orthogonal
to the signal direction. In view of (\ref{eq:assumption-x0-xnatural}), we have $\alpha_{0}>0$.

\subsection{Outline of the proof}

To begin with, it is easily seen that if $\alpha_{t}$ and $\beta_{t}$
(cf.~\eqref{eq:defn-alphat-betat}) obey $|\alpha_{t}-1|\leq\gamma/2$
and $\beta_{t}\leq\gamma/2$, then
\[
\mathrm{dist}\left(\bm{x}^{t},\bm{x}^{\natural}\right)\leq\|\bm{x}^{t}-\bm{x}^{\natural}\|_{2}\leq\big|\alpha_{t}-1\big|+\big|\beta_{t}\big|\leq\gamma.
\]
Therefore, our first step --- which is concerned with proving $\mathrm{dist}(\bm{x}^{t},\bm{x}^{\natural})\leq\gamma$
--- comes down to  the following two steps.
\begin{enumerate}
\item Show that if $\alpha_{t}$ and $\beta_{t}$ satisfy the approximate
state evolution (see (\ref{subeq:approx-state-evolution})), then
there exists some $T_{\gamma}=O\left(\log n\right)$ such that
\begin{equation}
\left|\alpha_{T_{\gamma}}-1\right|\leq\gamma/2\qquad\text{and}\qquad\beta_{T_{\gamma}}\leq\gamma/2,\label{eq:defn-local}
\end{equation}
which would immediately imply that
\[
\mathrm{dist}\left(\bm{x}^{T_{\gamma}},\bm{x}^{\natural}\right)
\leq\gamma.
\]
Along the way, we will also show that the ratio $\alpha_{t}/\beta_{t}$ grows exponentially fast.
\item Justify that $\alpha_{t}$ and $\beta_{t}$ satisfy the approximate
state evolution with high probability, using (some variants of) leave-one-out
arguments.
\end{enumerate}
After $t\geq T_{\gamma}$, we can invoke prior theory \cite{ma2017implicit} concerning local
convergence to show that with high probability,
\[
\mathrm{dist}\left(\bm{x}^{t},\bm{x}^{\natural}\right)\leq(1-\rho)^{t-T_{\gamma}}\|\bm{x}^{T_{\gamma}}-\bm{x}^{\natural}\|_{2},\qquad\forall \,t>T_{\gamma}
\]
for some constant $0<\rho<1$ independent of $n$ and $m$.

\subsection{Dynamics of approximate state evolution }

This subsection formalizes our intuition in Section \ref{sec:why}:
as long as the approximate state evolution holds, then one can find
$T_{\gamma}\lesssim\log n$ obeying condition (\ref{eq:defn-local}).
In particular, the approximate state evolution is given by\begin{subequations}\label{subeq:state-evolution}
\begin{align}
\alpha_{t+1} & =\left\{ 1+3\eta\left[1-\left(\alpha_{t}^{2}+\beta_{t}^{2}\right)\right]+\eta\zeta_{t}\right\} \alpha_{t},\label{eq:alphat}\\
\beta_{t+1} & =\left\{ 1+\eta\left[1-3\left(\alpha_{t}^{2}+\beta_{t}^{2}\right)\right]+\eta\rho_{t}\right\} \beta_{t},\label{eq:betat}
\end{align}
\end{subequations}where $\left\{ \zeta_{t}\right\} $ and $\left\{ \rho_{t}\right\} $
represent the perturbation terms. Our result is this:

\begin{lemma}\label{lemma:iterative}Let $\gamma>0$ be some sufficiently small constant, and consider
the approximate state evolution \eqref{subeq:state-evolution}. Suppose
the initial point obeys
\begin{equation}
\alpha_{0}\geq\frac{1}{\sqrt{n\log n}}\qquad\text{and}\qquad1-\frac{1}{\log n}\leq\sqrt{\alpha_{0}^{2}+\beta_{0}^{2}}\leq1+\frac{1}{\log n}.\label{eq:initial-point-alpha-beta}
\end{equation}
and the perturbation terms satisfy
\[
\max\left\{ \left|\zeta_{t}\right|,\left|\rho_{t}\right|\right\} \leq\frac{c_{3}}{\log n},\qquad t=0,1,\cdots
\]
for some sufficiently small constant $c_{3}>0$.

(a) Let
\begin{align}
T_{\gamma} & :=\min\big\{ t: \left|\alpha_{t}-1\right|\leq\gamma/2\text{ and }\beta_{t}\leq\gamma/2\big\}.
\end{align}
Then for any sufficiently large $n$ and $m$ and any sufficiently
small constant $\eta>0$, one has
\begin{equation}
T_{\gamma}\lesssim\log n,\label{eq:Tgamma-UB}
\end{equation}
and there exist some constants $c_{5},c_{10}>0$ independent of $n$
and $m$ such that
\begin{equation}
\frac{1}{2\sqrt{n\log n}}\leq\alpha_{t}\leq2,\quad c_{5}\leq\beta_{t}\leq1.5\quad\text{and}\quad\frac{\alpha_{t+1}/\alpha_{t}}{\beta_{t+1}/\beta_{t}}\geq1+c_{10}\eta^2,\qquad0\leq t\leq T_{\gamma}.\label{eq:alpha-beta-range-SNR}
\end{equation}

(b) If we define
\begin{align}
T_{0} & :=\min\left\{ t: \alpha_{t+1}\geq c_{6}/\log^{5}m\right\} ,\label{eq:defn-T0-clean-1}\\
T_{1} & :=\min\left\{ t: \alpha_{t+1}>c_{4}\right\} ,\label{eq:defn-T1}
\end{align}
for some arbitrarily small constants $c_{4},c_{6}>0$, then
	\begin{enumerate}
\item[1)] $T_{0}\leq T_{1}\leq T_{\gamma}\lesssim\log n$; $T_{1}-T_{0}\lesssim\log\log m$;
$T_{\gamma}-T_{1}\lesssim1$;
\item[2)] For $T_{0}<t\leq T_{\gamma}$, one has $\alpha_{t}\geq c_{6}/\log^{5}m$.
\end{enumerate}
\end{lemma}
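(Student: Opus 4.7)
The plan is a direct dynamical analysis of the scalar system \eqref{subeq:state-evolution}: I partition the trajectory into three phases according to the size of $\alpha_t$, and simultaneously track the signal-to-orthogonal ratio $\alpha_t/\beta_t$.

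The backbone of the argument is the exact identity
\[
\frac{\alpha_{t+1}/\alpha_t}{\beta_{t+1}/\beta_t} \;=\; 1 + \frac{2\eta + \eta(\zeta_t - \rho_t)}{1 + \eta(1 - 3r_t) + \eta\rho_t}, \qquad r_t := \alpha_t^2 + \beta_t^2.
\]
Once one verifies by induction that $r_t$ stays bounded (say $r_t \le 2$), the denominator is $\Theta(1)$ for small $\eta$ while the numerator is at least $\eta$, because $|\zeta_t|, |\rho_t| \le c_3/\log n$. Hence $\alpha_{t+1}\beta_t/(\alpha_t\beta_{t+1}) \ge 1 + c\eta$ for some absolute $c > 0$, which is strictly stronger than the claimed $1 + c_{10}\eta^2$, and drives $\alpha_t/\beta_t$ from $\Omega(1/\sqrt{n\log n})$ to $\Omega(1)$ within $O(\log n / \eta) = O(\log n)$ iterations.

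The three phases are organized as follows. In Phase~I ($t \le T_0$, $\alpha_t$ tiny), $\alpha_t^2$ is negligible in $r_t$, and the $\beta$-recursion $\beta_{t+1} \approx \beta_t[1 + \eta(1 - 3\beta_t^2)]$ drags $\beta_t^2$ into $[1/4,\,2/3]$ within $O(1/\eta) = O(1)$ iterations and keeps it there. While $\beta_t^2$ sits in this band, $\alpha_{t+1}/\alpha_t = 1 + 3\eta(1 - r_t) + \eta\zeta_t \ge 1 + c\eta$, so $\alpha_t$ climbs geometrically from $\alpha_0 \gtrsim 1/\sqrt{n\log n}$ up to $c_6/\log^5 m$ in $T_0 = O(\log(n\log n)/\eta) = O(\log n)$ steps. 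In Phase~II ($T_0 < t \le T_1$, $c_6/\log^5 m < \alpha_t \le c_4$), the same lower bound on $\alpha_{t+1}/\alpha_t$ persists because $r_t$ remains bounded below $1$, so inflating $\alpha_t$ by an additional factor $\log^5 m$ requires only $T_1 - T_0 = O(\log\log m)$ iterations. Phase~III ($T_1 < t \le T_\gamma$) is essentially a local contraction of the two-dimensional map towards $(1,0)$ and finishes in $O(1)$ iterations, giving $T_\gamma \lesssim \log n$.

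The main technical obstacle is maintaining the pointwise bounds $\alpha_t \in [1/(2\sqrt{n\log n}),\, 2]$ and $\beta_t \in [c_5,\, 1.5]$ uniformly across $O(\log n)$ iterations in the presence of the errors $\zeta_t, \rho_t$. The lower bound on $\alpha_t$ is the most delicate, because \eqref{eq:initial-point-alpha-beta} allows $r_0 > 1$, which causes $\alpha_t$ to shrink during the $O(1/\eta)$ initial transient in which $\beta_t^2$ falls from $\approx 1$ to $\approx 1/3$; however the per-step contraction factor is bounded below by $1 - O(\eta/\log n)$ and summing over the transient limits the total loss to a factor less than $2$. The upper bound $\alpha_t \le 2$ follows since for small $\eta$ the map $\alpha \mapsto \alpha[1 + 3\eta(1 - \alpha^2)]$ (with $\beta$ negligible in Phase~III) is a contraction towards the attracting fixed point $\alpha = 1$, so no significant overshoot occurs. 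The bound $\beta_t \ge c_5$ holds because $\beta_t^2 \approx 1/3$ throughout Phases~I and II, while Phase~III contributes only $O(1)$ iterations in which $\beta_t$ cannot decrease below $\gamma/2$ prior to $T_\gamma$ by definition. Part (b) is then immediate from the phase construction: $T_0 \le T_1 \le T_\gamma$ by the definitions, and for $t > T_0$ the multiplier on $\alpha_t$ stays $\ge 1$ throughout Phases~II and III (since $r_t \le 1 - \Omega(1)$ in Phase~II and $\alpha_t \approx 1$ in Phase~III), preserving $\alpha_t \ge c_6/\log^5 m$.
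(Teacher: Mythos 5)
Your proposal follows the same high-level plan as the paper's proof (partition the trajectory into sub-phases, track $\alpha_t$, $\beta_t$, and their ratio), but the centerpiece — the exact identity
\[
\frac{\alpha_{t+1}/\alpha_t}{\beta_{t+1}/\beta_t}=1+\frac{2\eta+\eta(\zeta_t-\rho_t)}{1+\eta(1-3r_t)+\eta\rho_t}
\]
applied uniformly across all phases — is a genuine improvement over what the paper does. The paper deploys precisely this ratio identity in its Stages~1.2 and 1.3, but in Stage~1.1.1 it bounds $\alpha_{t+1}/\alpha_t\ge 1+2\eta^3$ and $\beta_{t+1}/\beta_t\le 1-2\eta^2$ separately and multiplies them, which yields only the weaker $1+O(\eta^2)$; your identity shows the numerator $2\eta+\eta(\zeta_t-\rho_t)\ge\eta$ and denominator $\Theta(1)$ make the per-step ratio gain $1+\Omega(\eta)$ uniformly once $r_t$ is bounded, which is why the paper's claimed rate $1+c_{10}\eta^2$ is in fact not tight. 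Your phase boundaries (cut on $\alpha_t$ reaching $c_6/\log^5 m$ and $c_4$, exactly the $T_0,T_1$ in the statement) are also more natural than the paper's, which introduces auxiliary thresholds $T_{1,1}$ (where $\beta_t$ crosses $\sqrt{1/3+\eta}$) and $T_2$ (where $\alpha_t/\beta_t$ crosses $2/\gamma$). Two details in your sketch deserve tightening: (i) the Phase~III step is not a literal ``local contraction'' of the two-dimensional map — at $t=T_1$ the state is roughly $(c_4,1/\sqrt3)$, which is near the saddle, not near $(1,0)$ — so the $O(1)$ bound on $T_\gamma-T_1$ must come from the ratio growth $1+\Omega(\eta)$ plus the local $\alpha$-contraction once $\beta_t$ has been driven small, as you implicitly use elsewhere; and (ii) the claim that ``$\beta_t$ cannot decrease below $\gamma/2$ prior to $T_\gamma$ by definition'' is false — $T_\gamma$ requires \emph{both} $|\alpha_t-1|\le\gamma/2$ and $\beta_t\le\gamma/2$, so $\beta_t$ could dip below $\gamma/2$ while $\alpha_t$ is still far from $1$; the correct route to $\beta_t\ge c_5$ is what you already have in hand, namely the trivial per-step bound $\beta_{t+1}/\beta_t\ge 1-7\eta$ combined with $T_\gamma-T_1\lesssim 1$. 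With those two fixes the argument goes through and in fact strengthens the lemma's final conclusion.
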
\begin{proof}See Appendix \ref{sec:Proof-of-Lemma-iterative}.\end{proof}

\begin{remark}
	Recall that $\gamma$ is sufficiently small and $(\alpha,\beta)=(1,0)$ represents the global minimizer. Since $|\alpha_0- 1|\approx 1$, one has $T_{\gamma}>0$, which denotes the first time when the iterates enter the local region surrounding the global minimizer. In addition, the fact that $\alpha_0 \lesssim 1/\sqrt{n} $ gives $T_0>0$ and $T_1>0$, both of which indicate the first time when the signal strength is sufficiently large.
\end{remark}

Lemma \ref{lemma:iterative} makes precise that under the approximate
state evolution, the first stage enjoys a fairly short duration $T_{\gamma}\lesssim\log n$.
Moreover, the size of the signal component grows faster than that
of the orthogonal component for any iteration $t<T_{\gamma}$,
thus confirming the exponential growth of $\alpha_{t}/\beta_{t}$.

In addition, Lemma \ref{lemma:iterative} identifies two midpoints
$T_{0}$ and $T_{1}$ when the sizes of the signal component $\alpha_{t}$
become sufficiently large. These are helpful in our subsequent analysis.
In what follows, we will divide Stage~1 (which consists of all iterations
up to $T_{\gamma}$) into two phases:
\begin{itemize}
\item \emph{Phase I}: consider the duration $0\leq t\leq T_{0}$;
\item \emph{Phase II}: consider all iterations with $T_{0}<t\leq T_{\gamma}$.
\end{itemize}
We will justify the approximate state evolution (\ref{subeq:state-evolution})
for these two phases separately.

\subsection{Motivation of the leave-one-out approach\label{subsec:Motivation}}

As we have alluded in Section \ref{subsec:why-loo}, the main difficulty
in establishing the approximate state evolution (\ref{subeq:state-evolution})
lies in controlling the perturbation terms to the desired orders (i.e.~$\left|\zeta_{t}\right|,\left|\rho_{t}\right|\ll1/\log n$
in Lemma~\ref{lemma:iterative}). To achieve this, we advocate the
use of (some variants of) leave-one-out sequences to help establish
certain ``near-independence'' between $\bm{x}^{t}$ and certain components of $\{\bm{a}_{i}\}$.

We begin by taking a closer look at the perturbation terms. Regarding
the signal component, it is easily seen from (\ref{eq:defn-r1}) that
\[
	x_{\parallel}^{t+1}=\left\{ 1+3\eta\left(1-\|\bm{x}^{t}\|_{2}^{2}\right)\right\} x_{\parallel}^{t}-\eta r_{1}(\bm{x}^{t}),
\]
where the perturbation term $r_{1}(\bm{x}^{t})$ obeys
\begin{align}
r_{1}(\bm{x}^{t}) & =\underset{:=I_{1}}{\underbrace{\left[1-\big(x_{\|}^{t}\big)^{2}\right]x_{\|}^{t} \left(\frac{1}{m}\sum_{i=1}^{m}a_{i,1}^{4}-3\right)}}+\underset{:=I_{2}}{\underbrace{\left[1-3\big(x_{\|}^{t}\big)^{2}\right] \frac{1}{m}\sum_{i=1}^{m}a_{i,1}^{3}\bm{a}_{i,\perp}^{\top}\bm{x}_{\perp}^{t}}}\nonumber \\
 & \quad-\underset{:=I_{3}}{\underbrace{3x_{\|}^{t} \left(\frac{1}{m}\sum_{i=1}^{m}\big(\bm{a}_{i,\perp}^{\top}\bm{x}_{\perp}^{t}\big)^{2}a_{i,1}^{2}-\left\Vert \bm{x}_{\perp}^{t}\right\Vert _{2}^{2}\right)}}-\underset{:=I_{4}}{\underbrace{\frac{1}{m}\sum_{i=1}^{m}\big(\bm{a}_{i,\perp}^{\top}\bm{x}_{\perp}^{t}\big)^{3}a_{i,1}}}.\label{eq:r1-I1-4-defn}
\end{align}
Here and throughout the paper, for any vector $\bm{v}\in\mathbb{R}^{n}$, $\bm{v}_{\perp}\in\RR^{n-1}$
denotes the 2nd through the $n$th entries of $\bm{v}$. Due to the
dependency between $\bm{x}^{t}$ and $\{\bm{a}_{i}\}$, it is challenging
to obtain sharp control of some of these terms.

In what follows, we use the term $I_{4}$ to explain and motivate
our leave-one-out approach. As discussed in Section \ref{subsec:why-loo},
$I_{4}$ needs to be controlled to the level $O(1/(\sqrt{n}\,\mathrm{poly}\log(n)))$.
This precludes us from seeking a uniform bound on the function $h(\bm{x}):=m^{-1}\!\sum_{i=1}^{m}(\bm{a}_{i,\perp}^{\top}\bm{x}_{\perp})^{3}a_{i,1}$
over all $\bm{x}$ (or even all $\bm{x}$ within the set $\mathcal{C}$
incoherent with $\{\bm{a}_{i}\}$), since the uniform bound $\sup_{\bm{x}\in\mathcal{C}}|h(\bm{x})|$
can be $O(\sqrt{n}/\text{poly}\log(n))$ times larger than the desired
order.

In order to control $I_{4}$ to the desirable order, one strategy
is to approximate it by a sum of independent variables and then invoke
the CLT. Specifically, we first rewrite $I_{4}$ as
\[
I_{4}=\frac{1}{m}\sum_{i=1}^{m}\left(\bm{a}_{i,\perp}^{\top}\bm{x}_{\perp}^{t}\right)^{3}\left|a_{i,1}\right|\xi_{i}
\]
with $\xi_{i}:=\mathrm{sgn}(a_{i,1})$. Here $\text{sgn}(\cdot)$
denotes the usual sign function. To exploit the statistical independence
between $\xi_{i}$ and $\{|a_{i,1}|,\bm{a}_{i,\perp}\}$, we would
like to identify some vector independent of $\xi_{i}$ that well approximates
$\bm{x}^{t}$. If this can be done, then one may treat $I_{4}$ as
a weighted independent sum of $\{\xi_{i}\}$. Viewed in this light,
our plan is the following:
\begin{enumerate}
\item Construct a sequence $\{\bm{x}^{t,\mathrm{sgn}}\}$ independent of
$\{\xi_{i}\}$ obeying $\bm{x}^{t,\mathrm{sgn}}\approx\bm{x}^{t}$,
so that
\[
I_{4}\approx\frac{1}{m}\sum_{i=1}^{m}\underset{:=w_{i}}{\underbrace{\left(\bm{a}_{i,\perp}^{\top}\bm{x}_{\perp}^{t,\mathrm{sgn}}\right)^{3}\left|a_{i,1}\right|}}\,\xi_{i}.
\]
One can then apply standard concentration results (e.g.~the Bernstein
inequality) to control $I_{4}$, as long as none of the weight $w_{i}$
is exceedingly large.
\item Demonstrate that the weight $w_{i}$ is well-controlled, or equivalently,
$\big|\bm{a}_{i,\perp}^{\top}\bm{x}_{\perp}^{t,\mathrm{sgn}}\big|$
($1\leq i\leq m$) is not much larger than its typical size. This
can be accomplished by identifying another sequence $\{\bm{x}^{t,(i)}\}$
independent of $\bm{a}_{i}$ such that $\bm{x}^{t,(i)}\approx\bm{x}^{t}\approx\bm{x}^{t,\mathrm{sgn}}$,
followed by the argument:
\begin{equation} \label{eq:incoherence-intuition}
\big|\bm{a}_{i,\perp}^{\top}\bm{x}_{\perp}^{t,\mathrm{sgn}}\big|\approx\big|\bm{a}_{i,\perp}^{\top}\bm{x}_{\perp}^{t} \big| \approx\big|\bm{a}_{i,\perp}^{\top}\bm{x}_{\perp}^{t,(i)}\big|\lesssim\sqrt{\log m}\big\|\bm{x}_{\perp}^{t,(i)}\big\|_{2}\approx\sqrt{\log m}\big\|\bm{x}_{\perp}^{t}\big\|_{2}.
\end{equation}
Here, the inequality follows from standard Gaussian tail bounds and
		the independence between $\bm{a}_{i}$ and $\bm{x}^{t,(i)}$. This explains why we would like to construct $\{\bm{x}^{t,(i)}\}$ for each $1\leq i\leq m$.
\end{enumerate}
As we will detail in the next subsection, such auxiliary sequences
are constructed by leaving out a small amount of relevant information
from the collected data before running the GD algorithm, which is
a variant of the ``leave-one-out'' approach rooted in probability
theory and random matrix theory.

\subsection{Leave-one-out and random-sign sequences\label{subsec:Leave-one-out-sequences}}

We now describe how to design auxiliary sequences to help establish
certain independence properties between the gradient iterates $\left\{ \bm{x}^{t}\right\} $
and the design vectors $\left\{ \bm{a}_{i}\right\} $. In the sequel,
we formally define the three sets of auxiliary sequences $\{\bm{x}^{t,\left(l\right)}\},\{\bm{x}^{t,\text{sgn}}\},\{\bm{x}^{t,\text{sgn},\left(l\right)}\}$
as introduced in Section \ref{subsec:why-loo} and Section \ref{subsec:Motivation}.

\begin{algorithm}[t]
\caption{The $l$th leave-one-out sequence}

\label{alg:wf-LOO}\begin{algorithmic}

\STATE \textbf{{Input}}: $\{\bm{a}_{i}\}_{1\leq i\leq m,i\neq l}$,
$\{y_{i}\}_{1\leq i\leq m,i\neq l}$, and $\bm{x}^{0}$.


\STATE \textbf{{Gradient updates}}: \textbf{for} $t=0,1,2,\ldots,T-1$
\textbf{do}

\STATE
\begin{equation}
\bm{x}^{t+1,(l)}=\bm{x}^{t,(l)}-\eta_{t}\nabla f^{(l)}(\bm{x}^{t,(l)}),\label{eq:gradient-update-leave-WF}
\end{equation}
where $\bm{x}^{0,(l)}=\bm{x}^{0}$ and $f^{(l)}(\bm{x})=(1/4m)\!\cdot\!\sum_{i:i\neq l}[(\bm{a}_{i}^{\top}\bm{x})^{2}-(\bm{a}_{i}^{\top}\bm{x}^{\natural})^{2}]^{2}$.

\end{algorithmic}
\end{algorithm}

\begin{algorithm}[t]
\caption{The random-sign sequence }

\label{alg:wf-LOO-sign}\begin{algorithmic}

\STATE \textbf{{Input}}: $\{|a_{i,1}|\}_{1\leq i\leq m}$, $\{\bm{a}_{i,\perp}\}_{1\leq i\leq m}$,
$\{\xi_{i}^{\mathrm{sgn}}\}_{1\le i\leq m}$, $\{y_{i}\}_{1\leq i\leq m}$,
$\bm{x}^{0}$.


\STATE \textbf{{Gradient updates}}: \textbf{for} $t=0,1,2,\ldots,T-1$
\textbf{do}

\STATE
\begin{equation}
\bm{x}^{t+1,\mathrm{sgn}}=\bm{x}^{t,\mathrm{sgn}}-\eta_{t}\nabla f^{\mathrm{sgn}}(\bm{x}^{t,\mathrm{sgn}}),\label{eq:gradient-update-leave-WF-1}
\end{equation}
where $\bm{x}^{0,\mathrm{sgn}}=\bm{x}^{0}$, $f^{\mathrm{sgn}}(\bm{x})=\frac{{1}}{4m}\sum_{i=1}^{m}[(\bm{a}_{i}^{\mathrm{sgn}\top}\bm{x})^{2}-(\bm{a}_{i}^{\mathrm{sgn}\top}\bm{x}^{\natural})^{2}]^{2}$
with $\bm{a}_{i}^{\mathrm{sgn}}:=\left[\begin{array}{c}
\xi_{i}^{\mathrm{sgn}}|a_{i,1}|\\
\bm{a}_{i,\perp}
\end{array}\right]$.

\end{algorithmic}
\end{algorithm}

\begin{algorithm}[t]
\caption{The $l$th leave-one-out and random-sign sequence}

\label{alg:wf-LOO-sign-l}\begin{algorithmic}

\STATE \textbf{{Input}}:$\{|a_{i,1}|\}_{1\leq i\leq m,i\neq l}$,
$\{\bm{a}_{i,\perp}\}_{1\leq i\leq m,i\neq l}$, $\{\xi_{i}^{\mathrm{sgn}}\}_{1\le i\leq m,,i\neq l}$,
$\{y_{i}\}_{1\leq i\leq m,i\neq l}$, $\bm{x}^{0}$.


\STATE \textbf{{Gradient updates}}: \textbf{for} $t=0,1,2,\ldots,T-1$
\textbf{do}

\STATE
\begin{equation}
\bm{x}^{t+1,\mathrm{sgn},(l)}=\bm{x}^{t,\mathrm{sgn},(l)}-\eta_{t}\nabla f^{\mathrm{sgn},(l)}(\bm{x}^{t,\mathrm{sgn},(l)}),\label{eq:gradient-update-leave-WF-1-1}
\end{equation}
where $\bm{x}^{0,\mathrm{sgn},(l)}=\bm{x}^{0}$, $f^{\mathrm{sgn},(l)}\big(\bm{x}\big)=\frac{1}{4m}\sum_{i:i\neq l}\left[\big(\bm{a}_{i}^{\mathrm{sgn}\top}\bm{x}\big)^{2}-\big(\bm{a}_{i}^{\mathrm{sgn}\top}\bm{x}^{\natural}\big)^{2}\right]^{2}$
with $\bm{a}_{i}^{\mathrm{sgn}}:=\left[\begin{array}{c}
\xi_{i}^{\mathrm{sgn}}|a_{i,1}|\\
\bm{a}_{i,\perp}
\end{array}\right]$.

\end{algorithmic}
\end{algorithm}
\begin{itemize}
\item \emph{Leave-one-out sequences} $\{\bm{x}^{t,(l)}\}_{t\geq0}$. For
each $1\leq l\leq m$, we introduce a sequence $\{\bm{x}^{t,(l)}\}$,
which  drops the $l$th sample and runs GD w.r.t.~the auxiliary objective function
\begin{equation}
f^{(l)}\left(\bm{x}\right)=\frac{1}{4m}\sum_{i:i\neq l}\left[\left(\bm{a}_{i}^{\top}\bm{x}\right)^{2}-\left(\bm{a}_{i}^{\top}\bm{x}^{\natural}\right)^{2}\right]^{2}.\label{eq:auxiliary-loss-LOO}
\end{equation}
See Algorithm~\ref{alg:wf-LOO} for details and also
Figure~\ref{fig:Illustration-of-leave-one-out}(a) for an illustration.
One of the most important features of $\{\bm{x}^{t,(l)}\}$ is that
all of its iterates are statistically independent of $(\bm{a}_{l},y_{l})$,
and hence are incoherent with $\bm{a}_{l}$ with high probability,
in the sense that $\big|\bm{a}_{l}^{\top}\bm{x}^{t,(l)}\big|\lesssim\sqrt{\log m} \|\bm{x}^{t,(l)} \|_{2}$.
		Such incoherence properties further allow us to control  both $\big|\bm{a}_{l}^{\top}\bm{x}^{t}\big|$ and $\big|\bm{a}_{l}^{\top}\bm{x}^{t,\mathrm{sgn}}\big|$ (see \eqref{eq:incoherence-intuition}), which is crucial for controlling
the size of the residual terms (e.g.~$r_{1}(\bm{x}^{t})$ as defined
		in (\ref{eq:defn-r1})). Notably, the sequence $\{\bm{x}^{t,(l)}\}$ has also been applied by \cite{ma2017implicit} to justify the success of GD with spectral initialization for several nonconvex statistical estimation problems.
\item \emph{Random-sign sequence} $\left\{ \bm{x}^{t,\mathrm{sgn}}\right\} _{t\geq0}$.
Introduce a collection of auxiliary design vectors $\left\{ \bm{a}_{i}^{\mathrm{sgn}}\right\} _{1\leq i\leq m}$
defined as
\begin{equation}
\bm{a}_{i}^{\mathrm{\mathrm{sgn}}}:=\left[\begin{array}{c}
\xi_{i}^{\mathrm{sgn}}\left|a_{i,1}\right|\\
\bm{a}_{i,\perp}
\end{array}\right],\label{eq:auxiliary-random-sign-vector}
\end{equation}
where $\left\{ \xi_{i}^{\mathrm{sgn}}\right\} _{1\leq i\leq m}$ is
a set of Rademacher random variables independent of $\left\{ \bm{a}_{i}\right\} $, i.e.
\begin{equation}
\xi_{i}^{\mathrm{sgn}}\overset{\text{i.i.d.}}{=}\begin{cases}
1, & \text{with probability 1/2},\\
-1, & \text{else},
\end{cases}\qquad1\leq i\leq m.\label{eq:random-sign-xi}
\end{equation}
In words, $\bm{a}_{i}^{\mathrm{sgn}}$ is generated by randomly flipping
the sign of the first entry of $\bm{a}_{i}$. To simplify the notations
hereafter, we also denote
\begin{equation}
\xi_{i}=\text{sgn}(a_{i,1}).\label{eq:defn-xi}
\end{equation}
As a result, $\bm{a}_{i}$ and $\bm{a}_{i}^{\mathrm{sgn}}$ differ
only by a single bit of information. With these auxiliary design vectors
in place, we generate a sequence $\{\bm{x}^{t,\mathrm{sgn}}\}$ by
running GD w.r.t.~the auxiliary loss function
\begin{equation}
f^{\mathrm{sgn}}\big(\bm{x}\big)=\frac{1}{4m}\sum_{i=1}^{m}\left[\big(\bm{a}_{i}^{\mathrm{sgn}\top}\bm{x}\big)^{2}-\big(\bm{a}_{i}^{\mathrm{sgn}\top}\bm{x}^{\natural}\big)^{2}\right]^{2}.\label{eq:f-sgn-LOO}
\end{equation}
One simple yet important feature associated with these new design
vectors is that it produces the same measurements as $\left\{ \bm{a}_{i}\right\} $:
\begin{equation}
\big(\bm{a}_{i}^{\top}\bm{x}^{\natural}\big)^{2}=\big(\bm{a}_{i}^{\text{sgn}\top}\bm{x}^{\natural}\big)^{2}=\left|a_{i,1}\right|^{2},\qquad1\leq i\leq m.\label{eq:by-construction}
\end{equation}
See Figure~\ref{fig:Illustration-of-leave-one-out}(b) for an illustration
and Algorithm~\ref{alg:wf-LOO-sign} for the detailed procedure.
This sequence is introduced in order to ``randomize'' certain Gaussian
polynomials (e.g.~$I_{4}$ in (\ref{eq:r1-I1-4-defn})), which in
turn enables optimal control of these quantities. This is particularly
crucial at the initial stage of the algorithm.
\item \emph{Leave-one-out and random-sign sequences} $\left\{ \bm{x}^{t,\mathrm{sgn},(l)}\right\} _{t\geq0}$.
Furthermore, we also need to introduce another collection of sequences
$\{\bm{x}^{t,\mathrm{sgn},(l)}\}$ by simultaneously employing the
new design vectors $\{\bm{a}_{i}^{\text{sgn}}\}$ and discarding a
single sample $(\bm{a}^{\text{sgn}}_{l},y^{\text{sgn}}_{l})$. This enables us to propagate the
kinds of independence properties across the above two sets of sequences,
which is useful in demonstrating that $\bm{x}^{t}$ is jointly ``nearly-independent''
of both $\bm{a}_{l}$ and $\{\mathrm{sgn}(a_{i,1})\}$. See Algorithm~\ref{alg:wf-LOO-sign-l}
and Figure~\ref{fig:Illustration-of-leave-one-out}(c).
\end{itemize}
As a remark, all of these leave-one-out and random-sign procedures
are assumed to start from the same initial point as the original sequence,
namely,
\begin{equation}
\bm{x}^{0}=\bm{x}^{0,\left(l\right)}=\bm{x}^{0,\text{sgn}}=\bm{x}^{0,\text{sgn},\left(l\right)},\qquad1\leq l\leq m.\label{eq:same-initial}
\end{equation}

\subsection{Justification of approximate state evolution for Phase I of Stage 1}

Recall that Phase I consists of the iterations $0\leq t\leq T_{0}$,
where
\begin{equation}
T_{0}=\min\left\{ t: \alpha_{t+1}\geq\frac{c_{6}}{\log^{5}m}\right\} .\label{eq:defn-T0-analysis}
\end{equation}
Our goal here is to show that the approximate state evolution (\ref{subeq:state-evolution})
for both the size $\alpha_{t}$ of the signal component and the size
$\beta_{t}$ of the orthogonal component holds true throughout
Phase I. Our proof will be inductive in nature. Specifically,
we will first identify a set of induction hypotheses that are helpful
in proving the validity of the approximate state evolution (\ref{subeq:state-evolution}),
and then proceed by establishing these hypotheses via induction.

\subsubsection{Induction hypotheses}

For the sake of clarity, we first list all the induction hypotheses.
\begin{subequations}\label{subeq:induction}
\begin{align}
\max_{1\leq l\leq m}\big\|\bm{x}^{t}-\bm{x}^{t,\left(l\right)}\big\|_{2} & \leq\beta_{t}\left(1+\frac{1}{\log m}\right)^{t}C_{1}\frac{\sqrt{n\log^{5}m}}{m},\label{eq:induction-xt-l}\\
\max_{1\leq l\leq m}\left|x_{\parallel}^{t}-x_{\parallel}^{t,\left(l\right)}\right| & \leq\alpha_{t}\left(1+\frac{1}{\log m}\right)^{t}C_{2}\frac{\sqrt{n\log^{12}m}}{m},\label{eq:induction-xt-l-signal}\\
\left\Vert \bm{x}^{t}-\bm{x}^{t,\text{sgn}}\right\Vert _{2} & \leq\alpha_{t}\left(1+\frac{1}{\log m}\right)^{t}C_{3}\sqrt{\frac{n\log^{5}m}{m}},\label{eq:induction-xt-sgn}\\
\max_{1\leq l\leq m}\left\Vert \bm{x}^{t}-\bm{x}^{t,\text{sgn}}-\bm{x}^{t,\left(l\right)}+\bm{x}^{t,\text{sgn},\left(l\right)}\right\Vert _{2} & \leq\alpha_{t}\left(1+\frac{1}{\log m}\right)^{t}C_{4}\frac{\sqrt{n\log^{9}m}}{m},\label{eq:induction-double}\\
c_{5}\leq\left\Vert \bm{x}_{\perp}^{t}\right\Vert _{2} & \leq\left\Vert \bm{x}^{t}\right\Vert _{2}\leq C_{5},\label{eq:induction-norm-size}\\
\left\Vert \bm{x}^{t}\right\Vert _{2} & \leq4\alpha_{t}\sqrt{n\log m},\label{eq:induction-norm-relative}
\end{align}
where $C_{1},\cdots,C_{5}$ and $c_{5}$ are some absolute positive
constants.\end{subequations}

Now we are ready to prove an immediate consequence of the induction
hypotheses (\ref{subeq:induction}): if (\ref{subeq:induction}) hold
for the $t^{\text{th}}$ iteration, then $\alpha_{t+1}$ and $\beta_{t+1}$
follow the approximate state evolution (see (\ref{subeq:state-evolution})).
This is justified in the following lemma.

\begin{lemma}\label{lemma:xt-signal}Suppose $m\geq Cn\log^{11}m$
for some sufficiently large constant $C>0$. For any $0\leq t\leq T_{0}$
(cf.~\eqref{eq:defn-T0-analysis}), if the $t^{\mathrm{th}}$ iterates satisfy
the induction hypotheses \eqref{subeq:induction}, then with probability
at least $1-O(me^{-1.5n})-O(m^{-10})$,\begin{subequations}
\begin{align}
\alpha_{t+1} & =\left\{ 1+3\eta\left[1-\left(\alpha_{t}^{2}+\beta_{t}^{2}\right)\right]+\eta\zeta_{t}\right\} \alpha_{t};\label{eq:approximate_state_evolution_phase_1_alpha}\\
\beta_{t+1} & =\left\{ 1+\eta\left[1-3\left(\alpha_{t}^{2}+\beta_{t}^{2}\right)\right]+\eta\rho_{t}\right\} \beta_{t}\label{eq:approximate_state_evolution_phase_1_beta}
\end{align}
\end{subequations}hold for some $\left|\zeta_{t}\right|\ll1/\log m$
and $\left|\rho_{t}\right|\ll1/\log m$. \end{lemma}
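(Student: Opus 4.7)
The plan is to start from the exact GD update, expand $\bm{a}_i^\top\bm{x}^t=a_{i,1}x_\parallel^t+\bm{a}_{i,\perp}^\top\bm{x}_\perp^t$ inside $\nabla f(\bm{x}^t)$, and peel off the population recursion \eqref{subeq:population-dynamics}. The first coordinate then takes the form
\[
x_\parallel^{t+1}=\bigl\{1+3\eta\bigl(1-\|\bm{x}^t\|_2^2\bigr)\bigr\}x_\parallel^t-\eta\bigl(I_1+I_2-I_3-I_4\bigr),
\]
with $I_1,\ldots,I_4$ exactly as in \eqref{eq:r1-I1-4-defn}, while the remaining $n-1$ coordinates give an analogous vector recursion $\bm{x}_\perp^{t+1}=\{1+\eta(1-3\|\bm{x}^t\|_2^2)\}\bm{x}_\perp^t+\eta(\bm{J}_1+\bm{J}_2-\bm{J}_3-\bm{J}_4)$ around the population direction. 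Matching these identities with \eqref{eq:approximate_state_evolution_phase_1_alpha}--\eqref{eq:approximate_state_evolution_phase_1_beta} reduces the lemma to proving $|I_k|\ll\alpha_t/\log m$ and $\|\bm{J}_k\|_2\ll\beta_t/\log m$ under the induction hypotheses \eqref{subeq:induction} together with $m\gtrsim n\log^{13}m$.

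The two ``even-in-$a_{i,1}$'' residuals can be disposed of by generic concentration and are relatively routine. Indeed, $I_1=[1-(x_\parallel^t)^2]x_\parallel^t(\frac{1}{m}\sum_i a_{i,1}^4-3)$ is a scalar times the centered empirical fourth moment, controlled by Bernstein on a subexponential sum at size $|x_\parallel^t|\sqrt{\log m/m}$. For $I_3$ I would use the quadratic-form identity $\frac{1}{m}\sum_i a_{i,1}^2(\bm{a}_{i,\perp}^\top\bm{x}_\perp^t)^2=\bm{x}_\perp^{t\top}\!\bigl(\frac{1}{m}\sum_i a_{i,1}^2\bm{a}_{i,\perp}\bm{a}_{i,\perp}^\top\bigr)\bm{x}_\perp^t$ and invoke a \emph{uniform} spectral concentration of the inner matrix around $\bm{I}_{n-1}$, which is valid regardless of the data dependence of $\bm{x}^t$. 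The orthogonal counterparts $\bm{J}_1,\bm{J}_3$ are treated in the same spirit via $\frac{1}{m}\sum_i\bm{a}_i\bm{a}_i^\top\approx\bm{I}_n$ together with the size bound \eqref{eq:induction-norm-size}.

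The main obstacle is the cubic residual $I_4=\frac{1}{m}\sum_i(\bm{a}_{i,\perp}^\top\bm{x}_\perp^t)^3 a_{i,1}$ together with its odd-in-$a_{i,1}$ cousins $I_2$ and $\bm{J}_2,\bm{J}_4$, whose uniform-over-$\bm{x}$ bound would be a factor $\sqrt{n}/\mathrm{poly}\log(m)$ too large. Following Section~\ref{subsec:Motivation}, I would split $a_{i,1}=|a_{i,1}|\xi_i$ and decompose
\[
I_4=\underbrace{\frac{1}{m}\sum_{i=1}^m\bigl(\bm{a}_{i,\perp}^\top\bm{x}_\perp^{t,\mathrm{sgn}}\bigr)^{\!3}|a_{i,1}|\xi_i}_{I_{4,a}}+\underbrace{\frac{1}{m}\sum_{i=1}^m\bigl[(\bm{a}_{i,\perp}^\top\bm{x}_\perp^t)^3-(\bm{a}_{i,\perp}^\top\bm{x}_\perp^{t,\mathrm{sgn}})^3\bigr]a_{i,1}}_{I_{4,b}}.
\]
Since $\bm{x}^{t,\mathrm{sgn}}$ is by construction independent of $\{\xi_i\}$, conditioning on $\{|a_{i,1}|,\bm{a}_{i,\perp}\}$ and on $\bm{x}^{t,\mathrm{sgn}}$ turns $I_{4,a}$ into a Rademacher sum to which Bernstein applies, yielding a bound of order $\sqrt{\log m/m}$ provided we supply the pointwise incoherence $\max_i|\bm{a}_{i,\perp}^\top\bm{x}_\perp^{t,\mathrm{sgn}}|\lesssim\sqrt{\log m}\,\|\bm{x}_\perp^t\|_2$. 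That incoherence will itself be produced by sandwiching $\bm{x}^{t,\mathrm{sgn}}$ between $\bm{x}^t$ and the doubly-perturbed $\bm{x}^{t,\mathrm{sgn},(l)}$ (via \eqref{eq:induction-xt-l}, \eqref{eq:induction-xt-sgn}, \eqref{eq:induction-double}) and then applying the standard Gaussian tail to $\bm{a}_{l,\perp}^\top\bm{x}_\perp^{t,\mathrm{sgn},(l)}$, which is independent of $\bm{a}_l$ by design of Algorithm~\ref{alg:wf-LOO-sign-l}. For $I_{4,b}$ I would factor $u^3-v^3=(u-v)(u^2+uv+v^2)$ with $u=\bm{a}_{i,\perp}^\top\bm{x}_\perp^t$, $v=\bm{a}_{i,\perp}^\top\bm{x}_\perp^{t,\mathrm{sgn}}$, and combine the same incoherence with the closeness \eqref{eq:induction-xt-sgn}. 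The term $I_2$ is treated identically with one fewer power of $\bm{a}_{i,\perp}^\top\bm{x}_\perp^t$, and the vector residuals $\bm{J}_2,\bm{J}_4$ are reduced to the scalar case by projecting onto an arbitrary unit direction (or dualizing via a covering net). Assembling the four estimates, dividing by $\alpha_t$, and using $\alpha_t\ge 1/\sqrt{n\log n}$ from Lemma~\ref{lemma:iterative}(a) together with $m\gtrsim n\log^{13}m$ yields $|\zeta_t|,|\rho_t|\ll1/\log m$ as claimed.
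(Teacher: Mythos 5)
Your treatment of the signal component matches the paper's Appendix C.1 almost exactly: the split into $I_1,\ldots,I_4$, matrix concentration for the even-in-$a_{i,1}$ pieces, and the random-sign decomposition plus Bernstein for the odd ones, with the incoherence $\max_i|\bm{a}_{i,\perp}^{\top}\bm{x}_\perp^{t,\mathrm{sgn}}|\lesssim\sqrt{\log m}$ supplied by chaining through $\bm{x}^{t,\mathrm{sgn},(l)}$. (One small imprecision: $a_{i,1}^4$ is not subexponential but sub-Weibull of order $1/2$, so ``Bernstein'' should be read as the Gaussian-polynomial concentration of Schudy--Sviridenko, giving $\mathrm{poly}\log(m)/\sqrt m$ rather than $\sqrt{\log m/m}$; this does not affect the conclusion under the stated sample complexity.)

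The genuine gap is in your plan for the orthogonal component. You propose to control $\bm{J}_2,\bm{J}_4$ ``by projecting onto an arbitrary unit direction (or dualizing via a covering net)'' and then reusing the random-sign argument. This fails on two counts. First, the dominant odd-degree residual in the $\bm{x}_\perp^{t+1}$ update is $\frac{1}{m}\sum_i(\bm{a}_{i,\perp}^{\top}\bm{x}_\perp^t)^3\bm{a}_{i,\perp}$, which \emph{does not involve $a_{i,1}$ at all}: its mean is $3\|\bm{x}_\perp^t\|_2^2\bm{x}_\perp^t$ (a piece of the population dynamics, not a residual), and conditioning on $\xi_i=\mathrm{sgn}(a_{i,1})$ does nothing for it. Second, even for the terms that are odd in $a_{i,1}$, the Bernstein bound from the random-sign decomposition holds with failure probability only $m^{-\Theta(1)}$, which cannot be union-bounded over an $\exp(\Theta(n))$-size $\varepsilon$-net, so the ``dualize via a covering net'' step breaks. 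What saves the lemma is an asymmetry your proposal does not make explicit: throughout Phase~I one has $\beta_t\gtrsim 1$ while $\alpha_t\lesssim 1/\log^5 m$, so the crude uniform matrix-concentration bound $\lesssim\sqrt{n\log^3 m/m}$ from Lemma~\ref{lemma:hessian-concentration} is already $\ll\beta_t/\log m$, and the terms that are odd in $a_{i,1}$ carry extra factors of $x_\parallel^t$ that shrink them further. This is exactly how Appendix~C.2 proves \eqref{eq:approximate_state_evolution_phase_1_beta} --- without invoking the random-sign sequence at all. You should drop the random-sign/covering step for $\beta_{t+1}$ and instead use matrix concentration directly, after first separating out the nonzero population means.
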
\begin{proof}See
Appendix \ref{sec:Proof-of-Lemma-xt-signal}.\end{proof}

It remains to inductively show that the hypotheses hold for all $0\leq t\leq T_{0}$.
Before proceeding to this induction step, it is helpful to first develop
more understanding about the preceding hypotheses.
\begin{enumerate}
\item In words, (\ref{eq:induction-xt-l}), (\ref{eq:induction-xt-l-signal}),
(\ref{eq:induction-xt-sgn}) specify that the leave-one-out sequences
$\left\{ \bm{x}^{t,\left(l\right)}\right\} $ and $\left\{ \bm{x}^{t,\text{sgn}}\right\} $
are exceedingly close to the original sequence $\left\{ \bm{x}^{t}\right\} $.
Similarly, the difference between $\bm{x}^{t}-\bm{x}^{t,\text{sgn}}$
and $\bm{x}^{t,\left(l\right)}-\bm{x}^{t,\text{sgn},\left(l\right)}$
is extremely small, as asserted in (\ref{eq:induction-double}). The
hypothesis (\ref{eq:induction-norm-size}) says that the norm of the
iterates $\left\{ \bm{x}^{t}\right\} $ is always bounded from above
and from below in Phase I. The last one (\ref{eq:induction-norm-relative})
indicates that the size $\alpha_{t}$ of the signal component is never
too small compared with $\|\bm{x}^{t}\|_{2}$.
\item Another property that is worth mentioning is the growth rate (with
respect to $t$) of the quantities appeared in the induction hypotheses
(\ref{subeq:induction}). For instance, $\big|x_{\parallel}^{t}-x_{\parallel}^{t,\left(l\right)}\big|,\|\bm{x}^{t}-\bm{x}^{t,\text{sgn}}\|_{2}$
and $\|\bm{x}^{t}-\bm{x}^{t,\text{sgn}}-\bm{x}^{t,\left(l\right)}+\bm{x}^{t,\text{sgn},\left(l\right)}\|_{2}$
grow more or less at the same rate as $\alpha_{t}$ (modulo some $(1+1/\log m)^{T_{0}}$ factor). In
contrast, $\|\bm{x}^{t}-\bm{x}^{t,\left(l\right)}\|_{2}$ shares the
same growth rate with $\beta_{t}$ (modulo the $(1+1/\log m)^{T_{0}}$ factor). See Figure~\ref{fig:loo-trend}
for an illustration. The difference in the growth rates turns out
to be crucial in establishing the advertised result.
\item Last but not least, we emphasize the sizes of the quantities
of interest in (\ref{subeq:induction}) for $t=1$ under the Gaussian
initialization. Ignoring all of the $\log m$ terms and recognizing that
$\alpha_{1}\asymp1/\sqrt{n}$ and $\beta_{1}\asymp1$, one sees that
$\|\bm{x}^{1}-\bm{x}^{1,\left(l\right)}\|_{2}\lesssim1/\sqrt{m}$,
$|x_{\parallel}^{1}-x_{\parallel}^{1,\left(l\right)}|\lesssim1/m$,
$\|\bm{x}^{1}-\bm{x}^{1,\text{sgn}}\|_{2}\lesssim1/\sqrt{m}$ and
$\|\bm{x}^{1}-\bm{x}^{1,\text{sgn}}-\bm{x}^{1,(l)}+\bm{x}^{1,\text{sgn},(l)}\|_{2}\lesssim1/m$.
See Figure~\ref{fig:loo-trend} for an illustration of the trends of
the above four quantities.
\end{enumerate}
\begin{figure}
\centering

\begin{tabular}{cc}
\includegraphics[width=0.4\textwidth]{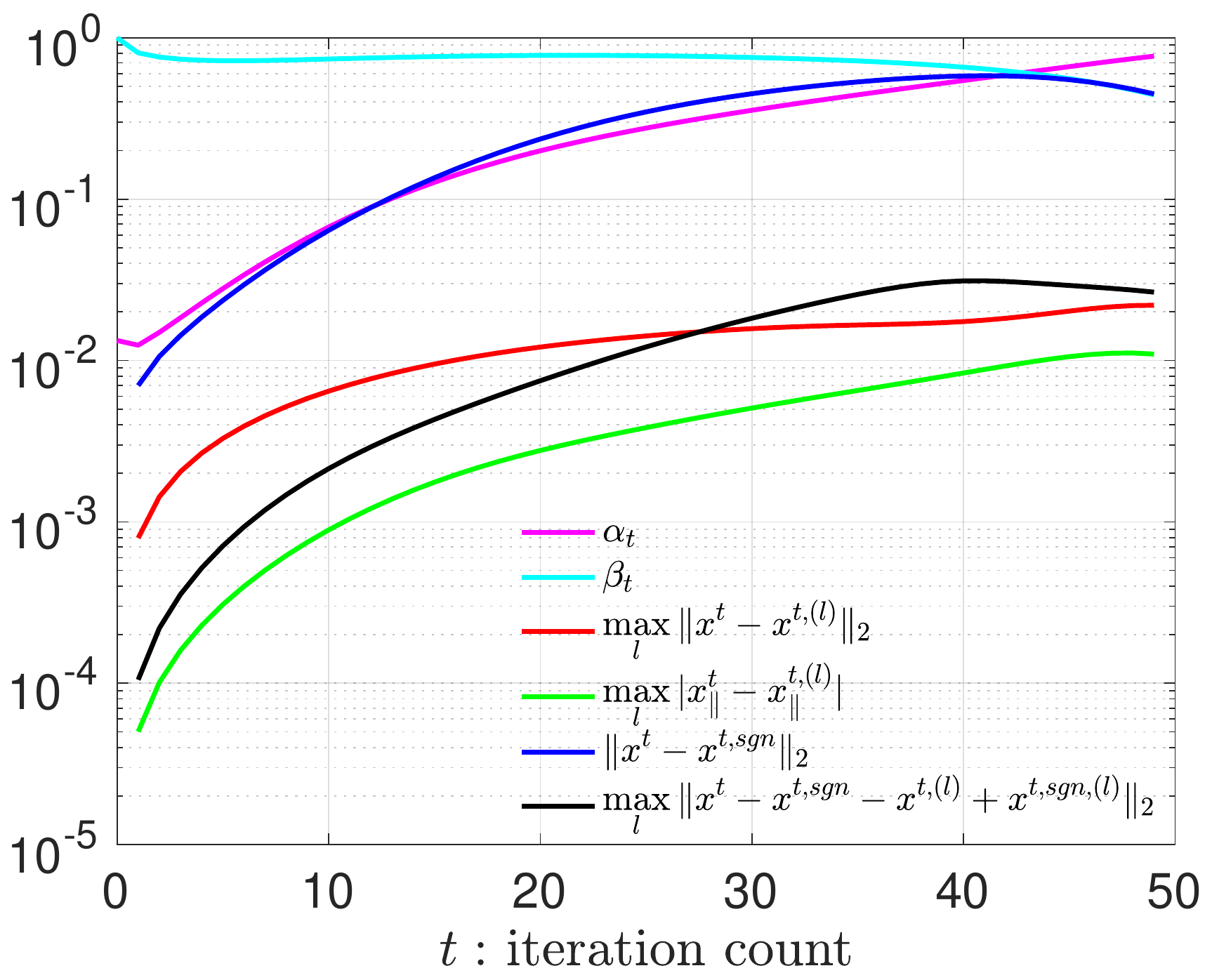} $\quad$ & $\quad$\includegraphics[width=0.4\textwidth]{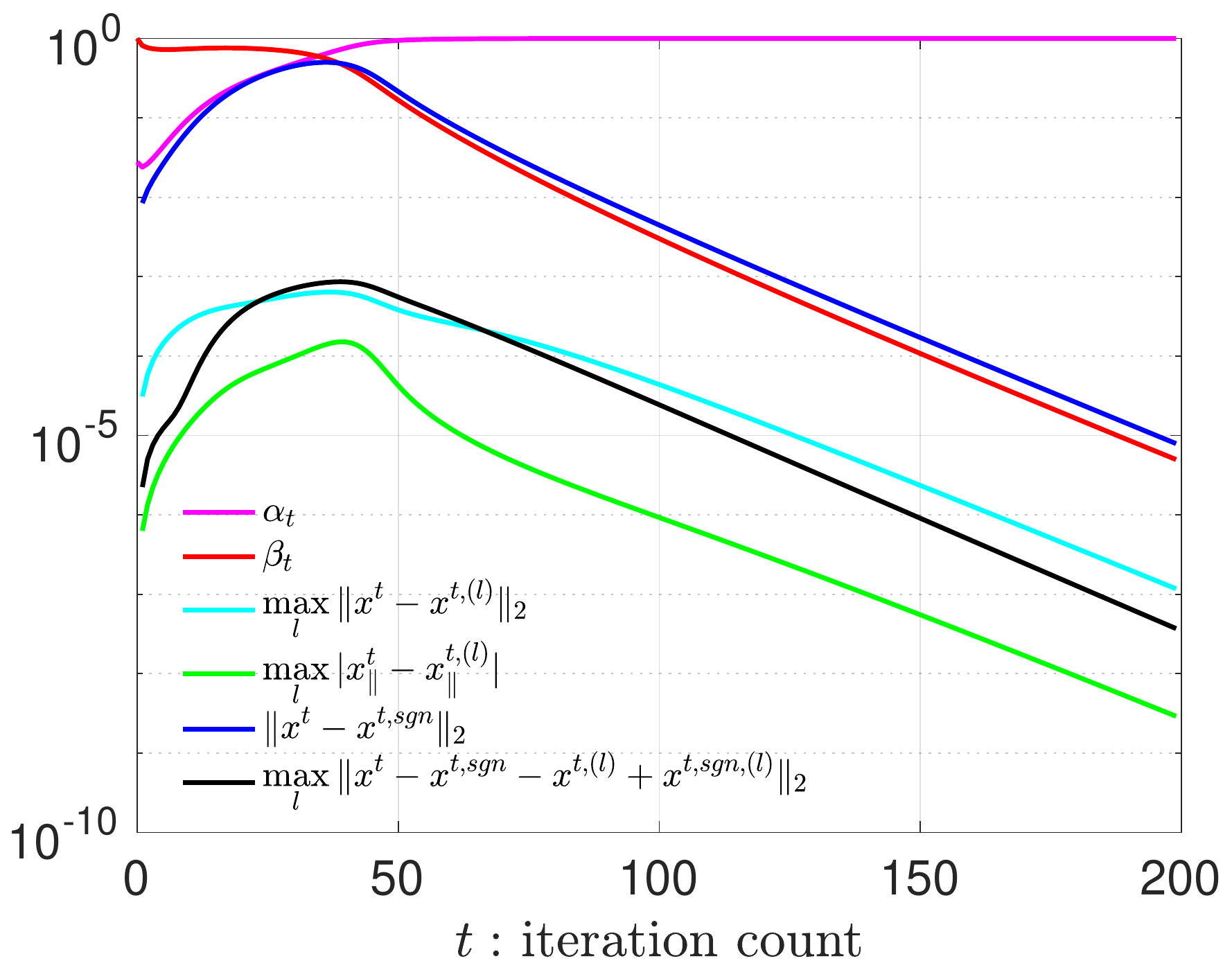}\tabularnewline
(a) Stage 1  & (b) Stage 1 and Stage 2\tabularnewline
\end{tabular}\caption{Illustration of the differences among leave-one-out and original sequences
	vs.~iteration count, plotted semilogarithmically. The results are shown for $n=1000$ with $m=10n$,
$\eta_{t}\equiv0.1$, and $\|\bm{x}^{\natural}\|_{2}=1$. (a) The four differences increases in Stage 1. From the induction hypotheses (\ref{subeq:induction}), our upper bounds on $|x^t_{\parallel}- x^{t,(l)}_{\parallel}|$, $\Vert \bm{x}^{t}-\bm{x}^{t,\text{sgn}}\Vert _{2}$ and $\Vert \bm{x}^{t}-\bm{x}^{t,\text{sgn}}-\bm{x}^{t,\left(l\right)}+\bm{x}^{t,\text{sgn},\left(l\right)}\Vert _{2}$ scale linearly with $\alpha_{t}$, whereas the upper bound on $\|\bm{x}^t -\bm{x}^{t,(l)}\|_2$ is proportional to $\beta_{t}$. In addition, $\|\bm{x}^{1}-\bm{x}^{1,\left(l\right)}\|_{2}\lesssim1/\sqrt{m}$,
$|x_{\parallel}^{1}-x_{\parallel}^{1,\left(l\right)}|\lesssim1/m$,
$\|\bm{x}^{1}-\bm{x}^{1,\text{sgn}}\|_{2}\lesssim1/\sqrt{m}$ and
$\|\bm{x}^{1}-\bm{x}^{1,\text{sgn}}-\bm{x}^{1,(l)}+\bm{x}^{1,\text{sgn},(l)}\|_{2}\lesssim1/m$. (b) The four differences converge to zero geometrically fast in Stage 2, as all the (variants of) leave-one-out sequences and the original sequence converge to the truth $\bm{x}^{\natural}$. \label{fig:loo-trend}
}
\end{figure}
%
Several consequences of (\ref{subeq:induction}) regarding the incoherence
between $\{\bm{x}^{t}\}$, $\{\bm{x}^{t,\text{sgn}}\}$ and $\{\bm{a}_{i}\}$,
$\{\bm{a}_{i}^{\text{sgn}}\}$ are immediate, as summarized in the
following lemma.

\begin{lemma}\label{lemma:consequence-main-text}
Suppose that $m\geq Cn\log^{6}m$
for some sufficiently large constant $C>0$ and the $t^{\mathrm{th}}$ iterates
satisfy the induction hypotheses (\ref{subeq:induction}) for $t\leq T_{0}$,
then with probability at least $1-O(me^{-1.5n})-O(m^{-10})$,\begin{subequations}
\begin{align*}
\max_{1\leq l\leq m}\left|\bm{a}_{l}^{\top}\bm{x}^{t}\right| & \lesssim\sqrt{\log m}\big\|\bm{x}^{t}\big\|_{2};\\
\max_{1\leq l\leq m}\big|\bm{a}_{l,\perp}^{\top}\bm{x}_{\perp}^{t}\big| & \lesssim\sqrt{\log m}\big\|\bm{x}_{\perp}^{t}\big\|_{2};\\
\max_{1\leq l\leq m}\big|\bm{a}_{l}^{\top}\bm{x}^{t,\mathrm{sgn}}\big| & \lesssim\sqrt{\log m}\big\|\bm{x}^{t,\mathrm{sgn}}\big\|_{2};\\
\max_{1\leq l\leq m}\big|\bm{a}_{l,\perp}^{\top}\bm{x}_{\perp}^{t,\mathrm{sgn}}\big| & \lesssim\sqrt{\log m}\big\|\bm{x}_{\perp}^{t,\mathrm{sgn}}\big\|_{2};\\
\max_{1\leq l\leq m}\big|\bm{a}_{l}^{\mathrm{sgn}\top}\bm{x}^{t,\mathrm{sgn}}\big| & \lesssim\sqrt{\log m}\big\|\bm{x}^{t,\mathrm{sgn}}\big\|_{2}.
\end{align*}
\end{subequations}
\end{lemma}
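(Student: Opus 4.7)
The plan is to establish each of the five incoherence bounds via a leave-one-out decomposition, exploiting the statistical independence that each auxiliary sequence was designed to provide. The common template is: write $\bm{a}_l^\top \bm{y}^t = \bm{a}_l^\top \bm{y}^{t,\mathrm{aux}} + \bm{a}_l^\top(\bm{y}^t - \bm{y}^{t,\mathrm{aux}})$ for an appropriate auxiliary iterate $\bm{y}^{t,\mathrm{aux}}$ that is independent of the relevant components of $\bm{a}_l$; invoke a standard Gaussian tail bound plus a union bound over $1 \leq l \leq m$ (giving the $\sqrt{\log m}$ factor with probability $1 - O(m^{-10})$) on the first piece, and use Cauchy--Schwarz together with $\|\bm{a}_l\|_2 \lesssim \sqrt{n}$ (which holds uniformly with probability at least $1 - O(me^{-1.5n})$) on the second piece.

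First I would handle $|\bm{a}_l^\top \bm{x}^t|$ and $|\bm{a}_{l,\perp}^\top \bm{x}_\perp^t|$ using the leave-one-out sequence $\{\bm{x}^{t,(l)}\}$, which by construction is independent of $\bm{a}_l$. Gaussian concentration yields $|\bm{a}_l^\top \bm{x}^{t,(l)}| \lesssim \sqrt{\log m}\,\|\bm{x}^{t,(l)}\|_2$ and the analogous bound for the perpendicular part. The residual is controlled through hypothesis \eqref{eq:induction-xt-l}:
\[
|\bm{a}_l^\top(\bm{x}^t - \bm{x}^{t,(l)})| \leq \|\bm{a}_l\|_2\,\|\bm{x}^t - \bm{x}^{t,(l)}\|_2 \lesssim \sqrt{n}\cdot \beta_t\,(1 + 1/\log m)^t \cdot \sqrt{n\log^5 m}/m,
\]
which under $m \gtrsim n\log^6 m$ and $t \leq T_0 \lesssim \log n$ is $o(\sqrt{\log m}\,\beta_t)$, hence negligible compared with $\sqrt{\log m}\,\|\bm{x}_\perp^t\|_2 \gtrsim \sqrt{\log m}\,c_5$ from \eqref{eq:induction-norm-size}. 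A triangle inequality then replaces $\|\bm{x}^{t,(l)}\|_2$ by $\|\bm{x}^t\|_2$ up to negligible slack.

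For the three bounds involving $\bm{x}^{t,\mathrm{sgn}}$, the sequence $\{\bm{x}^{t,\mathrm{sgn}}\}$ is independent only of the sign bit $\xi_l$, not of all of $\bm{a}_l$. I therefore route through $\{\bm{x}^{t,\mathrm{sgn},(l)}\}$, which is independent of $\bm{a}_l$ and of $\bm{a}_l^{\mathrm{sgn}}$ simultaneously. For $|\bm{a}_l^\top \bm{x}^{t,\mathrm{sgn}}|$ (and analogously $|\bm{a}_{l,\perp}^\top \bm{x}_\perp^{t,\mathrm{sgn}}|$ and $|\bm{a}_l^{\mathrm{sgn}\top} \bm{x}^{t,\mathrm{sgn}}|$), the leading term $|\bm{a}_l^\top \bm{x}^{t,\mathrm{sgn},(l)}|$ is $\lesssim \sqrt{\log m}\,\|\bm{x}^{t,\mathrm{sgn},(l)}\|_2$ by Gaussian concentration. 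For the residual $\|\bm{x}^{t,\mathrm{sgn}} - \bm{x}^{t,\mathrm{sgn},(l)}\|_2$, I would invoke the triangle inequality
\[
\|\bm{x}^{t,\mathrm{sgn}} - \bm{x}^{t,\mathrm{sgn},(l)}\|_2 \leq \|\bm{x}^t - \bm{x}^{t,(l)}\|_2 + \|\bm{x}^t - \bm{x}^{t,\mathrm{sgn}} - \bm{x}^{t,(l)} + \bm{x}^{t,\mathrm{sgn},(l)}\|_2,
\]
and control the two pieces via hypotheses \eqref{eq:induction-xt-l} and \eqref{eq:induction-double} respectively. A further triangle inequality combining \eqref{eq:induction-xt-sgn} and \eqref{eq:induction-xt-l} lets me replace $\|\bm{x}^{t,\mathrm{sgn},(l)}\|_2$ by $\|\bm{x}^{t,\mathrm{sgn}}\|_2$.

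The main obstacle is bookkeeping rather than any single deep step: I must verify that each residual contribution, after being amplified by $\|\bm{a}_l\|_2 \lesssim \sqrt{n}$ and by the slow-growth factor $(1+1/\log m)^t \lesssim 1$ (for $t \lesssim \log n$), remains strictly dominated by the $\sqrt{\log m}\,\|\cdot\|_2$ target under the sample budget $m \gtrsim n\log^6 m$. Particular care is needed for the perpendicular-direction bounds, where I must use $\|\bm{x}_\perp^t\|_2 \geq c_5$ (from \eqref{eq:induction-norm-size}) to guarantee that the right-hand side is large enough to swallow the leave-one-out gap, and for the sign-switched bounds, where two layers of perturbation (sign swap plus leave-one-out) must be chained together without losing the $1/\text{poly}\log(m)$ margin.
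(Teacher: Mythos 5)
Your proposal is correct and follows essentially the same leave-one-out decomposition used in Appendix~\ref{sec:Proof-of-Lemma-consequence}: split into a piece independent of $\bm{a}_l$ (controlled by Gaussian concentration plus a union bound) and a residual controlled by Cauchy--Schwarz with $\|\bm{a}_l\|_2\lesssim\sqrt{n}$ and the induction hypotheses, then normalize using $\|\bm{x}_\perp^t\|_2\geq c_5$. The only cosmetic difference is in the sign-sequence bounds, where you center at $\bm{x}^{t,\mathrm{sgn},(l)}$ directly whereas the paper first compares $\bm{x}^{t,\mathrm{sgn}}$ to $\bm{x}^t$ and then handles the difference via $\bm{x}^{t,\mathrm{sgn},(l)}-\bm{x}^{t,(l)}$; both routes invoke the same hypotheses \eqref{eq:induction-xt-l} and \eqref{eq:induction-double} and give the same bounds.
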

\begin{proof} These incoherence conditions typically arise from the independence between $\{\bm{x}^{t,(l)}\}$ and $\bm{a}_l$. For instance, the first line follows since
	\[
		\big| \bm{a}_l^{\top} \bm{x}^t \big| \approx  \big| \bm{a}_l^{\top} \bm{x}^{t,(l)} \big| \lesssim \sqrt{\log m} \| \bm{x}^{t,(l)}\|_2 \asymp \sqrt{\log m} \| \bm{x}^{t}\|_2.
	\]
See Appendix \ref{sec:Proof-of-Lemma-consequence} for detailed proofs.
\end{proof}

\subsubsection{Induction step}

We then turn to showing that the induction hypotheses (\ref{subeq:induction})
hold throughout Phase I, i.e.~for $0\leq t\leq T_{0}$. The base
case can be easily verified because of the identical initial points
(\ref{eq:same-initial}). Now we move on to the inductive step, i.e.~we
aim to show that if the hypotheses (\ref{subeq:induction}) are valid
up to the $t^{\mathrm{th}}$ iteration for some $t\leq T_{0}$, then they continue
to hold for the $\left(t+1\right)^{\mathrm{th}}$ iteration.

The first lemma concerns the difference between the leave-one-out
sequence $\bm{x}^{t+1,\left(l\right)}$ and the true sequence $\bm{x}^{t+1}$
(see (\ref{eq:induction-xt-l})).

\begin{lemma}\label{lemma:xt-xt-l}Suppose $m\geq Cn\log^{5}m$ for
some sufficiently large constant $C>0$. If the induction hypotheses
(\ref{subeq:induction}) hold true up to the $t^{\mathrm{th}}$ iteration for
some $t\leq T_{0}$, then with probability at least $1-O(me^{-1.5n})-O(m^{-10})$,
\begin{equation}
\max_{1\leq l\leq m}\big\|\bm{x}^{t+1}-\bm{x}^{t+1,\left(l\right)}\big\|_{2}\leq\beta_{t+1}\left(1+\frac{1}{\log m}\right)^{t+1}C_{1}\frac{\sqrt{n\log^{5}m}}{m}
\end{equation}
holds as long as $\eta>0$ is a sufficiently small constant and $C_{1}>0$
is sufficiently large. \end{lemma}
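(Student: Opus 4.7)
The plan is to set up a one-step recursion for $\bm{x}^{t+1} - \bm{x}^{t+1,(l)}$ by subtracting the updates \eqref{eq:gradient_update-WF} and \eqref{eq:gradient-update-leave-WF} and adding and subtracting $\eta\nabla f(\bm{x}^{t,(l)})$:
\begin{equation*}
\bm{x}^{t+1} - \bm{x}^{t+1,(l)} \;=\; \bigl[\bm{I} - \eta\, \bm{H}_l\bigr]\bigl(\bm{x}^t - \bm{x}^{t,(l)}\bigr) \;-\; \frac{\eta}{m}\bigl[(\bm{a}_l^\top \bm{x}^{t,(l)})^2 - y_l\bigr]\bigl(\bm{a}_l^\top \bm{x}^{t,(l)}\bigr)\bm{a}_l,
\end{equation*}
where $\bm{H}_l := \int_0^1 \nabla^2 f\bigl(\bm{x}^{t,(l)} + \tau(\bm{x}^t - \bm{x}^{t,(l)})\bigr)\,\ud\tau$ is the Hessian integrated along the segment. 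Iterating this identity and closing the induction is the main task.

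The dropped-sample piece is easy to handle: since $\bm{x}^{t,(l)}$ is by construction independent of $\bm{a}_l$, standard Gaussian tail bounds give $|\bm{a}_l^\top \bm{x}^{t,(l)}| \lesssim \sqrt{\log m}\,\|\bm{x}^{t,(l)}\|_2$, $y_l = a_{l,1}^2 \lesssim \log m$, and $\|\bm{a}_l\|_2 \lesssim \sqrt{n}$. Combined with $\|\bm{x}^{t,(l)}\|_2 \asymp 1$ from \eqref{eq:induction-norm-size}, this bounds the correction by $O(\eta\sqrt{n}\log^{3/2} m / m)$.

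The subtle part is the matrix factor $\bm{I} - \eta \bm{H}_l$. Its expectation acts as $\bm{v} \mapsto \bm{v} - \eta\bigl[(3\|\tilde{\bm{x}}\|_2^2 - 1)\bm{v} + 6(\tilde{\bm{x}}^\top \bm{v})\tilde{\bm{x}} - 2 v_1 \bm{e}_1\bigr]$, so on vectors orthogonal to both $\tilde{\bm{x}}$ and $\bm{e}_1$ it collapses to multiplication by $1 - \eta(3\|\tilde{\bm{x}}\|_2^2 - 1)$, which by Lemma \ref{lemma:xt-signal} equals $\beta_{t+1}/\beta_t + O(\eta/\log m)$. I would then decompose $\bm{x}^t - \bm{x}^{t,(l)}$ into its $\bm{e}_1$-component and its orthogonal complement; the tighter hypothesis \eqref{eq:induction-xt-l-signal} on $|x^t_\parallel - x^{t,(l)}_\parallel|$ scales with $\alpha_t$ rather than $\beta_t$ and is therefore smaller by a factor $\alpha_t/\beta_t \ll 1$ throughout Phase I, so the $\bm{e}_1$-component contributes negligibly. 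A uniform concentration argument for $\nabla^2 f$ over the line segment (valid thanks to \eqref{eq:induction-norm-size} and $m\gtrsim n\log m$) then upgrades the expected bound to a deterministic one.

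Combining both pieces with \eqref{eq:induction-xt-l} yields
\begin{equation*}
\|\bm{x}^{t+1} - \bm{x}^{t+1,(l)}\|_2 \;\leq\; \frac{\beta_{t+1}}{\beta_t}\bigl(1 + O(\eta/\log m)\bigr)\cdot \beta_t\Bigl(1 + \tfrac{1}{\log m}\Bigr)^t C_1 \tfrac{\sqrt{n\log^5 m}}{m} \;+\; O\!\Bigl(\tfrac{\eta\sqrt{n}\log^{3/2}m}{m}\Bigr).
\end{equation*}
Since $\beta_{t+1} \geq c_5$ throughout Phase I, one factor of $(1+1/\log m)$ absorbs the multiplicative slack $1 + O(\eta/\log m)$, and a second factor (coming from the gap between $\log^{3/2} m$ and the available $\log^{5/2} m\cdot(\log m)^{-1}$) absorbs the additive single-sample contribution once $C_1$ is large and $\eta$ small, closing the induction. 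The main obstacle is the directional bookkeeping on $\bm{H}_l$: a crude operator-norm bound would produce a per-step factor $1 + \Theta(\eta)$ that accumulates catastrophically over $T_0 = \Omega(\log n)$ iterations, so one must genuinely exploit \eqref{eq:induction-xt-l-signal} to argue that $\bm{x}^t - \bm{x}^{t,(l)}$ is essentially orthogonal to $\bm{e}_1$, ensuring only the smaller eigenvalue $3\|\tilde{\bm{x}}\|_2^2 - 1$ appears and precisely matches $\beta_{t+1}/\beta_t$.
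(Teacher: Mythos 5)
Your overall strategy matches the paper's: the same one-step recursion via the fundamental theorem of calculus, the same observation that the expected Hessian along the segment is $(3\|\tilde{\bm{x}}\|_2^2-1)\bm{I} + 6\tilde{\bm{x}}\tilde{\bm{x}}^\top - 2\bm{e}_1\bm{e}_1^\top$, and the same crucial step of isolating the $\bm{e}_1$-direction (where the troublesome $+2\eta\bm{e}_1\bm{e}_1^\top$ term lives) and controlling it with the tighter hypothesis \eqref{eq:induction-xt-l-signal}. The cosmetic difference is that you decompose the \emph{vector} $\bm{x}^t-\bm{x}^{t,(l)}$ into its $\bm{e}_1$-component and its orthogonal complement, while the paper decomposes the \emph{matrix}, writing $\bm{I}-\eta\nabla^2 f \approx \bm{M}_0 + 2\eta\bm{e}_1\bm{e}_1^\top$ with $\bm{M}_0 = (1-3\eta\|\cdot\|_2^2+\eta)\bm{I}_n-6\eta\tilde{\bm{x}}\tilde{\bm{x}}^\top \succeq 0$ and bounding $\|\bm{M}_0\bm{v}\|_2 + 2\eta|v_1|$; the two are interchangeable here. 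Your identification of the contraction factor $1-\eta(3\|\tilde{\bm{x}}\|_2^2-1)\approx\beta_{t+1}/\beta_t$ and the fact that the $-6\eta\tilde{\bm{x}}\tilde{\bm{x}}^\top$ term only helps (it is PSD-subtracted) are both correct.

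The genuine gap is your claim that the uniform Hessian concentration along the segment is ``valid thanks to \eqref{eq:induction-norm-size} and $m\gtrsim n\log m$.'' That is not enough. The Hessian bound of Lemma \ref{lemma:Hessian-UB-Stage1} (and the underlying Lemma \ref{lemma:hessian-concentration}) holds only for points $\bm{z}$ satisfying the \emph{incoherence} condition $\max_i |\bm{a}_i^\top\bm{z}| \lesssim \sqrt{\log m}\,\|\bm{z}\|_2$; without it, $\nabla^2 f(\bm{z})$ can have operator norm as large as $\Theta(n)$ at points aligned with some $\bm{a}_i$, and a norm bound alone cannot rule this out. So you must verify that every interpolating point $\bm{x}(\tau)=\bm{x}^t+\tau(\bm{x}^{t,(l)}-\bm{x}^t)$ is incoherent with \emph{all} $\bm{a}_i$, not just with $\bm{a}_l$. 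This is where the leave-one-out sequences are unavoidable: one writes $\bm{a}_i^\top\bm{x}(\tau) = (1-\tau)\bm{a}_i^\top\bm{x}^t + \tau\,\bm{a}_i^\top(\bm{x}^{t,(l)}-\bm{x}^{t,(i)}) + \tau\,\bm{a}_i^\top\bm{x}^{t,(i)}$, controls the first term by the incoherence of $\bm{x}^t$ (Lemma \ref{lemma:consequence-main-text}), the third by the independence of $\bm{a}_i$ and $\bm{x}^{t,(i)}$, and the middle one by Cauchy--Schwarz together with \eqref{eq:induction-xt-l}. Omitting this step leaves the concentration unavailable, and hence the ``upgrade the expected bound to a deterministic one'' does not go through as stated. (Also a minor point: the required sample size is $m\gtrsim n\log^3 m$, not $n\log m$.)
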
\begin{proof}See Appendix \ref{sec:Proof-of-Lemma-xt-xt-l}.\end{proof}

The next lemma characterizes a finer relation between $\bm{x}^{t+1}$
and $\bm{x}^{t+1,\left(l\right)}$ when projected onto the signal
direction (cf.~(\ref{eq:induction-xt-l-signal})).

\begin{lemma}\label{lemma:xt-xt-l-signal}Suppose $m\geq Cn\log^{6}m$
for some sufficiently large constant $C>0$. If the induction hypotheses
(\ref{subeq:induction}) hold true up to the $t^{\mathrm{th}}$ iteration for
some $t\leq T_{0}$, then with probability at least $1-O(me^{-1.5n})-O(m^{-10})$,
\begin{equation}
\max_{1\leq l\leq m}\big|x_{\parallel}^{t+1}-x_{\parallel}^{t+1,\left(l\right)}\big|\leq\alpha_{t+1}\left(1+\frac{1}{\log m}\right)^{t+1}C_{2}\frac{\sqrt{n\log^{12}m}}{m}
\end{equation}
holds as long as $\eta>0$ is a sufficiently small constant and $C_{2}\gg C_{4}$.
\end{lemma}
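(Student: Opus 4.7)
The plan is to derive a one-step recursion of the form
\[
|x_{\parallel}^{t+1} - x_{\parallel}^{t+1,(l)}| \leq \frac{\alpha_{t+1}}{\alpha_t}\Big(1 + \frac{O(1)}{\log m}\Big)\,|x_{\parallel}^t - x_{\parallel}^{t,(l)}| \;+\; \eta\, \alpha_{t+1}\, \frac{\sqrt{n\,\operatorname{polylog}(m)}}{m},
\]
so that iterating against the prior induction hypothesis \eqref{eq:induction-xt-l-signal} closes the induction with the stated $(1+1/\log m)^{t+1}$ slack. Taking the first coordinate of the gradient updates for $\bm{x}^t$ and $\bm{x}^{t,(l)}$, I decompose
\[
x_{\parallel}^{t+1} - x_{\parallel}^{t+1,(l)} = \underbrace{(x_{\parallel}^t - x_{\parallel}^{t,(l)}) - \eta\, \bm{e}_1^\top[\nabla f(\bm{x}^t) - \nabla f(\bm{x}^{t,(l)})]}_{=:\mathcal{A}} \;-\; \underbrace{\eta\, \bm{e}_1^\top[\nabla f(\bm{x}^{t,(l)}) - \nabla f^{(l)}(\bm{x}^{t,(l)})]}_{=:\mathcal{B}},
\]
where $\mathcal{B} = (\eta/m)\bigl[(\bm{a}_l^\top \bm{x}^{t,(l)})^2 - a_{l,1}^2\bigr](\bm{a}_l^\top \bm{x}^{t,(l)})\, a_{l,1}$ is a pure leave-out correction. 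Since $\bm{x}^{t,(l)}$ is independent of $\bm{a}_l$, Lemma~\ref{lemma:consequence-main-text} and the sub-Gaussianity of $a_{l,1}$ give $|\mathcal{B}| = O(\eta \log^{2}m/m)$, which comfortably fits into the target because $\alpha_{t+1}\sqrt{n}/m \gtrsim 1/(m\sqrt{\log n})$.

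For $\mathcal{A}$, the integral form of the mean-value theorem yields $\mathcal{A} = (1 - \eta\widetilde H_{11})(x_{\parallel}^t - x_{\parallel}^{t,(l)}) - \eta\, \widetilde H_{1,2:n}^\top(\bm{x}_{\perp}^t - \bm{x}_{\perp}^{t,(l)})$, where $\widetilde H = \int_0^1 \nabla^2 f(\tau\bm{x}^t + (1-\tau)\bm{x}^{t,(l)})\,d\tau$. Direct computation of $\nabla^2 f(\bm{x}) = m^{-1}\sum_i[3(\bm{a}_i^\top\bm{x})^2 - a_{i,1}^2]\bm{a}_i\bm{a}_i^\top$ gives $\mathbb{E}[\widetilde H_{11}] = 6\widetilde x_\parallel^2 + 3\|\widetilde{\bm x}\|_2^2 - 3$, so $1 - \eta\widetilde H_{11}$ agrees with the state-evolution factor $\alpha_{t+1}/\alpha_t = 1 + 3\eta(1 - \|\bm{x}^t\|_2^2) + O(\eta/\log m)$ from Lemma~\ref{lemma:xt-signal} up to an $O(\eta/\log m)$ relative error, which is converted to a high-probability bound by uniform concentration of $\nabla^2 f$ on the restricted trajectory permitted by hypothesis \eqref{eq:induction-xt-l} and Lemma~\ref{lemma:consequence-main-text}. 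Multiplying by $|x_{\parallel}^t - x_{\parallel}^{t,(l)}|$ and plugging in \eqref{eq:induction-xt-l-signal} produces the desired leading contribution.

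The main obstacle is the cross term $\eta\, \widetilde H_{1,2:n}^\top(\bm{x}_{\perp}^t - \bm{x}_{\perp}^{t,(l)})$: hypothesis \eqref{eq:induction-xt-l} only bounds $\|\bm{x}_\perp^t - \bm{x}_\perp^{t,(l)}\|_2$ by $\beta_t\sqrt{n\log^5 m}/m$ with $\beta_t \asymp 1$, while the bound we need scales with the much smaller $\alpha_t$. The saving is the algebraic identity $\mathbb{E}[\widetilde H_{1,2:n}] = 6\widetilde x_\parallel \widetilde{\bm x}_\perp$, which vanishes as the signal component shrinks. Turning this mean cancellation into a high-probability statement is delicate because $\widetilde{\bm x}$ depends on $\{\bm{a}_i\}$: I would expand $\widetilde H_{1,2:n} = m^{-1}\sum_i[3(\bm{a}_i^\top\widetilde{\bm x})^2 - a_{i,1}^2]\, a_{i,1}\bm{a}_{i,\perp}$ as a polynomial in $a_{i,1}$, replace $\widetilde{\bm x}$ by the random-sign proxy $\bm{x}^{t,\mathrm{sgn}}$ (and, when a uniform-in-$l$ bound is required, $\bm{x}^{t,\mathrm{sgn},(l)}$) at a cost controlled by \eqref{eq:induction-xt-sgn}--\eqref{eq:induction-double}, apply Bernstein's inequality to the odd-power-of-$a_{i,1}$ pieces using the independence $\xi_i = \operatorname{sgn}(a_{i,1}) \perp \bm{x}^{t,\mathrm{sgn}}$, and reduce the even-power pieces to standard Gaussian quadratic-form concentration. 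Careful log-bookkeeping through this chain produces a cross-term bound of order $\eta\alpha_t\beta_t\sqrt{n\,\operatorname{polylog}(m)}/m$ with polylog budget below $\log^{12}m$, which together with $\mathcal{B}$ closes the induction provided $C_2$ is taken sufficiently larger than $C_1, C_3, C_4$.
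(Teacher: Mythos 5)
Your proposal tracks the paper's proof closely: the same mean-value-theorem decomposition of $x_{\parallel}^{t+1}-x_{\parallel}^{t+1,(l)}$ into a Hessian-integral term plus a leave-out correction $\mathcal{B}$, the same handling of $\mathcal{B}$ via independence of $\bm{x}^{t,(l)}$ from $\bm{a}_l$, the same observation that the diagonal Hessian entry reproduces the state-evolution factor $\alpha_{t+1}/\alpha_t$, and the same identification of the off-diagonal cross term as the bottleneck handled by the $\alpha_t$-suppression in $\mathbb{E}[\widetilde H_{1,2:n}]$ together with the random-sign and double leave-one-out sequences, culminating in the $C_2\gg C_4$ requirement. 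One point worth making explicit in a full write-up (and which the paper carries out carefully as its $\omega_4,\theta_3,\theta_4$ terms): it is not enough to substitute $\bm{x}^{t,\mathrm{sgn}}$ for the interpolant $\widetilde{\bm x}$ inside $\widetilde H_{1,2:n}$; the right-multiplicand $\bm{x}_\perp^t-\bm{x}_\perp^{t,(l)}$ itself depends on $\{\xi_i\}$ and must also be swapped for $\bm{x}_\perp^{t,\mathrm{sgn}}-\bm{x}_\perp^{t,\mathrm{sgn},(l)}$ before Bernstein applies, with the swap cost controlled by the double-difference hypothesis \eqref{eq:induction-double} — a naive Cauchy--Schwarz on the unswapped vector loses a $\sqrt{n}$ factor and fails when $\alpha_t\asymp 1/\sqrt{n}$; your citation of \eqref{eq:induction-double} indicates you are aware of this, but your phrasing (``replace $\widetilde{\bm x}$ by \ldots'') describes only the first substitution.
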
\begin{proof}See Appendix \ref{sec:Proof-of-Lemma-xt-xt-l-signal}.\end{proof}

Regarding the difference between $\bm{x}^{t}$ and $\bm{x}^{t,\text{sgn}}$
(see (\ref{eq:induction-xt-sgn})), we have the following result.

\begin{lemma}\label{lemma:xt-xt-sgn}Suppose $m\geq Cn\log^{5}m$
for some sufficiently large constant $C>0$. If the induction hypotheses
(\ref{subeq:induction}) hold true up to the $t^{\mathrm{th}}$ iteration for
some $t\leq T_{0}$, then with probability at least $1-O(me^{-1.5n})-O\left(m^{-10}\right)$,
\begin{equation}
\left\Vert \bm{x}^{t+1}-\bm{x}^{t+1,\mathrm{sgn}}\right\Vert _{2}\leq\alpha_{t+1}\left(1+\frac{1}{\log m}\right)^{t+1}C_{3}\sqrt{\frac{n\log^{5}m}{m}}
\end{equation}
holds as long as $\eta>0$ is a sufficiently small constant and $C_{3}$
is a sufficiently large positive constant. \end{lemma}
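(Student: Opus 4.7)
The plan is to bound $\|\bm{x}^{t+1}-\bm{x}^{t+1,\mathrm{sgn}}\|_2$ by expanding the two gradient updates and exploiting the fact that $\bm{a}_i$ and $\bm{a}_i^{\mathrm{sgn}}$ differ only in the sign of their first coordinate. Concretely, I would write
\[
\bm{x}^{t+1}-\bm{x}^{t+1,\mathrm{sgn}} = \bigl(\bm{x}^{t}-\bm{x}^{t,\mathrm{sgn}}\bigr) - \eta\bigl[\nabla f(\bm{x}^{t}) - \nabla f(\bm{x}^{t,\mathrm{sgn}})\bigr] - \eta\bigl[\nabla f(\bm{x}^{t,\mathrm{sgn}}) - \nabla f^{\mathrm{sgn}}(\bm{x}^{t,\mathrm{sgn}})\bigr].
\]
The first bracketed term is a standard gradient-difference term which, using the incoherence bounds in Lemma~\ref{lemma:consequence-main-text} together with the induction hypotheses (\ref{subeq:induction}), can be written as $\bm{A}(\bm{x}^{t}-\bm{x}^{t,\mathrm{sgn}})$ for some random matrix $\bm{A}$ satisfying $\|\bm{I}-\eta\bm{A}\| \le 1+O(\eta)$ along with a contractive factor along the signal direction; this yields a bound of roughly $(1+1/\log m)\|\bm{x}^{t}-\bm{x}^{t,\mathrm{sgn}}\|_2$, which by the inductive hypothesis is at most $\alpha_t(1+1/\log m)^{t+1}C_3\sqrt{n\log^5 m/m}$.

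The second bracketed term is the crucial one, since it captures the effect of replacing $\bm{a}_i$ by $\bm{a}_i^{\mathrm{sgn}}$. Writing $\bm{a}_i - \bm{a}_i^{\mathrm{sgn}} = (a_{i,1}-\xi_i^{\mathrm{sgn}}|a_{i,1}|)\bm{e}_1$, every cross term expands into a sum of quantities of the form $m^{-1}\sum_i (\cdot)\,(a_{i,1} - \xi_i^{\mathrm{sgn}}|a_{i,1}|)$, possibly multiplied by $\bm{a}_i$ or $\bm{a}_i^{\mathrm{sgn}}$. The key observation is that each such term carries a factor of $x_\parallel^{t,\mathrm{sgn}}$ or $(\bm{a}_i^{\top}\bm{x}^\natural)^2 = a_{i,1}^2$ from the phase retrieval structure (since $\bm{x}^\natural=\bm{e}_1$), so the discrepancy is concentrated along the signal direction — which is exactly why the target bound scales with $\alpha_{t+1}$ rather than with $\|\bm{x}^{t}\|_2$ or $\beta_t$. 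To control the scalar sums, I would introduce $\bm{x}^{t,\mathrm{sgn},(l)}$ to decouple $\bm{x}^{t,\mathrm{sgn}}$ from $\xi_l^{\mathrm{sgn}}$ (via the hypothesis (\ref{eq:induction-double})), and then apply a Bernstein-type inequality to sums like $m^{-1}\sum_i(\bm{a}_i^{\top}\bm{x}^{t,\mathrm{sgn},(l)})^2 (a_{i,1}-\xi_i^{\mathrm{sgn}}|a_{i,1}|)$. Each such sum should yield a bound of order $\sqrt{\log^5 m /m}\cdot\|\bm{x}^{t,\mathrm{sgn}}\|_2^2$, times a factor of $\alpha_t$ after accounting for the signal-direction structure.

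Assembling the pieces, the recursion takes the schematic form
\[
\|\bm{x}^{t+1}-\bm{x}^{t+1,\mathrm{sgn}}\|_2 \le (1+c\eta)\,\|\bm{x}^{t}-\bm{x}^{t,\mathrm{sgn}}\|_2 + \eta\cdot\alpha_t\,C\sqrt{\frac{n\log^5 m}{m}}\cdot\Bigl(1+\frac{1}{\log m}\Bigr)^{t},
\]
where the extra $\alpha_t$ factor arises from the residual analysis above. Since (\ref{eq:alphat}) implies $\alpha_{t+1}\ge (1+c\eta)\alpha_t$ in Phase I, unrolling the recursion and absorbing the $(1+c\eta)$ growth into the $(1+1/\log m)^{t+1}$ factor (using $\eta$ sufficiently small and $t\le T_0 \lesssim \log n$) gives the claimed bound, provided $C_3$ is chosen large enough relative to the absolute constant $C$ above. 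The main obstacle is the residual analysis in the previous paragraph: one has to verify that every term in the expansion of $\nabla f(\bm{x}^{t,\mathrm{sgn}}) - \nabla f^{\mathrm{sgn}}(\bm{x}^{t,\mathrm{sgn}})$ really does inherit a factor of $\alpha_t$ (equivalently, of $x_\parallel^{t,\mathrm{sgn}}$) rather than just $\|\bm{x}^{t,\mathrm{sgn}}\|_2$, which in turn requires the leave-one-out sequence $\bm{x}^{t,\mathrm{sgn},(l)}$ to decouple the iterate from the offending $\xi_l^{\mathrm{sgn}}$ before applying concentration.
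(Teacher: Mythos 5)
Your high-level strategy matches the paper's: expand the update rule, split off the term $\nabla f^{\mathrm{sgn}}(\bm{x}^{t,\mathrm{sgn}})-\nabla f(\bm{x}^{t,\mathrm{sgn}})$ that captures the sign flip, observe that the discrepancy lives essentially in the first-coordinate direction, use the random-sign independence plus the leave-one-out sequences to apply Bernstein, and close an induction. However, two steps of your argument do not survive scrutiny as written.

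First, the way you close the recursion has a genuine gap. You claim the Hessian step gives a factor $\|\bm{I}-\eta\bm{A}\|\le 1+O(\eta)$ and then proceed to "absorb the $(1+c\eta)$ growth into the $(1+1/\log m)^{t+1}$ factor, using $\eta$ sufficiently small and $t\le T_0\lesssim\log n$." But the theorem's stepsize is a \emph{constant} $\eta$, so over $T_0\asymp\log n$ iterations $(1+c\eta)^{T_0}=n^{\Theta(\eta)}$, which cannot be hidden inside $(1+1/\log m)^{T_0}=O(1)$. The paper's argument is more delicate: Lemma \ref{lemma:Hessian-UB-Stage1} yields the \emph{precise} expansion $\|\bm{I}-\eta\int_0^1\nabla^2 f(\tilde{\bm{x}}(\tau))\,d\tau\|\le 1+3\eta\bigl(1-\|\bm{x}^{t}\|_2^2\bigr)+\eta\phi_1$ with $|\phi_1|\ll 1/\log m$, and this coefficient is then matched \emph{exactly} against $\alpha_{t+1}/\alpha_t=1+3\eta\bigl(1-\|\bm{x}^{t}\|_2^2\bigr)+\eta\zeta_t$ from the approximate state evolution (\ref{eq:alphat}), so that their ratio is $1+O(\eta/\log m)$. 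It is precisely this coefficient matching (not a crude one-sided $1+O(\eta)$ bound, and not absorbing a constant-rate growth into a $1/\log m$ budget) that makes the induction close.

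Second, the residual analysis does not uniformly yield an $\alpha_t$ factor. If you examine the first coordinate of $\bm{r}_2:=\frac{1}{m}\sum_i a_{i,1}^2\bigl(\bm{a}_i^{\mathrm{sgn}}\bm{a}_i^{\mathrm{sgn}\top}-\bm{a}_i\bm{a}_i^{\top}\bigr)\bm{x}^{t,\mathrm{sgn}}$, it equals $\frac{1}{m}\sum_i(\xi_i^{\mathrm{sgn}}-\xi_i)|a_{i,1}|^3\,\bm{a}_{i,\perp}^{\top}\bm{x}_{\perp}^{t,\mathrm{sgn}}$, and Bernstein gives a bound $\lesssim\sqrt{\log m/m}\,\|\bm{x}_{\perp}^{t}\|_2\asymp\sqrt{\log m/m}\,\beta_t$ --- a factor of $\beta_t$, not $\alpha_t$. (Similar $\beta_t$-scaled contributions arise in $\bm{r}_1$.) The paper then converts using the induction hypothesis (\ref{eq:induction-norm-relative}), $\beta_t\le\|\bm{x}^{t}\|_2\lesssim\alpha_t\sqrt{n\log m}$, giving $\sqrt{\log m/m}\,\beta_t\lesssim\alpha_t\sqrt{n\log^2 m/m}\ll(\alpha_t/\log m)\sqrt{n\log^5 m/m}$. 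Without explicitly invoking this $\beta_t\lesssim\alpha_t\sqrt{n\log m}$ conversion, the $\alpha_t$-scaled target is unattainable, since in Phase I one has $\alpha_t\ll\beta_t$.

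So, to repair the argument: replace the crude $\|\bm{I}-\eta\bm{A}\|\le 1+O(\eta)$ with the sharp expansion from Lemma \ref{lemma:Hessian-UB-Stage1} and match it term-by-term against $\alpha_{t+1}/\alpha_t$, and in the residual bound explicitly track which terms come out proportional to $\beta_t$ versus $\alpha_t$ and use (\ref{eq:induction-norm-relative}) to convert the former.
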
\begin{proof}See
Appendix \ref{sec:Proof-of-Lemma-xt-xt-sgn}.\end{proof}

We are left with the double difference $\bm{x}^{t+1}-\bm{x}^{t+1,\mathrm{sgn}}-\bm{x}^{t+1,\left(l\right)}+\bm{x}^{t+1,\mathrm{sgn},\left(l\right)}$
(cf.~(\ref{eq:induction-double})), for which one has the following
lemma.

\begin{lemma}\label{lemma:double-diff}Suppose $m\geq Cn\log^{8}m$
for some sufficiently large constant $C>0$. If the induction hypotheses
(\ref{subeq:induction}) hold true up to the $t^{\mathrm{th}}$ iteration for
some $t\leq T_{0}$, then with probability at least $1-O(me^{-1.5n})-O(m^{-10})$,
\begin{equation}
\max_{1\leq l\leq m}\left\Vert \bm{x}^{t+1}-\bm{x}^{t+1,\mathrm{sgn}}-\bm{x}^{t+1,\left(l\right)}+\bm{x}^{t+1,\mathrm{sgn},\left(l\right)}\right\Vert _{2}\leq\alpha_{t+1}\left(1+\frac{1}{\log m}\right)^{t+1}C_{4}\frac{\sqrt{n\log^{9}m}}{m}
\end{equation}
holds as long as $\eta>0$ is a sufficiently small constant and $C_{4}>0$
is sufficiently large. \end{lemma}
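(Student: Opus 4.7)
The plan is to mimic the structure used in Lemmas \ref{lemma:xt-xt-l}--\ref{lemma:xt-xt-sgn}, but applied to the \emph{double} difference
\[
	\bm{\Delta}^{t} \;:=\; \bm{x}^{t} - \bm{x}^{t,\mathrm{sgn}} - \bm{x}^{t,(l)} + \bm{x}^{t,\mathrm{sgn},(l)},
\]
and to show that $\|\bm{\Delta}^{t+1}\|_2$ inherits the desired $\alpha_{t+1}$-scaling. First, I would write out the four gradient updates and regroup them so that
\[
	\bm{\Delta}^{t+1} \;=\; \bm{\Delta}^{t} - \eta\,\bigl[\nabla f^{(l)}(\bm{x}^{t}) - \nabla f^{(l)}(\bm{x}^{t,(l)})\bigr] + \eta\,\bigl[\nabla f^{\mathrm{sgn},(l)}(\bm{x}^{t,\mathrm{sgn}}) - \nabla f^{\mathrm{sgn},(l)}(\bm{x}^{t,\mathrm{sgn},(l)})\bigr] + \eta\,\bm{E}_{l}^{t},
\]
where $\bm{E}_{l}^{t}$ collects the ``missing-sample'' contributions that arise because $\nabla f$ differs from $\nabla f^{(l)}$ (and $\nabla f^{\mathrm{sgn}}$ from $\nabla f^{\mathrm{sgn},(l)}$) only in the single term indexed by $l$. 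The first two bracketed pieces will be replaced by integrated Hessians along the segments between the relevant iterates, and then combined into a single ``Hessian'' action on $\bm{\Delta}^{t}$ plus cross-terms that pair the Hessian gap $\nabla^2 f^{(l)}-\nabla^2 f^{\mathrm{sgn},(l)}$ with $\bm{x}^{t,\mathrm{sgn}}-\bm{x}^{t,\mathrm{sgn},(l)}$ (or symmetrically, pair the Hessian with $\bm{x}^{t}-\bm{x}^{t,\mathrm{sgn}}$).

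The second step is to bound these pieces one by one. For the integrated-Hessian action on $\bm{\Delta}^t$, I would invoke a uniform concentration bound analogous to the ones used in the proofs of Lemmas \ref{lemma:xt-xt-l}--\ref{lemma:xt-xt-sgn} showing that for all relevant $\bm{x}$ in the incoherent region one has $\bm{I}-\eta\nabla^2 f^{(l)}(\bm{x}) = \bm{I}+O(\eta)$ in operator norm, so the resulting contribution is at most $(1+O(\eta))\|\bm{\Delta}^t\|_2$. For the Hessian-gap cross term, the key observation is that $\nabla^2 f^{(l)}-\nabla^2 f^{\mathrm{sgn},(l)}$ is a sum of rank-one perturbations supported only on pairs involving the first coordinate --- exactly the mechanism that forced $\|\bm{x}^{t}-\bm{x}^{t,\mathrm{sgn}}\|_2 \lesssim \alpha_t \sqrt{n\log^5 m/m}$ in Lemma \ref{lemma:xt-xt-sgn}. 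Pairing this Hessian gap with $\|\bm{x}^{t,\mathrm{sgn}}-\bm{x}^{t,\mathrm{sgn},(l)}\|_2$, which shares the $\beta_t$-type scaling inherited from (\ref{eq:induction-xt-l}) applied to the sign-flipped sequence, will yield a contribution of order $\alpha_t \beta_t \sqrt{n\log^{c}m/m}\cdot\sqrt{n/m}$, which is comfortably smaller than the target bound $\alpha_{t+1}(1+1/\log m)^{t+1}C_4\sqrt{n\log^9 m/m}$ for appropriately large $C_4$.

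The third step, handling the leave-one-sample term $\bm{E}_l^t$, is where the signal-vs-orthogonal bookkeeping really matters. One has
\[
	\bm{E}_{l}^{t} \;=\; \tfrac{1}{m}\bigl[(\bm{a}_{l}^{\top}\bm{x}^{t})^{2} - a_{l,1}^{2}\bigr]\bm{a}_{l}\bm{a}_{l}^{\top}\bm{x}^{t} \;-\; \tfrac{1}{m}\bigl[(\bm{a}_{l}^{\mathrm{sgn}\top}\bm{x}^{t,\mathrm{sgn}})^{2} - a_{l,1}^{2}\bigr]\bm{a}_{l}^{\mathrm{sgn}}\bm{a}_{l}^{\mathrm{sgn}\top}\bm{x}^{t,\mathrm{sgn}},
\]
which I would rewrite as a telescoping sum where $\bm{x}^{t}$ is replaced by $\bm{x}^{t,\mathrm{sgn}}$ one factor at a time, and then $\bm{a}_{l}$ is replaced by $\bm{a}_{l}^{\mathrm{sgn}}$. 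Each telescoping increment contains either (i) a factor of $\|\bm{x}^{t}-\bm{x}^{t,\mathrm{sgn}}\|_{2}\lesssim \alpha_{t}\sqrt{n\log^{5}m/m}$ from (\ref{eq:induction-xt-sgn}), or (ii) a factor of $|a_{l,1}|\cdot|x_{\parallel}^{t,\mathrm{sgn}}|$ arising when the sign-flip hits the first coordinate, which pairs with $|\bm{a}_{l,\perp}^{\top}\bm{x}_{\perp}^{t,\mathrm{sgn}}|\lesssim \sqrt{\log m}\|\bm{x}^{t,\mathrm{sgn}}\|_{2}$ from Lemma \ref{lemma:consequence-main-text} and the independence of $\bm{x}^{t,\mathrm{sgn},(l)}$ from $\bm{a}_{l}$. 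Every surviving term is therefore proportional to $\alpha_{t}$, times $1/m$, times logarithmic and $\sqrt{n}$ factors, giving the desired $\alpha_{t+1}\sqrt{n\log^{9}m}/m$-size bound after absorbing $\alpha_{t} \asymp \alpha_{t+1}$ (up to the $(1+1/\log m)$-factor).

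The main obstacle is ensuring that \emph{no} term in the expansion ends up scaling with $\beta_{t}$ instead of $\alpha_{t}$ --- in Phase I, $\beta_{t}\asymp 1$ while $\alpha_{t}\lesssim 1/\log^{5}m$, so any leakage of a $\beta_{t}$-sized factor into the estimate would instantly blow up the bound by a $\mathrm{poly}\log m$ factor and destroy the induction. Preventing this requires careful book-keeping that consistently pairs any Hessian-gap factor (which lives on the signal coordinate) with a leave-one-out difference and, conversely, pairs any $(l)$-vs-original discrepancy with a sign-flip difference, rather than with $\bm{x}^{t}-\bm{x}^{t,(l)}$ directly. Once this pairing is enforced throughout, aggregating the three contributions above and folding the resulting $O(1/\log m)$ multiplicative slack into the factor $(1+1/\log m)^{t+1}$ closes the induction for a suitably large absolute constant $C_{4}\gg C_{1},C_{2},C_{3}$, provided $m\gtrsim n\log^{8}m$.
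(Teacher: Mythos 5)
Your high-level plan --- expanding the four gradient updates, regrouping into a main Hessian action on $\bm{\Delta}^{t}$, Hessian-gap cross terms, and a missing-sample residual --- matches the paper's proof (modulo an equivalent reshuffling of where the single leave-one-out sample is isolated). You also correctly put your finger on the central danger: in Phase~I, $\alpha_{t}$ can be as small as $\Theta(1/\sqrt{n\log n})$ while $\beta_{t}\asymp 1$, so any bare $\beta_{t}$-scaled term would overwhelm the $\alpha_{t+1}$-sized target not by a polylogarithmic factor (as you claim) but by $\sqrt{n\,\mathrm{poly}\log(n)}$.

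The genuine gap is in the closing paragraph. The ``pairing'' strategy you propose --- so that no bare $\beta_{t}$ quantity ever meets a bounded Hessian factor --- cannot actually be enforced, and the paper does not enforce it. When the sign-flip Hessian gap $\nabla^{2}f(\tilde{\bm{x}})-\nabla^{2}f^{\mathrm{sgn}}(\tilde{\bm{x}})$ is expanded, the coefficient difference $\rho_{i}-\rho_{i}^{\mathrm{sgn}}$ does carry the desired $\tilde{x}_{\parallel}(\tau)\asymp\alpha_{t}$ factor via (\ref{eq:identity-a-a-sgn}), but the cross terms $\rho_{i}^{\mathrm{sgn}}\xi_{i}$ and $\rho_{i}\xi_{i}^{\mathrm{sgn}}$ (the paper's $\theta_{3},\theta_{4}$) carry the \emph{full} Hessian weight $\rho_{i}=O(\log m)$ with no $\alpha_{t}$ factor, and these are necessarily paired with $\|\bm{x}_{\perp}^{t,\mathrm{sgn}}-\bm{x}_{\perp}^{t,\mathrm{sgn},(l)}\|_{2}\lesssim\beta_{t}\sqrt{n\log^{5}m}/m$. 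No amount of bookkeeping rearranges the $\alpha_{t}$ factor into these terms. The paper salvages them with two tools you never invoke: (i) a Bernstein-type concentration gain, exploiting the independence of $\xi_{i}$ from $\rho_{i}^{\mathrm{sgn}}|a_{i,1}|\bm{a}_{i,\perp}^{\top}(\cdots)$, yielding an extra $\sqrt{\log m/m}$ factor; and (ii) the induction hypothesis (\ref{eq:induction-norm-relative}), i.e.\ $\beta_{t}\lesssim\alpha_{t}\sqrt{n\log m}$, which converts the residual $\beta_{t}$ into $\alpha_{t}\sqrt{n\log m}$ --- and it is only this combination that fits under the $\sqrt{n\log^{9}m}/m$ budget. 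The missing-sample term $\bm{E}_{l}^{t}$ (the paper's $\bm{v}_{2}$) also produces a bare $\beta_{t}$-scaled piece $\tfrac{\log^{2}m}{m}\|\bm{x}_{\perp}^{t,(l)}\|_{2}$ that is absorbed the same way. You also omit the exact algebraic cancellation $(\xi_{i}-\xi_{i}^{\mathrm{sgn}})(\xi_{i}+\xi_{i}^{\mathrm{sgn}})=0$ which makes the paper's $\theta_{2}$ vanish. Finally, your claim that the main Hessian action contributes at most $(1+O(\eta))\|\bm{\Delta}^{t}\|_{2}$ is too crude to close the induction: you need the sharper $\{1+3\eta(1-\|\bm{x}^{t}\|_{2}^{2})+O(\eta/\log m)\}$ from Lemma \ref{lemma:Hessian-UB-Stage1}, which matches $\alpha_{t+1}/\alpha_{t}$ exactly up to the allowed $(1+1/\log m)$ slack; without this matching, $\|\bm{\Delta}^{t}\|_{2}$ could grow faster than $\alpha_{t}$ over the $T_{0}\lesssim\log n$ iterations.
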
\begin{proof}See Appendix \ref{sec:Proof-of-Lemma-double-diff}.\end{proof}

Assuming the induction hypotheses (\ref{subeq:induction}) hold up
to the $t^{\mathrm{th}}$ iteration for some $t\leq T_{0}$, we know from Lemma
\ref{lemma:xt-signal} that the approximate state evolution for both
$\alpha_{t}$ and $\beta_{t}$ (see (\ref{subeq:state-evolution}))
holds up to $t+1$. As a result, the last two hypotheses (\ref{eq:induction-norm-size})
and (\ref{eq:induction-norm-relative}) for the $\left(t+1\right)^{\mathrm{th}}$
iteration can be easily verified.

\subsection{Justification of approximate state evolution for Phase II of Stage 1}

Recall from Lemma \ref{lemma:iterative} that Phase II refers to the
iterations $T_{0}<t\leq T_{\gamma}$ (see the definition of $T_{0}$
in Lemma~\ref{lemma:iterative}), for which one has
\begin{equation}
\alpha_{t}\geq\frac{c_{6}}{\log^{5}m}
\end{equation}
as long as the approximate state evolution (\ref{subeq:state-evolution})
holds. Here $c_{6}>0$ is the same constant as in Lemma \ref{lemma:iterative}.
Similar to Phase I, we invoke an inductive argument to prove that
the approximate state evolution (\ref{subeq:state-evolution}) continues
to hold for $T_{0}<t\leq T_{\gamma}$.

\subsubsection{Induction hypotheses}

In Phase I, we rely on the leave-one-out sequences and the random-sign
sequences $\{\bm{x}^{t,(l)}\},\{\bm{x}^{t,\text{sgn}}\}$ and $\{\bm{x}^{t,\text{sgn},(l)}\}$
to establish certain ``near-independence'' between $\{\bm{x}^{t}\}$
	and $\{\bm{a}_{l}\}$, which in turn allows us to obtain
sharp control of the residual terms $\bm{r}\left(\bm{x}^{t}\right)$
(cf.~(\ref{eq:grad-difference-r})) and $r_{1}\left(\bm{x}^{t}\right)$
(cf.~(\ref{eq:defn-r1})). As it turns out, once the size $\alpha_{t}$
of the signal component obeys $\alpha_{t}\gtrsim1/\text{poly}\log(m)$,
then $\{\bm{x}^{t,\left(l\right)}\}$ alone is sufficient for our
purpose to establish the ``near-independence'' property. More precisely,
in Phase II we only need to impose the following induction hypotheses.\begin{subequations}\label{subeq:induction-phase-2}
\begin{align}
\max_{1\leq l\leq m}\big\|\bm{x}^{t}-\bm{x}^{t,\left(l\right)}\big\|_{2} & \leq\alpha_{t}\left(1+\frac{1}{\log m}\right)^{t}C_{6}\frac{\sqrt{n\log^{15}m}}{m};\label{eq:induction-xt-xt-l-phase-2}\\
c_{5}\leq\left\Vert \bm{x}_{\perp}^{t}\right\Vert _{2} & \leq\left\Vert \bm{x}^{t}\right\Vert _{2}\leq C_{5}.\label{eq:induction-norm-phase-2}
\end{align}
\end{subequations}

A direct consequence of (\ref{subeq:induction-phase-2}) is the incoherence
between $\bm{x}^{t}$ and $\{\bm{a}_{l}\}$, namely, \begin{subequations}
\begin{align}
\max_{1\leq l\leq m}\left|\bm{a}_{l,\perp}^{\top}\bm{x}_{\perp}^{t}\right| & \lesssim\sqrt{\log m}\left\Vert \bm{x}_{\perp}^{t}\right\Vert _{2};\label{eq:incoherence-phase-2}\\
\max_{1\leq l\leq m}\left|\bm{a}_{l}^{\top}\bm{x}^{t}\right| & \lesssim\sqrt{\log m}\left\Vert \bm{x}^{t}\right\Vert _{2}.\label{eq:incoherence-full-phase-2}
\end{align}
\end{subequations}To see this, one can use the triangle inequality
to show that
\begin{align*}
\left|\bm{a}_{l,\perp}^{\top}\bm{x}_{\perp}^{t}\right| & \leq\left|\bm{a}_{l,\perp}^{\top}\bm{x}_{\perp}^{t,\left(l\right)}\right|+\left|\bm{a}_{l,\perp}^{\top}\big(\bm{x}_{\perp}^{t}-\bm{x}_{\perp}^{t,\left(l\right)}\big)\right|\\
 & \overset{\left(\text{i}\right)}{\lesssim}\sqrt{\log m}\big\|\bm{x}_{\perp}^{t,\left(l\right)}\big\|_{2}+\sqrt{n}\big\|\bm{x}^{t}-\bm{x}^{t,\left(l\right)}\big\|_{2}\\
 & \lesssim\sqrt{\log m}\left(\big\|\bm{x}_{\perp}^{t}\big\|_{2}+\big\|\bm{x}^{t}-\bm{x}^{t,\left(l\right)}\big\|_{2}\right)+\sqrt{n}\big\|\bm{x}^{t}-\bm{x}^{t,\left(l\right)}\big\|_{2}\\
 & \overset{\left(\text{ii}\right)}{\lesssim}\sqrt{\log m}+\frac{\sqrt{n\log^{15}m}}{m}\sqrt{n}\lesssim\sqrt{\log m},
\end{align*}
where (i) follows from the independence between $\bm{a}_{l}$ and
$\bm{x}^{t,(l)}$ and the Cauchy-Schwarz inequality, and the last
line (ii) arises from $\left(1+1/\log m\right)^{t}\lesssim1$ for
$t\leq T_{\gamma}\lesssim\log n$ and $m\gg n\log^{15/2}m$. This
combined with the fact that $\left\Vert \bm{x}_{\perp}^{t}\right\Vert _{2}\geq c_{5}/2$
results in
\begin{equation}
\max_{1\leq l\leq m}\left|\bm{a}_{l,\perp}^{\top}\bm{x}_{\perp}^{t}\right|\lesssim\sqrt{\log m}\left\Vert \bm{x}_{\perp}^{t}\right\Vert _{2}.\label{eq:incoherence-Phase2-perp}
\end{equation}
The condition (\ref{eq:incoherence-full-phase-2}) follows using nearly
identical arguments, which are omitted here.

As in Phase I, we need to justify the approximate state evolution
(\ref{subeq:state-evolution}) for both $\alpha_{t}$ and $\beta_{t}$,
given that the $t^{\mathrm{th}}$ iterates satisfy the induction hypotheses (\ref{subeq:induction-phase-2}).
This is stated in the following lemma.

\begin{lemma}\label{lemma:xt-signal-phase-2}Suppose $m\geq Cn\log^{13}m$
for some sufficiently large constant $C>0$. If the $t^{\mathrm{th}}$ iterates
satisfy the induction hypotheses (\ref{subeq:induction-phase-2})
for $T_{0}<t<T_{\gamma}$ , then with probability at least $1-O(me^{-1.5n})-O(m^{-10})$,
\begin{subequations}
\begin{align}
\alpha_{t+1} & =\left\{ 1+3\eta\left[1-\left(\alpha_{t}^{2}+\beta_{t}^{2}\right)\right]+\eta\zeta_{t}\right\} \alpha_{t};\label{eq:approximate_state_evolution_phase_2_alpha}\\
\beta_{t+1} & =\left\{ 1+\eta\left[1-3\left(\alpha_{t}^{2}+\beta_{t}^{2}\right)\right]+\eta\rho_{t}\right\} \beta_{t},\label{eq:approximate_state_evolution_phase_2_beta}
\end{align}
\end{subequations}for some $\left|\zeta_{t}\right|\ll1/\log m$ and
$\rho_{t}\ll1/\log m$. \end{lemma}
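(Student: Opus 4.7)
The plan is to mirror the proof of Lemma~\ref{lemma:xt-signal} but to use only the single leave-one-out family $\{\bm{x}^{t,(l)}\}$, since in Phase~II the signal strength $\alpha_t\geq c_6/\log^5 m$ is large enough that the random-sign construction is no longer needed. First I would project the GD update $\bm{x}^{t+1}=\bm{x}^t-\eta\nabla f(\bm{x}^t)=\bm{x}^t-\eta\nabla F(\bm{x}^t)-\eta\bm{r}(\bm{x}^t)$ onto $\bm{e}_1$ and onto $\bm{e}_1^{\perp}$ to obtain
\[
x_{\parallel}^{t+1}=\{1+3\eta(1-\|\bm{x}^t\|_2^2)\}x_{\parallel}^{t}-\eta r_1(\bm{x}^t),\qquad \bm{x}_\perp^{t+1}=\{1+\eta(1-3\|\bm{x}^t\|_2^2)\}\bm{x}_\perp^{t}-\eta\bm{r}_\perp(\bm{x}^t).
\]
Thus $\zeta_t=-r_1(\bm{x}^t)/\alpha_t$, while $|\rho_t|$ is bounded by $\|\bm{r}_\perp(\bm{x}^t)\|_2/\beta_t$. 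Since $\alpha_t\geq c_6/\log^5 m$ and $\beta_t\geq c_5$ by \eqref{eq:induction-norm-phase-2}, it suffices to prove $|r_1(\bm{x}^t)|\ll \alpha_t/\log m$ and $\|\bm{r}_\perp(\bm{x}^t)\|_2\ll 1/\log m$.

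Next, I would reuse the four-term decomposition $r_1(\bm{x}^t)=I_1+I_2-I_3-I_4$ from \eqref{eq:r1-I1-4-defn} and the analogous decomposition for $\bm{r}_\perp$. Terms that depend on $\bm{x}^t$ only through its size (such as $I_1$) or through a quadratic coupling with $\bm{a}_{i,\perp}$ (such as $I_3$) are controlled by standard Gaussian moment concentration and Hanson--Wright, together with the incoherence bounds \eqref{eq:incoherence-phase-2}--\eqref{eq:incoherence-full-phase-2}; each of them is of order $\sqrt{\log m/m}$, hence negligible under $m\gtrsim n\log^{13}m$. The nontrivial terms are $I_2$ and $I_4$, where the dependence between $\bm{x}_\perp^t$ and $\bm{a}_{i,\perp}$ must be broken. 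For these I would invoke the leave-one-out approximation
\[
I_4=\underbrace{\frac{1}{m}\sum_{i=1}^{m}\bigl(\bm{a}_{i,\perp}^\top\bm{x}_\perp^{t,(i)}\bigr)^{3}a_{i,1}}_{=:I_{4a}}+\underbrace{\frac{1}{m}\sum_{i=1}^{m}\Bigl[\bigl(\bm{a}_{i,\perp}^\top\bm{x}_\perp^{t}\bigr)^{3}-\bigl(\bm{a}_{i,\perp}^\top\bm{x}_\perp^{t,(i)}\bigr)^{3}\Bigr]a_{i,1}}_{=:I_{4b}},
\]
and similarly for $I_2$. The correction $I_{4b}$ is handled deterministically: write $a^3-b^3=(a-b)(a^2+ab+b^2)$, use incoherence to control $|\bm{a}_{i,\perp}^\top\bm{x}_\perp^t|$ and $|\bm{a}_{i,\perp}^\top\bm{x}_\perp^{t,(i)}|$ by $\sqrt{\log m}$, and then apply Cauchy--Schwarz together with the hypothesis \eqref{eq:induction-xt-xt-l-phase-2} $\|\bm{x}^t-\bm{x}^{t,(l)}\|_2\lesssim\alpha_t\sqrt{n\log^{15}m}/m$; this leaves a bound $\lesssim\alpha_t\cdot n\operatorname{poly}\log(m)/m\ll\alpha_t/\log m$ once $m\gtrsim n\log^{13}m$.

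The main obstacle is the leading term $I_{4a}$ (and its orthogonal analog), in which each summand has conditional mean zero given $\bm{a}_{i,\perp}$ and $\bm{x}^{t,(i)}$ but the summands are not jointly independent, since $\bm{x}^{t,(i)}$ depends on $\{\bm{a}_j\}_{j\neq i}$. My plan is to further decouple by conditioning: fix $\{\bm{a}_{j,\perp}\}_{j}$ and argue iteratively over $i$, exploiting that $a_{i,1}$ is independent of $\bm{a}_{i,\perp}$ and of $\bm{x}^{t,(i)}$, so that each summand contributes a mean-zero centered variable with sub-exponential tail of controlled variance (thanks again to incoherence). A Bernstein-type inequality then gives $|I_{4a}|\lesssim\operatorname{poly}\log(m)/\sqrt{m}$, which is $\ll\alpha_t/\log m$ whenever $m\gtrsim n\log^{13}m$. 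Finally, repeating the same argument termwise on $\bm{r}_\perp(\bm{x}^t)$---with the only difference being an extra projection $\bm{a}_{i,\perp}$ that does not affect the order-of-magnitude bounds---yields $\|\bm{r}_\perp(\bm{x}^t)\|_2\ll 1/\log m$, which establishes \eqref{eq:approximate_state_evolution_phase_2_alpha}--\eqref{eq:approximate_state_evolution_phase_2_beta}.
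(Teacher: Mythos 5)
Your overall plan---mirror the Phase~I analysis but with $\{\bm{x}^{t,(l)}\}$ in place of $\{\bm{x}^{t,\mathrm{sgn}}\}$---does not match the paper's actual proof and, more importantly, has a genuine gap in the key step. The paper's proof of \eqref{eq:approximate_state_evolution_phase_2_alpha} is much shorter than what you are attempting: it observes that because $\alpha_t\gtrsim 1/\log^5 m$ in Phase~II, the \emph{uniform} concentration bounds of Lemma~\ref{lemma:ai-uniform-concentration} (parts 1 and 2, applied with $\epsilon\asymp 1/\log^6 m$ and $\beta\asymp\sqrt{\log m}$) are already strong enough: they give $\big|\frac1m\sum_i a_{i,1}^3\bm{a}_{i,\perp}^\top\bm{x}_\perp^t\big|\lesssim \|\bm{x}_\perp^t\|_2/\log^6 m$ and $\big|\frac1m\sum_i a_{i,1}(\bm{a}_{i,\perp}^\top\bm{x}_\perp^t)^3\big|\lesssim \|\bm{x}_\perp^t\|_2^3/\log^6 m$ uniformly over the incoherence set, provided $m\gtrsim n\log^{13}m$, and these are $\ll \alpha_t/\log m$. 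No leave-one-out decomposition of $J_2$ or $J_4$ is needed at all in this phase; the only place the hypothesis \eqref{eq:induction-xt-xt-l-phase-2} is used is to certify the incoherence \eqref{eq:incoherence-phase-2}--\eqref{eq:incoherence-full-phase-2}, which puts $\bm{x}_\perp^t$ inside the set $\mathcal{S}$ of Lemma~\ref{lemma:ai-uniform-concentration}. The bound on $\rho_t$ is just the Phase~I argument verbatim, which never used the random-sign sequence.

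The genuine flaw in your proposal is the treatment of $I_{4a}=\frac{1}{m}\sum_i (\bm{a}_{i,\perp}^\top\bm{x}_\perp^{t,(i)})^3 a_{i,1}$. You correctly note the summands are not jointly independent, but the fix you sketch---``fix $\{\bm{a}_{j,\perp}\}_j$ and argue iteratively over $i$''---does not produce a martingale difference sequence or anything amenable to a Bernstein-type bound: $\bm{x}^{t,(i)}$ depends on $\{a_{j,1}\}_{j\neq i}$, including \emph{future} indices, so the $i$-th term is not measurable with respect to the first $i$ samples under any ordering, and the cross-dependence between terms through the $\bm{x}^{t,(\cdot)}$'s is exactly what the random-sign construction in Phase~I was designed to break. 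Substituting $\bm{x}^{t,(i)}$ for $\bm{x}^{t,\mathrm{sgn}}$ loses the crucial structural feature of Phase~I---that $\xi_i=\mathrm{sgn}(a_{i,1})$ is independent of the entire weight $|a_{i,1}|(\bm{a}_{i,\perp}^\top\bm{x}_\perp^{t,\mathrm{sgn}})^3$, making the sum a Rademacher chaos amenable to Bernstein. Without that, you would need to reintroduce $\bm{x}^{t,\mathrm{sgn}}$ (defeating the point of a simpler Phase~II argument), or else recognize, as the paper does, that the enlarged signal strength lets you replace the near-CLT rate $1/\sqrt{m}$ with the much weaker but uniform $1/\mathrm{poly}\log(m)$ rate.
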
\begin{proof}See Appendix \ref{sec:Proof-of-Lemma-xt-signal-phase-2}
for the proof of (\ref{eq:approximate_state_evolution_phase_2_alpha}).
The proof of (\ref{eq:approximate_state_evolution_phase_2_beta})
follows exactly the same argument as in proving (\ref{eq:approximate_state_evolution_phase_1_beta}),
and is hence omitted. \end{proof}

\subsubsection{Induction step}

We proceed to complete the induction argument. Towards this end, one
has the following lemma in regard to the induction on $\max_{1\leq l\leq m}\|\bm{x}^{t+1}-\bm{x}^{t+1,\left(l\right)}\|_{2}$
(see (\ref{eq:induction-xt-xt-l-phase-2})).

\begin{lemma}\label{lemma:xt-xt-l-phase-2}Suppose $m\geq Cn\log^{5}m$
for some sufficiently large constant $C>0$, and consider any $T_{0}<t<T_{\gamma}$.
If the induction hypotheses (\ref{subeq:induction}) are valid throughout Phase I and (\ref{subeq:induction-phase-2}) are valid
from the $T_{0}$th to the $t^{\mathrm{th}}$ iterations, then with probability
at least $1-O(me^{-1.5n})-O(m^{-10})$,
\[
\max_{1\leq l\leq m}\big\|\bm{x}^{t+1}-\bm{x}^{t+1,\left(l\right)}\big\|_{2}\leq\alpha_{t+1}\left(1+\frac{1}{\log m}\right)^{t+1}C_{6}\frac{\sqrt{n\log^{13}m}}{m}
\]
holds as long as $\eta>0$ is sufficiently small and $C_{6}>0$ is
sufficiently large. \end{lemma}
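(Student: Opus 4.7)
The plan is to follow the template of Lemma~\ref{lemma:xt-xt-l} from Phase~I, but to leverage only the simpler induction hypothesis~\eqref{subeq:induction-phase-2} available in Phase~II, together with the incoherence consequences~\eqref{eq:incoherence-phase-2}--\eqref{eq:incoherence-full-phase-2}. Subtracting the two GD updates gives
\[
\bm{x}^{t+1} - \bm{x}^{t+1,(l)} \;=\; \bigl(\bm{I}-\eta\bm{H}_t\bigr)\bigl(\bm{x}^{t}-\bm{x}^{t,(l)}\bigr) \;-\; \frac{\eta}{m}\bigl[(\bm{a}_{l}^{\top}\bm{x}^{t,(l)})^{2}-a_{l,1}^{2}\bigr]\,\bm{a}_{l}\bm{a}_{l}^{\top}\bm{x}^{t,(l)},
\]
where $\bm{H}_t := \int_{0}^{1}\nabla^{2}f\bigl(\bm{x}^{t,(l)}+\tau(\bm{x}^{t}-\bm{x}^{t,(l)})\bigr)\,d\tau$ is the averaged Hessian along the segment. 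This isolates a contraction piece, controlled by the Phase~II bound on $\|\bm{x}^t - \bm{x}^{t,(l)}\|_2$, and a leave-out correction that encodes the newly revealed $l$th sample.

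For the contraction piece, I would use that $\nabla^{2}f(\bm{x}) = m^{-1}\sum_i[3(\bm{a}_i^{\top}\bm{x})^{2} - a_{i,1}^{2}]\bm{a}_i\bm{a}_i^{\top}$ has operator norm of order $1$ whenever $\|\bm{x}\|_2\lesssim 1$ and $\bm{x}$ is incoherent with $\{\bm{a}_i\}$, both of which hold along the entire segment by~\eqref{eq:induction-norm-phase-2} and~\eqref{eq:incoherence-full-phase-2} (plus closeness of $\bm{x}^{t,(l)}$ to $\bm{x}^{t}$). Standard uniform concentration under $m\gtrsim n\log m$ then gives $\|\bm{I}-\eta\bm{H}_t\| \leq 1 + c_H\eta$ for an absolute constant $c_H$. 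For the leave-out correction, independence of $\bm{a}_l$ from $\bm{x}^{t,(l)}$ ensures $|\bm{a}_{l}^{\top}\bm{x}^{t,(l)}|\lesssim\sqrt{\log m}$, $|a_{l,1}|\lesssim\sqrt{\log m}$, and $\|\bm{a}_{l}\|_{2}\lesssim\sqrt{n}$ simultaneously with high probability, so this term has norm at most $O(\eta\sqrt{n\log^{3}m}/m)$. Combining with the inductive hypothesis yields
\[
\|\bm{x}^{t+1}-\bm{x}^{t+1,(l)}\|_2 \;\leq\; (1+c_H\eta)\,\alpha_t\bigl(1+\tfrac{1}{\log m}\bigr)^{t}\,C_6\frac{\sqrt{n\log^{15}m}}{m} \;+\; O\Bigl(\eta\cdot\frac{\sqrt{n\log^{3}m}}{m}\Bigr).
\]

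To match the target $\alpha_{t+1}(1+\tfrac{1}{\log m})^{t+1}C_{6}\sqrt{n\log^{13}m}/m$, I would check two things: (i)~the additive correction is absorbed because Phase~II guarantees $\alpha_{t+1}\gtrsim 1/\log^{5}m$, so $\sqrt{n\log^{3}m}/m$ is dominated by $\alpha_{t+1}\sqrt{n\log^{13}m}/m$ once $C_6$ is chosen large; and (ii)~the $1+c_H\eta$ inflation is paid for by the geometric increase $\alpha_{t+1}/\alpha_t \geq 1 + c_\alpha\eta$ coming from the state evolution~\eqref{eq:approximate_state_evolution_phase_2_alpha} together with the $(1+1/\log m)$ cushion on the right-hand side, provided $\eta$ is sufficiently small relative to $c_H$.

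The main obstacle is exactly this last balancing step. Unlike in the local regime $t\geq T_\gamma$, where local strong convexity yields a genuine contraction $\|\bm{I}-\eta\bm{H}_t\| \leq 1 - c\eta$, in Phase~II one only obtains mild inflation $1+c_H\eta$; all the slack to close the induction must therefore come from the signal growth $\alpha_{t+1}/\alpha_t$ and from the $(1+1/\log m)^t$ geometric cushion. Ensuring that these strictly dominate $1+c_H\eta$ requires careful tracking of the coefficient $c_\alpha$ in the $\alpha$-dynamics (which must exceed $c_H$), and it is the reason why this induction cannot begin earlier than the Phase~II threshold $T_0$, where the explicit lower bound $\alpha_t\gtrsim 1/\log^{5}m$ first becomes available to absorb the leave-out correction.
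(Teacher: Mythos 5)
Your high-level decomposition --- contraction piece plus leave-out correction, with the correction absorbed because $\alpha_{t+1}\gtrsim 1/\log^5 m$ in Phase II --- is exactly the paper's template, and your treatment of the leave-out term (bounding it by $O(\eta\sqrt{n\log^3 m}/m)$ and noting this is dominated by the target since $\alpha_{t+1}$ is bounded below) is correct. But your treatment of the contraction piece contains a genuine gap that would prevent the induction from closing.

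You bound the averaged Hessian by $\|\bm{I}-\eta\bm{H}_t\|\leq 1+c_H\eta$ with $c_H$ an absolute constant, and then try to compare it to a lower bound $\alpha_{t+1}/\alpha_t\geq 1+c_\alpha\eta$, asking that $c_\alpha>c_H$ and that $\eta$ be small. Neither condition is available. In Phase II, $\|\bm{x}^t\|_2^2$ can range from roughly $1/3$ (near the saddle) up to values exceeding $1$, so $\alpha_{t+1}/\alpha_t$ is \emph{not} bounded below by $1+c_\alpha\eta$ for any fixed $c_\alpha>0$; Lemma~\ref{lemma:iterative} and its proof (Fact~\ref{fact-Stage1-2}) only give $\alpha_{t+1}\geq(1-3.1\eta)\alpha_t$, i.e., $\alpha_t$ may \emph{decrease} within Phase~II. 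Meanwhile your absolute-constant bound $1+c_H\eta$ on the Hessian is too crude, and taking $\eta$ small cannot save a multiplicative gap of size $(c_H-c_\alpha)\eta$ that the $(1+1/\log m)$ cushion --- which only absorbs corrections of size $o(1/\log m)$ --- is powerless against at constant stepsize.

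What actually makes the paper's induction close is an exact structural identity, not a comparison of constants. Using Lemma~\ref{lemma:Hessian-UB-Stage1} to write $\bm{I}-\eta\nabla^2 f(\bm{z})\approx(1-3\eta\|\bm{z}\|_2^2+\eta)\bm{I}_n+2\eta\bm{x}^\natural\bm{x}^{\natural\top}-6\eta\bm{z}\bm{z}^\top$, and peeling off the $2\eta\bm{x}^\natural\bm{x}^{\natural\top}$ piece via the trivial bound $2\eta|x_\parallel^t-x_\parallel^{t,(l)}|\leq 2\eta\|\bm{x}^t-\bm{x}^{t,(l)}\|_2$, one obtains the $t$-dependent factor $1+3\eta(1-\|\bm{x}^t\|_2^2)+\eta\phi_1$ with $|\phi_1|\ll 1/\log m$. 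This is, modulo the $\phi$-correction, \emph{identical} to the growth factor $\alpha_{t+1}/\alpha_t=1+3\eta(1-\|\bm{x}^t\|_2^2)+\eta\zeta_t$ from the state evolution~\eqref{eq:approximate_state_evolution_phase_2_alpha}. So the two $t$-dependent factors cancel exactly, the $(1+1/\log m)$ cushion need only absorb the $\eta\phi_1-\eta\zeta_t$ discrepancy and the residual, and no comparison of absolute constants enters. Without recognizing this cancellation, your proposal cannot establish the claimed bound.
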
\begin{proof}See Appendix \ref{sec:Proof-of-Lemma-xt-xt-l-phase-2}.\end{proof}

As in Phase I, since we assume the induction hypotheses (\ref{subeq:induction})
(resp.~(\ref{subeq:induction-phase-2})) hold for all iterations
up to the $T_{0}$th iteration (resp.~between the $T_{0}$th and
the $t^{\mathrm{th}}$ iteration), we know from Lemma \ref{lemma:xt-signal-phase-2}
that the approximate state evolution for both $\alpha_{t}$ and $\beta_{t}$
(see (\ref{subeq:state-evolution})) holds up to $t+1$. The last
induction hypothesis (\ref{eq:induction-norm-phase-2}) for the $\left(t+1\right)^{\mathrm{th}}$
iteration can be easily verified from Lemma \ref{lemma:iterative}.

It remains to check the case when $t=T_{0}+1$. It can be seen from
the analysis in Phase I that
\begin{align*}
\max_{1\leq l\leq m}\big\|\bm{x}^{T_{0}+1}-\bm{x}^{T_{0}+1,\left(l\right)}\big\|_{2} & \leq\beta_{T_{0}+1}\left(1+\frac{1}{\log m}\right)^{T_{0}+1}C_{1}\frac{\sqrt{n\log^{5}m}}{m}\\
 & \leq\alpha_{T_{0}+1}\left(1+\frac{1}{\log m}\right)^{T_{0}+1}C_{6}\frac{\sqrt{n\log^{15}m}}{m},
\end{align*}
for some constant condition $C_{6}\gg1$, where the second line holds
since $\beta_{T_{0}+1}\leq C_{5}$, $\alpha_{T_{0}+1}\geq c_{6}/\log^{5}m$.

\subsection{Analysis for Stage 2}

Combining the analyses in \emph{Phase I} and \emph{Phase II}, we finish
the proof of Theorem \ref{thm:main} for Stage 1, i.e.~$t\leq T_{\gamma}$.
In addition to $\text{dist}\left(\bm{x}^{T_{\gamma}},\bm{x}^{\natural}\right)\leq\gamma$,
we can also see from (\ref{eq:incoherence-full-phase-2}) that
\[
\max_{1\leq i\leq m}\left|\bm{a}_{i}^{\top}\bm{x}^{T_{\gamma}}\right|\lesssim\sqrt{\log m},
\]
which in turn implies that
\[
\max_{1\leq i\leq m}\left|\bm{a}_{i}^{\top}\left(\bm{x}^{T_{\gamma}}-\bm{x}^{\natural}\right)\right|\lesssim\sqrt{\log m}.
\]
Armed with these properties, one can apply the arguments in \cite[Section 6]{ma2017implicit}
to prove that for $t\geq T_{\gamma}+1$,
\begin{equation}
\text{dist}\left(\bm{x}^{t},\bm{x}^{\natural}\right)\leq\left(1-\frac{\eta}{2}\right)^{t-T_{\gamma}}\text{dist}\left(\bm{x}^{T_{\gamma}},\bm{x}^{\natural}\right)\leq\left(1-\frac{\eta}{2}\right)^{t-T_{\gamma}}\cdot\gamma.\label{eq:state-2-dist}
\end{equation}
 Notably, the theorem therein \cite[Theorem 1]{ma2017implicit} works under the stepsize $\eta_{t}\equiv\eta\asymp c/\log n$
when $m\gg n\log n$. Nevertheless, as remarked by the authors, when
the sample complexity exceeds $m\gg n\log^{3}m$, a constant stepsize
is allowed.

We are left with proving (\ref{eq:SNR-lower-bound}) for Stage 2.
Note that we have already shown that the ratio $\alpha_{t}/\beta_{t}$
increases exponentially fast in Stage 1. Therefore,
\[
\frac{\alpha_{T_{1}}}{\beta_{T_{1}}}\geq\frac{1}{\sqrt{2n\log n}}(1+c_{10}\eta^2)^{T_{1}}
\]
and, by the definition of $T_{1}$ (see \eqref{eq:defn-T1}) and Lemma
\ref{lemma:iterative}, one has $\alpha_{T_{1}}\asymp\beta_{T_{1}}\asymp1$
and hence
\begin{equation}
\frac{\alpha_{T_{1}}}{\beta_{T_{1}}}\asymp1.\label{eq:SNR-T1}
\end{equation}
When it comes to $t>T_{\gamma}$, in view of (\ref{eq:state-2-dist}),
one has
\begin{align*}
\frac{\alpha_{t}}{\beta_{t}} & \geq \frac{1- \text{dist}\left(\bm{x}^{t},\bm{x}^{\natural}\right)}{\text{dist}\left(\bm{x}^{t},\bm{x}^{\natural}\right)} \geq \frac{1-\gamma}{\left(1-\frac{\eta}{2}\right)^{t-T_{\gamma}}\cdot\gamma} \\
 & \geq\frac{1-\gamma}{\gamma}\left(1+\frac{\eta}{2}\right)^{t-T_{\gamma}}\overset{(\text{i})}{\asymp}\frac{\alpha_{T_{1}}}{\beta_{T_{1}}}\left(1+\frac{\eta}{2}\right)^{t-T_{\gamma}}\\
 & \gtrsim\frac{1}{\sqrt{n\log n}}\left(1+c_{10}\eta^2\right)^{T_{1}}\left(1+\frac{\eta}{2}\right)^{t-T_{\gamma}}\\
 & \overset{(\text{ii})}{\asymp}\frac{1}{\sqrt{n\log n}}\left(1+c_{10}\eta^2\right)^{T_{\gamma}}\left(1+\frac{\eta}{2}\right)^{t-T_{\gamma}}\\
 & \gtrsim\frac{1}{\sqrt{n\log n}}\left(1+c_{10}\eta^2\right)^{t},
\end{align*}
where (i) arises from (\ref{eq:SNR-T1}) and the fact that $\gamma$
is a constant, (ii) follows since $T_{\gamma}-T_{1}\asymp1$ according
to Lemma \ref{lemma:iterative}, and the last line holds as long as
$c_{10}>0$ and $\eta$ are sufficiently small. This concludes the proof regarding
the lower bound on $\alpha_{t}/\beta_{t}$.

\section{Discussions\label{sec:Discussion}}

The current paper justifies the fast global convergence of gradient
descent with random initialization for phase retrieval. Specifically,
we demonstrate that GD with random initialization takes only $O\big(\log n+\log(1/\epsilon)\big)$
iterations to achieve a relative $\epsilon$-accuracy in terms of
the estimation error. It is likely that such fast global convergence
properties also arise in other nonconvex statistical estimation problems.
The technical tools developed herein may also prove useful for other
settings. We conclude our paper with a few directions
worthy of future investigation.

\begin{itemize}
\item \emph{Sample complexity and phase transition. } We have proved in Theorem
\ref{thm:main} that GD with random initialization enjoys fast convergence, 
with the proviso that $m\gg n\log^{13}m$. It is possible to improve
the sample complexity via more sophisticated arguments. In addition,
it would be interesting to examine the phase transition phenomenon
of GD with random initialization.

\item \emph{Other nonconvex statistical estimation problems. }We use the
phase retrieval problem to showcase the efficiency of GD with random
initialization. It is certainly interesting to investigate whether
this fast global convergence carries over to other nonconvex statistical
estimation problems including \emph{low-rank matrix and tensor recovery
}\cite{KesMonSew2010,sun2016guaranteed,chen2015fast,tu2016low,zheng2016convergence,zhao2015nonconvex-estimation,ma2017implicit,chen2017memory,chen2016projected, chen2018asymmetry, hao2018sparse}\emph{,
blind deconvolution }\cite{DBLP:journals/corr/LiLSW16,ma2017implicit,huang2017blind}\emph{
}and \emph{neural networks} \cite{soltanolkotabi2017theoretical,li2017algorithmic,fu2018local}.
The leave-one-out sequences and the ``near-independence'' property
introduced$\,$/$\,$identified in this paper might be useful in proving
efficiency of randomly initialized GD for the aforementioned problems.
%
\item \emph{Other iterative optimization methods. }Apart from gradient descent,
other iterative procedures have been applied to solve the phase retrieval
problem. Partial examples include \emph{alternating minimization},\emph{ Kaczmarz
algorithm,} and \emph{truncated gradient
descent (Truncated Wirtinger flow)}. In conjunction with random initialization,
whether the iterative algorithms mentioned above enjoy fast global
convergence is an interesting open problem. For example, it has been
shown that truncated WF together with truncated spectral initialization
achieves optimal sample complexity (i.e.~$m\asymp n$) and computational
complexity simultaneously \cite{ChenCandes15solving}. Does truncated
Wirtinger flow still enjoy optimal sample complexity when initialized randomly?

\item \emph{Beyond Gaussian sampling vectors. }In this work, we consider the Gaussian phase retrieval problem where the sampling vectors are i.i.d.~Gaussian vectors. We expect our results to generalize to other sampling vectors. Experimentally, we can verify that random initialization also converges fast under a Rademacher sampling model; see Figure \ref{fig:bernoulli}. 

\begin{figure}
\centering

\includegraphics[width=0.4\textwidth]{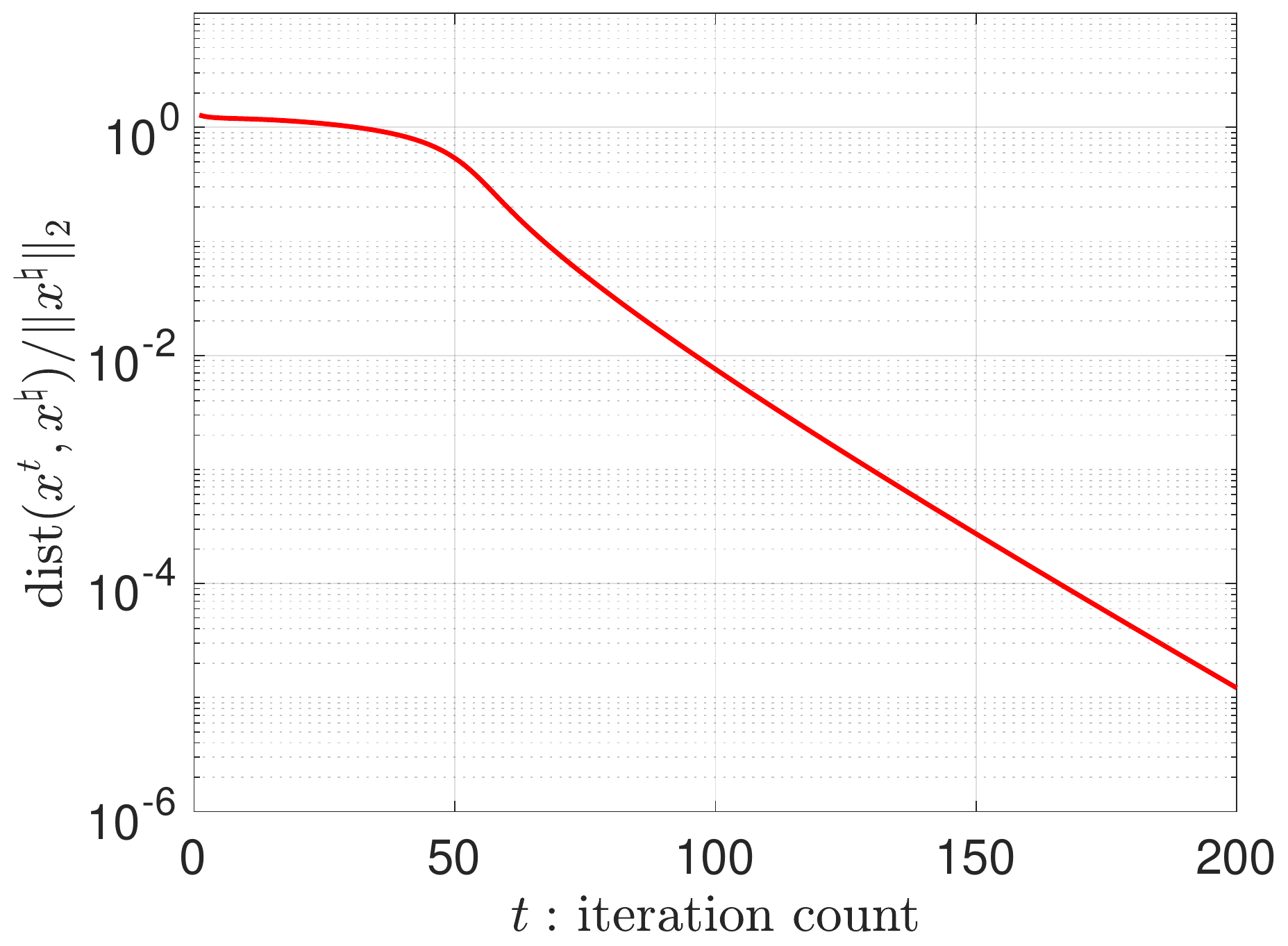}\caption{The relative $\ell_{2}$ error vs.~iteration count for GD with random
initialization, plotted semilogarithmically. The results are shown for $n=1000$
with $m=10n$ and $\eta_{t}\equiv0.1$. Here the entries of the sampling vectors $\bm{a}_i$ are drawn {\em i.i.d.} from a Rademacher distribution.\label{fig:bernoulli}}
\end{figure}

\item \emph{Applications of leave-one-out tricks. }In this paper, we heavily
deploy the \emph{leave-one-out }trick to demonstrate the ``near-independence''
between the iterates $\bm{x}^{t}$ and the sampling vectors $\left\{ \bm{a}_{i}\right\} $.
The basic idea is to construct an auxiliary sequence that is (i) independent
w.r.t.~certain components of the design vectors, and (ii) extremely
close to the original sequence. These two properties allow us to propagate
 the desired independence properties to $\bm{x}^{t}$. As mentioned
in Section \ref{sec:Related-work}, the leave-one-out trick has served
as a very powerful hammer for decoupling the dependency between random
vectors in several high-dimensional estimation problems. We expect
this powerful trick to be useful in broader settings.
\end{itemize}

\bibliographystyle{alphaabbr}

\section*{Acknowledgements}

Y.~Chen is supported in part by the AFOSR YIP award FA9550-19-1-0030, by the ARO  grant W911NF-18-1-0303, by the ONR grant N00014-19-1-2120,  and by the Princeton SEAS innovation award.  Y.~Chi is supported in part by AFOSR under the grant FA9550-15-1-0205, by ONR under the grant N00014-18-1-2142, by ARO under the grant W911NF-18-1-0303, and by NSF under the grants CAREER ECCS-1818571 and CCF-1806154.  J.~Fan is supported in part by the NSF grants DMS-1662139 and DMS-1712591, the ONR grant N00014-19-1-2120, and the NIH grant 2R01-GM072611-13.

\bibliography{../revised_manuscript/bibfileNonconvex}

\clearpage
\appendix

\section{Preliminaries }

We first gather two standard concentration inequalities
used throughout the appendix. The first lemma is the multiplicative
form of the Chernoff bound, while the second lemma is a user-friendly
version of the Bernstein inequality.

\begin{lemma}\label{lemma:chernoff}Suppose $X_{1},\cdots,X_{m}$
are independent random variables taking values in $\left\{ 0,1\right\} $.
Denote $X=\sum_{i=1}^{m}X_{i}$ and $\mu=\EE\left[X\right]$. Then
for any $\delta\geq1$, one has
\[
\PP\left(X\geq\left(1+\delta\right)\mu\right)\leq e^{-\delta\mu/3}.
\]
\end{lemma}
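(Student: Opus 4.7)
The plan is to invoke the standard exponential Markov (Chernoff) argument and then specialize the resulting bound to the regime $\delta \geq 1$. First, for any $t > 0$, Markov's inequality applied to $e^{tX}$ gives
\[
\PP\left(X \geq (1+\delta)\mu\right) \;=\; \PP\left(e^{tX} \geq e^{t(1+\delta)\mu}\right) \;\leq\; e^{-t(1+\delta)\mu}\,\EE\!\left[e^{tX}\right].
\]
Using the independence of $X_1,\ldots,X_m$, I would factor the moment generating function as $\EE[e^{tX}] = \prod_{i=1}^m \EE[e^{tX_i}]$. Writing $p_i := \PP(X_i = 1)$ so that $\mu = \sum_i p_i$, each factor equals $1 + p_i(e^t - 1)$, which via the elementary inequality $1 + x \leq e^x$ is upper bounded by $e^{p_i(e^t - 1)}$. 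Taking the product yields $\EE[e^{tX}] \leq e^{\mu(e^t - 1)}$, and hence
\[
\PP\left(X \geq (1+\delta)\mu\right) \;\leq\; \exp\!\left(\mu\bigl(e^t - 1 - t(1+\delta)\bigr)\right).
\]

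Next, I would optimize over $t > 0$ by choosing $t = \log(1+\delta)$, which is the standard optimizer for this expression when $\delta > 0$. Substituting gives the classical bound
\[
\PP\left(X \geq (1+\delta)\mu\right) \;\leq\; \left(\frac{e^{\delta}}{(1+\delta)^{1+\delta}}\right)^{\mu} \;=\; \exp\!\Bigl(\mu\bigl[\delta - (1+\delta)\log(1+\delta)\bigr]\Bigr).
\]

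The remaining task, and the only place where the hypothesis $\delta \geq 1$ is actually used, is to verify the scalar inequality
\[
(1+\delta)\log(1+\delta) - \delta \;\geq\; \tfrac{\delta}{3} \qquad \text{for all } \delta \geq 1,
\]
which is equivalent to $(1+\delta)\log(1+\delta) \geq \tfrac{4}{3}\delta$. I would check this by setting $g(\delta) := (1+\delta)\log(1+\delta) - \tfrac{4}{3}\delta$ and computing $g'(\delta) = \log(1+\delta) - \tfrac{1}{3}$. Since $\log 2 > \tfrac{1}{3}$, one has $g'(\delta) > 0$ for all $\delta \geq 1$, so $g$ is increasing on $[1,\infty)$. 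A direct numerical check at the endpoint gives $g(1) = 2\log 2 - \tfrac{4}{3} > 0$, so $g(\delta) \geq 0$ throughout. Plugging this back into the displayed bound yields $\PP(X \geq (1+\delta)\mu) \leq e^{-\delta\mu/3}$, completing the proof. The only mildly non-routine step is the calculus check on $g$; everything else is a mechanical execution of the Chernoff recipe.
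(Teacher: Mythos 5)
Your proof is correct: the exponential-Markov argument, the MGF bound $\EE[e^{tX}]\leq e^{\mu(e^t-1)}$ via $1+x\leq e^x$, the optimizer $t=\log(1+\delta)$, and the calculus verification that $(1+\delta)\log(1+\delta)-\delta\geq\delta/3$ for $\delta\geq1$ (monotonicity of $g$ plus $g(1)=2\log 2-\tfrac{4}{3}>0$) are all sound. The paper states this lemma as a standard multiplicative Chernoff bound without supplying a proof, and what you have written is exactly the canonical derivation one would cite for it.
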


\begin{lemma}\label{lemma:bernstein}Consider $m$ independent random
variables $z_{l}\;\left(1\leq l\leq m\right)$, each satisfying $\left|z_{l}\right|\leq B$.
For any $a\geq2$, one has
\[
\left|\sum_{l=1}^{m}z_{l}-\sum_{l=1}^{m}\EE\left[z_{l}\right]\right|\leq\sqrt{2a\log m\sum_{l=1}^{m}\EE\left[z_{l}^{2}\right]}+\frac{2a}{3}B\log m
\]
with probability at least $1-2m^{-a}$. \end{lemma}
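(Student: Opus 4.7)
The statement is a standard user-friendly repackaging of Bernstein's inequality, so the plan is to derive it from the classical Cram\'er--Chernoff bound and then tune the deviation parameter to the announced form. First I would invoke the well-known one-sided tail bound for independent bounded variables: if $z_1,\dots,z_m$ are independent with $|z_l|\leq B$ and $V:=\sum_{l=1}^m\mathbb{E}[z_l^2]$, then for every $t\geq 0$,
\[
\mathbb{P}\!\left(\Big|\sum_{l=1}^m z_l-\sum_{l=1}^m\mathbb{E}[z_l]\Big|\geq t\right)\leq 2\exp\!\left(-\frac{t^{2}/2}{V+Bt/3}\right).
\]
This version is the textbook Bernstein inequality (see e.g.\ Boucheron--Lugosi--Massart); it is proved in the standard way by centering $X_l:=z_l-\mathbb{E}[z_l]$, bounding the log-MGF via $\log\mathbb{E}[e^{\lambda X_l}]\leq \tfrac{\lambda^{2}\mathbb{E}[z_l^{2}]/2}{1-\lambda B/3}$ for $0<\lambda<3/B$, summing in $l$ by independence, applying Markov's inequality to $e^{\lambda(\sum X_l)}$, and optimizing $\lambda$. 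I would simply cite this bound rather than redo the MGF calculation.

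Next I would plug in the specific deviation
\[
t \;=\; \underbrace{\sqrt{2a\log m\cdot V}}_{=:\,u}\;+\;\underbrace{\tfrac{2a}{3}B\log m}_{=:\,v},
\]
and verify that with this choice the exponent on the right-hand side of the tail bound is at least $a\log m$. Equivalently, I need $\tfrac{t^{2}/2}{V+Bt/3}\geq a\log m$, i.e.\ $t^{2}\geq 2a\log m\cdot V+\tfrac{2a\log m}{3}B\,t=u^{2}+v\,t$. Writing $t=u+v$, a one-line computation gives
\[
(u+v)^{2}-v(u+v)-u^{2}\;=\;u^{2}+2uv+v^{2}-uv-v^{2}-u^{2}\;=\;uv\;\geq\;0,
\]
which confirms the required inequality. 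Substituting back yields the probability bound $2\exp(-a\log m)=2m^{-a}$, which is exactly the claim.

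There is no real obstacle here; the only non-mechanical piece is recognizing the right decomposition $t=u+v$ so that the two regimes of Bernstein (sub-Gaussian for small deviations, sub-exponential for large ones) can be handled simultaneously. If one prefers an entirely self-contained proof, the MGF step can be inserted as a short preliminary lemma, but since the target result is stated only as a convenient reformulation for later use in the appendix, I would keep the exposition minimal and cite the classical tail bound as the single external input.
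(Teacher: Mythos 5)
Your proof is correct. The paper itself states this lemma as a standard concentration fact and never proves it, so there is nothing to compare against; your derivation — cite the tail bound $\PP\big(|\sum_l z_l-\sum_l\EE[z_l]|\geq t\big)\leq 2\exp\big(-\tfrac{t^2/2}{V+Bt/3}\big)$ with $V=\sum_l\EE[z_l^2]$, then set $t=u+v$ with $u=\sqrt{2a\log m\,V}$, $v=\tfrac{2a}{3}B\log m$ and check $t^2\geq u^2+vt$ via $(u+v)^2-v(u+v)-u^2=uv\geq 0$ — is exactly the standard route and the algebra checks out. One point worth being explicit about if you write this up: the most common textbook statement of Bernstein assumes the summands are mean zero and bounded by $K$, and applying it to the centered variables $z_l-\EE[z_l]$ would force $K=2B$ and give $\tfrac{4a}{3}B\log m$ instead of $\tfrac{2a}{3}B\log m$; the constant $B$ together with the second moment $\EE[z_l^2]$ (rather than the variance) is obtained precisely through the uncentered MGF bound you quote, $\log\EE[e^{\lambda(z_l-\EE z_l)}]\leq\tfrac{\lambda^2\EE[z_l^2]/2}{1-\lambda B/3}$, which follows from $|z_l|^k\leq z_l^2B^{k-2}$ and $e^u-u-1\leq\tfrac{u^2/2}{1-u/3}$, so you should either cite a reference stating that exact form or include that short computation.
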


Next, we list a few simple facts. The gradient and the Hessian of
the nonconvex loss function (\ref{eq:PR-loss}) are given respectively
by
\begin{align}
\nabla f\left(\bm{x}\right) & =\frac{1}{m}\sum_{i=1}^{m}\left[\left(\bm{a}_{i}^{\top}\bm{x}\right)^{2}-\left(\bm{a}_{i}^{\top}\bm{x}^{\natural}\right)^{2}\right]\bm{a}_{i}\bm{a}_{i}^{\top}\bm{x};\label{eq:gradient}\\
\nabla^{2}f\left(\bm{x}\right) & =\frac{1}{m}\sum_{i=1}^{m}\left[3\left(\bm{a}_{i}^{\top}\bm{x}\right)^{2}-\left(\bm{a}_{i}^{\top}\bm{x}^{\natural}\right)^{2}\right]\bm{a}_{i}\bm{a}_{i}^{\top}.\label{eq:hessian}
\end{align}
In addition, recall that $\bm{x}^{\natural}$ is assumed to be $\bm{x}^{\natural}=\bm{e}_{1}$
throughout the proof. For each $1\leq i\leq m$, we have the decomposition
$\bm{a}_{i}=\left[\begin{array}{c}
a_{i,1}\\
\bm{a}_{i,\perp}
\end{array}\right]$, where $\bm{a}_{i,\perp}$ contains the $2$nd through the $n$th entries
of $\bm{a}_{i}$. The standard concentration inequality reveals that
\begin{equation}
\max_{1\leq i\leq m}\left|\bm{a}_{i}^{\top}\bm{x}^{\natural}\right| = \max_{1\leq i\leq m}\left|a_{i,1}\right| \leq5\sqrt{\log m}\label{eq:max-a-i-1}
\end{equation}
with probability $1-O\left(m^{-10}\right)$. Additionally, apply the
standard concentration inequality to see that
\begin{equation}
\max_{1\leq i\leq m}\left\Vert \bm{a}_{i}\right\Vert _{2}\leq\sqrt{6n}\label{eq:max-a-i-norm}
\end{equation}
with probability $1-O\left(me^{-1.5n}\right)$.

The next lemma provides concentration bounds regarding polynomial
functions of $\left\{ \bm{a}_{i}\right\} $.

\begin{lemma}\label{lemma:ai-uniform-concentration}Consider any
$\epsilon>3/n$. Suppose that $\bm{a}_{i}\overset{\mathrm{i.i.d.}}{\sim}\mathcal{N}(\bm{0},\bm{I}_{n})$
for $1\leq i\leq m$. Let
\[
\mathcal{S}:=\left\{ \bm{z}\in\mathbb{R}^{n-1}\Big| \max_{1\leq i\leq m}\left|\bm{a}_{i,\perp}^{\top}\bm{z}\right|\leq\beta\left\Vert \bm{z}\right\Vert _{2}\right\} ,
\]
where $\beta$ is any value obeying $\beta\geq c_{1}\sqrt{\log m}$
for some sufficiently large constant $c_{1}>0$. Then with probability
exceeding $1-O\left(m^{-10}\right)$, one has
\begin{enumerate}
\item $\left|\frac{1}{m}\sum_{i=1}^{m}a_{i,1}^{3}\bm{a}_{i,\perp}^{\top}\bm{z}\right|\leq\epsilon\left\Vert \bm{z}\right\Vert _{2}$
for all $\bm{z}\in\mathcal{S}$, provided that $m\geq c_{0}\max\left\{ \frac{1}{\epsilon^{2}}n\log n,\text{ }\frac{1}{\epsilon}\beta n\log^{\frac{5}{2}}m\right\} $;
\item $\left|\frac{1}{m}\sum_{i=1}^{m}a_{i,1}\big(\bm{a}_{i,\perp}^{\top}\bm{z}\big)^{3}\right|\leq\epsilon\left\Vert \bm{z}\right\Vert _{2}^{3}$
for all $\bm{z}\in\mathcal{S}$, provided that $m\geq c_{0}\max\left\{ \frac{1}{\epsilon^{2}}n\log n,\text{ }\frac{1}{\epsilon}\beta^{3}n\log^{\frac{3}{2}}m\right\} $;
\item $\left|\frac{1}{m}\sum_{i=1}^{m}a_{i,1}^{2}\big(\bm{a}_{i,\perp}^{\top}\bm{z}\big)^{2}-\left\Vert \bm{z}\right\Vert _{2}^{2}\right|\leq\epsilon\left\Vert \bm{z}\right\Vert _{2}^{2}$
for all $\bm{z}\in\mathcal{S}$, provided that $m\geq c_{0}\max\left\{ \frac{1}{\epsilon^{2}}n\log n,\text{ }\frac{1}{\epsilon}\beta^{2}n\log^{2}m\right\} $;
\item $\left|\frac{1}{m}\sum_{i=1}^{m}a_{i,1}^{6}\big(\bm{a}_{i,\perp}^{\top}\bm{z}\big)^{2}-15\left\Vert \bm{z}\right\Vert _{2}^{2}\right|\leq\epsilon\left\Vert \bm{z}\right\Vert _{2}^{2}$
for all $\bm{z}\in\mathcal{S}$, provided that $m\geq c_{0}\max\left\{ \frac{1}{\epsilon^{2}}n\log n,\text{ }\frac{1}{\epsilon}\beta^{2}n\log^{4}m\right\} $;
\item $\left|\frac{1}{m}\sum_{i=1}^{m}a_{i,1}^{2}\big(\bm{a}_{i,\perp}^{\top}\bm{z}\big)^{6}-15\left\Vert \bm{z}\right\Vert _{2}^{6}\right|\leq\epsilon\left\Vert \bm{z}\right\Vert _{2}^{6}$
for all $\bm{z}\in\mathcal{S}$, provided that $m\geq c_{0}\max\left\{ \frac{1}{\epsilon^{2}}n\log n,\text{ }\frac{1}{\epsilon}\beta^{6}n\log^{2}m\right\} $;
\item $\left|\frac{1}{m}\sum_{i=1}^{m}a_{i,1}^{2}\big(\bm{a}_{i,\perp}^{\top}\bm{z}\big)^{4}-3\left\Vert \bm{z}\right\Vert _{2}^{4}\right|\leq\epsilon\left\Vert \bm{z}\right\Vert _{2}^{4}$
for all $\bm{z}\in\mathcal{S}$, provided that $m\geq c_{0}\max\left\{ \frac{1}{\epsilon^{2}}n\log n,\text{ }\frac{1}{\epsilon}\beta^{4}n\log^{2}m\right\} $.
\end{enumerate}
Here, $c_{0}>0$ is some sufficiently large constant. \end{lemma}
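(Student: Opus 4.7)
All six items share a common template: establish pointwise concentration via Bernstein's inequality (Lemma \ref{lemma:bernstein}) at a fixed unit vector, and then upgrade to a uniform bound on $\mathcal{S}$ through a standard $\epsilon$-net discretization of the unit sphere in $\mathbb{R}^{n-1}$. By scale invariance of $\mathcal{S}$ and of each functional in question, it suffices to consider $\|\bm{z}\|_2=1$. Throughout I work on the high-probability events $\max_i|a_{i,1}|\le 5\sqrt{\log m}$ and $\max_i\|\bm{a}_{i,\perp}\|_2\le\sqrt{6n}$ guaranteed by \eqref{eq:max-a-i-1}--\eqref{eq:max-a-i-norm}.

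For the pointwise step, fix a unit vector $\bm{z}$ satisfying the slightly inflated incoherence condition $|\bm{a}_{i,\perp}^\top\bm{z}|\le 2\beta$ and apply Bernstein to the i.i.d.\ summands $z_i=a_{i,1}^{p}(\bm{a}_{i,\perp}^\top\bm{z})^{q}$, using the magnitude bound $|z_i|\le(5\sqrt{\log m})^{p}(2\beta)^{q}$ and the variance bound $\sum_i\mathbb{E}[z_i^{2}]\lesssim m$; the latter follows from independence of $a_{i,1}$ and $\bm{a}_{i,\perp}$ together with the Gaussian moment identities $\mathbb{E}[a_{i,1}^{2p}]=(2p-1)!!$ and $\mathbb{E}[(\bm{a}_{i,\perp}^\top\bm{z})^{2q}]=(2q-1)!!$. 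Choosing the Bernstein parameter $a\asymp n$, one obtains the stated bound at a single $\bm{z}$ with failure probability $\exp(-cn\log m)$, valid under a sample complexity that scales as $\max(n\log n/\epsilon^{2},\,\beta^{q}(\log m)^{(p+2)/2}n/\epsilon)$; substituting the six pairs $(p,q)$ reproduces the asserted conditions on $m$.

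For the uniform upgrade, take a deterministic $m^{-20}$-net $\mathcal{N}$ of the unit sphere, with $|\mathcal{N}|\le\exp(Cn\log m)$. For any $\bm{z}\in\mathcal{S}$ with $\|\bm{z}\|_2=1$, pick $\bm{z}_0\in\mathcal{N}$ with $\|\bm{z}-\bm{z}_0\|_2\le m^{-20}$; by the triangle inequality and $\|\bm{a}_{i,\perp}\|_2\le\sqrt{6n}$ one has $|\bm{a}_{i,\perp}^\top\bm{z}_0|\le\beta+\sqrt{6n}\,m^{-20}\le 2\beta$, so the pointwise Bernstein estimate applies to every $\bm{z}_0\in\mathcal{N}$. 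Union-bounding produces total failure probability $\exp(Cn\log m-cn\log m)=O(m^{-10})$ once the Bernstein constant is chosen large enough. Finally, each target functional $g(\bm{z})$ differs from $g(\bm{z}_0)$ (and likewise for $\mathbb{E}g$) by a deterministic telescoping sum whose individual terms contain one factor $\bm{a}_{i,\perp}^\top(\bm{z}-\bm{z}_0)$ of size at most $\sqrt{6n}\,m^{-20}$, rendering the perturbation negligible relative to $\epsilon$ (here the assumption $\epsilon\ge 3/n$ is comfortably sufficient).

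The main obstacle is item-by-item book-keeping in the pointwise step: each of the six items has its own pair $(p,q)$ and its own mean value, so the magnitude bound $(5\sqrt{\log m})^{p}(2\beta)^{q}$ and the variance must be computed separately and matched against the stated logarithmic and $\beta$-polynomial factors. Items~4 and~5, which involve the high-order powers $a_{i,1}^{6}$ and $(\bm{a}_{i,\perp}^\top\bm{z})^{6}$, require the most care and generate the $\log^{4}m$ and $\beta^{6}$ factors, respectively. Everything else --- the variance computations, the net cardinality, and the perturbation telescoping --- is uniform across items and follows routinely from the deterministic envelopes \eqref{eq:max-a-i-1}--\eqref{eq:max-a-i-norm}.
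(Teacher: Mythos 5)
There is a genuine gap in the pointwise Bernstein step. You fix a deterministic unit vector $\bm{z}$ (e.g.\ a net point) and claim the magnitude bound $|z_i|\le(5\sqrt{\log m})^{p}(2\beta)^{q}$ for the summands $z_i=a_{i,1}^{p}(\bm{a}_{i,\perp}^{\top}\bm{z})^{q}$, but for a fixed $\bm{z}$ the quantity $\bm{a}_{i,\perp}^{\top}\bm{z}$ is $N(0,1)$ and is \emph{not} deterministically bounded by $2\beta$. Phrasing the restriction as ``fix a $\bm{z}$ satisfying the inflated incoherence condition'' conflates a random event with a deterministic constraint: the paper's Lemma~\ref{lemma:bernstein} requires a deterministic envelope $|z_\ell|\le B$, and you cannot obtain one by conditioning on the high-probability incoherence event --- conditioning destroys independence, shifts the means, and breaks the clean variance bounds you later invoke. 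The tails are also too heavy for a sub-exponential substitute: e.g.\ $a_{i,1}(\bm{a}_{i,\perp}^{\top}\bm{z})^{3}$ has tails of order $\exp(-c|t|^{1/2})$.

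The paper's proof closes exactly this hole by inserting explicit indicator truncations $\ind_{\{|\bm{a}_{i,\perp}^{\top}\bm{z}|\le\beta\|\bm z\|_2,\,|a_{i,1}|\le 5\sqrt{\log m}\}}$ into each summand before applying Bernstein, so the summands are bounded deterministically. This in turn forces two further pieces of bookkeeping that your sketch omits. First, when comparing a generic $\bm{z}$ to its nearest net point $\bm{z}_0$, the indicator functions $\ind_{\{\bm z\in\mathcal S_i\}}$ and $\ind_{\{\bm z_0\in\mathcal S_i\}}$ can disagree; the paper controls the number of such indices via a Chernoff bound on $\#\{i:\beta\le|\bm a_{i,\perp}^{\top}\bm z_0|\le\beta+\sqrt{6n}\theta\}$, which is not a Lipschitz/telescoping effect and cannot be absorbed into your $\sqrt{6n}\,m^{-20}$ perturbation estimate. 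Second, a de-truncation step is needed at the end: on the target event $\{\bm z\in\mathcal S,\ \max_i|a_{i,1}|\le5\sqrt{\log m}\}$ the truncated and untruncated sums coincide, and the truncation bias on the \emph{mean} must be shown to be $O(1/n)\cdot\mathbb E|g_i|$, which is where the hypothesis $\epsilon>3/n$ is actually used. Your plan is correct in outline and gets the per-item $(p,q)$ bookkeeping right, but without the explicit truncation it does not produce a valid application of the cited Bernstein inequality.
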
\begin{proof}See
Appendix \ref{sec:Proof-of-Lemma-ai-uniform-concentration}.\end{proof}

The next lemmas provide the (uniform) matrix concentration inequalities
about $\left\{ \bm{a}_{i}\bm{a}_{i}^{\top}\right\} $.

\begin{lemma}[{\cite[Corollary 5.35]{Vershynin2012}}]\label{lemma:ai-ai-spectral-upper-bound}Suppose
that $\bm{a}_{i}\overset{\mathrm{i.i.d.}}{\sim}\mathcal{N}\left(\bm{0},\bm{I}_{n}\right)$
for $1\leq i\leq m$. With probability at least $1-ce^{-\tilde{c}m}$,
one has
\[
\left\Vert \frac{1}{m}\sum_{i=1}^{m}\bm{a}_{i}\bm{a}_{i}^{\top}\right\Vert \leq2,
\]
as long as $m\geq c_{0}n$ for some sufficiently large constant $c_{0}>0$.
Here, $c,\tilde{c}>0$ are some absolute constants. \end{lemma}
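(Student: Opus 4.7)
The plan is to reduce the operator norm bound to a concentration inequality for a sum of i.i.d. rank-one matrices, combined with a standard $\varepsilon$-net discretization of the unit sphere. Observe first that by the triangle inequality, $\bigl\|\frac{1}{m}\sum_{i=1}^m \bm{a}_i\bm{a}_i^\top\bigr\| \leq \bigl\|\frac{1}{m}\sum_{i=1}^m \bm{a}_i\bm{a}_i^\top - \bm{I}_n\bigr\| + 1$, so it suffices to show the centered version is at most $1$ with the advertised probability. Since $\frac{1}{m}\sum_i \bm{a}_i\bm{a}_i^\top - \bm{I}_n$ is symmetric, its operator norm equals $\sup_{\bm{v}\in S^{n-1}}\bigl|\bm{v}^\top\!\bigl(\frac{1}{m}\sum_i \bm{a}_i\bm{a}_i^\top - \bm{I}_n\bigr)\bm{v}\bigr| = \sup_{\bm{v}\in S^{n-1}}\bigl|\frac{1}{m}\sum_i (\bm{a}_i^\top\bm{v})^2 - 1\bigr|$.

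Next, I would fix a $1/4$-net $\mathcal{N}$ of the unit sphere $S^{n-1}$; by a standard volumetric argument this can be chosen with $|\mathcal{N}|\leq 9^n$. A routine net inequality for symmetric matrices yields $\bigl\|\frac{1}{m}\sum_i \bm{a}_i\bm{a}_i^\top - \bm{I}_n\bigr\| \leq 2\max_{\bm{v}\in\mathcal{N}}\bigl|\frac{1}{m}\sum_i (\bm{a}_i^\top\bm{v})^2 - 1\bigr|$. Thus it remains to control, uniformly over the net, a sum of i.i.d.\ centered $\chi^2_1$ random variables.

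For any fixed $\bm{v}\in S^{n-1}$, the summands $(\bm{a}_i^\top\bm{v})^2 - 1$ are i.i.d., mean-zero, and sub-exponential with an absolute constant Orlicz norm. Bernstein's inequality (e.g.\ the form in Lemma \ref{lemma:bernstein} applied after a mild truncation, or the sub-exponential Bernstein bound) gives, for any $t\in(0,1)$,
\[
\Pr\!\left(\left|\frac{1}{m}\sum_{i=1}^m (\bm{a}_i^\top\bm{v})^2 - 1\right| > t\right) \leq 2\exp\!\left(-\tilde{c}\, m\, t^2\right)
\]
for some absolute constant $\tilde{c}>0$. Taking $t = 1/4$ and union bounding over the $9^n$ points of $\mathcal{N}$ yields a failure probability at most $2\cdot 9^n\exp(-\tilde{c}m/16)$, which is $\leq c\,e^{-\tilde{c}'m}$ once $m\geq c_0 n$ for a sufficiently large $c_0$. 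Combining with the net inequality gives $\bigl\|\frac{1}{m}\sum_i \bm{a}_i\bm{a}_i^\top - \bm{I}_n\bigr\|\leq 1/2 \leq 1$ on this event, and then the triangle inequality delivers the claim $\bigl\|\frac{1}{m}\sum_i \bm{a}_i\bm{a}_i^\top\bigr\|\leq 2$.

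There is no serious obstacle here: the only item requiring a sentence of care is verifying the sub-exponential Bernstein bound for quadratic forms of Gaussians (which is classical, since $(\bm{a}_i^\top\bm{v})^2\sim \chi^2_1$), and tracking constants so that the net inflation factor of $2$ together with $t=1/4$ produces the clean bound of $2$ on the uncentered operator norm. A valid alternative is to invoke the matrix Bernstein inequality directly on the sum $\frac{1}{m}\sum_i (\bm{a}_i\bm{a}_i^\top - \bm{I}_n)$, which bypasses the $\varepsilon$-net entirely, but the net approach is the most elementary and is precisely the route in Vershynin's Corollary 5.35.
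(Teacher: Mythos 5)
Your proof is correct. The paper does not prove this lemma — it is cited directly from Vershynin's Corollary 5.35 — and your $\varepsilon$-net plus sub-exponential Bernstein argument is precisely the standard proof behind that reference (Vershynin's Theorem 5.39, specialized to isotropic Gaussian rows), so the two approaches coincide.
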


\begin{lemma}\label{lemma:hessian-concentration}Fix some $\bm{x}^{\natural}\in\mathbb{R}^{n}$.
Suppose that $\bm{a}_{i}\overset{\mathrm{i.i.d.}}{\sim}\mathcal{N}\left(\bm{0},\bm{I}_{n}\right)$,
$1\leq i\leq m$. With probability at least $1-O\left(m^{-10}\right)$,
one has
\begin{equation}
\left\Vert \frac{1}{m}\sum_{i=1}^{m}\left(\bm{a}_{i}^{\top}\bm{x}^{\natural}\right)^{2}\bm{a}_{i}\bm{a}_{i}^{\top}-\left\Vert \bm{x}^{\natural}\right\Vert _{2}^{2}\bm{I}_{n}-2\bm{x}^{\natural}\bm{x}^{\natural\top}\right\Vert \leq c_{0}\sqrt{\frac{n\log^{3}m}{m}}\left\Vert \bm{x}^{\natural}\right\Vert _{2}^{2},\label{eq:hessian-first-claim}
\end{equation}
provided that $m>c_{1}n\log^{3}m$. Here, $c_{0},c_{1}$ are some
universal positive constants. Furthermore, fix any $c_{2}>1$ and
suppose that $m>c_{1}n\log^{3}m$ for some sufficiently large constant
$c_{1}>0$. Then with probability exceeding $1-O\left(m^{-10}\right)$,
\begin{equation}
\left\Vert \frac{1}{m}\sum_{i=1}^{m}\left(\bm{a}_{i}^{\top}\bm{z}\right)^{2}\bm{a}_{i}\bm{a}_{i}^{\top}-\left\Vert \bm{z}\right\Vert _{2}^{2}\bm{I}_{n}-2\bm{z}\bm{z}^{\top}\right\Vert \leq c_{0}\sqrt{\frac{n\log^{3}m}{m}}\left\Vert \bm{z}\right\Vert _{2}^{2}\label{eq:hessian-second-claim}
\end{equation}
holds simultaneously for all $\bm{z}\in\mathbb{R}^{n}$ obeying $\max_{1\leq i\leq m}\left|\bm{a}_{i}^{\top}\bm{z}\right|\leq c_{2}\sqrt{\log m}\left\Vert \bm{z}\right\Vert _{2}$.
On this event, we have

\begin{equation}
\left\Vert \frac{1}{m}\sum_{i=1}^{m}|a_{i,1}|^{2}\bm{a}_{i,\perp}\bm{a}_{i,\perp}^{\top}\right\Vert \leq\left\Vert \frac{1}{m}\sum_{i=1}^{m}|a_{i,1}|^{2}\bm{a}_{i}\bm{a}_{i}^{\top}\right\Vert \leq4.\label{eq:hessian-third-claim}
\end{equation}

\end{lemma}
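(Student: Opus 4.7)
The plan is to prove the three claims in sequence: first establish \eqref{eq:hessian-first-claim} for a fixed $\bm{x}^{\natural}$ via truncated matrix Bernstein, then upgrade to the uniform bound \eqref{eq:hessian-second-claim} over the incoherent cone by a covering-plus-perturbation argument, and finally deduce \eqref{eq:hessian-third-claim} by specializing at $\bm{z} = \bm{x}^{\natural}$. For the fixed case, an Isserlis/Wick calculation gives $\mathbb{E}\big[(\bm{a}^{\top}\bm{x}^{\natural})^{2}\bm{a}\bm{a}^{\top}\big]=\|\bm{x}^{\natural}\|_{2}^{2}\bm{I}_{n}+2\bm{x}^{\natural}\bm{x}^{\natural\top}$. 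Truncating onto the event $E_{i}:=\{|\bm{a}_{i}^{\top}\bm{x}^{\natural}|\leq 5\sqrt{\log m}\|\bm{x}^{\natural}\|_{2},\ \|\bm{a}_{i}\|_{2}\leq\sqrt{6n}\}$, which holds simultaneously for all $i$ with probability $1-O(m^{-10})-O(me^{-1.5n})$ by \eqref{eq:max-a-i-1} and \eqref{eq:max-a-i-norm}, the summand $\widetilde{Z}_{i}:=(\bm{a}_{i}^{\top}\bm{x}^{\natural})^{2}\bm{a}_{i}\bm{a}_{i}^{\top}\mathbf{1}_{E_{i}}$ satisfies $\|\widetilde{Z}_{i}\|\lesssim n\log m\cdot\|\bm{x}^{\natural}\|_{2}^{2}$, and a direct Gaussian moment computation shows $\|\sum_{i}\mathbb{E}[\widetilde{Z}_{i}^{2}]\|\lesssim mn\|\bm{x}^{\natural}\|_{2}^{4}$. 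Matrix Bernstein applied to $m^{-1}\sum_{i}(\widetilde{Z}_{i}-\mathbb{E}[\widetilde{Z}_{i}])$ then delivers deviation of order $\sqrt{n\log^{3}m/m}\|\bm{x}^{\natural}\|_{2}^{2}$, and the bias $\|\mathbb{E}[(\bm{a}^{\top}\bm{x}^{\natural})^{2}\bm{a}\bm{a}^{\top}\mathbf{1}_{E_{i}^{c}}]\|$ is negligible by Gaussian tail estimates.

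For the uniform claim \eqref{eq:hessian-second-claim}, take an $\epsilon$-net $\mathcal{N}_{\epsilon}$ of the unit sphere in $\mathbb{R}^{n}$ with $\epsilon=1/\mathrm{poly}(m)$ sufficiently small, so that $|\mathcal{N}_{\epsilon}|\leq(3/\epsilon)^{n}$. A union bound of \eqref{eq:hessian-first-claim} over $\mathcal{N}_{\epsilon}$ gives the fixed-point bound simultaneously at every net point $\bm{u}\in\mathcal{N}_{\epsilon}$. For any $\bm{z}$ with $\|\bm{z}\|_{2}=1$ in the incoherent cone, select $\bm{u}\in\mathcal{N}_{\epsilon}$ with $\|\bm{z}-\bm{u}\|_{2}\leq\epsilon$, and use the factorization $(\bm{a}_{i}^{\top}\bm{z})^{2}-(\bm{a}_{i}^{\top}\bm{u})^{2}=\bm{a}_{i}^{\top}(\bm{z}+\bm{u})\cdot\bm{a}_{i}^{\top}(\bm{z}-\bm{u})$ to bound
\[
\bigg\|\frac{1}{m}\sum_{i}\big[(\bm{a}_{i}^{\top}\bm{z})^{2}-(\bm{a}_{i}^{\top}\bm{u})^{2}\big]\bm{a}_{i}\bm{a}_{i}^{\top}\bigg\|\leq\max_{i}|\bm{a}_{i}^{\top}(\bm{z}+\bm{u})|\cdot\max_{i}|\bm{a}_{i}^{\top}(\bm{z}-\bm{u})|\cdot\bigg\|\frac{1}{m}\sum_{i}\bm{a}_{i}\bm{a}_{i}^{\top}\bigg\|,
\]
where $|\bm{a}_{i}^{\top}(\bm{z}-\bm{u})|\leq\|\bm{a}_{i}\|_{2}\epsilon\leq\sqrt{6n}\,\epsilon$ by \eqref{eq:max-a-i-norm}, $|\bm{a}_{i}^{\top}(\bm{z}+\bm{u})|\leq 2c_{2}\sqrt{\log m}+\sqrt{6n}\,\epsilon$ by the incoherence of $\bm{z}$, and $\|m^{-1}\sum_{i}\bm{a}_{i}\bm{a}_{i}^{\top}\|\leq 2$ by Lemma~\ref{lemma:ai-ai-spectral-upper-bound}. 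Choosing $\epsilon$ small renders this perturbation well below the target rate; a parallel perturbation bound handles $\|\bm{z}\|_{2}^{2}\bm{I}_{n}+2\bm{z}\bm{z}^{\top}-\|\bm{u}\|_{2}^{2}\bm{I}_{n}-2\bm{u}\bm{u}^{\top}$, and combining yields \eqref{eq:hessian-second-claim}.

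Finally, \eqref{eq:hessian-third-claim} follows by specialization. Rotational invariance of the Gaussian lets us take $\bm{x}^{\natural}=\bm{e}_{1}$; then $\max_{i}|\bm{a}_{i}^{\top}\bm{e}_{1}|=\max_{i}|a_{i,1}|\leq 5\sqrt{\log m}$ by \eqref{eq:max-a-i-1}, so $\bm{e}_{1}$ lies in the incoherent cone, and applying \eqref{eq:hessian-second-claim} at $\bm{z}=\bm{e}_{1}$ gives $\|m^{-1}\sum_{i}a_{i,1}^{2}\bm{a}_{i}\bm{a}_{i}^{\top}-\bm{I}_{n}-2\bm{e}_{1}\bm{e}_{1}^{\top}\|=o(1)$. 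Triangle inequality with $\|\bm{I}_{n}+2\bm{e}_{1}\bm{e}_{1}^{\top}\|=3$ gives the upper bound $4$, while the first inequality in \eqref{eq:hessian-third-claim} is immediate because $\sum_{i}a_{i,1}^{2}\bm{a}_{i,\perp}\bm{a}_{i,\perp}^{\top}$ is the lower $(n-1)\times(n-1)$ principal submatrix of $\sum_{i}a_{i,1}^{2}\bm{a}_{i}\bm{a}_{i}^{\top}$. The main technical hurdle lies in the uniform step: the $(3/\epsilon)^{n}$ union-bound cost pushes against the matrix-Bernstein tail at each net point, so one must sharpen the per-point concentration by exploiting the sub-exponential structure of $\widetilde{Z}_{i}$ carefully, and the assumption $m\geq c_{1}n\log^{3}m$ together with the $\sqrt{n\log^{3}m/m}$ rate is calibrated precisely to give enough slack for this trade-off.
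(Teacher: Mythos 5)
The covering step has a fatal gap that your own closing sentence flags but does not resolve: you cannot union-bound your fixed-point estimate \eqref{eq:hessian-first-claim} over an $\epsilon$-net of the unit sphere. Take $\epsilon = m^{-\alpha}$, so $|\mathcal{N}_{\epsilon}| \geq e^{c \alpha n \log m}$. To absorb this union-bound cost, the per-net-point failure probability must be $\lesssim e^{-Cn\log m}$. But matrix Bernstein applied to $m^{-1}\sum_i \widetilde{Z}_i$ with your parameters (variance proxy $\sigma^2 \asymp n/m$, spectral bound $L \asymp n\log m /m$) gives, at the target deviation $t \asymp \sqrt{n\log^3 m / m}$, an exponent of order $\min\{t^2/\sigma^2,\ t/L\} \asymp \min\{\log^3 m,\ \sqrt{m\log m/n}\}$, which under $m \asymp n\,\mathrm{poly}\log m$ is $O(\mathrm{poly}\log m)$ — nowhere near $n\log m$. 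This is not a matter of "sharpening'' the tail by exploiting sub-exponential structure of $\widetilde{Z}_i$: the $\sigma^2 \asymp n/m$ variance proxy is intrinsic to any operator-norm concentration for a sum of rank-one terms, so $t^2/\sigma^2 \asymp \log^3 m$ is a hard ceiling at the target rate. A matrix-level deviation bound simply does not deliver an exponential-in-$n$ tail at rate $\sqrt{n\log^3 m/m}$.

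The paper avoids this by never applying matrix Bernstein on the whole $n\times n$ matrix. It instead passes to the scalar quadratic form $\frac{1}{m}\sum_i (\bm{a}_i^\top\bm{z})^2(\bm{a}_i^\top\bm{x})^2 \mathbf{1}_{\{|\bm{a}_i^\top\bm{z}|\le c_2\sqrt{\log m}\}}$, whose summands have sub-exponential norm $\lesssim \log m$ after truncation, and applies \emph{scalar} Bernstein to get per-$(\bm{x},\bm{z})$-pair tail $2\exp(-c\epsilon^2 m)$; taking $\epsilon \asymp \sqrt{n\log m/m}$ gives both the right deviation $\sqrt{n\log^3 m/m}$ and an $e^{-c_{10}n\log m}$ tail, which beats the $(1+2/\theta)^{2n}$ net count. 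It then handles the $\bm{z}$-dependent truncation indicator's discontinuity when passing from net points to arbitrary $\bm{z}$ via a Chernoff bound on how many indicators flip (Claim \ref{claim:uniform}). Your perturbation step using the factorization $(\bm{a}_i^\top\bm{z})^2-(\bm{a}_i^\top\bm{u})^2 = \bm{a}_i^\top(\bm{z}+\bm{u})\,\bm{a}_i^\top(\bm{z}-\bm{u})$ together with Lemma~\ref{lemma:ai-ai-spectral-upper-bound} is clean and correct, as is your deduction of \eqref{eq:hessian-third-claim}; but you would need to replace the matrix-Bernstein fixed-point bound with the quadratic-form-level scalar concentration (or something with equivalent exponential-in-$n$ tail) before the covering argument can close.
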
\begin{proof}See Appendix \ref{sec:Proof-of-Lemma-hessian-concentration}.\end{proof}

The following lemma provides the concentration results regarding the
Hessian matrix $\nabla^{2}f\left(\bm{x}\right)$.

\begin{lemma}\label{lemma:Hessian-UB-Stage1}
Fix any constant $c_{0}>1$.
Suppose that $m>c_{1}n\log^{3}m$ for some sufficiently large constant
$c_{1}>0$. Then with probability exceeding $1-O\left(m^{-10}\right)$,
\[
\left\Vert \left(\bm{I}_{n}-\eta\nabla^{2}f\left(\bm{z}\right)\right)-\left\{ \big(1-3\eta\left\Vert \bm{z}\right\Vert _{2}^{2}+\eta\big)\bm{I}_{n}+2\eta\bm{x}^{\natural}\bm{x}^{\natural\top}-6\eta\bm{z}\bm{z}^{\top}\right\} \right\Vert \lesssim\sqrt{\frac{n\log^{3}m}{m}}\max\left\{ \left\Vert \bm{z}\right\Vert _{2}^{2},1\right\}
\]
\begin{align*}
	\text{and} \qquad \left\Vert \nabla^{2}f\left(\bm{z}\right)\right\Vert  & \leq10\|\bm{z}\|_{2}^{2}+4
\end{align*}
hold simultaneously for all $\bm{z}$ obeying $\max_{1\leq i\leq m}\left|\bm{a}_{i}^{\top}\bm{z}\right|\leq c_{0}\sqrt{\log m}\left\Vert \bm{z}\right\Vert _{2}$,
provided that $0<\eta<\frac{c_{2}}{\max\{\left\Vert \bm{z}\right\Vert _{2}^{2},1\}}$
for some sufficiently small constant $c_{2}>0$.\end{lemma}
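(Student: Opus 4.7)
The plan is to derive both claims directly from the Hessian formula \eqref{eq:hessian} combined with the matrix concentration bound in Lemma \ref{lemma:hessian-concentration}. Starting from
\[
\nabla^{2}f(\bm{z})=\frac{3}{m}\sum_{i=1}^{m}(\bm{a}_{i}^{\top}\bm{z})^{2}\bm{a}_{i}\bm{a}_{i}^{\top}-\frac{1}{m}\sum_{i=1}^{m}(\bm{a}_{i}^{\top}\bm{x}^{\natural})^{2}\bm{a}_{i}\bm{a}_{i}^{\top},
\]
I would control the two sums separately. The second sum is deterministic in $\bm{z}$, and bound \eqref{eq:hessian-first-claim} of Lemma \ref{lemma:hessian-concentration} (applied with the fixed vector $\bm{x}^{\natural}$) gives $\|m^{-1}\sum_{i}(\bm{a}_{i}^{\top}\bm{x}^{\natural})^{2}\bm{a}_{i}\bm{a}_{i}^{\top}-(\|\bm{x}^{\natural}\|_{2}^{2}\bm{I}_{n}+2\bm{x}^{\natural}\bm{x}^{\natural\top})\|\lesssim\sqrt{n\log^{3}m/m}$ on an event of probability $1-O(m^{-10})$. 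For the first sum, the key point is that the hypothesis $\max_{i}|\bm{a}_{i}^{\top}\bm{z}|\le c_{0}\sqrt{\log m}\|\bm{z}\|_{2}$ is exactly the condition required to invoke the uniform bound \eqref{eq:hessian-second-claim}, yielding $\|m^{-1}\sum_{i}(\bm{a}_{i}^{\top}\bm{z})^{2}\bm{a}_{i}\bm{a}_{i}^{\top}-(\|\bm{z}\|_{2}^{2}\bm{I}_{n}+2\bm{z}\bm{z}^{\top})\|\lesssim\sqrt{n\log^{3}m/m}\,\|\bm{z}\|_{2}^{2}$ simultaneously for all such $\bm{z}$.

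Combining these two bounds (and using the WLOG normalization $\|\bm{x}^{\natural}\|_{2}=1$ fixed in \eqref{eq:assumption-x0-xnatural}) produces
\[
\nabla^{2}f(\bm{z})=(3\|\bm{z}\|_{2}^{2}-1)\bm{I}_{n}+6\bm{z}\bm{z}^{\top}-2\bm{x}^{\natural}\bm{x}^{\natural\top}+\bm{E}(\bm{z}),
\qquad\|\bm{E}(\bm{z})\|\lesssim\sqrt{n\log^{3}m/m}\,\max\{\|\bm{z}\|_{2}^{2},1\}.
\]
Multiplying by $-\eta$ and adding $\bm{I}_{n}$ yields the first displayed estimate of the lemma, with the $\eta$ factor absorbed into the $\lesssim$ constant since $\eta<c_{2}$. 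For the operator-norm bound on $\nabla^{2}f(\bm{z})$ itself, the triangle inequality on the same decomposition gives
\[
\|\nabla^{2}f(\bm{z})\|\le|3\|\bm{z}\|_{2}^{2}-1|+6\|\bm{z}\|_{2}^{2}+2+\|\bm{E}(\bm{z})\|.
\]
A straightforward case split on whether $\|\bm{z}\|_{2}^{2}\ge 1/3$, combined with the fact that $\|\bm{E}(\bm{z})\|=o(1)\cdot\max\{\|\bm{z}\|_{2}^{2},1\}$ under $m\ge c_{1}n\log^{3}m$, then yields $\|\nabla^{2}f(\bm{z})\|\le10\|\bm{z}\|_{2}^{2}+4$.

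Honestly, there is no serious obstacle here: all of the heavy lifting is already done by Lemma \ref{lemma:hessian-concentration}, whose proof is deferred to the appendix. The only modest subtlety is that we need the concentration for $m^{-1}\sum_{i}(\bm{a}_{i}^{\top}\bm{z})^{2}\bm{a}_{i}\bm{a}_{i}^{\top}$ to hold \emph{uniformly} over the (random) incoherent set $\{\bm{z}:\max_{i}|\bm{a}_{i}^{\top}\bm{z}|\le c_{0}\sqrt{\log m}\|\bm{z}\|_{2}\}$ rather than at a single deterministic $\bm{z}$, but this is exactly the content of claim \eqref{eq:hessian-second-claim} and requires no extra work. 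The sample-complexity requirement $m\gtrsim n\log^{3}m$ and the failure probability $O(m^{-10})$ are both inherited directly from Lemma \ref{lemma:hessian-concentration}.
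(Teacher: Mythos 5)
Your proposal is correct and follows essentially the same route as the paper: both proofs invoke the two parts of Lemma~\ref{lemma:hessian-concentration} to write $\nabla^2 f(\bm z)$ as $(3\|\bm z\|_2^2-1)\bm I_n + 6\bm z\bm z^\top - 2\bm x^\natural\bm x^{\natural\top}$ plus an error of size $O(\sqrt{n\log^3 m/m})\max\{\|\bm z\|_2^2,1\}$, after which the first display is obtained by scaling by $\eta$ and the second by the triangle inequality. The only cosmetic differences are that the paper bounds the leading part by $9\|\bm z\|_2^2+3$ without a case split (splitting the constant $-\bm I_n$ and $-2\bm x^\natural\bm x^{\natural\top}$ terms rather than keeping $(3\|\bm z\|_2^2-1)$ together), and it also records the side observation $\bm I_n - \eta\nabla^2 f(\bm z)\succeq\bm 0$, which you correctly identify as not needed for either displayed estimate.
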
\begin{proof}See
Appendix \ref{sec:Proof-of-Lemma-Hessian-UB-Stage1}.\end{proof}

Finally, we note that there are a few immediate consequences of the
induction hypotheses (\ref{subeq:induction}), which we summarize
below. These conditions are useful in the subsequent analysis. Note that Lemma \ref{lemma:consequence-main-text}
is incorporated here.

\begin{lemma}\label{lemma:consequence}Suppose that $m\geq Cn\log^{6}m$
for some sufficiently large constant $C>0$. Then under the hypotheses
(\ref{subeq:induction}) for $t\lesssim\log n$, with probability
at least $1-O(me^{-1.5n})-O(m^{-10})$ one has \begin{subequations}\label{subeq:consequence-norm}
\begin{align}
c_{5}/2\leq\big\|\bm{x}_{\perp}^{t,\left(l\right)}\big\|_{2} & \leq\big\|\bm{x}^{t,\left(l\right)}\big\|_{2}\leq2C_{5};\label{eq:consequence-norm-xt-l}\\
c_{5}/2\leq\big\|\bm{x}_{\perp}^{t,\mathrm{sgn}}\big\|_{2} & \leq\big\|\bm{x}^{t,\mathrm{sgn}}\big\|_{2}\leq2C_{5};\label{eq:consequence-norm-xt-sgn}\\
c_{5}/2\leq\big\|\bm{x}_{\perp}^{t,\mathrm{sgn},\left(l\right)}\big\|_{2} & \leq\big\|\bm{x}^{t,\mathrm{sgn},\left(l\right)}\big\|_{2}\leq2C_{5};\label{eq:consequence-norm-xt-sgn-l}
\end{align}
\end{subequations}\vspace{-2em}\begin{subequations}\label{subeq:consequence-incoherence}
\begin{align}
\max_{1\leq l\leq m}\left|\bm{a}_{l}^{\top}\bm{x}^{t}\right| & \lesssim\sqrt{\log m}\big\|\bm{x}^{t}\big\|_{2}; \label{eq:consequence-incoherence-original}\\
\max_{1\leq l\leq m}\big|\bm{a}_{l,\perp}^{\top}\bm{x}_{\perp}^{t}\big| & \lesssim\sqrt{\log m}\big\|\bm{x}_{\perp}^{t}\big\|_{2};\label{eq:consequence-incoherence-ai-perp-x-t}\\
\max_{1\leq l\leq m}\big|\bm{a}_{l}^{\top}\bm{x}^{t,\mathrm{sgn}}\big| & \lesssim\sqrt{\log m}\big\|\bm{x}^{t,\mathrm{sgn}}\big\|_{2};\\
\max_{1\leq l\leq m}\big|\bm{a}_{l,\perp}^{\top}\bm{x}_{\perp}^{t,\mathrm{sgn}}\big| & \lesssim\sqrt{\log m}\big\|\bm{x}_{\perp}^{t,\mathrm{sgn}}\big\|_{2};\label{eq:consequence-incoherence-ai-perp-x-sgn}\\
\max_{1\leq l\leq m}\big|\bm{a}_{l}^{\mathrm{sgn}\top}\bm{x}^{t,\mathrm{sgn}}\big| & \lesssim\sqrt{\log m}\big\|\bm{x}^{t,\mathrm{sgn}}\big\|_{2};\label{eq:consequence-incoherence-ai-sgn-x-sgn}
\end{align}
\end{subequations}\vspace{-2em}\begin{subequations}\label{eq:consequence-upper-bounds}
\begin{align}
\max_{1\leq l\leq m}\big\|\bm{x}^{t}-\bm{x}^{t,\left(l\right)}\big\|_{2} & \ll\frac{1}{\log m};\label{eq:1-logm-xt-xt-l}\\
\big\|\bm{x}^{t}-\bm{x}^{t,\mathrm{sgn}}\big\|_{2} & \ll\frac{1}{\log m};\label{eq:1-logm-xt-xt-sgn}\\
\max_{1\leq l\leq m}\big|x_{\parallel}^{t,\left(l\right)}\big| & \leq2\alpha_{t}.\label{eq:xt-l-signal-upper-bound}
\end{align}
\end{subequations}\end{lemma}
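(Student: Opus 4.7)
\textbf{Proof proposal for Lemma \ref{lemma:consequence}.} The plan is to extract every claim from the induction hypotheses \eqref{subeq:induction} by repeated application of the triangle inequality together with one standard Gaussian tail bound per auxiliary sequence. Throughout I will use that $t \lesssim \log n$ and $m \geq C n \log^{6} m$, so that the prefactor $(1+1/\log m)^{t}$ is bounded by an absolute constant and quantities of the form $\sqrt{n\log^{K} m}/m$ are much smaller than any inverse polylogarithmic factor required below.

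First I would dispatch the norm bounds \eqref{subeq:consequence-norm} and the small-difference bounds \eqref{eq:consequence-upper-bounds}. By \eqref{eq:induction-xt-l}, \eqref{eq:induction-xt-sgn} and $\beta_t\leq 1.5$, $\alpha_t\leq 2$ (from \eqref{eq:alpha-beta-range-SNR}), we have
\[
\max_{l}\,\big\|\bm{x}^{t}-\bm{x}^{t,(l)}\big\|_2 \;\lesssim\; \sqrt{\tfrac{n\log^{5}m}{m}}, \qquad \big\|\bm{x}^{t}-\bm{x}^{t,\mathrm{sgn}}\big\|_2 \;\lesssim\; \sqrt{\tfrac{n\log^{5}m}{m}},
\]
both of which are $o(1/\log m)$ under $m\geq Cn\log^{7}m$, giving \eqref{eq:1-logm-xt-xt-l}--\eqref{eq:1-logm-xt-xt-sgn}. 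Similarly, \eqref{eq:induction-xt-l-signal} yields $|x_{\parallel}^{t}-x_{\parallel}^{t,(l)}| \leq \alpha_{t}/2$ for $m$ large, proving \eqref{eq:xt-l-signal-upper-bound}. Plugging these into \eqref{eq:induction-norm-size} via the triangle inequality delivers \eqref{eq:consequence-norm-xt-l} and \eqref{eq:consequence-norm-xt-sgn}; the double-difference bound \eqref{eq:induction-double} combined with the preceding two then provides \eqref{eq:consequence-norm-xt-sgn-l} via $\|\bm{x}^{t,\mathrm{sgn},(l)}-\bm{x}^{t}\| \leq \|\bm{x}^{t}-\bm{x}^{t,(l)}\| + \|\bm{x}^{t}-\bm{x}^{t,\mathrm{sgn}}\| + \|\bm{x}^{t}-\bm{x}^{t,\mathrm{sgn}}-\bm{x}^{t,(l)}+\bm{x}^{t,\mathrm{sgn},(l)}\|$, each piece being $o(1)$.

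Next I would establish the incoherence bounds \eqref{subeq:consequence-incoherence}. The pattern is identical for every line: identify an auxiliary iterate independent of the relevant random vector, apply Gaussian concentration there, and transfer via triangle inequality. Concretely, since $\bm{x}^{t,(l)}$ is independent of $\bm{a}_{l}$, the Gaussian tail bound gives $|\bm{a}_{l}^{\top}\bm{x}^{t,(l)}| \lesssim \sqrt{\log m}\,\|\bm{x}^{t,(l)}\|_2$ for all $l$ with probability $1-O(m^{-10})$. Combined with $\|\bm{a}_{l}\|_2 \leq \sqrt{6n}$ from \eqref{eq:max-a-i-norm} and $\|\bm{x}^{t}-\bm{x}^{t,(l)}\|_2 \lesssim \sqrt{n\log^{5}m}/m$, one gets
\[
|\bm{a}_{l}^{\top}\bm{x}^{t}| \;\leq\; |\bm{a}_{l}^{\top}\bm{x}^{t,(l)}| + \|\bm{a}_{l}\|_2\,\|\bm{x}^{t}-\bm{x}^{t,(l)}\|_2 \;\lesssim\; \sqrt{\log m}\,\|\bm{x}^{t}\|_2,
\]
using $\|\bm{x}^{t,(l)}\|_2 \asymp \|\bm{x}^{t}\|_2$ and $n\sqrt{n\log^{5}m}/m \ll \sqrt{\log m}$ for $m\geq Cn\log^{2}m$. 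The identical calculation with $\bm{a}_{l,\perp}$ in place of $\bm{a}_{l}$, and with the lower bound $\|\bm{x}_{\perp}^{t}\|_2\geq c_5$ from \eqref{eq:induction-norm-size}, produces \eqref{eq:consequence-incoherence-ai-perp-x-t}. For the random-sign versions, the right auxiliary iterate is $\bm{x}^{t,\mathrm{sgn},(l)}$, which is independent of both $\bm{a}_{l}$ and $\bm{a}_{l}^{\mathrm{sgn}}$; applying the same triangle inequality argument, now invoking the double-difference hypothesis \eqref{eq:induction-double} and \eqref{eq:induction-xt-sgn} to bound $\|\bm{x}^{t,\mathrm{sgn}}-\bm{x}^{t,\mathrm{sgn},(l)}\|_2$, yields \eqref{eq:consequence-incoherence-ai-sgn-x-sgn} and the remaining lines.

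There is no substantive obstacle here; the lemma is genuinely a collection of corollaries of \eqref{subeq:induction}, and the only care required is bookkeeping the various polylog exponents in \eqref{subeq:induction} to verify that each triangle-inequality residual is indeed absorbed by $\sqrt{\log m}\|\cdot\|_2$ when $m\geq Cn\log^{6}m$. The most delicate of these accountings is the random-sign incoherence \eqref{eq:consequence-incoherence-ai-sgn-x-sgn}, because we need the perturbation to $\bm{x}^{t,\mathrm{sgn}}$ to be independent of $\xi_{l}^{\mathrm{sgn}}$ as well as $|a_{l,1}|$ and $\bm{a}_{l,\perp}$ — which is precisely why the leave-one-out-and-sign sequence $\bm{x}^{t,\mathrm{sgn},(l)}$ was constructed in Section~\ref{subsec:Leave-one-out-sequences}, and why the double-difference hypothesis \eqref{eq:induction-double} appears in the induction.
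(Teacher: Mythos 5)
Your approach is essentially identical to the paper's: bound the differences from the induction hypotheses, absorb the $(1+1/\log m)^t$ prefactor, use triangle inequality to propagate norm bounds, and transfer incoherence from the auxiliary sequences (which are independent of the relevant design vectors) back to $\bm{x}^t$ and $\bm{x}^{t,\mathrm{sgn}}$ via Cauchy--Schwarz plus $\|\bm{a}_l\|_2\lesssim\sqrt{n}$. One small correction to your bookkeeping: hypothesis \eqref{eq:induction-xt-l} gives $\max_l\|\bm{x}^t-\bm{x}^{t,(l)}\|_2 \lesssim \tfrac{\sqrt{n\log^5 m}}{m}$ (i.e.\ the square root in the numerator only), not $\sqrt{n\log^5 m/m}$, and the residual in the incoherence step is $\sqrt{n}\cdot\tfrac{\sqrt{n\log^5 m}}{m}=\tfrac{n\log^{5/2}m}{m}$ rather than the $n\sqrt{n\log^5 m}/m$ you wrote --- your own conclusion ``$\ll \sqrt{\log m}$ for $m\gtrsim n\log^2 m$'' matches the corrected expressions, so these are transcription slips rather than logical errors.
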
\begin{proof}See Appendix \ref{sec:Proof-of-Lemma-consequence}.
\end{proof}

\section{Proof of Lemma \ref{lemma:iterative}\label{sec:Proof-of-Lemma-iterative}}

We focus on the case when 
\[
\frac{1}{\sqrt{n\log n}}\leq\alpha_{0}\leq\frac{\log n}{\sqrt{n}}\qquad\text{and}\qquad1-\frac{1}{\log n}\leq\beta_{0}\leq1+\frac{1}{\log n}
\]
The other cases can be proved using very similar arguments as below,
and hence omitted.

Let $\eta>0$ and $c_{4}>0$ be some sufficiently small constants
independent of $n$. In the sequel, we divide Stage 1 (iterations
up to $T_{\gamma}$) into several substages. See Figure \ref{fig:substages} for an illustration. 

\begin{figure}
	\centering\includegraphics[width=0.5\textwidth]{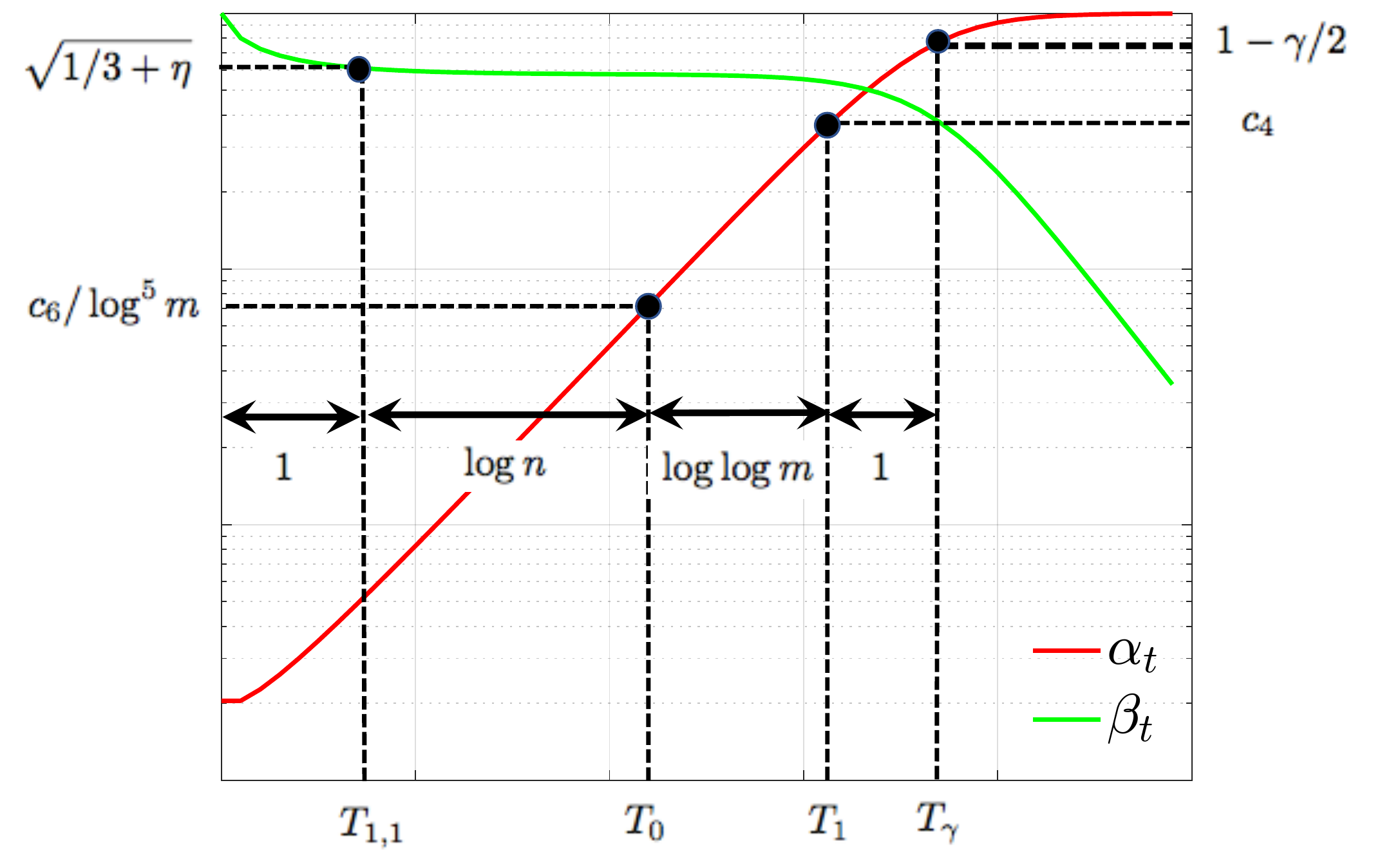}\caption{Illustration of the substages for the proof of Lemma \ref{lemma:iterative}.\label{fig:substages}}
\end{figure}

\begin{itemize}
\item \textbf{Stage 1.1: }consider the period when $\alpha_{t}$ is sufficiently
small, which consists of all iterations $0\leq t\leq T_{1}$ with
$T_{1}$ given in (\ref{eq:defn-T1}). We claim that, throughout this
substage, 
\begin{subequations}\label{eq:claim-range}
\begin{align}
\alpha_{t} & >\frac{1}{2\sqrt{n\log n}},\label{eq:claim-alpha-range} \\
\sqrt{0.5}<\beta_{t} & <\sqrt{1.5}.\label{eq:claim-beta-range}
\end{align}
\end{subequations}
If this claim holds, then we would have $\alpha_{t}^{2}+\beta_{t}^{2}<c_{4}^{2}+1.5<2$
as long as $c_{4}$ is small enough. This immediately reveals that
$1+\eta\left(1-3\alpha_{t}^{2}-3\beta_{t}^{2}\right)\geq1-6\eta$,
which further gives 
\begin{align}
\beta_{t+1} & \geq\left\{ 1+\eta\left(1-3\alpha_{t}^{2}-3\beta_{t}^{2}\right)+\eta\rho_{t}\right\} \beta_{t}\nonumber \\
 & \geq\left(1-6\eta-\frac{c_{3}\eta}{\log n}\right)\beta_{t}\nonumber \\
 & \geq(1-7\eta)\beta_{t}.\label{eq:betat-LB1}
\end{align}
In what follows, we further divide this stage into multiple sub-phases. 
\begin{itemize}
\item \textbf{Stage 1.1.1}: consider the iterations $0\leq t\leq T_{1,1}$
with 
\begin{equation}
T_{1,1}=\min\left\{ t\mid\beta_{t+1}\leq\sqrt{1/3+\eta}\right\} .\label{eq:defn-T11-1}
\end{equation}
\begin{fact}\label{fact:Stage-1.1.1}For any sufficiently small $\eta>0$,
one has
\begin{align}
\beta_{t+1} & \leq(1-2\eta^{2})\beta_{t},\qquad0\leq t\leq T_{1,1};\label{eq:beta-growth-T11}\\
\alpha_{t+1} & \leq(1+4\eta)\alpha_{t},\qquad0\leq t\leq T_{1,1};\nonumber \\
\alpha_{t+1} & \geq(1+2\eta^{3})\alpha_{t},\qquad1\leq t\leq T_{1,1};\label{eq:alpha-growth-T11}\\
\alpha_{1} & \geq\alpha_{0}/2;\nonumber \\
\beta_{T_{1,1}+1} & \geq\frac{1-7\eta}{\sqrt{3}};\nonumber \\
T_{1,1} & \lesssim\frac{1}{\eta^{2}}.
\end{align}
Moreover, $\alpha_{T_{1,1}}\ll c_{4}$ and hence $T_{1,1}<T_{1}$.
\end{fact}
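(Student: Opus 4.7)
The plan is to derive all seven estimates by direct substitution into the approximate state evolution \eqref{subeq:state-evolution}, exploiting that $|\zeta_t|, |\rho_t| \leq c_3/\log n$ is negligible compared to the fixed constant $\eta$ once $n$ is sufficiently large. Two standing bounds will drive everything: by construction of $T_{1,1}$, one has $\beta_t^2 > 1/3 + \eta$ for every $0 \leq t \leq T_{1,1}$; and since the overall regime is Stage 1.1, $\alpha_t^2 \leq c_4^2$ and $\beta_t \leq \beta_0 \leq 1+1/\log n$, so $\alpha_t^2+\beta_t^2 < 2$.

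Substituting $3(\alpha_t^2+\beta_t^2) \geq 3\beta_t^2 > 1+3\eta$ into \eqref{eq:betat} turns the multiplier into $\leq 1 - 3\eta^2 + \eta|\rho_t| \leq 1 - 2\eta^2$, yielding \eqref{eq:beta-growth-T11}. The upper bound $\alpha_{t+1}/\alpha_t \leq 1+4\eta$ is immediate from $\alpha_t^2+\beta_t^2 \geq 0$ in \eqref{eq:alphat}, and $\alpha_1 \geq \alpha_0/2$ follows because the initialization hypotheses force $\alpha_0^2+\beta_0^2 = 1 + O(1/\log n)$, so the multiplier at $t=0$ exceeds $1 - O(\eta/\log n)$.

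The growth estimate \eqref{eq:alpha-growth-T11} is the only subtle step: at $t=0$ we have $\alpha_0^2+\beta_0^2 \approx 1$, and the state evolution does not yet guarantee growth of $\alpha$. The remedy is a one-step warm-up on $\beta$. From \eqref{eq:betat} at $t=0$, the multiplier equals $1 - 2\eta + O(\eta/\log n)$, which gives $\beta_1 \leq 1 - \eta$ and hence $\beta_1^2 \leq 1 - \eta$ for $n$ sufficiently large. The monotone decrease \eqref{eq:beta-growth-T11} then propagates this to $\beta_t^2 \leq 1 - \eta$ for all $1 \leq t \leq T_{1,1}$, and choosing $c_4$ small enough that $c_4^2 \leq \eta/2$ gives $\alpha_t^2+\beta_t^2 \leq 1 - \eta/2$ throughout this range. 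Plugging back into \eqref{eq:alphat} produces a multiplier of at least $1 + 3\eta(\eta/2) - \eta|\zeta_t| \geq 1 + \eta^2 \geq 1 + 2\eta^3$.

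The remaining three items are quick consequences. For $\beta_{T_{1,1}+1}$, the crude bound $\alpha_{T_{1,1}}^2+\beta_{T_{1,1}}^2 < 2$ keeps the $\beta$-multiplier at step $T_{1,1}$ at least $1 - 7\eta$, so $\beta_{T_{1,1}+1} \geq (1-7\eta)\beta_{T_{1,1}} \geq (1-7\eta)\sqrt{1/3+\eta} \geq (1-7\eta)/\sqrt{3}$. Iterating \eqref{eq:beta-growth-T11} from $\beta_0 \leq 2$ down to $\beta_{T_{1,1}} > \sqrt{1/3+\eta}$ and taking logarithms yields $T_{1,1} \lesssim 1/\eta^2$. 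Finally, iterating the upper bound on $\alpha$ gives $\alpha_{T_{1,1}} \leq (1+4\eta)^{T_{1,1}} \alpha_0 \leq e^{O(1/\eta)} \cdot (\log n)/\sqrt{n}$, which vanishes as $n \to \infty$ for fixed $\eta$; in particular $\alpha_{T_{1,1}} \ll c_4$, which forces $T_{1,1} < T_1$. The only nontrivial idea in the whole argument is the warm-up step that isolates $t=0$ from the generic $t \geq 1$ regime for $\alpha$, since $\beta_0^2$ hovering near $1$ blocks growth of $\alpha$ in the very first iteration.
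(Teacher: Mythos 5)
Your proof is correct and follows essentially the same route as the paper's: the pivotal one-step warm-up via $\beta_1$ (to separate $t=0$ from $t\geq 1$ in the lower bound for $\alpha_{t+1}/\alpha_t$) is exactly the device the authors use. The only cosmetic difference is in how the bound $\alpha_t^2+\beta_t^2 \leq 1-\Omega(\eta)$ is closed for $1\leq t\leq T_{1,1}$: you impose $c_4^2 \leq \eta/2$ and push $\beta_1^2 \leq 1-\eta$ through the monotone decrease, whereas the paper instead notes that $\alpha_t^2 \leq (1+4\eta)^{2T_{1,1}}\alpha_0^2 = o(1)$ outright (so no coupling of $c_4$ to $\eta$ is needed) and pairs it with $\beta_1^2 \leq 1-O(\eta^2)$ to land at $\alpha_t^2+\beta_t^2\leq 1-\eta^2$. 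Both are valid; yours yields the slightly stronger multiplier $1+\Omega(\eta^2)$ at the cost of a harmless extra constraint on $c_4$.
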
From Fact \ref{fact:Stage-1.1.1}, we see that in this substage, $\alpha_{t}$ keeps increasing (at least for $t\geq 1$) with
\[
c_{4}>\alpha_{t}\geq\frac{\alpha_{0}}{2}\geq\frac{1}{2\sqrt{n\log n}},\qquad0\leq t\leq T_{1,1},
\]
and $\beta_t$ is strictly decreasing with
\[
1.5>\beta_{0}\geq\beta_{t}\geq\beta_{T_{1,1}+1}\geq\frac{1-7\eta}{\sqrt{3}},\qquad0\leq t\leq T_{1,1},
\]
which justifies \eqref{eq:claim-range}. In addition, combining (\ref{eq:beta-growth-T11}) with (\ref{eq:alpha-growth-T11}),
we arrive at the growth rate of $\alpha_t/\beta_t$ as
\[
\frac{\alpha_{t+1}/\alpha_{t}}{\beta_{t+1}/\beta_{t}}\geq\frac{1+2\eta^{3}}{1-2\eta^{2}}=1+O(\eta^{2}).
\]
These demonstrate (\ref{eq:alpha-beta-range-SNR}) for this substage. 
\item \textbf{Stage 1.1.2:} this substage contains all iterations obeying
$T_{1,1}<t\leq T_{1}$. We claim the following result. \begin{fact}\label{fact:Stage-1-1-2}Suppose
that $\eta>0$ is sufficiently small. Then for any $T_{1,1}<t\leq T_{1}$,
\begin{align}
\beta_{t} & \in\left[\frac{(1-7\eta)^{2}}{\sqrt{3}},\frac{1+30\eta}{\sqrt{3}}\right];\label{eq:beta-bound-stage12}\\
\beta_{t+1} & \leq(1+30\eta^{2})\beta_{t}.\label{eq:betat-growth-stage12}
\end{align}
\end{fact}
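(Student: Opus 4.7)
The plan is to maintain the envelope \eqref{eq:beta-bound-stage12} as an induction on $t$, and then deduce the multiplicative bound \eqref{eq:betat-growth-stage12} as a corollary. Geometrically, during Stage 1.1.2 the orthogonal component has entered an $O(\eta)$-neighborhood of the saddle value $1/\sqrt{3}$, and the recursion \eqref{eq:betat} acts as a one-step contraction toward this saddle whenever $\alpha_t$ stays small. Since $\alpha_t\leq c_4$ throughout this substage and $c_4$ can be chosen as small as we like, this contractive property should prevent $\beta_t$ from leaving the envelope before time $T_1$.

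The multiplicative claim \eqref{eq:betat-growth-stage12} will follow directly from the lower envelope. Assuming inductively that $\beta_t\geq(1-7\eta)^2/\sqrt{3}$, binomial expansion gives $3\beta_t^2\geq(1-7\eta)^4\geq 1-28\eta+O(\eta^2)$, hence $1-3\beta_t^2\leq 28\eta$. Plugging this into \eqref{eq:betat}, dropping the non-positive term $-3\eta\alpha_t^2$, and invoking $|\rho_t|\leq c_3/\log n = o(\eta)$, the growth factor is at most $1+28\eta^2+o(\eta^2)\leq 1+30\eta^2$ for $\eta$ small and $n$ large.

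For the envelope itself I will induct on $t$ starting at $t=T_{1,1}+1$. The base case is immediate: by the definition \eqref{eq:defn-T11-1} of $T_{1,1}$ together with Fact \ref{fact:Stage-1.1.1} we have $\beta_{T_{1,1}+1}\in[(1-7\eta)/\sqrt{3},\,\sqrt{1/3+\eta}]$, which sits inside the target interval. For the inductive step, I will view $\beta_{t+1}^2$ as a function of $x=\beta_t^2$, namely $g(x):=x[1+\eta(1-3\alpha_t^2-3x)+\eta\rho_t]^2$. Writing $A(x):=1+\eta(1-3\alpha_t^2-3x)+\eta\rho_t$, the factorization $g'(x)=A(x)[A(x)-6\eta x]$ together with the bounds on $x$ shows (for $\eta$ small enough) that both factors are positive, so $g$ is strictly increasing on the relevant range. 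It then suffices to test the two endpoints, and a direct expansion yields
\begin{align*}
g\!\left((1+30\eta)^2/3\right) & \leq (1+30\eta)^2/3,\\
g\!\left((1-7\eta)^4/3\right) & \geq (1-7\eta)^4/3,
\end{align*}
which closes the induction.

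The main obstacle is the lower-endpoint check: the drift $-3\eta\alpha_t^2$ in \eqref{eq:betat} pushes $\beta_t$ below $1/\sqrt{3}$ and must be dominated by the stabilizing gain $\eta(1-3\beta_t^2)\geq 28\eta^2$ available at the lower endpoint $x=(1-7\eta)^4/3$. The two balance provided $3\eta\alpha_t^2\leq 3\eta c_4^2$ is at most, say, $27\eta^2$, that is, $c_4^2\leq 9\eta$. This is harmless because $c_4$ in \eqref{eq:defn-T1} was introduced as an arbitrarily small constant and may be chosen, once and for all, small enough relative to $\eta$; the residual $\eta|\rho_t|\leq\eta c_3/\log n$ contribution is then $o(\eta^2)$ for $n$ sufficiently large. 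The upper-endpoint check, by contrast, is automatic because at $x=(1+30\eta)^2/3$ the quantity $1-3x=-60\eta+O(\eta^2)$ already drives $A(x)$ below $1$.
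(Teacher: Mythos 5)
Your argument is correct, and it takes a genuinely different route from the paper. The paper's own proof inducts on $t$ by splitting the envelope $[\frac{(1-7\eta)^2}{\sqrt{3}},\frac{1+30\eta}{\sqrt{3}}]$ into three sub-intervals for $\beta_t$ (roughly: just above $1/\sqrt{3}$, just below $1/\sqrt{3}$, and the narrow middle band) and in each case bounds $1-3(\alpha_t^2+\beta_t^2)$ by hand to show $\beta_{t+1}$ lands back inside; the multiplicative bound $\beta_{t+1}\leq(1+30\eta^2)\beta_t$ is extracted from whichever case gives the worst constant (the paper's "case 2"). Your approach replaces this case analysis with a monotonicity argument for the one-step squared map $g(x)=xA(x)^2$: you factor $g'(x)=A(x)[A(x)-6\eta x]$, observe both factors are positive on the relevant interval, and so reduce preservation of the envelope to two endpoint inequalities $g((1-7\eta)^4/3)\geq(1-7\eta)^4/3$ and $g((1+30\eta)^2/3)\leq(1+30\eta)^2/3$. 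This is cleaner — two checks instead of three cases — and your separate derivation of \eqref{eq:betat-growth-stage12} as a corollary of the lower envelope (dropping the nonpositive drift $-3\eta\alpha_t^2$) recovers the same constant $1+30\eta^2$. Both your argument and the paper's implicitly require $c_4$ small relative to $\eta$ at the lower endpoint: you phrase it as $c_4^2\leq 9\eta$, while the paper writes "$c_4^2+(1-7\eta)^2/3<(1-7\eta)/3$ for $c_4$ sufficiently small," which is $c_4^2\lesssim\eta$ up to constants. Since the paper treats $\eta$ as fixed first and $c_4$ as an arbitrarily small constant chosen afterward, this dependence is legitimate in both proofs. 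One minor caution: the map $g$ depends on $t$ through $\alpha_t$ and $\rho_t$, so what you actually establish at each step is that the time-$t$ map sends the fixed envelope into itself, with the monotonicity and endpoint bounds verified uniformly over the admissible $(\alpha_t,\rho_t)$. You should state this explicitly so the induction reads cleanly, but it does not affect correctness.
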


Furthermore, since 
\[
\alpha_{t}^{2}+\beta_{t}^{2}\leq c_{4}^{2}+\frac{(1+30\eta)^{2}}{3}<\frac{1}{2},
\]
we have, for sufficiently small $c_{3}$, that 
\begin{align}
\alpha_{t+1} & \geq\left\{ 1+3\eta\left(1-\alpha_{t}^{2}-\beta_{t}^{2}\right)-\eta|\zeta_{t}|\right\} \alpha_{t}\nonumber \\
 & \geq\left(1+1.5\eta-\frac{c_{3}\eta}{\log n}\right)\alpha_{t}\nonumber \\
 & \geq(1+1.4\eta)\alpha_{t},\label{eq:alpha-t-growth-right}
\end{align}
and hence $\alpha_{t}$ keeps increasing. This means $\alpha_{t}\geq\alpha_{1}\geq\frac{1}{2\sqrt{n\log n}}$,
which justifies the claim (\ref{eq:claim-range}) together with \eqref{eq:beta-bound-stage12} for this substage.
As a consequence, 
\begin{align*}
T_{1}-T_{1,1} & \lesssim\frac{\log\frac{c_{4}}{\alpha_{0}}}{\log(1+1.4\eta)}\lesssim\frac{\log n}{\eta};\\
T_{1}-T_{0} & \lesssim\frac{\log\frac{c_{4}}{\frac{c_{6}}{\log^{5}m}}}{\log\left(1+1.4\eta\right)}\lesssim\frac{\log\log m}{\eta}.
\end{align*}
Moreover, combining (\ref{eq:alpha-t-growth-right}) with (\ref{eq:betat-growth-stage12})
yields the growth rate of $\alpha_t/\beta_t$ as
\begin{align*}
\frac{\alpha_{t+1}/\alpha_{t}}{\beta_{t+1}/\beta_{t}} & \geq\frac{1+1.4\eta}{1+30\eta^{2}}\geq1+\eta
\end{align*}
for $\eta>0$ sufficiently small. 
\item Taken collectively, the preceding bounds imply that 
\[
T_{1}=T_{1,1}+(T_{1}-T_{1,1})\lesssim\frac{1}{\eta^{2}}+\frac{\log n}{\eta}\lesssim\frac{\log n}{\eta^{2}}.
\]
\end{itemize}
\item \textbf{Stage 1.2: }in this stage, we consider all iterations $T_{1}<t\leq T_{2}$,
where 
\[
T_{2}:=\min\left\{ t\mid\frac{\alpha_{t+1}}{\beta_{t+1}}>\frac{2}{\gamma}\right\} .
\]
From the preceding analysis, it is seen that, for $\eta$ sufficiently
small, 
\[
\frac{\alpha_{T_{1,1}}}{\beta_{T_{1,1}}}\leq\frac{c_{4}}{\frac{\left(1-7\eta\right)^{2}}{\sqrt{3}}}\leq\frac{\sqrt{3}c_{4}}{1-15\eta}.
\]
In addition, we have: \begin{fact}\label{fact-Stage1-2}Suppose $\eta>0$
is sufficiently small. Then for any $T_{1}<t\leq T_{2}$, one has
\begin{align}
\alpha_{t}^{2}+\beta_{t}^{2} & \leq2;\label{eq:alpha2-beta2-UB-1}\\
\frac{\alpha_{t+1}/\beta_{t+1}}{\alpha_{t}/\beta_{t}} & \geq1+\eta;\\
\alpha_{t+1} & \geq\left\{ 1-3.1\eta\right\} \alpha_{t};\\
\beta_{t+1} & \geq\left\{ 1-5.1\eta\right\} \beta_{t}.
\end{align}
In addition, 
\[
T_{2}-T_{1}\lesssim\frac{1}{\eta}.
\]
\end{fact}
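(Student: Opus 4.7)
The plan is to induct on $t\in(T_1,T_2]$, using the bound $u_t:=\alpha_t^2+\beta_t^2\leq 2$ (i.e., \eqref{eq:alpha2-beta2-UB-1}) as the primary hypothesis, from which the three growth estimates will follow by direct algebra from the state evolution \eqref{subeq:state-evolution}, and the length bound will fall out of the ratio estimate. To seed the induction, Fact~\ref{fact:Stage-1-1-2} combined with the definition \eqref{eq:defn-T1} of $T_1$ gives $\alpha_{T_1}\leq(1+O(\eta))c_4$ and $\beta_{T_1}^2\leq (1+30\eta)^2/3$, so $u_{T_1}$ is safely below $1/2$ once $c_4$ and $\eta$ are small.

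Assuming $u_t\leq 2$, the multipliers $A_t:=1+3\eta(1-u_t)+\eta\zeta_t$ and $B_t:=1+\eta(1-3u_t)+\eta\rho_t$ from \eqref{subeq:state-evolution}, together with $|\zeta_t|,|\rho_t|\leq c_3/\log n$, obey $A_t\geq 1-3\eta-c_3\eta/\log n\geq 1-3.1\eta$ and $B_t\geq 1-5\eta-c_3\eta/\log n\geq 1-5.1\eta$, which are the two desired lower bounds on $\alpha_{t+1}/\alpha_t$ and $\beta_{t+1}/\beta_t$. For the ratio claim, the key algebraic identity is
\[
A_t-B_t \;=\; 2\eta+\eta(\zeta_t-\rho_t)\;\geq\;1.9\eta,
\]
which combined with $B_t\leq 1+\eta+c_3\eta/\log n\leq 1.01(1+\eta)$ yields $A_t/B_t\geq 1+1.9\eta/(1.01(1+\eta))\geq 1+\eta$ for $\eta$ sufficiently small.

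The main obstacle is propagating $u_t\leq 2$ to iteration $t+1$. Expanding $u_{t+1}=A_t^2\alpha_t^2+B_t^2\beta_t^2$ to leading order in $\eta$ gives
\[
u_{t+1} \;=\; u_t+2\eta\bigl[3\alpha_t^2+\beta_t^2-3u_t^2\bigr]+\epsilon_t, \qquad |\epsilon_t|\;\lesssim\;\eta/\log n+\eta^2.
\]
Since $3\alpha_t^2+\beta_t^2\leq 3u_t$, the bracket is bounded by $3u_t(1-u_t)$, which is $\leq 0$ when $u_t\geq 1$; if $u_t=1-\delta<1$, one gets $u_{t+1}\leq (1-\delta)(1+6\eta\delta)+|\epsilon_t|\leq 1+|\epsilon_t|$ for $\eta<1/6$. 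In either regime $u_{t+1}\leq\max(u_t,1)+|\epsilon_t|$, so iterating yields $u_t\leq 1+(t-T_1)\cdot O(\eta/\log n+\eta^2)$. To close the mild circularity (the drift estimate requires knowing $T_2-T_1$), I would bootstrap: define $\tilde T:=\min\{t>T_1:u_t>2\}\wedge T_2$; on $[T_1,\tilde T)$ all four claims are valid, so the ratio bound forces $\tilde T-T_1\leq O(1/\eta)$ (otherwise $\alpha_t/\beta_t$ would already exceed $2/\gamma$), and hence $u_{\tilde T}\leq 1+O(1/\log n)+O(\eta)<2$, contradicting $u_{\tilde T}>2$. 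Therefore $\tilde T=T_2$, and \eqref{eq:alpha2-beta2-UB-1} holds throughout Stage~1.2.

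The length bound $T_2-T_1\lesssim 1/\eta$ then falls out of the same ratio argument: since $\alpha_{T_1}/\beta_{T_1}\asymp c_4$ is a positive constant and each step multiplies this ratio by at least $1+\eta$, the threshold $2/\gamma$ is reached after $O(\log(1/(\gamma c_4))/\log(1+\eta))=O(1/\eta)$ iterations.
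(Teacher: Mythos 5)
Your proof is correct, and the growth estimates and the ratio bound are derived exactly as in the paper (subtracting the multipliers to see $A_t - B_t = 2\eta + \eta(\zeta_t - \rho_t)$, then dividing by the upper bound on $B_t$). The genuine divergence is in how you establish the invariant $\alpha_t^2 + \beta_t^2 \leq 2$. The paper uses a two-case trap argument: if $u_t := \alpha_t^2+\beta_t^2 \leq 1.5$ then both $\alpha_{t+1}$ and $\beta_{t+1}$ grow by at most $(1+4\eta)$, forcing $u_{t+1} \leq 1.5(1+9\eta) < 2$; while if $1.5 < u_t \leq 2$ then both multipliers drop below $1-\eta$, forcing $u_{t+1} \leq u_t$. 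This closes the induction in one stroke, with no reference to the stage length. You instead compute the first-order drift of $u_t$, observe that the leading term $3\alpha_t^2+\beta_t^2 - 3u_t^2 \leq 3u_t(1-u_t)$ pushes $u_t$ back toward $1$, and control the accumulated error $\sum|\epsilon_t| \lesssim (T_2-T_1)(\eta/\log n + \eta^2)$ via a bootstrap on $T_2 - T_1 \lesssim 1/\eta$. Your analysis is finer (it actually shows $u_t \leq 1 + o(1)$ rather than merely $\leq 2$), but it pays for that with a circular dependency between the drift bound and the stage-length bound that must be resolved by introducing the stopping time $\tilde T$; the paper's coarser invariant avoids this coupling entirely. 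Both approaches are valid, and your bootstrap is set up correctly: $\tilde T$ is the first excursion above $2$, the ratio growth holds on $[T_1,\tilde T)$ and pins $\tilde T - T_1 \lesssim 1/\eta$, and the accumulated drift is then too small to produce $u_{\tilde T} > 2$.
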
With this fact in place, one has
\[
\alpha_{t}\geq(1-3.1\eta)^{t-T_{1}}\alpha_{T_{1}}\gtrsim1,\qquad T_{1}<t\leq T_{2}.
\]
and hence 
\[
\beta_{t}\geq(1-5.1\eta)^{t-T_{1}}\beta_{T_{1}}\gtrsim1,\qquad T_{1}<t\leq T_{2}.
\]
These taken collectively demonstrate (\ref{eq:alpha-beta-range-SNR})
for any $T_{1}<t\leq T_{2}$. Finally, if $T_{2}\geq T_{\gamma}$,
then we complete the proof as
\[
T_{\gamma}\leq T_{2}=T_{1}+(T_{2}-T_{1})\lesssim\frac{\log n}{\eta^{2}}.
\]
Otherwise we move to the next stage. 
\item \textbf{Stage 1.3:} this stage is composed of all iterations
$T_{2}<t\leq T_{\gamma}$. We break the discussion into two cases.
\begin{itemize}
\item If $\alpha_{T_{2}+1}>1+\gamma$, then $\alpha_{T_{2}+1}^{2}+\beta_{T_{2}+1}^{2}\geq\alpha_{T_{2}+1}^{2}>1+2\gamma$.
This means that 
\begin{align*}
\alpha_{T_{2}+2} & \leq\left\{ 1+3\eta\left(1-\alpha_{T_{2}+1}^{2}-\beta_{T_{2}+1}^{2}\right)+\eta|\zeta_{T_{2}+1}|\right\} \alpha_{T_{2}+1}\\
 & \leq\left\{ 1-6\eta\gamma-\frac{\eta c_{3}}{\log n}\right\} \alpha_{T_{2}+1}\\
 & \leq\left\{ 1-5\eta\gamma\right\} \alpha_{T_{2}+1}
\end{align*}
when $c_{3}>0$ is sufficiently small. Similarly, one also gets $\beta_{T_{2}+2}\leq(1-5\eta\gamma)\beta_{T_{2}+1}$.
As a result, both $\alpha_{t}$ and $\beta_{t}$ will decrease. Repeating
this argument reveals that 
\begin{align*}
\alpha_{t+1} & \leq(1-5\eta\gamma)\alpha_{t},\\
\beta_{t+1} & \leq(1-5\eta\gamma)\beta_{t}
\end{align*}
until $\alpha_{t}\leq1+\gamma$. In addition, applying the same argument
as for Stage 1.2 yields 
\[
\frac{\alpha_{t+1}/\alpha_{t}}{\beta_{t+1}/\beta_{t}}\geq1+c_{10}\eta
\]
for some constant $c_{10}>0$. Therefore, when $\alpha_{t}$ drops
below $1+\gamma$, one has 
\[
\alpha_{t}\geq(1-3\eta)(1+\gamma)\geq1-\gamma
\]
and 
\[
\beta_{t}\leq\frac{\gamma}{2}\alpha_{t}\leq\gamma.
\]
This justifies that 
\[
T_{\gamma}-T_{2}\lesssim\frac{\log\frac{2}{1-\gamma}}{-\log(1-5\eta\gamma)}\lesssim\frac{1}{\eta}.
\]
\item If $c_{4}\leq\alpha_{T_{2}+1}<1-\gamma$, take very similar arguments
as in Stage 1.2 to reach that 
\[
\frac{\alpha_{t+1}/\alpha_{t}}{\beta_{t+1}/\beta_{t}}\geq1+c_{10}\eta,\qquad T_{\gamma}-T_{2}\lesssim\frac{1}{\eta}
\]
\[
\text{and}\qquad\alpha_{t}\gtrsim1,\quad\beta_{t}\gtrsim1\qquad T_{2}\leq t\leq T_{\gamma}
\]
for some constant $c_{10}>0$. We omit the details for brevity. 

\end{itemize}
In either case, we see that $\alpha_{t}$ is always bounded away from
0. We can also repeat the argument for Stage 1.2 to show that $\beta_{t}\gtrsim1$. 
\end{itemize}
In conclusion, we have established that 
\[
T_{\gamma}=T_{1}+(T_{2}-T_{1})+(T_{\gamma}-T_{2})\lesssim\frac{\log n}{\eta^{2}},\qquad0\leq t<T_{\gamma}
\]
\[
\text{and}\qquad\frac{\alpha_{t+1}/\alpha_{t}}{\beta_{t+1}/\beta_{t}}\geq1+c_{10}\eta^{2},\qquad c_{5}\leq\beta_{t}\leq1.5,\qquad\frac{1}{2\sqrt{n\log n}}\leq\alpha_{t}\leq2,\qquad0\leq t<T_{\gamma}
\]
for some constants $c_{5},c_{10}>0$. 

\begin{proof}[Proof of Fact \ref{fact:Stage-1.1.1}]The proof proceeds
as follows.
\begin{itemize}
\item First of all, for any $0\leq t\leq T_{1,1}$, one has $\beta_{t}\geq\sqrt{1/3+\eta}$
and $\alpha_{t}^{2}+\beta_{t}^{2}\geq1/3+\eta$ and, as a result,
\begin{align}
\beta_{t+1} & \leq\left\{ 1+\eta\left(1-3\alpha_{t}^{2}-3\beta_{t}^{2}\right)+\eta|\rho_{t}|\right\} \beta_{t}\nonumber \\
 & \leq\left(1-3\eta^{2}+\frac{c_{3}\eta}{\log n}\right)\beta_{t}\nonumber \\
 & \leq(1-2\eta^{2})\beta_{t}\label{eq:betat-T11}
\end{align}
 as long as $c_{3}$ and $\eta$ are both constants. In other words,
$\beta_{t}$ is strictly decreasing before $T_{1,1}$, which also
justifies the claim (\ref{eq:claim-beta-range}) for this substage. 
\item Moreover, given that the contraction factor of $\beta_{t}$ is at
least $1-2\eta^{2}$, we have 
\[
T_{1,1}\lesssim\frac{\log\frac{\beta_{0}}{\sqrt{1/3+\eta}}}{-\log\left(1-2\eta^{2}\right)}\asymp\frac{1}{\eta^{2}}.
\]
This upper bound also allows us to conclude that $\beta_{t}$ will
cross the threshold $\sqrt{1/3+\eta}$ before $\alpha_{t}$ exceeds
$c_{4}$, namely, $T_{1,1}<T_{1}$. To see this, we note that the
growth rate of $\{\alpha_{t}\}$ within this substage is upper bounded
by 
\begin{align}
\alpha_{t+1} & \leq\left\{ 1+3\eta\left(1-\alpha_{t}^{2}-\beta_{t}^{2}\right)+\eta|\zeta_{t}|\right\} \alpha_{t}\nonumber \\
 & \leq\left(1+3\eta+\frac{c_{3}\eta}{\log n}\right)\alpha_{t}\nonumber \\
 & \leq(1+4\eta)\alpha_{t}.\label{eq:alpha_UB_state11_step}
\end{align}
This leads to an upper bound 
\begin{equation}
|\alpha_{T_{1,1}}|\leq(1+4\eta)^{T_{1,1}}|\alpha_{0}|\leq(1+4\eta)^{O(\eta^{-2})}\frac{\log n}{\sqrt{n}}\ll c_{4}.\label{eq:alpha_UB_state11}
\end{equation}
\item Furthermore, we can also lower bound $\alpha_{t}$. First of all,
\begin{align*}
\alpha_{1} & \geq\left\{ 1+3\eta\left(1-\alpha_{0}^{2}-\beta_{0}^{2}\right)-\eta|\zeta_{t}|\right\} \alpha_{0}\\
 & \geq\left(1-3\eta-\frac{c_{3}\eta}{\log n}\right)\alpha_{0}\\
 & \geq(1-4\eta)\alpha_{0}\geq\frac{1}{2}\alpha_{0}
\end{align*}
for $\eta$ sufficiently small. For all $1\leq t\leq T_{1,1}$, using
(\ref{eq:alpha_UB_state11_step}) we have 
\[
\alpha_{t}^{2}+\beta_{t}^{2}\leq(1+4\eta)^{T_{1,1}}\alpha_{0}^{2}+\beta_{1}^{2}\leq o(1)+(1-2\eta^{2})\beta_{0}\leq1-\eta^{2},
\]
allowing one to deduce that 
\begin{align*}
\alpha_{t+1} & \geq\left\{ 1+3\eta\left(1-\alpha_{t}^{2}-\beta_{t}^{2}\right)-\eta|\zeta_{t}|\right\} \alpha_{t}\\
 & \geq\left(1+3\eta^{3}-\frac{c_{3}\eta}{\log n}\right)\alpha_{t}\\
 & \geq(1+2\eta^{3})\alpha_{t}.
\end{align*}
In other words, $\alpha_{t}$ keeps increasing throughout all $1\leq t\leq T_{1,1}$.
This verifies the condition (\ref{eq:claim-alpha-range}) for this
substage. 
\item Finally, we make note of one useful lower bound 
\begin{align}
\beta_{T_{1,1}+1} & \geq(1-7\eta)\beta_{T_{1,1}}\geq\frac{1-7\eta}{\sqrt{3}},\label{eq:beta_T11_LB}
\end{align}
which follows by combining (\ref{eq:betat-LB1}) and the condition
$\beta_{T_{1,1}}\geq\sqrt{1/3+\eta}$ . 
\end{itemize}
\end{proof}

\begin{proof}[Proof of Fact \ref{fact:Stage-1-1-2}]Clearly, $\beta_{T_{1,1}+1}$
falls within this range according to (\ref{eq:defn-T11-1}) and (\ref{eq:beta_T11_LB}).
We now divide into several cases. 
\begin{itemize}
\item If $\frac{1+\eta}{\sqrt{3}}\leq\beta_{t}<\frac{1+30\eta}{\sqrt{3}}$,
then $\alpha_{t}^{2}+\beta_{t}^{2}\geq\beta_{t}^{2}\geq(1+\eta)^{2}/3$,
and hence the next iteration obeys 
\begin{align}
\beta_{t+1} & \leq\left\{ 1+\eta\left(1-3\beta_{t}^{2}\right)+\eta|\rho_{t}|\right\} \beta_{t}\nonumber \\
 & \leq\left(1+\eta\left(1-(1+\eta)^{2}\right)+\frac{c_{3}\eta}{\log n}\right)\beta_{t}\nonumber \\
 & \leq(1-\eta^{2})\beta_{t}\label{eq:eta-t-growth-right}
\end{align}
and, in view of (\ref{eq:betat-LB1}), $\beta_{t+1}\geq(1-7\eta)\beta_{t}\geq\frac{1-7\eta}{\sqrt{3}}.$
In summary, in this case one has $\beta_{t+1}\in\left[\frac{1-7\eta}{\sqrt{3}},\frac{1+30\eta}{\sqrt{3}}\right]$,
which still resides within the range (\ref{eq:beta-bound-stage12}). 
\item If $\frac{(1-7\eta)^{2}}{\sqrt{3}}\leq\beta_{t}\leq\frac{1-7\eta}{\sqrt{3}}$,
then $\alpha_{t}^{2}+\beta_{t}^{2}<c_{4}^{2}+(1-7\eta)^{2}/3<(1-7\eta)/3$
for $c_{4}$ sufficiently small. Consequently, for a small enough
$c_{3}$ one has 
\begin{align*}
\beta_{t+1} & \geq\left\{ 1+\eta\left(1-3\alpha_{t}^{2}-3\beta_{t}^{2}\right)-\eta|\rho_{t}|\right\} \beta_{t}\\
 & \geq\big(1+7\eta^{2}-\frac{c_{3}\eta}{\log n}\big)\beta_{t}\\
 & \geq(1+6\eta^{2})\beta_{t}.
\end{align*}
In other words, $\beta_{t+1}$ is strictly larger than $\beta_{t}$.
Moreover, recognizing that $\alpha_{t}^{2}+\beta_{t}^{2}>(1-7\eta)^{4}/3>(1-29\eta)/3$,
one has 
\begin{align}
\beta_{t+1} & \leq\left\{ 1+\eta\left(1-3\alpha_{t}^{2}-3\beta_{t}^{2}\right)+\eta|\rho_{t}|\right\} \beta_{t}\nonumber \\
 & \leq\big(1+29\eta^{2}+\frac{c_{3}\eta}{\log n}\big)\beta_{t}\leq(1+30\eta^{2})\beta_{t}\label{eq:eta-t-growth-left}\\
 & <\frac{1+30\eta^{2}}{\sqrt{3}}.\nonumber 
\end{align}
Therefore, we have shown that $\beta_{t+1}\in\left[\frac{(1-7\eta)^{2}}{\sqrt{3}},\frac{1+30\eta}{\sqrt{3}}\right]$,
which continues to lie within the range (\ref{eq:beta-bound-stage12}). 
\item Finally, if $\frac{1-7\eta}{\sqrt{3}}<\beta_{t}<\frac{1+\eta}{\sqrt{3}}$,
we have $\alpha_{t}^{2}+\beta_{t}^{2}\geq\frac{(1-7\eta)^{2}}{3}\geq\frac{1-15\eta}{3}$
for $\eta$ sufficiently small, which implies 
\begin{align}
\beta_{t+1} & \leq\left\{ 1+15\eta^{2}+\eta|\rho_{t}|\right\} \beta_{t}\leq(1+16\eta^{2})\beta_{t}\label{eq:eta-t-growth-mid}\\
 & \leq\frac{(1+16\eta^{2})(1+\eta)}{\sqrt{3}}\leq\frac{1+2\eta}{\sqrt{3}}\nonumber 
\end{align}
for small $\eta>0$. In addition, it comes from (\ref{eq:beta_T11_LB})
that $\beta_{t+1}\geq(1-7\eta)\beta_{t}\geq\frac{(1-7\eta)^{2}}{\sqrt{3}}.$
This justifies that $\beta_{t+1}$ falls within the range (\ref{eq:beta-bound-stage12}). 
\end{itemize}
Combining all of the preceding cases establishes the claim (\ref{eq:beta-bound-stage12})
for all $T_{1,1}<t\leq T_{1}$. \end{proof}

\begin{proof}[Proof of Fact \ref{fact-Stage1-2}]We first demonstrate
that 
\begin{equation}
\alpha_{t}^{2}+\beta_{t}^{2}\leq2\label{eq:alpha2-beta2-UB}
\end{equation}
throughout this substage. In fact, if $\alpha_{t}^{2}+\beta_{t}^{2}\leq1.5$,
then 
\[
\alpha_{t+1}\leq\left\{ 1+3\eta\left(1-\alpha_{t}^{2}-\beta_{t}^{2}\right)+\eta|\zeta_{t}|\right\} \alpha_{t}\leq\left(1+4\eta\right)\alpha_{t}
\]
and, similarly, $\beta_{t+1}\leq(1+4\eta)\beta_{t}$. These taken
together imply that 
\[
\alpha_{t+1}^{2}+\beta_{t+1}^{2}\leq\left(1+4\eta\right)^{2}\left(\alpha_{t}^{2}+\beta_{t}^{2}\right)\leq1.5(1+9\eta)<2.
\]
Additionally, if $1.5<\alpha_{t}^{2}+\beta_{t}^{2}\leq2$, then 
\begin{align*}
\alpha_{t+1} & \leq\left\{ 1+3\eta\left(1-\alpha_{t}^{2}-\beta_{t}^{2}\right)+\eta|\zeta_{t}|\right\} \alpha_{t}\\
 & \leq\left(1-1.5\eta+\frac{c_{3}\eta}{\log n}\right)\alpha_{t}\\
 & \leq(1-\eta)\alpha_{t}
\end{align*}
and, similarly, $\beta_{t+1}\leq(1-\eta)\beta_{t}.$ These reveal
that 
\begin{align*}
\alpha_{t+1}^{2}+\beta_{t+1}^{2} & \leq\alpha_{t}^{2}+\beta_{t}^{2}.
\end{align*}
Put together the above argument to establish the claim (\ref{eq:alpha2-beta2-UB}). 

With the claim (\ref{eq:alpha2-beta2-UB}) in place, we can deduce
that 
\begin{align}
\alpha_{t+1} & \geq\left\{ 1+3\eta\left(1-\alpha_{t}^{2}-\beta_{t}^{2}\right)-\eta|\zeta_{t}|\right\} \alpha_{t}\nonumber \\
 & \geq\left\{ 1+3\eta\left(1-\alpha_{t}^{2}-\beta_{t}^{2}\right)-0.1\eta\right\} \alpha_{t}\label{eq:alphat-LB-stage2}
\end{align}
and 
\begin{align*}
\beta_{t+1} & \leq\left\{ 1+\eta\left(1-3\alpha_{t}^{2}-3\beta_{t}^{2}\right)+\eta|\rho_{t}|\right\} \beta_{t}\\
 & \leq\left\{ 1+\eta\left(1-3\alpha_{t}^{2}-3\beta_{t}^{2}\right)+0.1\eta\right\} \beta_{t}.
\end{align*}
Consequently, 
\begin{align*}
\frac{\alpha_{t+1}/\beta_{t+1}}{\alpha_{t}/\beta_{t}} & =\frac{\alpha_{t+1}/\alpha_{t}}{\beta_{t+1}/\beta_{t}}\geq\frac{1+3\eta\left(1-\alpha_{t}^{2}-\beta_{t}^{2}\right)-0.1\eta}{1+\eta\left(1-3\alpha_{t}^{2}-3\beta_{t}^{2}\right)+0.1\eta}\\
 & =1+\frac{1.8\eta}{1+\eta\left(1-3\alpha_{t}^{2}-3\beta_{t}^{2}\right)+0.1\eta}\\
 & \geq1+\frac{1.8\eta}{1+2\eta}\geq1+\eta
\end{align*}
for $\eta>0$ sufficiently small. This immediately implies that 
\[
T_{2}-T_{1}\lesssim\frac{\log\left(\frac{2/\gamma}{\alpha_{T_{1}}/\beta_{T_{1}}}\right)}{\log\left(1+\eta\right)}\asymp\frac{1}{\eta}.
\]

Moreover, combine (\ref{eq:alpha2-beta2-UB}) and (\ref{eq:alphat-LB-stage2})
to arrive at 
\begin{align}
\alpha_{t+1} & \geq\left\{ 1-3.1\eta\right\} \alpha_{t},\label{eq:alphat-LB-stage2-1}
\end{align}
Similarly, one can show that $\beta_{t+1}\geq\left\{ 1-5.1\eta\right\} \beta_{t}.$
\end{proof}

\section{Proof of Lemma \ref{lemma:xt-signal} \label{sec:Proof-of-Lemma-xt-signal}}

\subsection{Proof of (\ref{eq:approximate_state_evolution_phase_1_alpha})}

In view of the gradient update rule (\ref{eq:gradient_update-WF}),
we can express the signal component $x_{||}^{t+1}$ as follows 
\begin{align*}
x_{\|}^{t+1} & =x_{\|}^{t}-\frac{\eta}{m}\sum_{i=1}^{m}\left[\left(\bm{a}_{i}^{\top}\bm{x}^{t}\right)^{3}-a_{i,1}^{2}\left(\bm{a}_{i}^{\top}\bm{x}^{t}\right)\right]a_{i,1}.
\end{align*}
Expanding this expression using $\bm{a}_{i}^{\top}\bm{x}^{t}=x_{\|}^{t}a_{i,1}+\bm{a}_{i,\perp}^{\top}\bm{x}_{\perp}^{t}$
and rearranging terms, we are left with 
\begin{align*}
x_{\|}^{t+1} & =x_{\|}^{t}+\underset{:=J_{1}}{\underbrace{\eta\left[1-\big(x_{\|}^{t}\big)^{2}\right]x_{\|}^{t}\cdot\frac{1}{m}\sum_{i=1}^{m}a_{i,1}^{4}}}+\underset{:=J_{2}}{\underbrace{\eta\left[1-3\big(x_{\|}^{t}\big)^{2}\right]\cdot\frac{1}{m}\sum_{i=1}^{m}a_{i,1}^{3}\bm{a}_{i,\perp}^{\top}\bm{x}_{\perp}^{t}}}\\
 & \quad-\underset{:=J_{3}}{\underbrace{3\eta x_{\|}^{t}\cdot\frac{1}{m}\sum_{i=1}^{m}\left(\bm{a}_{i,\perp}^{\top}\bm{x}_{\perp}^{t}\right)^{2}a_{i,1}^{2}}}-\underset{:=J_{4}}{\underbrace{\eta\cdot\frac{1}{m}\sum_{i=1}^{m}\left(\bm{a}_{i,\perp}^{\top}\bm{x}_{\perp}^{t}\right)^{3}a_{i,1}}}.
\end{align*}
In the sequel, we control the above four terms $J_{1}$, $J_{2}$,
$J_{3}$ and $J_{4}$ separately. 
\begin{itemize}
\item With regard to the first term $J_{1}$, it follows from the standard
concentration inequality for Gaussian polynomials \cite[Theorem 1.9]{schudy2012concentration}
that 
\[
\mathbb{P}\left(\left|\frac{1}{m}\sum_{i=1}^{m}a_{i,1}^{4}-3\right|\geq\tau\right)\leq e^{2}e^{-c_{1}m^{1/4}\tau^{1/2}}
\]
for some absolute constant $c_{1}>0$. Taking $\tau\asymp\frac{\log^{3}m}{\sqrt{m}}$
reveals that with probability exceeding $1-O\left(m^{-10}\right)$,
\begin{align}
J_{1} & =3\eta\left[1-\big(x_{\|}^{t}\big)^{2}\right]x_{\|}^{t}+\left(\frac{1}{m}\sum_{i=1}^{m}a_{i,1}^{4}-3\right)\eta\left[1-\big(x_{\|}^{t}\big)^{2}\right]x_{\|}^{t}\nonumber \\
 & =3\eta\left[1-\big(x_{\|}^{t}\big)^{2}\right]x_{\|}^{t}+r_{1},\label{eq:J-1-result}
\end{align}
where the remainder term $r_{1}$ obeys 
\[
\left|r_{1}\right|=O\left(\frac{\eta\log^{3}m}{\sqrt{m}}\big|x_{\|}^{t}\big|\right).
\]
Here, the last line also uses the fact that
\begin{equation}
\left|1-\big(x_{\|}^{t}\big)^{2}\right|\leq1+\left\Vert \bm{x}^{t}\right\Vert _{2}^{2}\lesssim1,\label{eq:1-alpha-t-square}
\end{equation}
with the last relation coming from the induction hypothesis (\ref{eq:induction-norm-size}). 
\item For the third term $J_{3}$, it is easy to see that 
\begin{equation}
\frac{1}{m}\sum_{i=1}^{m}\left(\bm{a}_{i,\perp}^{\top}\bm{x}_{\perp}^{t}\right)^{2}a_{i,1}^{2}-\left\Vert \bm{x}_{\perp}^{t}\right\Vert _{2}^{2}=\bm{x}_{\perp}^{t\top}\Bigg[\underbrace{\frac{1}{m}\sum_{i=1}^{m}\left(\bm{a}_{i}^{\top}\bm{x}^{\natural}\right)^{2}\bm{a}_{i,\perp}\bm{a}_{i,\perp}^{\top}}_{:=\bm{U}}-\bm{I}_{n-1}\Bigg]\bm{x}_{\perp}^{t},\label{eq:U-defn}
\end{equation}
where $\bm{U}-\bm{I}_{n-1}$ is a submatrix of the following matrix
(obtained by removing its first row and column)
\begin{equation}
\frac{1}{m}\sum_{i=1}^{m}\left(\bm{a}_{i}^{\top}\bm{x}^{\natural}\right)^{2}\bm{a}_{i}\bm{a}_{i}^{\top}-\left(\bm{I}_{n}+2\bm{x}^\natural\bm{x}^{\natural\top}\right).\label{eq:hessian-diff-exp}
\end{equation}
This fact combined with Lemma \ref{lemma:hessian-concentration} reveals
that 
\[
\left\Vert \bm{U}-\bm{I}_{n-1}\right\Vert \leq\left\Vert \frac{1}{m}\sum_{i=1}^{m}\left(\bm{a}_{i}^{\top}\bm{x}^{\natural}\right)^{2}\bm{a}_{i}\bm{a}_{i}^{\top}-\left(\bm{I}_{n}+2\bm{x}^\natural\bm{x}^{\natural\top}\right)\right\Vert \lesssim\sqrt{\frac{n\log^{3}m}{m}}
\]
with probability at least $1-O\left(m^{-10}\right)$, provided that
$m\gg n\log^{3}m$. This further implies 
\begin{equation}
J_{3}=3\eta\left\Vert \bm{x}_{\perp}^{t}\right\Vert _{2}^{2}x_{\|}^{t}+r_{2},\label{eq:J-3-result}
\end{equation}
where the size of the remaining term $r_{2}$ satisfies
\[
\left|r_{2}\right|\lesssim\eta\sqrt{\frac{n\log^{3}m}{m}}\big|x_{\|}^{t}\big|\left\Vert \bm{x}_{\perp}^{t}\right\Vert _{2}^{2}\lesssim\eta\sqrt{\frac{n\log^{3}m}{m}}\big|x_{\|}^{t}\big|.
\]
Here, the last inequality holds under the hypothesis (\ref{eq:induction-norm-size})
that $\left\Vert \bm{x}_{\perp}^{t}\right\Vert _{2}^{2}\leq\left\Vert \bm{x}^{t}\right\Vert _{2}^{2}\lesssim1$. 
\item When it comes to $J_{2}$, our analysis relies on the random-sign 
sequence $\left\{ \bm{x}^{t,\mathrm{sgn}}\right\} $. Specifically,
one can decompose 
\begin{equation}
\frac{1}{m}\sum_{i=1}^{m}a_{i,1}^{3}\bm{a}_{i,\perp}^{\top}\bm{x}_{\perp}^{t}=\frac{1}{m}\sum_{i=1}^{m}a_{i,1}^{3}\bm{a}_{i,\perp}^{\top}\bm{x}_{\perp}^{t,\mathrm{sgn}}+\frac{1}{m}\sum_{i=1}^{m}a_{i,1}^{3}\bm{a}_{i,\perp}^{\top}\left(\bm{x}_{\perp}^{t}-\bm{x}_{\perp}^{t,\mathrm{sgn}}\right).\label{eq:J2}
\end{equation}
For the first term on the right-hand side of (\ref{eq:J2}), note
that $|a_{i,1}|^{3}\bm{a}_{i,\perp}^{\top}\bm{x}_{\perp}^{t,\mathrm{sgn}}$
is statistically independent of $\xi_{i}=\text{sgn}\left(a_{i,1}\right)$.
Therefore we can treat $\frac{1}{m}\sum_{i=1}^{m}a_{i,1}^{3}\bm{a}_{i,\perp}^{\top}\bm{x}_{\perp}^{t,\mathrm{sgn}}$
as a weighted sum of the $\xi_{i}$'s and apply the Bernstein inequality
(see Lemma \ref{lemma:bernstein}) to arrive at 
\begin{equation}
\left|\frac{1}{m}\sum_{i=1}^{m}a_{i,1}^{3}\bm{a}_{i,\perp}^{\top}\bm{x}_{\perp}^{t,\mathrm{sgn}}\right|=\left|\frac{1}{m}\sum_{i=1}^{m}\xi_{i}\left(\left|a_{i,1}\right|^{3}\bm{a}_{i,\perp}^{\top}\bm{x}_{\perp}^{t,\mathrm{sgn}}\right)\right|\lesssim\frac{1}{m}\left(\sqrt{V_{1}\log m}+B_{1}\log m\right)\label{eq:bernstein-1}
\end{equation}
with probability exceeding $1-O\left(m^{-10}\right)$, where 
\[
V_{1}:=\sum_{i=1}^{m}\left|a_{i,1}\right|^{6}\left(\bm{a}_{i,\perp}^{\top}\bm{x}_{\perp}^{t,\mathrm{sgn}}\right)^{2}\qquad\text{and}\qquad B_{1}:=\max_{1\leq i\leq m}\left|a_{i,1}\right|^{3}\left|\bm{a}_{i,\perp}^{\top}\bm{x}_{\perp}^{t,\mathrm{sgn}}\right|.
\]
Make use of Lemma \ref{lemma:ai-uniform-concentration} and the incoherence
condition (\ref{eq:consequence-incoherence-ai-perp-x-sgn}) to deduce
that with probability at least $1-O\left(m^{-10}\right)$, 
\begin{align*}
\frac{1}{m}V_{1} & =\frac{1}{m}\sum_{i=1}^{m}\left|a_{i,1}\right|^{6}\left(\bm{a}_{i,\perp}^{\top}\bm{x}_{\perp}^{t,\mathrm{sgn}}\right)^{2}\lesssim\left\Vert \bm{x}_{\perp}^{t,\mathrm{sgn}}\right\Vert _{2}^{2}
\end{align*}
with the proviso that $m\gg n\log^{5}m$. Furthermore, the incoherence
condition (\ref{eq:consequence-incoherence-ai-perp-x-sgn}) together
with the fact (\ref{eq:max-a-i-1}) implies that 
\[
B_{1}\lesssim\log^{2}m\left\Vert \bm{x}_{\perp}^{t,\mathrm{sgn}}\right\Vert _{2}.
\]
Substitute the bounds on $V_{1}$ and \textbf{$B_{1}$ }back to (\ref{eq:bernstein-1})
to obtain 
\begin{equation}
\left|\frac{1}{m}\sum_{i=1}^{m}a_{i,1}^{3}\bm{a}_{i,\perp}^{\top}\bm{x}_{\perp}^{t,\mathrm{sgn}}\right|\lesssim\sqrt{\frac{\log m}{m}}\left\Vert \bm{x}_{\perp}^{t,\mathrm{sgn}}\right\Vert _{2}+\frac{\log^{3}m}{m}\left\Vert \bm{x}_{\perp}^{t,\mathrm{sgn}}\right\Vert _{2}\asymp\sqrt{\frac{\log m}{m}}\left\Vert \bm{x}_{\perp}^{t,\mathrm{sgn}}\right\Vert _{2}\label{eq:J2-1}
\end{equation}
as long as $m\gtrsim\log^{5}m$. Additionally, regarding the second
term on the right-hand side of (\ref{eq:J2}), one sees that 
\begin{equation}
\frac{1}{m}\sum_{i=1}^{m}a_{i,1}^{3}\bm{a}_{i,\perp}^{\top}\left(\bm{x}_{\perp}^{t}-\bm{x}_{\perp}^{t,\mathrm{sgn}}\right)=\underbrace{\frac{1}{m}\sum_{i=1}^{m}\left(\bm{a}_{i}^{\top}\bm{x}^{\natural}\right)^{2}a_{i,1}\bm{a}_{i,\perp}^{\top}}_{:=\bm{u}^{\top}}\left(\bm{x}_{\perp}^{t}-\bm{x}_{\perp}^{t,\mathrm{sgn}}\right),\label{eq:u-defn}
\end{equation}
where $\bm{u}$ is the first column of (\ref{eq:hessian-diff-exp}) without
the first entry. Hence we have 
\begin{align}
\left|\frac{1}{m}\sum_{i=1}^{m}a_{i,1}^{3}\bm{a}_{i,\perp}^{\top}\left(\bm{x}_{\perp}^{t}-\bm{x}_{\perp}^{t,\mathrm{sgn}}\right)\right| & \leq\left\Vert \bm{u}\right\Vert _{2}\left\Vert \bm{x}_{\perp}^{t}-\bm{x}_{\perp}^{t,\mathrm{sgn}}\right\Vert _{2}\lesssim\sqrt{\frac{n\log^{3}m}{m}}\left\Vert \bm{x}_{\perp}^{t}-\bm{x}_{\perp}^{t,\mathrm{sgn}}\right\Vert _{2},\label{eq:J2-2}
\end{align}
with probability exceeding $1-O\left(m^{-10}\right)$, with the proviso
that $m\gg n\log^{3}m$. Substituting the above two bounds (\ref{eq:J2-1})
and (\ref{eq:J2-2}) back into (\ref{eq:J2}) gives 
\begin{align*}
\left|\frac{1}{m}\sum_{i=1}^{m}a_{i,1}^{3}\bm{a}_{i,\perp}^{\top}\bm{x}_{\perp}^{t}\right| & \leq\left|\frac{1}{m}\sum_{i=1}^{m}a_{i,1}^{3}\bm{a}_{i,\perp}^{\top}\bm{x}_{\perp}^{t,\mathrm{sgn}}\right|+\left|\frac{1}{m}\sum_{i=1}^{m}a_{i,1}^{3}\bm{a}_{i,\perp}^{\top}\left(\bm{x}_{\perp}^{t}-\bm{x}_{\perp}^{t,\mathrm{sgn}}\right)\right|\\
 & \lesssim\sqrt{\frac{\log m}{m}}\left\Vert \bm{x}_{\perp}^{t,\mathrm{sgn}}\right\Vert _{2}+\sqrt{\frac{n\log^{3}m}{m}}\left\Vert \bm{x}_{\perp}^{t}-\bm{x}_{\perp}^{t,\mathrm{sgn}}\right\Vert _{2}.
\end{align*}
As a result, we arrive at the following bound on $J_{2}$: 
\begin{align*}
\left|J_{2}\right| & \lesssim\eta\left|1-3\big(x_{\|}^{t}\big)^{2}\right|\left(\sqrt{\frac{\log m}{m}}\left\Vert \bm{x}_{\perp}^{t,\mathrm{sgn}}\right\Vert _{2}+\sqrt{\frac{n\log^{3}m}{m}}\left\Vert \bm{x}_{\perp}^{t}-\bm{x}_{\perp}^{t,\mathrm{sgn}}\right\Vert _{2}\right)\\
 & \overset{\left(\text{i}\right)}{\lesssim}\eta\sqrt{\frac{\log m}{m}}\left\Vert \bm{x}_{\perp}^{t,\mathrm{sgn}}\right\Vert _{2}+\eta\sqrt{\frac{n\log^{3}m}{m}}\left\Vert \bm{x}_{\perp}^{t}-\bm{x}_{\perp}^{t,\mathrm{sgn}}\right\Vert _{2}\\
 & \overset{\left(\text{ii}\right)}{\lesssim}\eta\sqrt{\frac{\log m}{m}}\left\Vert \bm{x}_{\perp}^{t}\right\Vert _{2}+\eta\sqrt{\frac{n\log^{3}m}{m}}\left\Vert \bm{x}_{\perp}^{t}-\bm{x}_{\perp}^{t,\mathrm{sgn}}\right\Vert _{2},
\end{align*}
where (i) uses (\ref{eq:1-alpha-t-square}) again and (ii) comes from
the triangle inequality $\left\Vert \bm{x}_{\perp}^{t,\mathrm{sgn}}\right\Vert _{2}\leq\left\Vert \bm{x}_{\perp}^{t}\right\Vert _{2}+\left\Vert \bm{x}_{\perp}^{t}-\bm{x}_{\perp}^{t,\mathrm{sgn}}\right\Vert _{2}$
and the fact that $\sqrt{\frac{\log m}{m}}\leq\sqrt{\frac{n\log^{3}m}{m}}$. 
\item It remains to control $J_{4}$, towards which we resort to the random-sign
sequence $\left\{ \bm{x}^{t,\text{sgn}}\right\} $ once again. Write
\begin{equation}
\frac{1}{m}\sum_{i=1}^{m}\left(\bm{a}_{i,\perp}^{\top}\bm{x}_{\perp}^{t}\right)^{3}a_{i,1}=\frac{1}{m}\sum_{i=1}^{m}\left(\bm{a}_{i,\perp}^{\top}\bm{x}_{\perp}^{t,\mathrm{sgn}}\right)^{3}a_{i,1}+\frac{1}{m}\sum_{i=1}^{m}\left[\left(\bm{a}_{i,\perp}^{\top}\bm{x}_{\perp}^{t}\right)^{3}-\left(\bm{a}_{i,\perp}^{\top}\bm{x}_{\perp}^{t,\mathrm{sgn}}\right)^{3}\right]a_{i,1}.\label{eq:J4}
\end{equation}
For the first term in (\ref{eq:J4}), since $\xi_{i}=\mathrm{sgn}\left(a_{i,1}\right)$
is statistically independent of $\big(\bm{a}_{i,\perp}^{\top}\bm{x}_{\perp}^{t,\mathrm{sgn}}\big)^{3}\left|a_{i,1}\right|$,
we can upper bound the first term using the Bernstein inequality (see
Lemma \ref{lemma:bernstein}) as 
\[
\left|\frac{1}{m}\sum_{i=1}^{m}\left(\bm{a}_{i,\perp}^{\top}\bm{x}_{\perp}^{t,\mathrm{sgn}}\right)^{3}\left|a_{i,1}\right|\xi_{i}\right|\lesssim\frac{1}{m}\left(\sqrt{V_{2}\log m}+B_{2}\log m\right),
\]
where the quantities $V_{2}$ and $B_{2}$ obey 
\begin{align*}
V_{2} & :=\sum_{i=1}^{m}\left(\bm{a}_{i,\perp}^{\top}\bm{x}_{\perp}^{t,\mathrm{sgn}}\right)^{6}\left|a_{i,1}\right|^{2}\qquad\text{and}\qquad B_{2}:=\max_{1\leq i\leq m}\left|\bm{a}_{i,\perp}^{\top}\bm{x}_{\perp}^{t,\mathrm{sgn}}\right|^{3}\left|a_{i,1}\right|.
\end{align*}
Using similar arguments as in bounding (\ref{eq:bernstein-1}) yields
\[
V_{2}\lesssim m\left\Vert \bm{x}_{\perp}^{t,\mathrm{sgn}}\right\Vert _{2}^{6}\qquad\text{and}\qquad B_{2}\lesssim\log^{2}m\left\Vert \bm{x}_{\perp}^{t,\mathrm{sgn}}\right\Vert _{2}^{3}
\]
with the proviso that $m\gg n\log^{5}m$ and 
\begin{equation}
\left|\frac{1}{m}\sum_{i=1}^{m}\left(\bm{a}_{i,\perp}^{\top}\bm{x}_{\perp}^{t,\mathrm{sgn}}\right)^{3}\left|a_{i,1}\right|\xi_{i}\right|\lesssim\sqrt{\frac{\log m}{m}}\left\Vert \bm{x}_{\perp}^{t,\mathrm{sgn}}\right\Vert _{2}^{3}+\frac{\log^{3}m}{m}\left\Vert \bm{x}_{\perp}^{t,\mathrm{sgn}}\right\Vert _{2}^{3}\asymp\sqrt{\frac{\log m}{m}}\left\Vert \bm{x}_{\perp}^{t,\mathrm{sgn}}\right\Vert _{2}^{3},\label{eq:J4-1}
\end{equation}
with probability exceeding $1-O(m^{-10})$ as soon as $m\gtrsim\log^{5}m$.
Regarding the second term in (\ref{eq:J4}), 
\begin{align}
 & \left|\frac{1}{m}\sum_{i=1}^{m}\left[\left(\bm{a}_{i,\perp}^{\top}\bm{x}_{\perp}^{t}\right)^{3}-\left(\bm{a}_{i,\perp}^{\top}\bm{x}_{\perp}^{t,\mathrm{sgn}}\right)^{3}\right]a_{i,1}\right|\nonumber \\
 & \quad\overset{(\text{i})}{=}\frac{1}{m}\sum_{i=1}^{m}\left|\left\{ \bm{a}_{i,\perp}^{\top}\left(\bm{x}_{\perp}^{t}-\bm{x}_{\perp}^{t,\mathrm{sgn}}\right)\left[\left(\bm{a}_{i,\perp}^{\top}\bm{x}_{\perp}^{t}\right)^{2}+\left(\bm{a}_{i,\perp}^{\top}\bm{x}_{\perp}^{t,\mathrm{sgn}}\right)^{2}+\left(\bm{a}_{i,\perp}^{\top}\bm{x}_{\perp}^{t}\right)\left(\bm{a}_{i,\perp}^{\top}\bm{x}_{\perp}^{t,\mathrm{sgn}}\right)\right]\right\} a_{i,1}\right|\nonumber \\
 & \quad\overset{(\text{ii})}{\leq}\sqrt{\frac{1}{m}\sum_{i=1}^{m}\left[\bm{a}_{i,\perp}^{\top}\left(\bm{x}_{\perp}^{t}-\bm{x}_{\perp}^{t,\mathrm{sgn}}\right)\right]^{2}}\sqrt{\frac{1}{m}\sum_{i=1}^{m}\left[5\big(\bm{a}_{i,\perp}^{\top}\bm{x}_{\perp}^{t}\big)^{4}+5\big(\bm{a}_{i,\perp}^{\top}\bm{x}_{\perp}^{t,\mathrm{sgn}}\big)^{4}\right]a_{i,1}^{2}}.\label{eq:J4-2-prior}
\end{align}
Here, the first equality (i) utilizes the elementary identity $a^{3}-b^{3}=\left(a-b\right)\left(a^{2}+b^{2}+ab\right)$,
and (ii) follows from the Cauchy-Schwarz inequality as well as the
inequality 
\[
(a^{2}+b^{2}+ab)^{2}\leq(1.5a^{2}+1.5b^{2})^{2}\leq5a^{4}+5b^{4}.
\]
Use Lemma \ref{lemma:ai-ai-spectral-upper-bound} to reach 
\[
\sqrt{\frac{1}{m}\sum_{i=1}^{m}\left[\bm{a}_{i,\perp}^{\top}\left(\bm{x}_{\perp}^{t}-\bm{x}_{\perp}^{t,\mathrm{sgn}}\right)\right]^{2}}=\sqrt{\left(\bm{x}_{\perp}^{t}-\bm{x}_{\perp}^{t,\mathrm{sgn}}\right)^{\top}\left(\frac{1}{m}\sum_{i=1}^{m}\bm{a}_{i,\perp}\bm{a}_{i,\perp}^{\top}\right)\left(\bm{x}_{\perp}^{t}-\bm{x}_{\perp}^{t,\mathrm{sgn}}\right)}\lesssim\left\Vert \bm{x}_{\perp}^{t}-\bm{x}_{\perp}^{t,\mathrm{sgn}}\right\Vert _{2}.
\]
Additionally, combining Lemma \ref{lemma:ai-uniform-concentration}
and the incoherence conditions (\ref{eq:consequence-incoherence-ai-perp-x-t})
and (\ref{eq:consequence-incoherence-ai-perp-x-sgn}), we can obtain
\[
\sqrt{\frac{1}{m}\sum_{i=1}^{m}\left[5\big(\bm{a}_{i,\perp}^{\top}\bm{x}_{\perp}^{t}\big)^{4}+5\big(\bm{a}_{i,\perp}^{\top}\bm{x}_{\perp}^{t,\mathrm{sgn}}\big)^{4}\right]a_{i,1}^{2}}\lesssim\left\Vert \bm{x}_{\perp}^{t}\right\Vert _{2}^{2}+\left\Vert \bm{x}_{\perp}^{t,\text{sgn}}\right\Vert _{2}^{2}\lesssim1,
\]
as long as $m\gg n\log^{6}m$. Here, the last relation comes from
the norm conditions (\ref{eq:induction-norm-size}) and (\ref{eq:consequence-norm-xt-sgn}).
These in turn imply 
\begin{equation}
\left|\frac{1}{m}\sum_{i=1}^{m}\left[\left(\bm{a}_{i,\perp}^{\top}\bm{x}_{\perp}^{t}\right)^{3}-\left(\bm{a}_{i,\perp}^{\top}\bm{x}_{\perp}^{t,\mathrm{sgn}}\right)^{3}\right]a_{i,1}\right|\lesssim\left\Vert \bm{x}_{\perp}^{t}-\bm{x}_{\perp}^{t,\mathrm{sgn}}\right\Vert _{2}.\label{eq:J4-2}
\end{equation}
Combining the above bounds (\ref{eq:J4-1}) and (\ref{eq:J4-2}),
we get 
\begin{align*}
\left|J_{4}\right| & \leq\eta\left|\frac{1}{m}\sum_{i=1}^{m}\left(\bm{a}_{i,\perp}^{\top}\bm{x}_{\perp}^{t,\mathrm{sgn}}\right)^{3}a_{i,1}\right|+\eta\left|\frac{1}{m}\sum_{i=1}^{m}\left[\left(\bm{a}_{i,\perp}^{\top}\bm{x}_{\perp}^{t}\right)^{3}-\left(\bm{a}_{i,\perp}^{\top}\bm{x}_{\perp}^{t,\mathrm{sgn}}\right)^{3}\right]a_{i,1}\right|\\
 & \lesssim\eta\sqrt{\frac{\log m}{m}}\left\Vert \bm{x}_{\perp}^{t,\mathrm{sgn}}\right\Vert _{2}^{3}+\eta\left\Vert \bm{x}_{\perp}^{t}-\bm{x}_{\perp}^{t,\mathrm{sgn}}\right\Vert _{2}\\
 & \lesssim\eta\sqrt{\frac{\log m}{m}}\left\Vert \bm{x}_{\perp}^{t,\mathrm{sgn}}\right\Vert _{2}+\eta\left\Vert \bm{x}_{\perp}^{t}-\bm{x}_{\perp}^{t,\mathrm{sgn}}\right\Vert _{2}\\
 & \lesssim\eta\sqrt{\frac{\log m}{m}}\left\Vert \bm{x}_{\perp}^{t}\right\Vert _{2}+\eta\left\Vert \bm{x}_{\perp}^{t}-\bm{x}_{\perp}^{t,\mathrm{sgn}}\right\Vert _{2},
\end{align*}
where the penultimate inequality arises from the norm condition (\ref{eq:consequence-norm-xt-sgn})
and the last one comes from the triangle inequality $\left\Vert \bm{x}_{\perp}^{t,\mathrm{sgn}}\right\Vert _{2}\leq\left\Vert \bm{x}_{\perp}^{t}\right\Vert _{2}+\left\Vert \bm{x}_{\perp}^{t}-\bm{x}_{\perp}^{t,\mathrm{sgn}}\right\Vert _{2}$
and the fact that $\sqrt{\frac{\log m}{m}}\leq1.$ 
\item Putting together the above estimates for $J_{1},J_{2},J_{3}$ and
$J_{4}$, we reach 
\begin{align}
x_{\|}^{t+1} & =x_{\|}^{t}+J_{1}-J_{3}+J_{2}-J_{4}\nonumber \\
 & =x_{\|}^{t}+3\eta\left[1-\big(x_{\parallel}^{t}\big)^{2}\right]x_{\|}^{t}-3\eta\left\Vert \bm{x}_{\perp}^{t}\right\Vert _{2}^{2}x_{||}^{t}+R_{1}\nonumber \\
 & =\left\{ 1+3\eta\left(1-\left\Vert \bm{x}^{t}\right\Vert _{2}^{2}\right)\right\} x_{\|}^{t}+R_{1},\label{eq:x-t+1-signal}
\end{align}
where $R_{1}$ is the residual term obeying 
\[
\left|R_{1}\right|\lesssim\eta\sqrt{\frac{n\log^{3}m}{m}}\big|x_{\|}^{t}\big|+\eta\sqrt{\frac{\log m}{m}}\left\Vert \bm{x}_{\perp}^{t}\right\Vert _{2}+\eta\left\Vert \bm{x}^{t}-\bm{x}^{t,\mathrm{sgn}}\right\Vert _{2}.
\]
Substituting the hypotheses (\ref{subeq:induction}) into (\ref{eq:x-t+1-signal})
and recalling that $\alpha_{t}=\langle\bm{x}^{t},\bm{x}^{\natural}\rangle$
lead us to conclude that 
\begin{align}
\alpha_{t+1} & =\left\{ 1+3\eta\left(1-\left\Vert \bm{x}^{t}\right\Vert _{2}^{2}\right)\right\} \alpha_{t}+O\left(\eta\sqrt{\frac{n\log^{3}m}{m}}\alpha_{t}\right)+O\left(\eta\sqrt{\frac{\log m}{m}}\beta_{t}\right)\nonumber \\
 & \quad+O\left(\eta\alpha_{t}\left(1+\frac{1}{\log m}\right)^{t}C_{3}\sqrt{\frac{n\log^{5}m}{m}}\right)\nonumber \\
 & =\left\{ 1+3\eta\left(1-\left\Vert \bm{x}^{t}\right\Vert _{2}^{2}\right)+\eta\zeta_{t}\right\} \alpha_{t},\label{eq:alpha-t-iterative}
\end{align}
for some $|\zeta_{t}|\ll\frac{1}{\log m}$, provided that \begin{subequations}
\begin{align}
\sqrt{\frac{n\log^{3}m}{m}} & \ll\frac{1}{\log m}\label{eq:alpha-t-as-long-as-1}\\
\sqrt{\frac{\log m}{m}}\beta_{t} & \ll\frac{1}{\log m}\alpha_{t}\label{eq:alpha-t-as-long-as-2}\\
\left(1+\frac{1}{\log m}\right)^{t}C_{3}\sqrt{\frac{n\log^{5}m}{m}} & \ll\frac{1}{\log m}.\label{eq:alpha-t-as-long-as-3}
\end{align}
\end{subequations}Here, the first condition (\ref{eq:alpha-t-as-long-as-1})
naturally holds under the sample complexity $m\gg n\log^{5}m$, whereas
the second condition (\ref{eq:alpha-t-as-long-as-2}) is true since
$\beta_{t}\leq\|\bm{x}^{t}\|_{2}\lesssim\alpha_{t}\sqrt{n\log m}$
(cf.~the induction hypothesis (\ref{eq:induction-norm-relative}))
and $m\gg n\log^{4}m$. For the last condition (\ref{eq:alpha-t-as-long-as-3}),
observe that for $t\leq T_{0}=O\left(\log n\right)$, 
\[
\left(1+\frac{1}{\log m}\right)^{t}=O\left(1\right),
\]
which further implies 
\[
\left(1+\frac{1}{\log m}\right)^{t}C_{3}\sqrt{\frac{n\log^{5}m}{m}}\lesssim C_{3}\sqrt{\frac{n\log^{5}m}{m}}\ll\frac{1}{\log m}
\]
as long as the number of samples obeys $m\gg n\log^{7}m$. This concludes
the proof. 
\end{itemize}

\subsection{Proof of (\ref{eq:approximate_state_evolution_phase_1_beta})\label{sec:Proof-of-Lemma-xt-perp}}

Given the gradient update rule (\ref{eq:gradient_update-WF}), the
orthogonal component $\bm{x}_{\perp}^{t+1}$ can be decomposed
as 
\begin{align}
\bm{x}_{\perp}^{t+1} & =\bm{x}_{\perp}^{t}-\frac{\eta}{m}\sum_{i=1}^{m}\left[\left(\bm{a}_{i}^{\top}\bm{x}^{t}\right)^{2}-\left(\bm{a}_{i}^{\top}\bm{x}^{\natural}\right)^{2}\right]\bm{a}_{i,\perp}\bm{a}_{i}^{\top}\bm{x}^{t}\nonumber \\
 & =\bm{x}_{\perp}^{t}+\underset{:=\bm{v}_{1}}{\underbrace{\frac{\eta}{m}\sum_{i=1}^{m}\left(\bm{a}_{i}^{\top}\bm{x}^{\natural}\right)^{2}\bm{a}_{i,\perp}\bm{a}_{i}^{\top}\bm{x}^{t}}}-\underset{:=\bm{v}_{2}}{\underbrace{\frac{\eta}{m}\sum_{i=1}^{m}\left(\bm{a}_{i}^{\top}\bm{x}^{t}\right)^{3}\bm{a}_{i,\perp}}}.\label{eq:defn-v1-v2}
\end{align}
In what follows, we bound $\bm{v}_{1}$ and $\bm{v}_{2}$ in turn. 
\begin{itemize}
\item We begin with $\bm{v}_{1}$. Using the identity $\bm{a}_{i}^{\top}\bm{x}^{t}=a_{i,1}x_{\parallel}^{t}+\bm{a}_{i,\perp}^{\top}\bm{x}_{\perp}^{t}$,
one can further decompose $\bm{v}_{1}$ into the following two terms:
\begin{align*}
\frac{1}{\eta}\bm{v}_{1} & =x_{\|}^{t}\cdot\frac{1}{m}\sum_{i=1}^{m}\left(\bm{a}_{i}^{\top}\bm{x}^{\natural}\right)^{2}a_{i,1}\bm{a}_{i,\perp}+\frac{1}{m}\sum_{i=1}^{m}\left(\bm{a}_{i}^{\top}\bm{x}^{\natural}\right)^{2}\bm{a}_{i,\perp}\bm{a}_{i,\perp}^{\top}\bm{x}_{\perp}^{t}\\
 & =x_{\|}^{t}\bm{u}+\bm{U}\bm{x}_{\perp}^{t},
\end{align*}
where $\bm{U}$, $\bm{u}$ are as defined, respectively, in (\ref{eq:U-defn})
and (\ref{eq:u-defn}). Recall that we have shown that
\begin{align*}
\left\Vert \bm{u}\right\Vert _{2} & \lesssim\sqrt{\frac{n\log^{3}m}{m}}\qquad\text{and}\qquad\left\Vert \bm{U}-\bm{I}_{n-1}\right\Vert \lesssim\sqrt{\frac{n\log^{3}m}{m}}
\end{align*}
hold with probability exceeding $1-O\left(m^{-10}\right)$. Consequently,
one has 
\begin{align}
\bm{v}_{1} & =\eta\bm{x}_{\perp}^{t}+\bm{r}_{1},\label{eq:orthogonal-v1}
\end{align}
where the residual term $\bm{r}_{1}$ obeys 
\begin{align}
\left\Vert \bm{r}_{1}\right\Vert _{2} & \lesssim\eta\sqrt{\frac{n\log^{3}m}{m}}\left\Vert \bm{x}_{\perp}^{t}\right\Vert _{2}+\eta\sqrt{\frac{n\log^{3}m}{m}}\big|x_{\|}^{t}\big|.\label{eq:orthogonal-v1-residual}
\end{align}
\item It remains to bound $\bm{v}_{2}$ in (\ref{eq:defn-v1-v2}). To this
end, we make note of the following fact 
\begin{align}
\frac{1}{m}\sum_{i=1}^{m}\left(\bm{a}_{i}^{\top}\bm{x}^{t}\right)^{3}\bm{a}_{i,\perp} & =\frac{1}{m}\sum_{i=1}^{m}\left(\bm{a}_{i,\perp}^{\top}\bm{x}_{\perp}^{t}\right)^{3}\bm{a}_{i,\perp}+\big(x_{\|}^{t}\big)^{3}\frac{1}{m}\sum_{i=1}^{m}a_{i,1}^{3}\bm{a}_{i,\perp}\nonumber \\
 & \quad+\frac{3x_{\|}^{t}}{m}\sum_{i=1}^{m}a_{i,1}\left(\bm{a}_{i,\perp}^{\top}\bm{x}_{\perp}^{t}\right)^{2}\bm{a}_{i,\perp}+3\big(x_{\|}^{t}\big)^{2}\frac{1}{m}\sum_{i=1}^{m}a_{i,1}^{2}\bm{a}_{i,\perp}\bm{a}_{i,\perp}^{\top}\bm{x}_{\perp}^{t}\nonumber \\
= & \frac{1}{m}\sum_{i=1}^{m}\left(\bm{a}_{i,\perp}^{\top}\bm{x}_{\perp}^{t}\right)^{3}\bm{a}_{i,\perp}+\frac{3x_{\|}^{t}}{m}\sum_{i=1}^{m}a_{i,1}\left(\bm{a}_{i,\perp}^{\top}\bm{x}_{\perp}^{t}\right)^{2}\bm{a}_{i,\perp}+\big(x_{\|}^{t}\big)^{3}\bm{u}+3\big(x_{\|}^{t}\big)^{2}\bm{U}\bm{x}_{\perp}^{t}.\label{eq:ax3a}
\end{align}
Applying Lemma \ref{lemma:hessian-concentration} and using the incoherence
condition (\ref{eq:consequence-incoherence-ai-perp-x-t}), we get
\[
\left\Vert \frac{1}{m}\sum_{i=1}^{m}\left(\bm{a}_{i,\perp}^{\top}\bm{x}_{\perp}^{t}\right)^{2}\bm{a}_{i,\perp}\bm{a}_{i,\perp}^{\top}-\left\Vert \bm{x}_{\perp}^{t}\right\Vert _{2}^{2}\bm{I}_{n-1}-2\bm{x}_{\perp}^{t}\bm{x}_{\perp}^{t\top}\right\Vert \lesssim\sqrt{\frac{n\log^{3}m}{m}}\left\Vert \bm{x}_{\perp}^{t}\right\Vert _{2}^{2},
\]
\[
\left\Vert \frac{1}{m}\sum_{i=1}^{m}\left(\bm{a}_{i}^{\top}\left[\begin{array}{c}
0\\
\bm{x}_{\perp}^{t}
\end{array}\right]\right)^{2}\bm{a}_{i}\bm{a}_{i}^{\top}-\left\Vert \bm{x}_{\perp}^{t}\right\Vert _{2}^{2}\bm{I}_{n}-2\left[\begin{array}{c}
0\\
\bm{x}_{\perp}^{t}
\end{array}\right]\left[\begin{array}{c}
0\\
\bm{x}_{\perp}^{t}
\end{array}\right]^{\top}\right\Vert \lesssim\sqrt{\frac{n\log^{3}m}{m}}\left\Vert \bm{x}_{\perp}^{t}\right\Vert _{2}^{2},
\]
as long as $m\gg n\log^{3}m$. These two together allow us to derive
\begin{align*}
\left\Vert \frac{1}{m}\sum_{i=1}^{m}\left(\bm{a}_{i,\perp}^{\top}\bm{x}_{\perp}^{t}\right)^{3}\bm{a}_{i,\perp}-3\left\Vert \bm{x}_{\perp}^{t}\right\Vert _{2}^{2}\bm{x}_{\perp}^{t}\right\Vert _{2} & =\left\Vert \left\{ \frac{1}{m}\sum_{i=1}^{m}\left(\bm{a}_{i,\perp}^{\top}\bm{x}_{\perp}^{t}\right)^{2}\bm{a}_{i,\perp}\bm{a}_{i,\perp}^{\top}-\left\Vert \bm{x}_{\perp}^{t}\right\Vert _{2}^{2}\bm{I}_{n-1}-2\bm{x}_{\perp}^{t}\bm{x}_{\perp}^{t\top}\right\} \bm{x}_{\perp}^{t}\right\Vert _{2}\\
 & \leq\left\Vert \frac{1}{m}\sum_{i=1}^{m}\left(\bm{a}_{i,\perp}^{\top}\bm{x}_{\perp}^{t}\right)^{2}\bm{a}_{i,\perp}\bm{a}_{i,\perp}^{\top}-\left\Vert \bm{x}_{\perp}^{t}\right\Vert _{2}^{2}\bm{I}_{n-1}-2\bm{x}_{\perp}^{t}\bm{x}_{\perp}^{t\top}\right\Vert \left\Vert \bm{x}_{\perp}^{t}\right\Vert _{2}\\
 & \lesssim\sqrt{\frac{n\log^{3}m}{m}}\left\Vert \bm{x}_{\perp}^{t}\right\Vert _{2}^{3};
\end{align*}
and
\begin{align*}
\left\Vert \frac{1}{m}\sum_{i=1}^{m}a_{i,1}\left(\bm{a}_{i,\perp}^{\top}\bm{x}_{\perp}^{t}\right)^{2}\bm{a}_{i,\perp}\right\Vert _{2} & \leq\underset{:=\bm{A}}{\Bigg\|\underbrace{\frac{1}{m}\sum_{i=1}^{m}\left(\bm{a}_{i}^{\top}\left[\begin{array}{c}
0\\
\bm{x}_{\perp}^{t}
\end{array}\right]\right)^{2}\bm{a}_{i}\bm{a}_{i}^{\top}-\left\Vert \bm{x}_{\perp}^{t}\right\Vert _{2}^{2}\bm{I}_{n}-2\left[\begin{array}{c}
0\\
\bm{x}_{\perp}^{t}
\end{array}\right]\left[\begin{array}{c}
0\\
\bm{x}_{\perp}^{t}
\end{array}\right]^{\top}}\Bigg\|}\\
 & \lesssim\sqrt{\frac{n\log^{3}m}{m}}\left\Vert \bm{x}_{\perp}^{t}\right\Vert _{2}^{2},
\end{align*}
where the second one follows since $\frac{1}{m}\sum_{i=1}^{m}a_{i,1}\big(\bm{a}_{i,\perp}^{\top}\bm{x}_{\perp}^{t}\big)^{2}\bm{a}_{i,\perp}$
is the first column of $\bm{A}$ except
for the first entry. Substitute the preceding bounds into (\ref{eq:ax3a})
to arrive at 
\begin{align*}
 & \left\Vert \frac{1}{m}\sum_{i=1}^{m}\big(\bm{a}_{i}^{\top}\bm{x}^{t}\big)^{3}\bm{a}_{i,\perp}-3\left\Vert \bm{x}_{\perp}^{t}\right\Vert _{2}^{2}\bm{x}_{\perp}^{t}-3\big(x_{\|}^{t}\big)^{2}\bm{x}_{\perp}^{t}\right\Vert _{2}\\
 & \quad\leq\left\Vert \frac{1}{m}\sum_{i=1}^{m}\left(\bm{a}_{i,\perp}^{\top}\bm{x}_{\perp}^{t}\right)^{3}\bm{a}_{i,\perp}-3\left\Vert \bm{x}_{\perp}^{t}\right\Vert _{2}^{2}\bm{x}_{\perp}^{t}\right\Vert _{2}+3\left|x_{\|}^{t}\right|\left\Vert \frac{1}{m}\sum_{i=1}^{m}a_{i,1}\left(\bm{a}_{i,\perp}^{\top}\bm{x}_{\perp}^{t}\right)^{2}\bm{a}_{i,\perp}\right\Vert _{2}\\
 & \quad\qquad+\left\Vert \big(x_{\|}^{t}\big)^{3}\bm{u}\right\Vert _{2}+3\big(x_{\|}^{t}\big)^{2}\left\Vert \left(\bm{U}-\bm{I}_{n-1}\right)\bm{x}_{\perp}^{t}\right\Vert _{2}\\
 & \quad\lesssim\sqrt{\frac{n\log^{3}m}{m}}\left(\left\Vert \bm{x}_{\perp}^{t}\right\Vert _{2}^{3}+\big|x_{\|}^{t}\big|\left\Vert \bm{x}_{\perp}^{t}\right\Vert _{2}^{2}+\big|x_{\|}^{t}\big|^{3}+\big|x_{\|}^{t}\big|^{2}\left\Vert \bm{x}_{\perp}^{t}\right\Vert _{2}\right)\lesssim\sqrt{\frac{n\log^{3}m}{m}}\left\Vert \bm{x}^{t}\right\Vert _{2}
\end{align*}
with probability at least $1-O(m^{-10})$. Here, the last relation
holds owing to the norm condition (\ref{eq:induction-norm-size})
and the fact that 
\[
\left\Vert \bm{x}_{\perp}^{t}\right\Vert _{2}^{3}+\big|x_{\|}^{t}\big|\left\Vert \bm{x}_{\perp}^{t}\right\Vert _{2}^{2}+\big|x_{\|}^{t}\big|^{3}+\big|x_{\|}^{t}\big|^{2}\left\Vert \bm{x}_{\perp}^{t}\right\Vert _{2}\asymp\left\Vert \bm{x}^{t}\right\Vert _{2}^{3}\lesssim\left\Vert \bm{x}^{t}\right\Vert _{2}.
\]
This in turn tells us that 
\[
\bm{v}_{2}=\frac{\eta}{m}\sum_{i=1}^{m}\big(\bm{a}_{i}^{\top}\bm{x}^{t}\big)^{3}\bm{a}_{i,\perp}=3\eta\left\Vert \bm{x}_{\perp}^{t}\right\Vert _{2}^{2}\bm{x}_{\perp}^{t}+3\eta\big(x_{\|}^{t}\big)^{2}\bm{x}_{\perp}^{t}+\bm{r}_{2}=3\eta\left\Vert \bm{x}^{t}\right\Vert _{2}^{2}\bm{x}_{\perp}^{t}+\bm{r}_{2},
\]
where the residual term $\bm{r}_{2}$ is bounded by 
\[
\left\Vert \bm{r}_{2}\right\Vert _{2}\lesssim\eta\sqrt{\frac{n\log^{3}m}{m}}\left\Vert \bm{x}^{t}\right\Vert _{2}.
\]
\item Putting the above estimates on $\bm{v}_{1}$ and $\bm{v}_{2}$ together,
we conclude that 
\begin{align*}
\bm{x}_{\perp}^{t+1} & =\bm{x}_{\perp}^{t}+\bm{v}_{1}-\bm{v}_{2}=\left\{ 1+\eta\left(1-3\left\Vert \bm{x}^{t}\right\Vert _{2}^{2}\right)\right\} \bm{x}_{\perp}^{t}+\bm{r}_{3},
\end{align*}
where $\bm{r}_{3}=\bm{r}_{1}-\bm{r}_{2}$ satisfies 
\[
\left\Vert \bm{r}_{3}\right\Vert _{2}\lesssim\eta\sqrt{\frac{n\log^{3}m}{m}}\left\Vert \bm{x}^{t}\right\Vert _{2}.
\]
Plug in the definitions of $\alpha_{t}$ and $\beta_{t}$ to realize that 
\begin{align*}
\beta_{t+1} & =\left\{ 1+\eta\left(1-3\left\Vert \bm{x}^{t}\right\Vert _{2}^{2}\right)\right\} \beta_{t}+O\left(\eta\sqrt{\frac{n\log^{3}m}{m}}\left(\alpha_{t}+\beta_{t}\right)\right)\\
 & =\left\{ 1+\eta\left(1-3\left\Vert \bm{x}^{t}\right\Vert _{2}^{2}\right)+\eta\rho_{t}\right\} \beta_{t},
\end{align*}
for some $|\rho_{t}|\ll\frac{1}{\log m}$, with the proviso that $m\gg n\log^{5}m$
and
\begin{equation}
\sqrt{\frac{n\log^{3}m}{m}}\alpha_{t}\ll\frac{1}{\log m}\beta_{t}.\label{eq:beta-t-as-long-as-1}
\end{equation}
The last condition holds true since 
\[
\sqrt{\frac{n\log^{3}m}{m}}\alpha_{t}\lesssim\sqrt{\frac{n\log^{3}m}{m}}\frac{1}{\log^{5}m}\ll\frac{1}{\log m}\ll\frac{1}{\log m}\beta_{t},
\]
where we have used the assumption $\alpha_{t}\lesssim\frac{1}{\log^{5}m}$
(see definition of $T_{0}$), the sample size condition $m\gg n\log^{11}m$
and the induction hypothesis $\beta_{t}\geq c_{5}$ (see (\ref{eq:induction-norm-size})).
This finishes the proof. 
\end{itemize}

\section{Proof of Lemma \ref{lemma:xt-xt-l}\label{sec:Proof-of-Lemma-xt-xt-l}}

It follows from the gradient update rules (\ref{eq:gradient_update-WF})
and (\ref{eq:gradient-update-leave-WF}) that 
\begin{align}
\bm{x}^{t+1}-\bm{x}^{t+1,(l)} & =\bm{x}^{t}-\eta\nabla f\left(\bm{x}^{t}\right)-\left(\bm{x}^{t,(l)}-\eta\nabla f^{(l)}\big(\bm{x}^{t,(l)}\big)\right)\nonumber \\
 & =\bm{x}^{t}-\eta\nabla f\left(\bm{x}^{t}\right)-\left(\bm{x}^{t,(l)}-\eta\nabla f\big(\bm{x}^{t,(l)}\big)\right)+\eta\nabla f^{(l)}\big(\bm{x}^{t,(l)}\big)-\eta\nabla f\big(\bm{x}^{t,(l)}\big)\nonumber \\
 & =\left[ \bm{I}_{n}-\eta\int_{0}^{1}\nabla^{2}f\left(\bm{x}\left(\tau\right)\right)\mathrm{d}\tau\right]
  \big(\bm{x}^{t}-\bm{x}^{t,(l)}\big)-\frac{\eta}{m}\left[\big(\bm{a}_{l}^{\top}\bm{x}^{t,(l)}\big)^{2}-\big(\bm{a}_{l}^{\top}\bm{x}^{\natural}\big)^{2}\right]\bm{a}_{l}\bm{a}_{l}^{\top}\bm{x}^{t,(l)},\label{eq:xt-xt-l}
\end{align}
where we denote $\bm{x}\left(\tau\right):=\bm{x}^{t}+\tau\left(\bm{x}^{t,(l)}-\bm{x}^{t}\right)$.
Here, the last identity is due to the fundamental theorem of calculus
\cite[Chapter XIII, Theorem 4.2]{lang1993real}. 
\begin{itemize}
\item Controlling the first term in (\ref{eq:xt-xt-l}) requires exploring
the properties of the Hessian $\nabla^{2}f\left(\bm{x}\right)$. Since
$\bm{x}\left(\tau\right)$ lies between $\bm{x}^{t}$ and $\bm{x}^{t,\left(l\right)}$
for any $0\leq\tau\leq1$, we have the following two consequences
\begin{equation}
\left\Vert \bm{x}_{\perp}\left(\tau\right)\right\Vert _{2}\leq\left\Vert \bm{x}\left(\tau\right)\right\Vert _{2}\leq2C_{5}\qquad\text{and}\qquad\max_{1\leq i\leq m}\left|\bm{a}_{i}^{\top}\bm{x}\left(\tau\right)\right|\lesssim\sqrt{\log m}\lesssim\sqrt{\log m}\left\Vert \bm{x}\left(\tau\right)\right\Vert _{2}.\label{eq:x-tau-implication}
\end{equation}
To see the left statement in (\ref{eq:x-tau-implication}), one has 
\[
\left\Vert \bm{x}\left(\tau\right)\right\Vert _{2}\leq\max\{ \Vert \bm{x}^{t}\Vert _{2},\Vert \bm{x}^{t,(l)}\Vert _{2}\} \leq2C_{5},
\]
where the last inequality follows from (\ref{eq:induction-norm-size}) and (\ref{eq:consequence-norm-xt-l}). Moreover, for
the right statement in (\ref{eq:x-tau-implication}), one can see 
\begin{align*}
\max_{1\leq i\leq m}\left|\bm{a}_{i}^{\top}\bm{x}\left(\tau\right)\right| & =\max_{1\leq i\leq m}\left|\left(1-\tau\right)\bm{a}_{i}^{\top}\bm{x}^{t}+\tau\bm{a}_{i}^{\top}\bm{x}^{t,\left(l\right)}\right|\\
 & \leq\max_{1\leq i\leq m}\left|\left(1-\tau\right)\bm{a}_{i}^{\top}\bm{x}^{t}+\tau\bm{a}_{i}^{\top}\left(\bm{x}^{t,\left(l\right)}-\bm{x}^{t,\left(i\right)}\right)+\tau\bm{a}_{i}^{\top}\bm{x}^{t,\left(i\right)}\right|\\
 & \leq\left(1-\tau\right)\max_{1\leq i\leq m}\left|\bm{a}_{i}^{\top}\bm{x}^{t}\right|+\tau\max_{1\leq i\leq m}\left|\bm{a}_{i}^{\top}\left(\bm{x}^{t,\left(l\right)}-\bm{x}^{t,\left(i\right)}\right)\right|+\tau\max_{1\leq i\leq m}\left|\bm{a}_{i}^{\top}\bm{x}^{t,\left(i\right)}\right|.
\end{align*}
In view of (\ref{eq:consequence-incoherence-original}), we have
\[
\max_{1\leq i\leq m}\left|\bm{a}_{i}^{\top}\bm{x}^{t}\right|\lesssim\log m.
\]
Furthermore, due to the independence between $\bm{a}_{i}$ and $\bm{x}^{t,\left(i\right)}$,
one can apply standard Gaussian concentration inequalities to show
that with high probability 
\[
\max_{1\leq i\leq m}\left|\bm{a}_{i}^{\top}\bm{x}^{t,\left(i\right)}\right|\lesssim\sqrt{\log m}.
\]
We are left with the middle term, which can be controlled using Cauchy-Schwarz
as follows: 
\begin{align*}
\max_{1\leq i\leq m}\left|\bm{a}_{i}^{\top}\left(\bm{x}^{t,\left(l\right)}-\bm{x}^{t,\left(i\right)}\right)\right| & \leq\max_{i}\left\Vert \bm{a}_{i}\right\Vert _{2}\max_{1\leq i\leq m}\left\Vert \bm{x}^{t,\left(l\right)}-\bm{x}^{t,\left(i\right)}\right\Vert _{2}\\
 & \overset{\text{(i)}}{\lesssim}\sqrt{n}\cdot\max_{1\leq i\leq m}\left(\left\Vert \bm{x}^{t,\left(l\right)}-\bm{x}^{t}\right\Vert _{2}+\left\Vert \bm{x}^{t}-\bm{x}^{t,\left(i\right)}\right\Vert _{2}\right)\\
 & \lesssim\sqrt{n}\cdot\max_{1\leq i\leq m}\left\Vert \bm{x}^{t}-\bm{x}^{t,\left(i\right)}\right\Vert _{2}\\
 & \overset{\text{(ii)}}{\lesssim}\sqrt{n}\cdot\beta_{t}\left(1+\frac{1}{\log m}\right)^{t}C_{1}\eta\frac{\sqrt{n\log^{5}m}}{m}\\
 & \overset{\text{(iii)}}{\lesssim}\sqrt{\log m}.
\end{align*}
Here, the inequality (i) arises from the concentration of norm of
Gaussian vectors and the triangle inequality; the relation (ii) holds
because of the induction hypothesis (\ref{eq:induction-xt-l}) and the last inequality
(iii) holds true under the sample size condition $m\gg n\log^{2}m$. 

In addition, combining (\ref{eq:induction-norm-size}) and (\ref{eq:consequence-upper-bounds})
leads to
\begin{equation}
\left\Vert \bm{x}_{\perp}\left(\tau\right)\right\Vert _{2}\geq\left\Vert \bm{x}_{\perp}^{t}\right\Vert _{2}-\big\|\bm{x}^{t}-\bm{x}^{t,(l)}\big\|_{2}\geq c_{5}-\log^{-1}m\geq c_{5}/4.\label{eq:x-tau-implication-LB}
\end{equation}
Armed with these bounds, we can readily apply Lemma \ref{lemma:Hessian-UB-Stage1}
to obtain 
\begin{align*}
 & \left\Vert \bm{I}_{n}-\eta\nabla^{2}f\left(\bm{x}\left(\tau\right)\right)-\left\{ \left(1-3\eta\left\Vert \bm{x}\left(\tau\right)\right\Vert _{2}^{2}+\eta\right)\bm{I}_{n}+2\eta\bm{x}^{\natural}\bm{x}^{\natural\top}-6\eta\bm{x}\left(\tau\right)\bm{x}\left(\tau\right)^{\top}\right\} \right\Vert \\
 & \quad\lesssim\eta\sqrt{\frac{n\log^{3}m}{m}}\max\left\{ \left\Vert \bm{x}(\tau)\right\Vert _{2}^{2},1\right\} \lesssim\eta\sqrt{\frac{n\log^{3}m}{m}}.
\end{align*}
This further allows one to derive 
\begin{align*}
 & \left\Vert \left\{ \bm{I}_{n}-\eta\nabla^{2}f\left(\bm{x}\left(\tau\right)\right)\right\} \big(\bm{x}^{t}-\bm{x}^{t,(l)}\big)\right\Vert _{2}\\
 & \leq\left\Vert \left\{ \left(1-3\eta\left\Vert \bm{x}\left(\tau\right)\right\Vert _{2}^{2}+\eta\right)\bm{I}_{n}+2\eta\bm{x}^{\natural}\bm{x}^{\natural\top}-6\eta\bm{x}\left(\tau\right)\bm{x}\left(\tau\right)^{\top}\right\} \big(\bm{x}^{t}-\bm{x}^{t,(l)}\big)\right\Vert _{2}+O\left(\eta\sqrt{\frac{n\log^{3}m}{m}}\big\|\bm{x}^{t}-\bm{x}^{t,(l)}\big\|_{2}\right).
\end{align*}
Moreover, we can apply the triangle inequality to get 
\begin{align*}
 & \left\Vert \left\{ \left(1-3\eta\left\Vert \bm{x}\left(\tau\right)\right\Vert _{2}^{2}+\eta\right)\bm{I}_{n}+2\eta\bm{x}^{\natural}\bm{x}^{\natural\top}-6\eta\bm{x}\left(\tau\right)\bm{x}\left(\tau\right)^{\top}\right\} \big(\bm{x}^{t}-\bm{x}^{t,(l)}\big)\right\Vert _{2}\\
 & \quad\leq\left\Vert \left\{ \left(1-3\eta\left\Vert \bm{x}\left(\tau\right)\right\Vert _{2}^{2}+\eta\right)\bm{I}_{n}-6\eta\bm{x}\left(\tau\right)\bm{x}\left(\tau\right)^{\top}\right\} \big(\bm{x}^{t}-\bm{x}^{t,(l)}\big)\right\Vert _{2}+\left\Vert 2\eta\bm{x}^{\natural}\bm{x}^{\natural\top}\big(\bm{x}^{t}-\bm{x}^{t,(l)}\big)\right\Vert _{2}\\
 & \quad\overset{\left(\text{i}\right)}{=}\left\Vert \left\{ \left(1-3\eta\left\Vert \bm{x}\left(\tau\right)\right\Vert _{2}^{2}+\eta\right)\bm{I}_{n}-6\eta\bm{x}\left(\tau\right)\bm{x}\left(\tau\right)^{\top}\right\} \big(\bm{x}^{t}-\bm{x}^{t,(l)}\big)\right\Vert _{2}+2\eta\big|x_{\parallel}^{t}-x_{\parallel}^{t,\left(l\right)}\big|\\
 & \quad\overset{\left(\text{ii}\right)}{\leq}\left(1-3\eta\left\Vert \bm{x}\left(\tau\right)\right\Vert _{2}^{2}+\eta\right)\big\|\bm{x}^{t}-\bm{x}^{t,(l)}\big\|_{2}+2\eta\big|x_{\parallel}^{t}-x_{\parallel}^{t,\left(l\right)}\big|,
\end{align*}
where (i) holds since $\bm{x}^{\natural\top}\left(\bm{x}^{t}-\bm{x}^{t,(l)}\right)=x_{\parallel}^{t}-x_{\parallel}^{t,\left(l\right)}$
(recall that $\bm{x}^{\natural}=\bm{e}_{1}$) and (ii) follows from
the fact that 
\[
\left(1-3\eta\left\Vert \bm{x}\left(\tau\right)\right\Vert _{2}^{2}+\eta\right)\bm{I}_{n}-6\eta\bm{x}\left(\tau\right)\bm{x}\left(\tau\right)^{\top}\succeq\bm{0},
\]
as long as $\eta\leq1/\left(18C_{5}\right)$. This further reveals
\begin{align}
 & \left\Vert \left\{ \bm{I}_{n}-\eta\nabla^{2}f\left(\bm{x}\left(\tau\right)\right)\right\} \big(\bm{x}^{t}-\bm{x}^{t,(l)}\big)\right\Vert _{2}\nonumber \\
 & \quad\leq\left\{ 1+\eta\left(1-3\left\Vert \bm{x}\left(\tau\right)\right\Vert _{2}^{2}\right)+O\left(\eta\sqrt{\frac{n\log^{3}m}{m}}\right)\right\} \big\|\bm{x}^{t}-\bm{x}^{t,(l)}\big\|_{2}+2\eta\big|x_{\parallel}^{t}-x_{\parallel}^{t,\left(l\right)}\big|\nonumber \\
 & \quad\overset{\left(\text{i}\right)}{\leq}\left\{ 1+\eta\left(1-3\left\Vert \bm{x}^{t}\right\Vert _{2}^{2}\right)+O\left(\eta\big\|\bm{x}^{t}-\bm{x}^{t,(l)}\big\|_{2}\right)+O\left(\eta\sqrt{\frac{n\log^{3}m}{m}}\right)\right\} \big\|\bm{x}^{t}-\bm{x}^{t,(l)}\big\|_{2}+2\eta\big|x_{\parallel}^{t}-x_{\parallel}^{t,\left(l\right)}\big|\nonumber \\
 & \quad\overset{\left(\text{ii}\right)}{\leq}\left\{ 1+\eta\left(1-3\left\Vert \bm{x}^{t}\right\Vert _{2}^{2}\right)+\eta\phi_{1}\right\} \big\|\bm{x}^{t}-\bm{x}^{t,(l)}\big\|_{2}+2\eta\big|x_{\parallel}^{t}-x_{\parallel}^{t,\left(l\right)}\big|,\label{eq:xt-xt-l-signal-main}
\end{align}
for some $|\phi_{1}|\ll\frac{1}{\log m}$, where (i) holds since for
every $0\leq\tau\leq1$ 
\begin{align}
\left\Vert \bm{x}\left(\tau\right)\right\Vert _{2}^{2} & \geq\left\Vert \bm{x}^{t}\right\Vert _{2}^{2}-\left|\left\Vert \bm{x}\left(\tau\right)\right\Vert _{2}^{2}-\left\Vert \bm{x}^{t}\right\Vert _{2}^{2}\right|\nonumber \\
 & \geq\left\Vert \bm{x}^{t}\right\Vert _{2}^{2}-\left\Vert \bm{x}\left(\tau\right)-\bm{x}^{t}\right\Vert _{2}\left(\left\Vert \bm{x}\left(\tau\right)\right\Vert _{2}+\left\Vert \bm{x}^{t}\right\Vert _{2}\right)\nonumber \\
 & \geq\left\Vert \bm{x}^{t}\right\Vert _{2}^{2}-O\left(\big\|\bm{x}^{t}-\bm{x}^{t,(l)}\big\|_{2}\right),\label{eq:lower-bound-xt-xt-l}
\end{align}
and (ii) comes from the fact (\ref{eq:1-logm-xt-xt-l}) and the sample
complexity assumption $m\gg n\log^{5}m$. 
\item We then move on to the second term of (\ref{eq:xt-xt-l}). Observing
that $\bm{x}^{t,(l)}$ is statistically independent of $\bm{a}_{l}$,
we have 
\begin{align}
\left\Vert \frac{1}{m}\left[\big(\bm{a}_{l}^{\top}\bm{x}^{t,(l)}\big)^{2}-\big(\bm{a}_{l}^{\top}\bm{x}^{\natural}\big)^{2}\right]\bm{a}_{l}\bm{a}_{l}^{\top}\bm{x}^{t,(l)}\right\Vert _{2} & \leq\frac{1}{m}\left[\big(\bm{a}_{l}^{\top}\bm{x}^{t,(l)}\big)^{2}+\big(\bm{a}_{l}^{\top}\bm{x}^{\natural}\big)^{2}\right]\left|\bm{a}_{l}^{\top}\bm{x}^{t,(l)}\right|\left\Vert \bm{a}_{l}\right\Vert _{2}\nonumber \\
 & \lesssim\frac{1}{m}\cdot\log m\cdot\sqrt{\log m}\left\Vert \bm{x}^{t,(l)}\right\Vert _{2}\cdot\sqrt{n}\nonumber \\
 & \asymp\frac{\sqrt{n\log^{3}m}}{m}\left\Vert \bm{x}^{t,(l)}\right\Vert _{2},\label{eq:xt-xt-l-signal-remainder}
\end{align}
where the second inequality makes use of the facts (\ref{eq:max-a-i-1}),
(\ref{eq:max-a-i-norm}) and the standard concentration results 
\[
\left|\bm{a}_{l}^{\top}\bm{x}^{t,(l)}\right|\lesssim\sqrt{\log m}\big\|\bm{x}^{t,(l)}\big\|_{2}\lesssim\sqrt{\log m}.
\]
\item Combine the previous two bounds (\ref{eq:xt-xt-l-signal-main}) and
(\ref{eq:xt-xt-l-signal-remainder}) to reach
\begin{align*}
 & \big\|\bm{x}^{t+1}-\bm{x}^{t+1,(l)}\big\|_{2}\\
 & \quad\leq\left\Vert \left\{ \bm{I}-\eta\int_{0}^{1}\nabla^{2}f\left(\bm{x}(\tau)\right)\mathrm{d}\tau\right\} \big(\bm{x}^{t}-\bm{x}^{t,(l)}\big)\right\Vert _{2}+\eta\left\Vert \frac{1}{m}\left[\left(\bm{a}_{l}^{\top}\bm{x}^{t,(l)}\right)^{2}-\left(\bm{a}_{l}^{\top}\bm{x}^{\natural}\right)^{2}\right]\bm{a}_{l}\bm{a}_{l}^{\top}\bm{x}^{t,(l)}\right\Vert _{2}\\
 & \quad\leq\left\{ 1+\eta\left(1-3\left\Vert \bm{x}^{t}\right\Vert _{2}^{2}\right)+\eta\phi_{1}\right\} \big\|\bm{x}^{t}-\bm{x}^{t,(l)}\big\|_{2}+2\eta\left|x_{\parallel}^{t}-x_{\parallel}^{t,\left(l\right)}\right|+O\left(\frac{\eta\sqrt{n\log^{3}m}}{m}\left\Vert \bm{x}^{t,(l)}\right\Vert _{2}\right)\\
 & \quad\leq\left\{ 1+\eta\left(1-3\left\Vert \bm{x}^{t}\right\Vert _{2}^{2}\right)+\eta\phi_{1}\right\} \big\|\bm{x}^{t}-\bm{x}^{t,(l)}\big\|_{2}+O\left(\frac{\eta\sqrt{n\log^{3}m}}{m}\right)\left\Vert \bm{x}^{t}\right\Vert _{2}+2\eta\big|x_{\parallel}^{t}-x_{\parallel}^{t,\left(l\right)}\big|.
\end{align*}
Here the last relation holds because of the triangle inequality 
\[
\big\|\bm{x}^{t,(l)}\big\|_{2}\leq\left\Vert \bm{x}^{t}\right\Vert _{2}+\big\|\bm{x}^{t}-\bm{x}^{t,(l)}\big\|_{2}
\]
and the fact that $\frac{\sqrt{n\log^{3}m}}{m}\ll\frac{1}{\log m}$.

In view of the inductive hypotheses (\ref{subeq:induction}), one
has 
\begin{align*}
\big\|\bm{x}^{t+1}-\bm{x}^{t+1,(l)}\big\|_{2} & \overset{\left(\text{i}\right)}{\leq}\left\{ 1+\eta\left(1-3\left\Vert \bm{x}^{t}\right\Vert _{2}^{2}\right)+\eta\phi_{1}\right\} \beta_{t}\left(1+\frac{1}{\log m}\right)^{t}C_{1}\frac{\sqrt{n\log^{5}m}}{m}\\
 & \quad\quad+O\left(\frac{\eta\sqrt{n\log^{3}m}}{m}\right)\left(\alpha_{t}+\beta_{t}\right)+2\eta\alpha_{t}\left(1+\frac{1}{\log m}\right)^{t}C_{2}\frac{\sqrt{n\log^{12}m}}{m}\\
 & \quad\overset{\left(\text{ii}\right)}{\leq}\left\{ 1+\eta\left(1-3\left\Vert \bm{x}^{t}\right\Vert _{2}^{2}\right)+\eta\phi_{2}\right\} \beta_{t}\left(1+\frac{1}{\log m}\right)^{t}C_{1}\frac{\sqrt{n\log^{5}m}}{m}\\
 & \quad\overset{\left(\text{iii}\right)}{\leq}\beta_{t+1}\left(1+\frac{1}{\log m}\right)^{t+1}C_{1}\frac{\sqrt{n\log^{5}m}}{m},
\end{align*}
for some $|\phi_{2}|\ll\frac{1}{\log m}$, where the inequality (i)
uses $\|\bm{x}^{t}\|_{2}\leq|x_{\|}^{t}|+\|\bm{x}_{\perp}^{t}\|_{2}=\alpha_{t}+\beta_{t}$,
the inequality (ii) holds true as long as \begin{subequations} 
\begin{align}
\frac{\sqrt{n\log^{3}m}}{m}\left(\alpha_{t}+\beta_{t}\right) & \ll\frac{1}{\log m}\beta_{t}\left(1+\frac{1}{\log m}\right)^{t}C_{1}\frac{\sqrt{n\log^{5}m}}{m},\label{eq:xt-xt-l-norm-as-long-as-1}\\
\alpha_{t}C_{2}\frac{\sqrt{n\log^{12}m}}{m} & \ll\frac{1}{\log m}\beta_{t}C_{1}\frac{\sqrt{n\log^{5}m}}{m}.\label{eq:xt-xt-l-norm-as-long-as-2}
\end{align}
\end{subequations}Here, the first condition (\ref{eq:xt-xt-l-norm-as-long-as-1})
comes from the fact that for $t<T_{0}$, 
\[
\frac{\sqrt{n\log^{3}m}}{m}\left(\alpha_{t}+\beta_{t}\right)\asymp\frac{\sqrt{n\log^{3}m}}{m}\beta_{t}\ll C_{1}\beta_{t}\frac{\sqrt{n\log^{3}m}}{m},
\]
as long as $C_{1}>0$ is sufficiently large. The other one (\ref{eq:xt-xt-l-norm-as-long-as-2})
is valid owing to the assumption of Phase I $\alpha_{t}\ll1/\log^{5}m.$
Regarding the inequality (iii) above, it is easy to check that for
some $|\phi_{3}|\ll\frac{1}{\log m}$,
\begin{align}
\left\{ 1+\eta\left(1-3\left\Vert \bm{x}^{t}\right\Vert _{2}^{2}\right)+\eta\phi_{2}\right\} \beta_{t} & =\left\{ \frac{\beta_{t+1}}{\beta_{t}}+\eta\phi_{3}\right\} \beta_{t}\nonumber \\
 & =\left\{ \frac{\beta_{t+1}}{\beta_{t}}+\eta O\left(\frac{\beta_{t+1}}{\beta_{t}}\phi_{3}\right)\right\} \beta_{t}\nonumber \\
 & \leq\beta_{t+1}\left(1+\frac{1}{\log m}\right),\label{eq:beta-t-iterative-reasoning}
\end{align}
where the second equality holds since $\frac{\beta_{t+1}}{\beta_{t}}\asymp1$
in Phase I. 
\end{itemize}
The proof is completed by applying the union bound over all $1\leq l\leq m$. 

\section{Proof of Lemma \ref{lemma:xt-xt-l-signal}\label{sec:Proof-of-Lemma-xt-xt-l-signal}}

Use (\ref{eq:xt-xt-l}) once again to deduce 
\begin{align}
 & x_{\parallel}^{t+1}-x_{\parallel}^{t+1,\left(l\right)}=\bm{e}_{1}^{\top}\big(\bm{x}^{t+1}-\bm{x}^{t+1,\left(l\right)}\big)\nonumber \\
 & \quad=\bm{e}_{1}^{\top}\left\{ \bm{I}_{n}-\eta\int_{0}^{1}\nabla^{2}f\left(\bm{x}\left(\tau\right)\right)\mathrm{d}\tau\right\} \big(\bm{x}^{t}-\bm{x}^{t,(l)}\big)-\frac{\eta}{m}\left[\big(\bm{a}_{l}^{\top}\bm{x}^{t,(l)}\big)^{2}-\big(\bm{a}_{l}^{\top}\bm{x}^{\natural}\big)^{2}\right]\bm{e}_{1}^{\top}\bm{a}_{l}\bm{a}_{l}^{\top}\bm{x}^{t,(l)}\nonumber \\
 & \quad=\left[x_{\parallel}^{t}-x_{\parallel}^{t,\left(l\right)}-\eta\int_{0}^{1}\bm{e}_{1}^{\top}\nabla^{2}f\left(\bm{x}\left(\tau\right)\right)\mathrm{d}\tau\big(\bm{x}^{t}-\bm{x}^{t,(l)}\big)\right]-\frac{\eta}{m}\left[\big(\bm{a}_{l}^{\top}\bm{x}^{t,(l)}\big)^{2}-\big(\bm{a}_{l}^{\top}\bm{x}^{\natural}\big)^{2}\right]a_{l,1}\bm{a}_{l}^{\top}\bm{x}^{t,(l)},\label{eq:xt-xt-l-signal}
\end{align}
where we recall that $\bm{x}\left(\tau\right):=\bm{x}^{t}+\tau\left(\bm{x}^{t,\left(l\right)}-\bm{x}^{t}\right)$.

We begin by controlling the second term of (\ref{eq:xt-xt-l-signal}).
Applying similar arguments as in (\ref{eq:xt-xt-l-signal-remainder})
yields
\[
\left|\frac{1}{m}\left[\big(\bm{a}_{l}^{\top}\bm{x}^{t,(l)}\big)^{2}-\left(\bm{a}_{l}^{\top}\bm{x}^{\natural}\right)^{2}\right]a_{l,1}\bm{a}_{l}^{\top}\bm{x}^{t,(l)}\right|\lesssim\frac{\log^{2}m}{m}\left\Vert \bm{x}^{t,(l)}\right\Vert _{2}
\]
with probability at least $1-O\left(m^{-10}\right)$.

Regarding the first term in (\ref{eq:xt-xt-l-signal}), one can use
the decomposition 
\[
\bm{a}_{i}^{\top}\big(\bm{x}^{t}-\bm{x}^{t,(l)}\big)=a_{i,1}\big(x_{\parallel}^{t}-x_{\parallel}^{t,\left(l\right)}\big)+\bm{a}_{i,\perp}^{\top}\big(\bm{x}_{\perp}^{t}-\bm{x}_{\perp}^{t,(l)}\big)
\]
to obtain that
\begin{align*}
\bm{e}_{1}^{\top}\nabla^{2}f\left(\bm{x}\left(\tau\right)\right)\big(\bm{x}^{t}-\bm{x}^{t,\left(l\right)}\big) & =\frac{1}{m}\sum_{i=1}^{m}\left[3\big(\bm{a}_{i}^{\top}\bm{x}\left(\tau\right)\big)^{2}-\big(\bm{a}_{i}^{\top}\bm{x}^{\natural}\big)^{2}\right]a_{i,1}\bm{a}_{i}^{\top}\big(\bm{x}^{t}-\bm{x}^{t,(l)}\big)\\
 & =\underbrace{\frac{1}{m}\sum_{i=1}^{m}\left[3\big(\bm{a}_{i}^{\top}\bm{x}\left(\tau\right)\big)^{2}-\big(\bm{a}_{i}^{\top}\bm{x}^{\natural}\big)^{2}\right]a_{i,1}^{2}\big(x_{\parallel}^{t}-x_{\parallel}^{t,\left(l\right)}\big)}_{:=\omega_{1}\left(\tau\right)}\\
 & \quad+\underbrace{\frac{1}{m}\sum_{i=1}^{m}\left[3\big(\bm{a}_{i}^{\top}\bm{x}\left(\tau\right)\big)^{2}-\big(\bm{a}_{i}^{\top}\bm{x}^{\natural}\big)^{2}\right]a_{i,1}\bm{a}_{i,\perp}^{\top}\big(\bm{x}_{\perp}^{t}-\bm{x}_{\perp}^{t,\left(l\right)}\big)}_{:=\omega_{2}\left(\tau\right)}.
\end{align*}
In the sequel, we shall bound $\omega_{1}\left(\tau\right)$ and $\omega_{2}\left(\tau\right)$
separately. 
\begin{itemize}
\item For $\omega_{1}\left(\tau\right)$, Lemma \ref{lemma:hessian-concentration}
together with the facts (\ref{eq:x-tau-implication}) tells us that
\begin{align*}
 & \left|\frac{1}{m}\sum_{i=1}^{m}\left[3\left(\bm{a}_{i}^{\top}\bm{x}\left(\tau\right)\right)^{2}-\left(\bm{a}_{i}^{\top}\bm{x}^{\natural}\right)^{2}\right]a_{i,1}^{2}-\left[3\left\Vert \bm{x}\left(\tau\right)\right\Vert _{2}^{2}+6\big|x_{\parallel}\left(\tau\right)\big|^{2}-3\right]\right|\\
 & \quad\lesssim\sqrt{\frac{n\log^{3}m}{m}}\max\left\{ \left\Vert \bm{x}\left(\tau\right)\right\Vert _{2}^{2},1\right\} \lesssim\sqrt{\frac{n\log^{3}m}{m}},
\end{align*}
which further implies that 
\[
\omega_{1}\left(\tau\right)=\left(3\left\Vert \bm{x}\left(\tau\right)\right\Vert _{2}^{2}+6\big|x_{\parallel}\left(\tau\right)\big|^{2}-3\right)\big(x_{\parallel}^{t}-x_{\parallel}^{t,\left(l\right)}\big)+r_{1}
\]
with the residual term $r_{1}$ obeying 
\begin{align*}
\left|r_{1}\right| & =O\left(\sqrt{\frac{n\log^{3}m}{m}}\big|x_{\parallel}^{t}-x_{\parallel}^{t,\left(l\right)}\big|\right).
\end{align*}
\item We proceed to bound $\omega_{2}\left(\tau\right)$. Decompose $w_{2}\left(\tau\right)$
into the following: 
\[
\omega_{2}\left(\tau\right)=\underbrace{\frac{3}{m}\sum_{i=1}^{m}\left(\bm{a}_{i}^{\top}\bm{x}\left(\tau\right)\right)^{2}a_{i,1}\bm{a}_{i,\perp}^{\top}\big(\bm{x}_{\perp}^{t}-\bm{x}_{\perp}^{t,\left(l\right)}\big)}_{:=\omega_{3}\left(\tau\right)}-\underbrace{\frac{1}{m}\sum_{i=1}^{m}\left(\bm{a}_{i}^{\top}\bm{x}^{\natural}\right)^{2}a_{i,1}\bm{a}_{i,\perp}^{\top}\big(\bm{x}_{\perp}^{t}-\bm{x}_{\perp}^{t,\left(l\right)}\big)}_{:=\omega_{4}}.
\]
\begin{itemize}
\item The term $\omega_{4}$ is relatively simple to control. Recognizing
$\left(\bm{a}_{i}^{\top}\bm{x}^{\natural}\right)^{2}=a_{i,1}^{2}$
and $a_{i,1}=\xi_{i}\left|a_{i,1}\right|$, one has 
\begin{align*}
\omega_{4} & =\frac{1}{m}\sum_{i=1}^{m}\xi_{i}\left|a_{i,1}\right|^{3}\bm{a}_{i,\perp}^{\top}\big(\bm{x}_{\perp}^{t,\text{sgn}}-\bm{x}_{\perp}^{t,\text{sgn},\left(l\right)}\big)+\frac{1}{m}\sum_{i=1}^{m}\xi_{i}\left|a_{i,1}\right|^{3}\bm{a}_{i,\perp}^{\top}\big(\bm{x}_{\perp}^{t}-\bm{x}_{\perp}^{t,\left(l\right)}-\bm{x}_{\perp}^{t,\text{sgn}}+\bm{x}_{\perp}^{t,\text{sgn},\left(l\right)}\big).
\end{align*}
In view of the independence between $\xi_{i}$ and $\left|a_{i,1}\right|^{3}\bm{a}_{i,\perp}^{\top}\big(\bm{x}_{\perp}^{t,\text{sgn}}-\bm{x}_{\perp}^{t,\text{sgn},\left(l\right)}\big)$,
one can thus invoke the Bernstein inequality (see Lemma \ref{lemma:bernstein})
to obtain 
\begin{equation}
\left|\frac{1}{m}\sum_{i=1}^{m}\xi_{i}\left|a_{i,1}\right|^{3}\bm{a}_{i,\perp}^{\top}\big(\bm{x}_{\perp}^{t,\text{sgn}}-\bm{x}_{\perp}^{t,\text{sgn},\left(l\right)}\big)\right|\lesssim\frac{1}{m}\left(\sqrt{V_{1}\log m}+B_{1}\log m\right)\label{eq:xt-l-bernstein-1}
\end{equation}
with probability at least $1-O\left(m^{-10}\right)$, where 
\begin{align*}
V_{1} & :=\sum_{i=1}^{m}\left|a_{i,1}\right|^{6}\left|\bm{a}_{i,\perp}^{\top}\big(\bm{x}_{\perp}^{t,\text{sgn}}-\bm{x}_{\perp}^{t,\text{sgn},\left(l\right)}\big)\right|^{2}\qquad\text{and}\qquad B_{1}:=\max_{1\leq i\leq m}\left|a_{i,1}\right|^{3}\left|\bm{a}_{i,\perp}^{\top}\big(\bm{x}_{\perp}^{t,\text{sgn}}-\bm{x}_{\perp}^{t,\text{sgn},\left(l\right)}\big)\right|.
\end{align*}
Regarding $V_{1}$, one can combine the fact (\ref{eq:max-a-i-1})
and Lemma \ref{lemma:hessian-concentration} to reach 
\begin{align*}
\frac{1}{m}V_{1} & \lesssim\log^{2}m\left(\bm{x}_{\perp}^{t,\text{sgn}}-\bm{x}_{\perp}^{t,\text{sgn},\left(l\right)}\right)^{\top}\left(\frac{1}{m}\sum_{i=1}^{m}\left|a_{i,1}\right|^{2}\bm{a}_{i,\perp}\bm{a}_{i,\perp}^{\top}\right)\big(\bm{x}_{\perp}^{t,\text{sgn}}-\bm{x}_{\perp}^{t,\text{sgn},\left(l\right)}\big)\\
 & \lesssim\log^{2}m\left\Vert \bm{x}_{\perp}^{t,\text{sgn}}-\bm{x}_{\perp}^{t,\text{sgn},\left(l\right)}\right\Vert _{2}^{2}.
\end{align*}
For $B_{1}$, it is easy to check from (\ref{eq:max-a-i-1}) and (\ref{eq:max-a-i-norm})
that 
\[
B_{1}\lesssim\sqrt{n\log^{3}m}\left\Vert \bm{x}_{\perp}^{t,\text{sgn}}-\bm{x}_{\perp}^{t,\text{sgn},\left(l\right)}\right\Vert _{2}.
\]
The previous two bounds taken collectively yield 
\begin{align}
\left|\frac{1}{m}\sum_{i=1}^{m}\xi_{i}\left|a_{i,1}\right|^{3}\bm{a}_{i,\perp}^{\top}\left(\bm{x}_{\perp}^{t,\text{sgn}}-\bm{x}_{\perp}^{t,\text{sgn},\left(l\right)}\right)\right| & \lesssim\sqrt{\frac{\log^{3}m}{m}\left\Vert \bm{x}_{\perp}^{t,\text{sgn}}-\bm{x}_{\perp}^{t,\text{sgn},\left(l\right)}\right\Vert _{2}^{2}}+\frac{\sqrt{n\log^{5}m}}{m}\left\Vert \bm{x}_{\perp}^{t,\text{sgn}}-\bm{x}_{\perp}^{t,\text{sgn},\left(l\right)}\right\Vert _{2}\nonumber \\
 & \lesssim\sqrt{\frac{\log^{3}m}{m}}\left\Vert \bm{x}_{\perp}^{t,\text{sgn}}-\bm{x}_{\perp}^{t,\text{sgn},\left(l\right)}\right\Vert _{2},\label{eq:xt-xt-l-signal-w4-1}
\end{align}
as long as $m\gtrsim n\log^{2}m$. The second term in $\omega_{4}$
can be simply controlled by the Cauchy-Schwarz inequality and Lemma
\ref{lemma:hessian-concentration}. Specifically, we have 
\begin{align}
 & \left|\frac{1}{m}\sum_{i=1}^{m}\xi_{i}\left|a_{i,1}\right|^{3}\bm{a}_{i,\perp}^{\top}\left(\bm{x}_{\perp}^{t}-\bm{x}_{\perp}^{t,\left(l\right)}-\bm{x}_{\perp}^{t,\text{sgn}}+\bm{x}_{\perp}^{t,\text{sgn},\left(l\right)}\right)\right|\nonumber \\
 & \leq\left\Vert \frac{1}{m}\sum_{i=1}^{m}\xi_{i}\left|a_{i,1}\right|^{3}\bm{a}_{i,\perp}^{\top}\right\Vert _{2}\left\Vert \bm{x}_{\perp}^{t}-\bm{x}_{\perp}^{t,\left(l\right)}-\bm{x}_{\perp}^{t,\text{sgn}}+\bm{x}_{\perp}^{t,\text{sgn},\left(l\right)}\right\Vert _{2}\nonumber \\
 & \lesssim\sqrt{\frac{n\log^{3}m}{m}}\left\Vert \bm{x}_{\perp}^{t}-\bm{x}_{\perp}^{t,\left(l\right)}-\bm{x}_{\perp}^{t,\text{sgn}}+\bm{x}_{\perp}^{t,\text{sgn},\left(l\right)}\right\Vert _{2},\label{eq:xt-xt-l-signal-w4-2}
\end{align}
where the second relation holds due to Lemma \ref{lemma:hessian-concentration}.
Take the preceding two bounds (\ref{eq:xt-xt-l-signal-w4-1}) and
(\ref{eq:xt-xt-l-signal-w4-2}) collectively to conclude that 
\begin{align*}
\left|\omega_{4}\right| & \lesssim\sqrt{\frac{\log^{3}m}{m}}\left\Vert \bm{x}_{\perp}^{t,\text{sgn}}-\bm{x}_{\perp}^{t,\text{sgn},\left(l\right)}\right\Vert _{2}+\sqrt{\frac{n\log^{3}m}{m}}\left\Vert \bm{x}_{\perp}^{t}-\bm{x}_{\perp}^{t,\left(l\right)}-\bm{x}_{\perp}^{t,\text{sgn}}+\bm{x}_{\perp}^{t,\text{sgn},\left(l\right)}\right\Vert _{2}\\
 & \lesssim\sqrt{\frac{\log^{3}m}{m}}\left\Vert \bm{x}_{\perp}^{t}-\bm{x}_{\perp}^{t,\left(l\right)}\right\Vert _{2}+\sqrt{\frac{n\log^{3}m}{m}}\left\Vert \bm{x}_{\perp}^{t}-\bm{x}_{\perp}^{t,\left(l\right)}-\bm{x}_{\perp}^{t,\text{sgn}}+\bm{x}_{\perp}^{t,\text{sgn},\left(l\right)}\right\Vert _{2},
\end{align*}
where the second line follows from the triangle inequality 
\[
\left\Vert \bm{x}_{\perp}^{t,\text{sgn}}-\bm{x}_{\perp}^{t,\text{sgn},\left(l\right)}\right\Vert _{2}\leq\left\Vert \bm{x}_{\perp}^{t}-\bm{x}_{\perp}^{t,\left(l\right)}\right\Vert _{2}+\left\Vert \bm{x}_{\perp}^{t}-\bm{x}_{\perp}^{t,\left(l\right)}-\bm{x}_{\perp}^{t,\text{sgn}}+\bm{x}_{\perp}^{t,\text{sgn},\left(l\right)}\right\Vert _{2}
\]
and the fact that $\sqrt{\frac{\log^{3}m}{m}}\leq\sqrt{\frac{n\log^{3}m}{m}}$. 
\item It remains to bound $\omega_{3}\left(\tau\right)$. To this end, one
can decompose 
\begin{align*}
\omega_{3}\left(\tau\right) & =\underbrace{\frac{3}{m}\sum_{i=1}^{m}\left[\left(\bm{a}_{i}^{\top}\bm{x}\left(\tau\right)\right)^{2}-\big(\bm{a}_{i}^{\text{sgn}\top}\bm{x}\left(\tau\right)\big)^{2}\right]a_{i,1}\bm{a}_{i,\perp}^{\top}\big(\bm{x}_{\perp}^{t}-\bm{x}_{\perp}^{t,\left(l\right)}\big)}_{:=\theta_{1}\left(\tau\right)}\\
 & \quad+\underbrace{\frac{3}{m}\sum_{i=1}^{m}\left[\left(\bm{a}_{i}^{\text{sgn}\top}\bm{x}\left(\tau\right)\right)^{2}-\left(\bm{a}_{i}^{\text{sgn}\top}\bm{x}^{\text{sgn}}\left(\tau\right)\right)^{2}\right]a_{i,1}\bm{a}_{i,\perp}^{\top}\big(\bm{x}_{\perp}^{t}-\bm{x}_{\perp}^{t,\left(l\right)}\big)}_{:=\theta_{2}\left(\tau\right)}\\
 & \quad+\underbrace{\frac{3}{m}\sum_{i=1}^{m}\left(\bm{a}_{i}^{\text{sgn}\top}\bm{x}^{\text{sgn}}\left(\tau\right)\right)^{2}a_{i,1}\bm{a}_{i,\perp}^{\top}\big(\bm{x}_{\perp}^{t,\text{sgn}}-\bm{x}_{\perp}^{t,\text{sgn},\left(l\right)}\big)}_{:=\theta_{3}\left(\tau\right)}\\
 & \quad+\underbrace{\frac{3}{m}\sum_{i=1}^{m}\left(\bm{a}_{i}^{\text{sgn}\top}\bm{x}^{\text{sgn}}\left(\tau\right)\right)^{2}a_{i,1}\bm{a}_{i,\perp}^{\top}\left(\bm{x}_{\perp}^{t}-\bm{x}_{\perp}^{t,\left(l\right)}-\bm{x}_{\perp}^{t,\text{sgn}}+\bm{x}_{\perp}^{t,\text{sgn},\left(l\right)}\right)}_{:=\theta_{4}\left(\tau\right)},
\end{align*}
where we denote $\bm{x}^{\text{sgn}}\left(\tau\right)=\bm{x}^{t,\text{sgn}}+\tau\left(\bm{x}^{t,\text{sgn},\left(l\right)}-\bm{x}^{t,\text{sgn}}\right)$.
A direct consequence of (\ref{subeq:consequence-norm}) and (\ref{subeq:consequence-incoherence})
is that
\begin{equation}
\left|\bm{a}_{i}^{\text{sgn}\top}\bm{x}^{\text{sgn}}\left(\tau\right)\right|\lesssim\sqrt{\log m}.\label{eq:x-tau-sgn-consequence}
\end{equation}
Recalling that $\xi_{i}=\mathrm{sgn}\left(a_{i,1}\right)$ and $\xi_{i}^{\mathrm{sgn}}=\mathrm{sgn}\left(a_{i,1}^{\mathrm{sgn}}\right)$,
one has 
\begin{align*}
\bm{a}_{i}^{\top}\bm{x}\left(\tau\right)-\bm{a}_{i}^{\mathrm{sgn}\top}\bm{x}\left(\tau\right) & =\left(\xi_{i}-\xi_{i}^{\mathrm{sgn}}\right)\left|a_{i,1}\right|x_{\parallel}\left(\tau\right),\\
\bm{a}_{i}^{\top}\bm{x}\left(\tau\right)+\bm{a}_{i}^{\mathrm{sgn}\top}\bm{x}\left(\tau\right) & =\left(\xi_{i}+\xi_{i}^{\mathrm{sgn}}\right)\left|a_{i,1}\right|x_{\parallel}\left(\tau\right)+2\bm{a}_{i,\perp}^{\top}\bm{x}_{\perp}\left(\tau\right),
\end{align*}
which implies that 
\begin{align}
\left(\bm{a}_{i}^{\top}\bm{x}\left(\tau\right)\right)^{2}-\left(\bm{a}_{i}^{\text{sgn}\top}\bm{x}\left(\tau\right)\right)^{2} & =\left(\bm{a}_{i}^{\top}\bm{x}\left(\tau\right)-\bm{a}_{i}^{\mathrm{sgn}\top}\bm{x}\left(\tau\right)\right)\cdot\left(\bm{a}_{i}^{\top}\bm{x}\left(\tau\right)+\bm{a}_{i}^{\mathrm{sgn}\top}\bm{x}\left(\tau\right)\right)\nonumber \\
 & =\left(\xi_{i}-\xi_{i}^{\mathrm{sgn}}\right)\left|a_{i,1}\right|x_{\parallel}\left(\tau\right)\left\{ \left(\xi_{i}+\xi_{i}^{\mathrm{sgn}}\right)\left|a_{i,1}\right|x_{\parallel}\left(\tau\right)+2\bm{a}_{i,\perp}^{\top}\bm{x}_{\perp}(\tau)\right\} \nonumber \\
 & =2\left(\xi_{i}-\xi_{i}^{\mathrm{sgn}}\right)\left|a_{i,1}\right|x_{\parallel}\left(\tau\right)\bm{a}_{i,\perp}^{\top}\bm{x}_{\perp}\left(\tau\right)\label{eq:identity-a-a-sgn}
\end{align}
owing to the identity $\left(\xi_{i}-\xi_{i}^{\mathrm{sgn}}\right)\left(\xi_{i}+\xi_{i}^{\mathrm{sgn}}\right)=\xi_{i}^{2}-\left(\xi_{i}^{\mathrm{sgn}}\right)^{2}=0$.
In light of (\ref{eq:identity-a-a-sgn}), we have 
\begin{align*}
\theta_{1}\left(\tau\right) & =\frac{6}{m}\sum_{i=1}^{m}\left(\xi_{i}-\xi_{i}^{\mathrm{sgn}}\right)\left|a_{i,1}\right|x_{\parallel}\left(\tau\right)\bm{a}_{i,\perp}^{\top}\bm{x}_{\perp}\left(\tau\right)a_{i,1}\bm{a}_{i,\perp}^{\top}\left(\bm{x}_{\perp}^{t}-\bm{x}_{\perp}^{t,\left(l\right)}\right)\\
 & =6x_{\parallel}\left(\tau\right)\cdot\bm{x}_{\perp}^{\top}\left(\tau\right)\left[\frac{1}{m}\sum_{i=1}^{m}\left(1-\xi_{i}\xi_{i}^{\text{sgn}}\right)\left|a_{i,1}\right|^{2}\bm{a}_{i,\perp}\bm{a}_{i,\perp}^{\top}\right]\left(\bm{x}_{\perp}^{t}-\bm{x}_{\perp}^{t,\left(l\right)}\right).
\end{align*}
First note that 
\begin{equation}
\left\Vert \frac{1}{m}\sum_{i=1}^{m}\left(1-\xi_{i}\xi_{i}^{\text{sgn}}\right)\left|a_{i,1}\right|^{2}\bm{a}_{i,\perp}\bm{a}_{i,\perp}^{\top}\right\Vert \leq2\left\Vert \frac{1}{m}\sum_{i=1}^{m}\left|a_{i,1}\right|^{2}\bm{a}_{i,\perp}\bm{a}_{i,\perp}^{\top}\right\Vert \lesssim1,\label{eq:spectral-upper-bound}
\end{equation}
where the last relation holds due to Lemma \ref{lemma:hessian-concentration}.
This results in the following upper bound on $\theta_{1}\left(\tau\right)$
\[
\left|\theta_{1}\left(\tau\right)\right|\lesssim\left|x_{\parallel}\left(\tau\right)\right|\left\Vert \bm{x}_{\perp}\left(\tau\right)\right\Vert _{2}\left\Vert \bm{x}_{\perp}^{t}-\bm{x}_{\perp}^{t,\left(l\right)}\right\Vert _{2}\lesssim\left|x_{\parallel}\left(\tau\right)\right|\left\Vert \bm{x}_{\perp}^{t}-\bm{x}_{\perp}^{t,\left(l\right)}\right\Vert _{2},
\]
where we have used the fact that $\left\Vert \bm{x}_{\perp}\left(\tau\right)\right\Vert _{2}\lesssim1$
(see (\ref{eq:x-tau-implication})). Regarding $\theta_{2}\left(\tau\right)$,
one obtains 
\[
\theta_{2}\left(\tau\right)=\frac{3}{m}\sum_{i=1}^{m}\left[\bm{a}_{i}^{\text{sgn}\top}\left(\bm{x}\left(\tau\right)-\bm{x}^{\text{sgn}}\left(\tau\right)\right)\right]\left[\bm{a}_{i}^{\text{sgn}\top}\left(\bm{x}\left(\tau\right)+\bm{x}^{\text{sgn}}\left(\tau\right)\right)\right]a_{i,1}\bm{a}_{i,\perp}^{\top}\left(\bm{x}_{\perp}^{t}-\bm{x}_{\perp}^{t,\left(l\right)}\right).
\]
Apply the Cauchy-Schwarz inequality to reach 
\begin{align*}
\left|\theta_{2}\left(\tau\right)\right| & \lesssim\sqrt{\frac{1}{m}\sum_{i=1}^{m}\left[\bm{a}_{i}^{\text{sgn}\top}\left(\bm{x}\left(\tau\right)-\bm{x}^{\text{sgn}}\left(\tau\right)\right)\right]^{2}\left[\bm{a}_{i}^{\text{sgn}\top}\left(\bm{x}\left(\tau\right)+\bm{x}^{\text{sgn}}\left(\tau\right)\right)\right]^{2}}\sqrt{\frac{1}{m}\sum_{i=1}^{m}\left|a_{i,1}\right|^{2}\left[\bm{a}_{i,\perp}^{\top}\left(\bm{x}_{\perp}^{t}-\bm{x}_{\perp}^{t,\left(l\right)}\right)\right]^{2}}\\
 & \lesssim\sqrt{\frac{1}{m}\sum_{i=1}^{m}\left[\bm{a}_{i}^{\text{sgn}\top}\left(\bm{x}\left(\tau\right)-\bm{x}^{\text{sgn}}\left(\tau\right)\right)\right]^{2}\log m}\cdot\left\Vert \bm{x}_{\perp}^{t}-\bm{x}_{\perp}^{t,\left(l\right)}\right\Vert _{2}\\
 & \lesssim\sqrt{\log m}\left\Vert \bm{x}\left(\tau\right)-\bm{x}^{\text{sgn}}\left(\tau\right)\right\Vert _{2}\left\Vert \bm{x}_{\perp}^{t}-\bm{x}_{\perp}^{t,\left(l\right)}\right\Vert _{2}.
\end{align*}
Here the second relation comes from Lemma \ref{lemma:hessian-concentration}
and the fact that 
\[
\left|\bm{a}_{i}^{\text{sgn}\top}\left(\bm{x}\left(\tau\right)+\bm{x}^{\text{sgn}}\left(\tau\right)\right)\right|\lesssim\sqrt{\log m}.
\]
When it comes to $\theta_{3}\left(\tau\right)$, we need to exploit
the independence between 
\[
\left\{ \xi_{i}\right\} \quad\text{and}\quad\big(\bm{a}_{i}^{\text{sgn}\top}\bm{x}^{\text{sgn}}\left(\tau\right)\big)^{2}\big|a_{i,1}\big|\bm{a}_{i,\perp}^{\top}\big(\bm{x}_{\perp}^{t,\text{sgn}}-\bm{x}_{\perp}^{t,\text{sgn},\left(l\right)}\big).
\]
Similar to (\ref{eq:xt-l-bernstein-1}), one can obtain 
\[
\left|\theta_{3}\left(\tau\right)\right|\lesssim\frac{1}{m}\left(\sqrt{V_{2}\log m}+B_{2}\log m\right)
\]
with probability at least $1-O\left(m^{-10}\right)$, where 
\begin{align*}
V_{2} & :=\sum_{i=1}^{m}\left(\bm{a}_{i}^{\text{sgn}\top}\bm{x}^{\text{sgn}}\left(\tau\right)\right)^{4}\left|a_{i,1}\right|^{2}\left|\bm{a}_{i,\perp}^{\top}\left(\bm{x}_{\perp}^{t,\text{sgn}}-\bm{x}_{\perp}^{t,\text{sgn},\left(l\right)}\right)\right|^{2}\\
B_{2} & :=\max_{1\leq i\leq m}\left(\bm{a}_{i}^{\text{sgn}\top}\bm{x}^{\text{sgn}}\left(\tau\right)\right)^{2}\left|a_{i,1}\right|\left|\bm{a}_{i,\perp}^{\top}\left(\bm{x}_{\perp}^{t,\text{sgn}}-\bm{x}_{\perp}^{t,\text{sgn},\left(l\right)}\right)\right|.
\end{align*}
It is easy to see from Lemma \ref{lemma:hessian-concentration}, (\ref{eq:x-tau-sgn-consequence}),
(\ref{eq:max-a-i-1}) and (\ref{eq:max-a-i-norm}) that 
\[
V_{2}\lesssim m\log^{2}m\left\Vert \bm{x}_{\perp}^{t,\text{sgn}}-\bm{x}_{\perp}^{t,\text{sgn},\left(l\right)}\right\Vert _{2}^{2}\qquad\text{and}\qquad B_{2}\lesssim\sqrt{n\log^{3}m}\left\Vert \bm{x}_{\perp}^{t,\text{sgn}}-\bm{x}_{\perp}^{t,\text{sgn},\left(l\right)}\right\Vert _{2},
\]
which implies 
\[
\left|\theta_{3}\left(\tau\right)\right|\lesssim\left(\sqrt{\frac{\log^{3}m}{m}}+\frac{\sqrt{n\log^{5}m}}{m}\right)\left\Vert \bm{x}_{\perp}^{t,\text{sgn}}-\bm{x}_{\perp}^{t,\text{sgn},\left(l\right)}\right\Vert _{2}\asymp\sqrt{\frac{\log^{3}m}{m}}\left\Vert \bm{x}_{\perp}^{t,\text{sgn}}-\bm{x}_{\perp}^{t,\text{sgn},\left(l\right)}\right\Vert _{2}
\]
with the proviso that $m\gtrsim n\log^{2}m$. We are left with $\theta_{4}\left(\tau\right)$.
Invoking Cauchy-Schwarz inequality,
\begin{align*}
\left|\theta_{4}\left(\tau\right)\right| & \lesssim\sqrt{\frac{1}{m}\sum_{i=1}^{m}\left(\bm{a}_{i}^{\text{sgn}\top}\bm{x}^{\text{sgn}}\left(\tau\right)\right)^{4}}\sqrt{\frac{1}{m}\sum_{i=1}^{m}\left|a_{i,1}\right|^{2}\left[\bm{a}_{i,\perp}^{\top}\left(\bm{x}_{\perp}^{t}-\bm{x}_{\perp}^{t,\left(l\right)}-\bm{x}_{\perp}^{t,\text{sgn}}+\bm{x}_{\perp}^{t,\text{sgn},\left(l\right)}\right)\right]^{2}}\\
 & \lesssim\sqrt{\frac{1}{m}\sum_{i=1}^{m}\left(\bm{a}_{i}^{\text{sgn}\top}\bm{x}^{\text{sgn}}\left(\tau\right)\right)^{2}\log m}\cdot\left\Vert \bm{x}_{\perp}^{t}-\bm{x}_{\perp}^{t,\left(l\right)}-\bm{x}_{\perp}^{t,\text{sgn}}+\bm{x}_{\perp}^{t,\text{sgn},\left(l\right)}\right\Vert _{2}\\
 & \lesssim\sqrt{\log m}\left\Vert \bm{x}_{\perp}^{t}-\bm{x}_{\perp}^{t,\left(l\right)}-\bm{x}_{\perp}^{t,\text{sgn}}+\bm{x}_{\perp}^{t,\text{sgn},\left(l\right)}\right\Vert _{2},
\end{align*}
where we have used the fact that $\big|\bm{a}_{i}^{\text{sgn}\top}\bm{x}^{\text{sgn}}\left(\tau\right)\big|\lesssim\sqrt{\log m}$.
In summary, we have obtained 
\begin{align*}
\left|\omega_{3}\left(\tau\right)\right| & \lesssim\left\{ \left|x_{\parallel}\left(\tau\right)\right|+\sqrt{\log m}\left\Vert \bm{x}\left(\tau\right)-\bm{x}^{\text{sgn}}\left(\tau\right)\right\Vert _{2}\right\} \left\Vert \bm{x}_{\perp}^{t}-\bm{x}_{\perp}^{t,\left(l\right)}\right\Vert _{2}\\
 & \quad+\sqrt{\frac{\log^{3}m}{m}}\left\Vert \bm{x}_{\perp}^{t,\text{sgn}}-\bm{x}_{\perp}^{t,\text{sgn},\left(l\right)}\right\Vert _{2}+\sqrt{\log m}\left\Vert \bm{x}_{\perp}^{t}-\bm{x}_{\perp}^{t,\left(l\right)}-\bm{x}_{\perp}^{t,\text{sgn}}+\bm{x}_{\perp}^{t,\text{sgn},\left(l\right)}\right\Vert _{2}\\
 & \lesssim\left\{ \left|x_{\parallel}\left(\tau\right)\right|+\sqrt{\log m}\left\Vert \bm{x}\left(\tau\right)-\bm{x}^{\text{sgn}}\left(\tau\right)\right\Vert _{2}+\sqrt{\frac{\log^{3}m}{m}}\right\} \left\Vert \bm{x}_{\perp}^{t}-\bm{x}_{\perp}^{t,\left(l\right)}\right\Vert _{2}\\
 & \quad+\sqrt{\log m}\left\Vert \bm{x}_{\perp}^{t}-\bm{x}_{\perp}^{t,\left(l\right)}-\bm{x}_{\perp}^{t,\text{sgn}}+\bm{x}_{\perp}^{t,\text{sgn},\left(l\right)}\right\Vert _{2},
\end{align*}
where the last inequality utilizes the triangle inequality 
\[
\left\Vert \bm{x}_{\perp}^{t,\text{sgn}}-\bm{x}_{\perp}^{t,\text{sgn},\left(l\right)}\right\Vert _{2}\leq\left\Vert \bm{x}_{\perp}^{t}-\bm{x}_{\perp}^{t,\left(l\right)}\right\Vert _{2}+\left\Vert \bm{x}_{\perp}^{t}-\bm{x}_{\perp}^{t,\left(l\right)}-\bm{x}_{\perp}^{t,\text{sgn}}+\bm{x}_{\perp}^{t,\text{sgn},\left(l\right)}\right\Vert _{2}
\]
and the fact that $\sqrt{\frac{\log^{3}m}{m}}\leq\sqrt{\log m}$.
This together with the bound for $\omega_{4}\left(\tau\right)$ gives
\begin{align*}
\left|\omega_{2}\left(\tau\right)\right| & \leq\left|\omega_{3}\left(\tau\right)\right|+\left|\omega_{4}\left(\tau\right)\right|\\
 & \lesssim\left\{ \left|x_{\parallel}\left(\tau\right)\right|+\sqrt{\log m}\left\Vert \bm{x}\left(\tau\right)-\bm{x}^{\text{sgn}}\left(\tau\right)\right\Vert _{2}+\sqrt{\frac{\log^{3}m}{m}}\right\} \left\Vert \bm{x}_{\perp}^{t}-\bm{x}_{\perp}^{t,\left(l\right)}\right\Vert _{2}\\
 & \quad+\sqrt{\log m}\left\Vert \bm{x}_{\perp}^{t}-\bm{x}_{\perp}^{t,\left(l\right)}-\bm{x}_{\perp}^{t,\text{sgn}}+\bm{x}_{\perp}^{t,\text{sgn},\left(l\right)}\right\Vert _{2},
\end{align*}
as long as $m\gg n\log^{2}m$. 
\end{itemize}
\item Combine the bounds to arrive at 
\begin{align*}
x_{\parallel}^{t+1}-x_{\parallel}^{t+1,\left(l\right)} & =\left\{ 1+3\eta\left(1-\int_{0}^{1}\left\Vert \bm{x}\left(\tau\right)\right\Vert _{2}^{2}\mathrm{d}\tau\right)+\eta\cdot O\left(\left|x_{\parallel}\left(\tau\right)\right|^{2}+\sqrt{\frac{n\log^{3}m}{m}}\right)\right\} \left(x_{\parallel}^{t}-x_{\parallel}^{t,\left(l\right)}\right)\\
 & \quad+O\left(\eta\frac{\log^{2}m}{m}\left\Vert \bm{x}^{t,(l)}\right\Vert _{2}\right)+O\left(\eta\sqrt{\log m}\left\Vert \bm{x}_{\perp}^{t}-\bm{x}_{\perp}^{t,\left(l\right)}-\bm{x}_{\perp}^{t,\text{sgn}}+\bm{x}_{\perp}^{t,\text{sgn},\left(l\right)}\right\Vert _{2}\right)\\
 & \quad+O\left(\eta\sup_{0\leq\tau\leq1}\left\{ \left|x_{\parallel}\left(\tau\right)\right|+\sqrt{\log m}\left\Vert \bm{x}\left(\tau\right)-\bm{x}^{\text{sgn}}\left(\tau\right)\right\Vert _{2}+\sqrt{\frac{\log^{3}m}{m}}\right\} \left\Vert \bm{x}_{\perp}^{t}-\bm{x}_{\perp}^{t,\left(l\right)}\right\Vert _{2}\right).
\end{align*}
To simplify the above bound, notice that for the last term, for any
$t<T_{0}\lesssim\log n$ and $0\leq\tau\leq1$, one has 
\begin{align*}
\left|x_{\parallel}\left(\tau\right)\right| & \leq\left|x_{\parallel}^{t}\right|+\left|x_{\parallel}^{t,\left(l\right)}-x_{\parallel}^{t}\right|\leq\alpha_{t}+\alpha_{t}\left(1+\frac{1}{\log m}\right)^{t}C_{2}\frac{\sqrt{n\log^{12}m}}{m}\lesssim\alpha_{t},
\end{align*}
as long as $m\gg\sqrt{n\log^{12}m}$. Similarly, one can show that
\begin{align*}
\sqrt{\log m}\left\Vert \bm{x}\left(\tau\right)-\bm{x}^{\text{sgn}}\left(\tau\right)\right\Vert _{2} & \leq\sqrt{\log m}\left(\left\Vert \bm{x}^{t}-\bm{x}^{t,\text{sgn}}\right\Vert _{2}+\left\Vert \bm{x}^{t}-\bm{x}^{t,\left(l\right)}-\bm{x}^{t,\text{sgn}}+\bm{x}^{t,\text{sgn},\left(l\right)}\right\Vert _{2}\right)\\
 & \lesssim\alpha_{t}\sqrt{\log m}\left(\sqrt{\frac{n\log^{5}m}{m}}+\frac{\sqrt{n\log^{9}m}}{m}\right)\lesssim\alpha_{t},
\end{align*}
with the proviso that $m\gg n\log^{6}m$. Therefore, we can further
obtain 
\begin{align*}
\left|x_{\parallel}^{t+1}-x_{\parallel}^{t+1,\left(l\right)}\right| & \leq\left\{ 1+3\eta\left(1-\left\Vert \bm{x}^{t}\right\Vert _{2}^{2}\right)+\eta\cdot O\left(\left\Vert \bm{x}^{t}-\bm{x}^{t,\left(l\right)}\right\Vert _{2}+\left|x_{\parallel}^{t}\right|^{2}+\sqrt{\frac{n\log^{3}m}{m}}\right)\right\} \left|x_{\parallel}^{t}-x_{\parallel}^{t,\left(l\right)}\right|\\
 & \quad+O\left(\eta\frac{\log^{2}m}{m}\left\Vert \bm{x}^{t}\right\Vert _{2}\right)+O\left(\eta\sqrt{\log m}\left\Vert \bm{x}_{\perp}^{t}-\bm{x}_{\perp}^{t,\left(l\right)}-\bm{x}_{\perp}^{t,\text{sgn}}+\bm{x}_{\perp}^{t,\text{sgn},\left(l\right)}\right\Vert _{2}\right)\\
 & \quad+O\left(\eta\alpha_{t}\left\Vert \bm{x}^{t}-\bm{x}^{t,\left(l\right)}\right\Vert _{2}\right)\\
 & \leq\left\{ 1+3\eta\left(1-\left\Vert \bm{x}^{t}\right\Vert _{2}^{2}\right)+\eta\phi_{1}\right\} \left|x_{\parallel}^{t}-x_{\parallel}^{t,\left(l\right)}\right|+O\left(\eta\alpha_{t}\left\Vert \bm{x}^{t}-\bm{x}^{t,\left(l\right)}\right\Vert _{2}\right)\\
 & \quad+O\left(\eta\frac{\log^{2}m}{m}\left\Vert \bm{x}^{t}\right\Vert _{2}\right)+O\left(\eta\sqrt{\log m}\left\Vert \bm{x}_{\perp}^{t}-\bm{x}_{\perp}^{t,\left(l\right)}-\bm{x}_{\perp}^{t,\text{sgn}}+\bm{x}_{\perp}^{t,\text{sgn},\left(l\right)}\right\Vert _{2}\right)
\end{align*}
for some $|\phi_{1}|\ll\frac{1}{\log m}$. Here the last inequality
comes from the sample complexity $m\gg n\log^{5}m$, the assumption
$\alpha_{t}\ll\frac{1}{\log^{5}m}$ and the fact (\ref{eq:1-logm-xt-xt-l}).
Given the inductive hypotheses (\ref{subeq:induction}), we can conclude
\begin{align*}
\left|x_{\parallel}^{t+1}-x_{\parallel}^{t+1,\left(l\right)}\right| & \leq\left\{ 1+3\eta\left(1-\left\Vert \bm{x}^{t}\right\Vert _{2}^{2}\right)+\eta\phi_{1}\right\} \alpha_{t}\left(1+\frac{1}{\log m}\right)^{t}C_{2}\frac{\sqrt{n\log^{12}m}}{m}\\
 & \quad+O\left(\frac{\eta\log^{2}m}{m}\left(\alpha_{t}+\beta_{t}\right)\right)+O\left(\eta\sqrt{\log m}\cdot\alpha_{t}\left(1+\frac{1}{\log m}\right)^{t}C_{4}\frac{\sqrt{n\log^{9}m}}{m}\right)\\
 & \quad+O\left(\eta\alpha_{t}\beta_{t}\left(1+\frac{1}{\log m}\right)^{t}C_{1}\frac{\sqrt{n\log^{5}m}}{m}\right)\\
 & \overset{\left(\text{i}\right)}{\leq}\left\{ 1+3\eta\left(1-\left\Vert \bm{x}^{t}\right\Vert _{2}^{2}\right)+\eta\phi_{2}\right\} \alpha_{t}\left(1+\frac{1}{\log m}\right)^{t}C_{2}\frac{\sqrt{n\log^{12}m}}{m}\\
 & \overset{\left(\text{ii}\right)}{\leq}\alpha_{t+1}\left(1+\frac{1}{\log m}\right)^{t+1}C_{2}\frac{\sqrt{n\log^{12}m}}{m}
\end{align*}
for some $|\phi_{2}|\ll\frac{1}{\log m}$. Here, the inequality (i)
holds true as long as \begin{subequations} 
\begin{align}
\frac{\log^{2}m}{m}\left(\alpha_{t}+\beta_{t}\right) & \ll\frac{1}{\log m}\alpha_{t}C_{2}\frac{\sqrt{n\log^{12}m}}{m}\label{eq:xt-xt-l-as-long-as-1}\\
\sqrt{\log m}C_{4}\frac{\sqrt{n\log^{9}m}}{m} & \ll\frac{1}{\log m}C_{2}\frac{\sqrt{n\log^{12}m}}{m}\label{eq:xt-xt-1-as-long-as-2}\\
\beta_{t}C_{1}\frac{\sqrt{n\log^{5}m}}{m} & \ll\frac{1}{\log m}C_{2}\frac{\sqrt{n\log^{12}m}}{m},\label{eq:xt-xt-1-as-long-as-5}
\end{align}
\end{subequations}where the first condition (\ref{eq:xt-xt-l-as-long-as-1})
is satisfied since (according to Lemma \ref{lemma:iterative})
\[
\alpha_{t}+\beta_{t}\lesssim\beta_{t}\lesssim\alpha_{t}\sqrt{n\log m}.
\]
The second condition (\ref{eq:xt-xt-1-as-long-as-2}) holds as long
as $C_{2}\gg C_{4}$. The third one (\ref{eq:xt-xt-1-as-long-as-5})
holds trivially. Moreover, the second inequality (ii) follows from
the same reasoning as in (\ref{eq:beta-t-iterative-reasoning}). Specifically,
we have for some $|\phi_{3}|\ll\frac{1}{\log m}$,
\begin{align*}
\left\{ 1+3\eta\left(1-\left\Vert \bm{x}^{t}\right\Vert _{2}^{2}\right)+\eta\phi_{2}\right\} \alpha_{t} & =\left\{ \frac{\alpha_{t+1}}{\alpha_{t}}+\eta\phi_{3}\right\} \alpha_{t}\\
 & \leq\left\{ \frac{\alpha_{t+1}}{\alpha_{t}}+\eta O\left(\frac{\alpha_{t+1}}{\alpha_{t}}\phi_{3}\right)\right\} \alpha_{t}\\
 & \leq\alpha_{t+1}\left(1+\frac{1}{\log m}\right),
\end{align*}
as long as $\frac{\alpha_{t+1}}{\alpha_{t}}\asymp1$. 
\end{itemize}
The proof is completed by applying the union bound over all $1\leq l\leq m$. 

\section{Proof of Lemma \ref{lemma:xt-xt-sgn}\label{sec:Proof-of-Lemma-xt-xt-sgn}}

By similar calculations as in (\ref{eq:xt-xt-l}), we get the identity
\begin{align}
\bm{x}^{t+1}-\bm{x}^{t+1,\mathrm{sgn}} & =\left\{ \bm{I}-\eta\int_{0}^{1}\nabla^{2}f\left(\tilde{\bm{x}}\left(\tau\right)\right)\mathrm{d}\tau\right\} \left(\bm{x}^{t}-\bm{x}^{t,\mathrm{sgn}}\right)+\eta\left(\nabla f^{\mathrm{sgn}}\left(\bm{x}^{t,\mathrm{sgn}}\right)-\nabla f\left(\bm{x}^{t,\mathrm{sgn}}\right)\right),\label{eq:difference-xt-sgn}
\end{align}
where $\tilde{\bm{x}}\left(\tau\right):=\bm{x}^{t}+\tau\left(\bm{x}^{t,\mathrm{sgn}}-\bm{x}^{t}\right)$.
The first term satisfies
\begin{align}
 & \left\Vert \left\{ \bm{I}-\eta\int_{0}^{1}\nabla^{2}f\left(\tilde{\bm{x}}\left(\tau\right)\right)\mathrm{d}\tau\right\} \left(\bm{x}^{t}-\bm{x}^{t,\mathrm{sgn}}\right)\right\Vert \nonumber \\
 & \quad\leq\left\Vert \bm{I}-\eta\int_{0}^{1}\nabla^{2}f\left(\tilde{\bm{x}}\left(\tau\right)\right)\mathrm{d}\tau\right\Vert \left\Vert \bm{x}^{t}-\bm{x}^{t,\mathrm{sgn}}\right\Vert _{2}\nonumber \\
 & \quad\leq\left\{ 1+3\eta\left(1-\int_{0}^{1}\left\Vert \tilde{\bm{x}}\left(\tau\right)\right\Vert _{2}^{2}\mathrm{d}\tau\right)+O\left(\eta\sqrt{\frac{n\log^{3}m}{m}}\right)\right\} \left\Vert \bm{x}^{t}-\bm{x}^{t,\mathrm{sgn}}\right\Vert _{2},\label{eq:xt-xt-sgn-main}
\end{align}
where we have invoked Lemma \ref{lemma:Hessian-UB-Stage1}. Furthermore,
one has for all $0\leq\tau\leq1$ 
\begin{align*}
\left\Vert \tilde{\bm{x}}\left(\tau\right)\right\Vert _{2}^{2} & \geq\left\Vert \bm{x}^{t}\right\Vert _{2}^{2}-\left|\left\Vert \tilde{\bm{x}}\left(\tau\right)\right\Vert _{2}^{2}-\left\Vert \bm{x}^{t}\right\Vert _{2}^{2}\right|\\
 & \geq\left\Vert \bm{x}^{t}\right\Vert _{2}^{2}-\left\Vert \tilde{\bm{x}}\left(\tau\right)-\bm{x}^{t}\right\Vert _{2}\left(\left\Vert \tilde{\bm{x}}\left(\tau\right)\right\Vert _{2}+\left\Vert \bm{x}^{t}\right\Vert _{2}\right)\\
 & \geq\left\Vert \bm{x}^{t}\right\Vert _{2}^{2}-\left\Vert \bm{x}^{t}-\bm{x}^{t,\text{sgn}}\right\Vert _{2}\left(\left\Vert \tilde{\bm{x}}\left(\tau\right)\right\Vert _{2}+\left\Vert \bm{x}^{t}\right\Vert _{2}\right).
\end{align*}
This combined with the norm conditions $\left\Vert \bm{x}^{t}\right\Vert _{2}\lesssim1$,
$\left\Vert \tilde{\bm{x}}\left(\tau\right)\right\Vert _{2}\lesssim1$
reveals that 
\[
\min_{0\leq\tau\leq1}\left\Vert \tilde{\bm{x}}\left(\tau\right)\right\Vert _{2}^{2}\geq\left\Vert \bm{x}^{t}\right\Vert _{2}^{2}+O\left(\left\Vert \bm{x}^{t}-\bm{x}^{t,\text{sgn}}\right\Vert _{2}\right),
\]
and hence we can further upper bound (\ref{eq:xt-xt-sgn-main}) as
\begin{align*}
 & \left\Vert \left\{ \bm{I}-\eta\int_{0}^{1}\nabla^{2}f\left(\tilde{\bm{x}}\left(\tau\right)\right)\mathrm{d}\tau\right\} \left(\bm{x}^{t}-\bm{x}^{t,\mathrm{sgn}}\right)\right\Vert \\
 & \quad\leq\left\{ 1+3\eta\left(1-\left\Vert \bm{x}^{t}\right\Vert _{2}^{2}\right)+\eta\cdot O\left(\left\Vert \bm{x}^{t}-\bm{x}^{t,\text{sgn}}\right\Vert _{2}+\sqrt{\frac{n\log^{3}m}{m}}\right)\right\} \left\Vert \bm{x}^{t}-\bm{x}^{t,\mathrm{sgn}}\right\Vert _{2}\\
 & \quad\leq\left\{ 1+3\eta\left(1-\left\Vert \bm{x}^{t}\right\Vert _{2}^{2}\right)+\eta\phi_{1}\right\} \left\Vert \bm{x}^{t}-\bm{x}^{t,\mathrm{sgn}}\right\Vert _{2},
\end{align*}
for some $|\phi_{1}|\ll\frac{1}{\log m}$, where the last line follows
from $m\gg n\log^{5}m$ and the fact (\ref{eq:1-logm-xt-xt-sgn}).

The remainder of this subsection is largely devoted to controlling
the gradient difference $\nabla f^{\mathrm{sgn}}\big(\bm{x}^{t,\mathrm{sgn}}\big)-\nabla f\big(\bm{x}^{t,\mathrm{sgn}}\big)$
in (\ref{eq:difference-xt-sgn}). By the definition of $f^{\mathrm{sgn}}\left(\cdot\right)$,
one has 
\begin{align*}
 & \nabla f^{\mathrm{sgn}}\big(\bm{x}^{t,\mathrm{sgn}}\big)-\nabla f\big(\bm{x}^{t,\mathrm{sgn}}\big)\\
 & \quad=\frac{1}{m}\sum_{i=1}^{m}\left\{ \big(\bm{a}_{i}^{\mathrm{sgn}\top}\bm{x}^{t,\mathrm{sgn}}\big)^{3}\bm{a}_{i}^{\mathrm{sgn}}-\big(\bm{a}_{i}^{\mathrm{sgn}\top}\bm{x}^{\natural}\big)^{2}\big(\bm{a}_{i}^{\mathrm{sgn}\top}\bm{x}^{t,\mathrm{sgn}}\big)\bm{a}_{i}^{\mathrm{sgn}}-\big(\bm{a}_{i}^{\top}\bm{x}^{t,\mathrm{sgn}}\big)^{3}\bm{a}_{i}+\big(\bm{a}_{i}^{\top}\bm{x}^{\natural}\big)^{2}\big(\bm{a}_{i}^{\top}\bm{x}^{t,\mathrm{sgn}}\big)\bm{a}_{i}\right\} \\
 & \quad=\underset{:=\bm{r}_{1}}{\underbrace{\frac{1}{m}\sum_{i=1}^{m}\left\{ \big(\bm{a}_{i}^{\mathrm{sgn}\top}\bm{x}^{t,\mathrm{sgn}}\big)^{3}\bm{a}_{i}^{\mathrm{sgn}}-\big(\bm{a}_{i}^{\top}\bm{x}^{t,\mathrm{sgn}}\big)^{3}\bm{a}_{i}\right\} }}-\underset{:=\bm{r}_{2}}{\underbrace{\frac{1}{m}\sum_{i=1}^{m}a_{i,1}^{2}\left(\bm{a}_{i}^{\mathrm{sgn}}\bm{a}_{i}^{\mathrm{sgn}\top}-\bm{a}_{i}\bm{a}_{i}^{\top}\right)\bm{x}^{t,\mathrm{sgn}}}}.
\end{align*}
Here, the last identity holds because of $\left(\bm{a}_{i}^{\top}\bm{x}^{\natural}\right)^{2}=\big(\bm{a}_{i}^{\mathrm{sgn}\top}\bm{x}^{\natural}\big)^{2}=a_{i,1}^{2}$
(see (\ref{eq:by-construction})). 
\begin{itemize}
\item We begin with the second term $\bm{r}_{2}$. By construction, one
has $\bm{a}_{i,\perp}^{\mathrm{sgn}}=\bm{a}_{i,\perp}$, $a_{i,1}^{\mathrm{sgn}}=\xi_{i}^{\mathrm{sgn}}\left|a_{i,1}\right|$
and $a_{i,1}=\xi_{1}\left|a_{i,1}\right|$. These taken together yield
\begin{equation}
\bm{a}_{i}^{\mathrm{sgn}}\bm{a}_{i}^{\mathrm{sgn}\top}-\bm{a}_{i}\bm{a}_{i}^{\top}=\left(\xi_{i}^{\mathrm{sgn}}-\xi_{i}\right)\left|a_{i,1}\right|\left[\begin{array}{cc}
0 & \bm{a}_{i,\perp}^{\top}\\
\bm{a}_{i,\perp} & \bm{0}
\end{array}\right],\label{eq:alal-alsgnalsgn}
\end{equation}
and hence $\bm{r}_{2}$ can be rewritten as 
\begin{align}
\bm{r}_{2} & =\left[\begin{array}{c}
\frac{1}{m}\sum_{i=1}^{m}\left(\xi_{i}^{\mathrm{sgn}}-\xi_{i}\right)\left|a_{i,1}\right|^{3}\bm{a}_{i,\perp}^{\top}\bm{x}_{\perp}^{t,\mathrm{sgn}}\\
x_{\parallel}^{t,\mathrm{sgn}}\cdot\frac{1}{m}\sum_{i=1}^{m}(\xi_{i}^{\mathrm{sgn}}-\xi_{i})\left|a_{i,1}\right|^{3}\bm{a}_{i,\perp}
\end{array}\right].\label{eq:r2-defn}
\end{align}
For the first entry of $\bm{r}_{2}$, the triangle inequality gives
\begin{align*}
\left|\frac{1}{m}\sum_{i=1}^{m}\left(\xi_{i}^{\mathrm{sgn}}-\xi_{i}\right)\left|a_{i,1}\right|^{3}\bm{a}_{i,\perp}^{\top}\bm{x}_{\perp}^{t,\mathrm{sgn}}\right| & \leq\underbrace{\left|\frac{1}{m}\sum_{i=1}^{m}\left|a_{i,1}\right|^{3}\xi_{i}\bm{a}_{i,\perp}^{\top}\bm{x}_{\perp}^{t,\mathrm{sgn}}\right|}_{:=\phi_{1}}+\underbrace{\left|\frac{1}{m}\sum_{i=1}^{m}\left|a_{i,1}\right|^{3}\xi_{i}^{\mathrm{sgn}}\bm{a}_{i,\perp}^{\top}\bm{x}_{\perp}^{t}\right|}_{:=\phi_{2}}\\
 & \quad+\underbrace{\left|\frac{1}{m}\sum_{i=1}^{m}\left|a_{i,1}\right|^{3}\xi_{i}^{\mathrm{sgn}}\bm{a}_{i,\perp}^{\top}\left(\bm{x}_{\perp}^{t,\mathrm{sgn}}-\bm{x}_{\perp}^{t}\right)\right|}_{:=\phi_{3}}.
\end{align*}
Regarding $\phi_{1}$, we make use of the independence between $\xi_{i}$
and $\left|a_{i,1}\right|^{3}\bm{a}_{i,\perp}^{\top}\bm{x}_{\perp}^{t,\mathrm{sgn}}$
and invoke the Bernstein inequality (see Lemma \ref{lemma:bernstein})
to reach that with probability at least $1-O\left(m^{-10}\right),$
\[
\phi_{1}\lesssim\frac{1}{m}\left(\sqrt{V_{1}\log m}+B_{1}\log m\right),
\]
where $V_{1}$ and $B_{1}$ are defined to be 
\[
V_{1}:=\sum_{i=1}^{m}\left|a_{i,1}\right|^{6}\left|\bm{a}_{i,\perp}^{\top}\bm{x}_{\perp}^{t,\mathrm{sgn}}\right|^{2}\qquad\text{and}\qquad B_{1}:=\max_{1\leq i\leq m}\left\{ \left|a_{i,1}\right|^{3}\left|\bm{a}_{i,\perp}^{\top}\bm{x}_{\perp}^{t,\mathrm{sgn}}\right|\right\} .
\]
It is easy to see from Lemma \ref{lemma:ai-uniform-concentration}
and the incoherence condition (\ref{eq:consequence-incoherence-ai-perp-x-sgn})
that with probability exceeding $1-O\left(m^{-10}\right)$, $V_{1}\lesssim m\left\Vert \bm{x}_{\perp}^{t,\mathrm{sgn}}\right\Vert _{2}^{2}$
and $B_{1}\lesssim\log^{2}m\left\Vert \bm{x}_{\perp}^{t,\mathrm{sgn}}\right\Vert _{2}$,
which implies 
\begin{align*}
\phi_{1} & \lesssim\left(\sqrt{\frac{\log m}{m}}+\frac{\log^{3}m}{m}\right)\left\Vert \bm{x}_{\perp}^{t,\mathrm{sgn}}\right\Vert _{2}\asymp\sqrt{\frac{\log m}{m}}\left\Vert \bm{x}_{\perp}^{t,\mathrm{sgn}}\right\Vert _{2},
\end{align*}
as long as $m\gg\log^{5}m$. Similarly, one can obtain 
\begin{align*}
\phi_{2} & \lesssim\sqrt{\frac{\log m}{m}}\left\Vert \bm{x}_{\perp}^{t}\right\Vert _{2}.
\end{align*}
The last term $\phi_{3}$ can be bounded through the Cauchy-Schwarz
inequality. Specifically, one has 
\begin{align*}
\phi_{3} & \leq\left\Vert \frac{1}{m}\sum_{i=1}^{m}\left|a_{i,1}\right|^{3}\xi_{i}^{\mathrm{sgn}}\bm{a}_{i,\perp}\right\Vert _{2}\left\Vert \bm{x}_{\perp}^{t,\mathrm{sgn}}-\bm{x}_{\perp}^{t}\right\Vert _{2}\lesssim\sqrt{\frac{n\log^{3}m}{m}}\left\Vert \bm{x}_{\perp}^{t,\mathrm{sgn}}-\bm{x}_{\perp}^{t}\right\Vert _{2},
\end{align*}
where the second relation arises from Lemma \ref{lemma:hessian-concentration}.
The previous three bounds taken collectively yield 
\begin{align}
\left|\frac{1}{m}\sum_{i=1}^{m}\left(\xi_{i}^{\mathrm{sgn}}-\xi_{i}\right)\left|a_{i,1}\right|^{3}\bm{a}_{i,\perp}^{\top}\bm{x}_{\perp}^{t,\mathrm{sgn}}\right| & \lesssim\sqrt{\frac{\log m}{m}}\left(\left\Vert \bm{x}_{\perp}^{t,\mathrm{sgn}}\right\Vert _{2}+\left\Vert \bm{x}_{\perp}^{t}\right\Vert _{2}\right)+\sqrt{\frac{n\log^{3}m}{m}}\left\Vert \bm{x}_{\perp}^{t,\mathrm{sgn}}-\bm{x}_{\perp}^{t}\right\Vert _{2}\nonumber \\
 & \lesssim\sqrt{\frac{\log m}{m}}\left\Vert \bm{x}_{\perp}^{t}\right\Vert _{2}+\sqrt{\frac{n\log^{3}m}{m}}\left\Vert \bm{x}_{\perp}^{t,\mathrm{sgn}}-\bm{x}_{\perp}^{t}\right\Vert _{2}.\label{eq:UB5}
\end{align}
Here the second inequality results from the triangle inequality $\left\Vert \bm{x}_{\perp}^{t,\mathrm{sgn}}\right\Vert _{2}\leq\left\Vert \bm{x}_{\perp}^{t}\right\Vert _{2}+\left\Vert \bm{x}_{\perp}^{t,\mathrm{sgn}}-\bm{x}_{\perp}^{t}\right\Vert _{2}$
and the fact that $\sqrt{\frac{\log m}{m}}\le\sqrt{\frac{n\log^{3}m}{m}}$.
In addition, for the second through the $n$th entries of $\bm{r}_{2}$,
one can again invoke Lemma \ref{lemma:hessian-concentration} to obtain
\begin{align}
\left\Vert \frac{1}{m}\sum_{i=1}^{m}\left|a_{i,1}\right|^{3}\left(\xi_{i}^{\mathrm{sgn}}-\xi_{i}\right)\bm{a}_{i,\perp}\right\Vert _{2} & \leq\left\Vert \frac{1}{m}\sum_{i=1}^{m}\left|a_{i,1}\right|^{3}\xi_{i}^{\mathrm{sgn}}\bm{a}_{i,\perp}\right\Vert _{2}+\left\Vert \frac{1}{m}\sum_{i=1}^{m}\left|a_{i,1}\right|^{3}\xi_{i}\bm{a}_{i,\perp}\right\Vert _{2}\nonumber \\
 & \lesssim\sqrt{\frac{n\log^{3}m}{m}}.\label{eq:xi-xi-sgn-ai-duplicate}
\end{align}
This combined with (\ref{eq:r2-defn}) and (\ref{eq:UB5}) yields
\begin{align*}
\|\bm{r}_{2}\|_{2} & \lesssim\sqrt{\frac{\log m}{m}}\left\Vert \bm{x}_{\perp}^{t}\right\Vert _{2}+\sqrt{\frac{n\log^{3}m}{m}}\left\Vert \bm{x}_{\perp}^{t,\mathrm{sgn}}-\bm{x}_{\perp}^{t}\right\Vert _{2}+\left|x_{\parallel}^{t,\mathrm{sgn}}\right|\sqrt{\frac{n\log^{3}m}{m}}.
\end{align*}
\item Moving on to the term $\bm{r}_{1}$, we can also decompose
\[
\bm{r}_{1}=\left[\begin{array}{c}
\frac{1}{m}\sum_{i=1}^{m}\left\{ \big(\bm{a}_{i}^{\mathrm{sgn}\top}\bm{x}^{t,\mathrm{sgn}}\big)^{3}a_{i,1}^{\text{sgn}}-\big(\bm{a}_{i}^{\top}\bm{x}^{t,\mathrm{sgn}}\big)^{3}a_{i,1}\right\} \\
\frac{1}{m}\sum_{i=1}^{m}\left\{ \big(\bm{a}_{i}^{\mathrm{sgn}\top}\bm{x}^{t,\mathrm{sgn}}\big)^{3}\bm{a}_{i,\perp}^{\text{sgn}}-\big(\bm{a}_{i}^{\top}\bm{x}^{t,\mathrm{sgn}}\big)^{3}\bm{a}_{i,\perp}\right\} 
\end{array}\right].
\]
For the second through the $n$th entries, we see that 
\begin{align*}
 & \frac{1}{m}\sum_{i=1}^{m}\left\{ \big(\bm{a}_{i}^{\mathrm{sgn}\top}\bm{x}^{t,\mathrm{sgn}}\big)^{3}\bm{a}_{i,\perp}^{\text{sgn}}-\big(\bm{a}_{i}^{\top}\bm{x}^{t,\mathrm{sgn}}\big)^{3}\bm{a}_{i,\perp}\right\} \overset{\left(\text{i}\right)}{=}\frac{1}{m}\sum_{i=1}^{m}\left\{ \big(\bm{a}_{i}^{\mathrm{sgn}\top}\bm{x}^{t,\mathrm{sgn}}\big)^{3}-\big(\bm{a}_{i}^{\top}\bm{x}^{t,\mathrm{sgn}}\big)^{3}\right\} \bm{a}_{i,\perp}\\
 & \quad\overset{\left(\text{ii}\right)}{=}\frac{1}{m}\sum_{i=1}^{m}\left\{ \left(\xi_{i}^{\mathrm{sgn}}-\xi_{i}\right)\left|a_{i,1}\right|x_{\parallel}^{t,\mathrm{sgn}}\left[\left(\bm{a}_{i}^{\mathrm{sgn}\top}\bm{x}^{t,\mathrm{sgn}}\right)^{2}+\left(\bm{a}_{i}^{\top}\bm{x}^{t,\mathrm{sgn}}\right)^{2}+\left(\bm{a}_{i}^{\mathrm{sgn}\top}\bm{x}^{t,\mathrm{sgn}}\right)\left(\bm{a}_{i}^{\top}\bm{x}^{t,\mathrm{sgn}}\right)\right]\right\} \bm{a}_{i,\perp}\\
 & \quad=\frac{x_{\parallel}^{t,\mathrm{sgn}}}{m}\sum_{i=1}^{m}\left\{ \left(\xi_{i}^{\mathrm{sgn}}-\xi_{i}\right)\left|a_{i,1}\right|\left[\left(\bm{a}_{i}^{\mathrm{sgn}\top}\bm{x}^{t,\mathrm{sgn}}\right)^{2}+\left(\bm{a}_{i}^{\top}\bm{x}^{t,\mathrm{sgn}}\right)^{2}+\left(\bm{a}_{i}^{\mathrm{sgn}\top}\bm{x}^{t,\mathrm{sgn}}\right)\left(\bm{a}_{i}^{\top}\bm{x}^{t,\mathrm{sgn}}\right)\right]\right\} \bm{a}_{i,\perp},
\end{align*}
where (i) follows from $\bm{a}_{i,\perp}^{\text{sgn}}=\bm{a}_{i,\perp}$
and (ii) relies on the elementary identity $a^{3}-b^{3}=\left(a-b\right)\left(a^{2}+b^{2}+ab\right)$.
Treating $\frac{1}{m}\sum_{i=1}^{m}\left(\bm{a}_{i}^{\mathrm{sgn}\top}\bm{x}^{t,\mathrm{sgn}}\right)^{2}a_{i,1}^{\text{sgn}}\bm{a}_{i,\perp}$
as the first column (except its first entry) of $\frac{1}{m}\sum_{i=1}^{m}\left(\bm{a}_{i}^{\mathrm{sgn}\top}\bm{x}^{t,\mathrm{sgn}}\right)^{2}\bm{a}_{i}^{\mathrm{sgn}}\bm{a}_{i}^{\mathrm{sgn}\top}$,
by Lemma \ref{lemma:hessian-concentration} and the incoherence condition
(\ref{eq:consequence-incoherence-ai-sgn-x-sgn}), we have 
\[
\frac{1}{m}\sum_{i=1}^{m}\xi_{i}^{\text{sgn}}\left|a_{i,1}\right|\left(\bm{a}_{i}^{\mathrm{sgn}\top}\bm{x}^{t,\mathrm{sgn}}\right)^{2}\bm{a}_{i,\perp}=\frac{1}{m}\sum_{i=1}^{m}\left(\bm{a}_{i}^{\mathrm{sgn}\top}\bm{x}^{t,\mathrm{sgn}}\right)^{2}a_{i,1}^{\text{sgn}}\bm{a}_{i,\perp}=2x_{\parallel}^{t,\text{sgn}}\bm{x}_{\perp}^{t,\text{sgn}}+\bm{v}_{1},
\]
where $\left\Vert \bm{v}_{1}\right\Vert _{2}\lesssim\sqrt{\frac{n\log^{3}m}{m}}$.
Similarly, 
\[
-\frac{1}{m}\sum_{i=1}^{m}\xi_{i}\left|a_{i,1}\right|\left(\bm{a}_{i}^{\top}\bm{x}^{t,\mathrm{sgn}}\right)^{2}\bm{a}_{i,\perp}=-2x_{\parallel}^{t,\text{sgn}}\bm{x}_{\perp}^{t,\text{sgn}}+\bm{v}_{2},
\]
where $\left\Vert \bm{v}_{2}\right\Vert _{2}\lesssim\sqrt{\frac{n\log^{3}m}{m}}$.
Moreover, we have 
\begin{align*}
 & \frac{1}{m}\sum_{i=1}^{m}\xi_{i}^{\text{sgn}}\left|a_{i,1}\right|\left(\bm{a}_{i}^{\top}\bm{x}^{t,\mathrm{sgn}}\right)^{2}\bm{a}_{i,\perp}\\
 & \quad=\frac{1}{m}\sum_{i=1}^{m}\xi_{i}^{\text{sgn}}\left|a_{i,1}\right|\left(\bm{a}_{i}^{\text{sgn}\top}\bm{x}^{t,\mathrm{sgn}}\right)^{2}\bm{a}_{i,\perp}+\frac{1}{m}\sum_{i=1}^{m}\xi_{i}^{\text{sgn}}\left|a_{i,1}\right|\left[\left(\bm{a}_{i}^{\top}\bm{x}^{t,\mathrm{sgn}}\right)^{2}-\left(\bm{a}_{i}^{\text{sgn}\top}\bm{x}^{t,\text{sgn}}\right)^{2}\right]\bm{a}_{i,\perp}\\
 & \quad=2x_{\parallel}^{t,\text{sgn}}\bm{x}_{\perp}^{t,\text{sgn}}+\bm{v}_{1}+\bm{v}_{3},
\end{align*}
where $\bm{v}_{3}$ is defined as 
\begin{align}
\bm{v}_{3} & =\frac{1}{m}\sum_{i=1}^{m}\xi_{i}^{\text{sgn}}\left|a_{i,1}\right|\left[\left(\bm{a}_{i}^{\top}\bm{x}^{t,\mathrm{sgn}}\right)^{2}-\left(\bm{a}_{i}^{\text{sgn}\top}\bm{x}^{t,\text{sgn}}\right)^{2}\right]\bm{a}_{i,\perp}\nonumber \\
 & =2x_{\parallel}^{t,\text{sgn}}\frac{1}{m}\sum_{i=1}^{m}\left(\xi_{i}-\xi_{i}^{\mathrm{sgn}}\right)\left(\bm{a}_{i,\perp}^{\top}\bm{x}_{\perp}^{t,\text{sgn}}\right)\xi_{i}^{\text{sgn}}\left|a_{i,1}\right|^{2}\bm{a}_{i,\perp}\nonumber \\
 & =2x_{\parallel}^{t,\text{sgn}}\frac{1}{m}\sum_{i=1}^{m}\left(\xi_{i}\xi_{i}^{\text{sgn}}-1\right)\left|a_{i,1}\right|^{2}\bm{a}_{i,\perp}\bm{a}_{i,\perp}^{\top}\bm{x}_{\perp}^{t,\text{sgn}}.\label{eq:def_v3}
\end{align}
Here the second equality comes from the identity (\ref{eq:identity-a-a-sgn}).
Similarly one can get 
\begin{align*}
-\frac{1}{m}\sum_{i=1}^{m}\xi_{i}\left|a_{i,1}\right|\left(\bm{a}_{i}^{\mathrm{sgn}\top}\bm{x}^{t,\mathrm{sgn}}\right)^{2}\bm{a}_{i,\perp} & =-2x_{\parallel}^{t,\text{sgn}}\bm{x}_{\perp}^{t,\text{sgn}}-\bm{v}_{2}-\bm{v}_{4},
\end{align*}
where $\bm{v}_{4}$ obeys 
\begin{align*}
\bm{v}_{4} & =\frac{1}{m}\sum_{i=1}^{m}\xi_{i}\left|a_{i,1}\right|\left[\left(\bm{a}_{i}^{\text{sgn}\top}\bm{x}^{t,\text{sgn}}\right)^{2}-\left(\bm{a}_{i}^{\top}\bm{x}^{t,\mathrm{sgn}}\right)^{2}\right]\bm{a}_{i,\perp}\\
 & =2x_{\parallel}^{t,\text{sgn}}\frac{1}{m}\sum_{i=1}^{m}\left(\xi_{i}\xi_{i}^{\text{sgn}}-1\right)\left|a_{i,1}\right|^{2}\bm{a}_{i,\perp}\bm{a}_{i,\perp}^{\top}\bm{x}_{\perp}^{t,\text{sgn}}.
\end{align*}
It remains to bound $\frac{1}{m}\sum_{i=1}^{m}\left(\xi_{i}^{\text{sgn}}-\xi_{i}\right)\left|a_{i,1}\right|\big(\bm{a}_{i}^{\mathrm{sgn}\top}\bm{x}^{t,\mathrm{sgn}}\big)\left(\bm{a}_{i}^{\top}\bm{x}^{t,\mathrm{sgn}}\right)\bm{a}_{i,\perp}$.
To this end, we have 
\begin{align*}
 & \frac{1}{m}\sum_{i=1}^{m}\xi_{i}^{\text{sgn}}\left|a_{i,1}\right|\big(\bm{a}_{i}^{\mathrm{sgn}\top}\bm{x}^{t,\mathrm{sgn}}\big)\left(\bm{a}_{i}^{\top}\bm{x}^{t,\mathrm{sgn}}\right)\bm{a}_{i,\perp}\\
 & \quad=\frac{1}{m}\sum_{i=1}^{m}\xi_{i}^{\text{sgn}}\left|a_{i,1}\right|\left(\bm{a}_{i}^{\mathrm{sgn}\top}\bm{x}^{t,\mathrm{sgn}}\right)^{2}\bm{a}_{i,\perp}+\frac{1}{m}\sum_{i=1}^{m}\xi_{i}^{\text{sgn}}\left|a_{i,1}\right|\left(\bm{a}_{i}^{\mathrm{sgn}\top}\bm{x}^{t,\mathrm{sgn}}\right)\left[\left(\bm{a}_{i}^{\top}\bm{x}^{t,\mathrm{sgn}}\right)-\left(\bm{a}_{i}^{\mathrm{sgn}\top}\bm{x}^{t,\mathrm{sgn}}\right)\right]\bm{a}_{i,\perp}\\
 & \quad=2x_{\parallel}^{t,\text{sgn}}\bm{x}_{\perp}^{t,\text{sgn}}+\bm{v}_{1}+\bm{v}_{5},
\end{align*}
where 
\[
\bm{v}_{5}=x_{\parallel}^{t,\text{sgn}}\frac{1}{m}\sum_{i=1}^{m}\left(\xi_{i}\xi_{i}^{\text{sgn}}-1\right)\left|a_{i,1}\right|^{2}\bm{a}_{i,\perp}\bm{a}_{i}^{\mathrm{sgn}\top}\bm{x}^{t,\mathrm{sgn}}.
\]
The same argument yields 
\[
-\frac{1}{m}\sum_{i=1}^{m}\xi_{i}\left|a_{i,1}\right|\big(\bm{a}_{i}^{\mathrm{sgn}\top}\bm{x}^{t,\mathrm{sgn}}\big)\left(\bm{a}_{i}^{\top}\bm{x}^{t,\mathrm{sgn}}\right)\bm{a}_{i,\perp}=-2x_{\parallel}^{t,\text{sgn}}\bm{x}_{\perp}^{t,\text{sgn}}-\bm{v}_{2}-\bm{v}_{6},
\]
where 
\[
\bm{v}_{6}=x_{\parallel}^{t,\text{sgn}}\frac{1}{m}\sum_{i=1}^{m}\left(\xi_{i}\xi_{i}^{\text{sgn}}-1\right)\left|a_{i,1}\right|^{2}\bm{a}_{i,\perp}\bm{a}_{i}^{\mathrm{sgn}\top}\bm{x}^{t,\mathrm{sgn}}.
\]
Combining all of the previous bounds and recognizing that $\bm{v}_{3}=\bm{v}_{4}$
and $\bm{v}_{5}=\bm{v}_{6}$, we arrive at 
\[
\left\Vert \frac{1}{m}\sum_{i=1}^{m}\left\{ \big(\bm{a}_{i}^{\mathrm{sgn}\top}\bm{x}^{t,\mathrm{sgn}}\big)^{3}\bm{a}_{i,\perp}^{\text{sgn}}-\big(\bm{a}_{i}^{\top}\bm{x}^{t,\mathrm{sgn}}\big)^{3}\bm{a}_{i,\perp}\right\} \right\Vert _{2}\lesssim\|\bm{v}_{1}\|_{2}+\|\bm{v}_{2}\|_{2}\lesssim\sqrt{\frac{n\log^{3}m}{m}}\left|x_{\parallel}^{t,\text{sgn}}\right|.
\]
Regarding the first entry of $\bm{r}_{1}$, one has 
\begin{align*}
 & \left|\frac{1}{m}\sum_{i=1}^{m}\left\{ \big(\bm{a}_{i}^{\mathrm{sgn}\top}\bm{x}^{t,\mathrm{sgn}}\big)^{3}a_{i,1}^{\mathrm{sgn}}-\big(\bm{a}_{i}^{\top}\bm{x}^{t,\mathrm{sgn}}\big)^{3}a_{i,1}\right\} \right|\\
 & \quad=\left|\frac{1}{m}\sum_{i=1}^{m}\left\{ \left(\xi_{i}^{\mathrm{sgn}}\left|a_{i,1}\right|x_{\|}^{t,\mathrm{sgn}}+\bm{a}_{i,\perp}^{\top}\bm{x}_{\perp}^{t,\mathrm{sgn}}\right)^{3}\xi_{i}^{\mathrm{sgn}}\left|a_{i,1}\right|-\left(\xi_{i}\left|a_{i,1}\right|x_{\|}^{t,\mathrm{sgn}}+\bm{a}_{i,\perp}^{\top}\bm{x}_{\perp}^{t,\mathrm{sgn}}\right)^{3}\xi_{i}\left|a_{i,1}\right|\right\} \right|\\
 & \quad=\left|\frac{1}{m}\sum_{i=1}^{m}\left(\xi_{i}^{\mathrm{sgn}}-\xi_{i}\right)\left|a_{i,1}\right|\left\{ 3\left|a_{i,1}\right|^{2}\left|x_{\|}^{t,\mathrm{sgn}}\right|^{2}\bm{a}_{i,\perp}^{\top}\bm{x}_{\perp}^{t,\mathrm{sgn}}+\left(\bm{a}_{i,\perp}^{\top}\bm{x}_{\perp}^{t,\mathrm{sgn}}\right)^{3}\right\} \right|.
\end{align*}
In view of the independence between $\xi_{i}$ and $\left|a_{i,1}\right|\big(\bm{a}_{i,\perp}^{\top}\bm{x}_{\perp}^{t,\mathrm{sgn}}\big)^{3}$,
from the Bernstein's inequality (see Lemma \ref{lemma:bernstein}),
we have that 
\[
\left|\frac{1}{m}\sum_{i=1}^{m}\xi_{i}\left|a_{i,1}\right|\left(\bm{a}_{i,\perp}^{\top}\bm{x}_{\perp}^{t,\mathrm{sgn}}\right)^{3}\right|\lesssim\frac{1}{m}\left(\sqrt{V_{2}\log m}+B_{2}\log m\right)
\]
holds with probability exceeding $1-O\left(m^{-10}\right)$, where
\[
V_{2}:=\sum_{i=1}^{m}\left|a_{i,1}\right|^{2}\left(\bm{a}_{i,\perp}^{\top}\bm{x}_{\perp}^{t,\mathrm{sgn}}\right)^{6}\qquad\text{and}\qquad B_{2}:=\max_{1\leq i\leq m}\left|a_{i,1}\right|\left|\bm{a}_{i,\perp}^{\top}\bm{x}_{\perp}^{t,\mathrm{sgn}}\right|^{3}.
\]
It is straightforward to check that $V_{2}\lesssim m\left\Vert \bm{x}_{\perp}^{t,\mathrm{sgn}}\right\Vert _{2}^{6}$
and $B_{2}\lesssim\log^{2}m\left\Vert \bm{x}_{\perp}^{t,\mathrm{sgn}}\right\Vert _{2}^{3}$,
which further implies 
\[
\left|\frac{1}{m}\sum_{i=1}^{m}\xi_{i}\left|a_{i,1}\right|\left(\bm{a}_{i,\perp}^{\top}\bm{x}_{\perp}^{t,\mathrm{sgn}}\right)^{3}\right|\lesssim\sqrt{\frac{\log m}{m}}\left\Vert \bm{x}_{\perp}^{t,\mathrm{sgn}}\right\Vert _{2}^{3}+\frac{\log^{3}m}{m}\left\Vert \bm{x}_{\perp}^{t,\mathrm{sgn}}\right\Vert _{2}^{3}\asymp\sqrt{\frac{\log m}{m}}\left\Vert \bm{x}_{\perp}^{t,\mathrm{sgn}}\right\Vert _{2}^{3},
\]
as long as $m\gg\log^{5}m$. For the term involving $\xi_{i}^{\text{sgn}}$,
we have 
\begin{align*}
\frac{1}{m}\sum_{i=1}^{m}\xi_{i}^{\text{sgn}}\left|a_{i,1}\right|\left(\bm{a}_{i,\perp}^{\top}\bm{x}_{\perp}^{t,\mathrm{sgn}}\right)^{3} & =\underbrace{\frac{1}{m}\sum_{i=1}^{m}\xi_{i}^{\text{sgn}}\left|a_{i,1}\right|\left(\bm{a}_{i,\perp}^{\top}\bm{x}_{\perp}^{t}\right)^{3}}_{:=\theta_{1}}+\underbrace{\frac{1}{m}\sum_{i=1}^{m}\xi_{i}^{\text{sgn}}\left|a_{i,1}\right|\left[\left(\bm{a}_{i,\perp}^{\top}\bm{x}_{\perp}^{t}\right)^{3}-\left(\bm{a}_{i,\perp}^{\top}\bm{x}_{\perp}^{t,\mathrm{sgn}}\right)^{3}\right]}_{:=\theta_{2}}.
\end{align*}
Similarly one can obtain 
\[
\left|\theta_{1}\right|\lesssim\sqrt{\frac{\log m}{m}}\left\Vert \bm{x}_{\perp}^{t}\right\Vert _{2}^{3}.
\]
Expand $\theta_{2}$ using the elementary identity $a^{3}-b^{3}=\left(a-b\right)\left(a^{2}+ab+b^{2}\right)$
to get 
\begin{align*}
\theta_{2} & =\frac{1}{m}\sum_{i=1}^{m}\xi_{i}^{\text{sgn}}\left|a_{i,1}\right|\bm{a}_{i,\perp}^{\top}\left(\bm{x}_{\perp}^{t}-\bm{x}_{\perp}^{t,\text{sgn}}\right)\left[\left(\bm{a}_{i,\perp}^{\top}\bm{x}_{\perp}^{t}\right)^{2}+\left(\bm{a}_{i,\perp}^{\top}\bm{x}_{\perp}^{t,\mathrm{sgn}}\right)^{2}+\left(\bm{a}_{i,\perp}^{\top}\bm{x}_{\perp}^{t}\right)\left(\bm{a}_{i,\perp}^{\top}\bm{x}_{\perp}^{t,\mathrm{sgn}}\right)\right]\\
 & =\frac{1}{m}\sum_{i=1}^{m}\left(\bm{a}_{i,\perp}^{\top}\bm{x}_{\perp}^{t}\right)^{2}\xi_{i}^{\text{sgn}}\left|a_{i,1}\right|\bm{a}_{i,\perp}^{\top}\left(\bm{x}_{\perp}^{t}-\bm{x}_{\perp}^{t,\text{sgn}}\right)\\
 & \quad+\frac{1}{m}\sum_{i=1}^{m}\left(\bm{a}_{i,\perp}^{\top}\bm{x}_{\perp}^{t,\mathrm{sgn}}\right)^{2}\xi_{i}^{\text{sgn}}\left|a_{i,1}\right|\bm{a}_{i,\perp}^{\top}\left(\bm{x}_{\perp}^{t}-\bm{x}_{\perp}^{t,\text{sgn}}\right)\\
 & \quad+\frac{1}{m}\sum_{i=1}^{m}\left(\bm{a}_{i,\perp}^{\top}\bm{x}_{\perp}^{t}\right)\bm{a}_{i,\perp}^{\top}\left(\bm{x}_{\perp}^{t,\mathrm{sgn}}-\bm{x}_{\perp}^{t}\right)\xi_{i}^{\text{sgn}}\left|a_{i,1}\right|\bm{a}_{i,\perp}^{\top}\left(\bm{x}_{\perp}^{t}-\bm{x}_{\perp}^{t,\text{sgn}}\right).
\end{align*}
Once more, we can apply Lemma \ref{lemma:hessian-concentration} with
the incoherence conditions (\ref{eq:consequence-incoherence-ai-perp-x-t})
and (\ref{eq:consequence-incoherence-ai-perp-x-sgn}) to obtain 
\begin{align*}
\left\Vert \frac{1}{m}\sum_{i=1}^{m}\left(\bm{a}_{i,\perp}^{\top}\bm{x}_{\perp}^{t}\right)^{2}\xi_{i}^{\text{sgn}}\left|a_{i,1}\right|\bm{a}_{i,\perp}^{\top}\right\Vert _{2} & \lesssim\sqrt{\frac{n\log^{3}m}{m}};\\
\left\Vert \frac{1}{m}\sum_{i=1}^{m}\left(\bm{a}_{i,\perp}^{\top}\bm{x}_{\perp}^{t,\mathrm{sgn}}\right)^{2}\xi_{i}^{\text{sgn}}\left|a_{i,1}\right|\bm{a}_{i,\perp}^{\top}\right\Vert _{2} & \lesssim\sqrt{\frac{n\log^{3}m}{m}}.
\end{align*}
In addition, one can use the Cauchy-Schwarz inequality to deduce that
\begin{align*}
 & \left|\frac{1}{m}\sum_{i=1}^{m}\left(\bm{a}_{i,\perp}^{\top}\bm{x}_{\perp}^{t}\right)\bm{a}_{i,\perp}^{\top}\left(\bm{x}_{\perp}^{t,\mathrm{sgn}}-\bm{x}_{\perp}^{t}\right)\xi_{i}^{\text{sgn}}\left|a_{i,1}\right|\bm{a}_{i,\perp}^{\top}\left(\bm{x}_{\perp}^{t}-\bm{x}_{\perp}^{t,\text{sgn}}\right)\right|\\
 & \quad\leq\sqrt{\frac{1}{m}\sum_{i=1}^{m}\left(\bm{a}_{i,\perp}^{\top}\bm{x}_{\perp}^{t}\right)^{2}\left[\bm{a}_{i,\perp}^{\top}\left(\bm{x}_{\perp}^{t,\mathrm{sgn}}-\bm{x}_{\perp}^{t}\right)\right]^{2}}\sqrt{\frac{1}{m}\sum_{i=1}^{m}\left|a_{i,1}\right|^{2}\left[\bm{a}_{i,\perp}^{\top}\left(\bm{x}_{\perp}^{t}-\bm{x}_{\perp}^{t,\text{sgn}}\right)\right]^{2}}\\
 & \quad\leq\sqrt{\left\Vert \frac{1}{m}\sum_{i=1}^{m}\left(\bm{a}_{i,\perp}^{\top}\bm{x}_{\perp}^{t}\right)^{2}\bm{a}_{i,\perp}\bm{a}_{i,\perp}^{\top}\right\Vert \big\|\bm{x}_{\perp}^{t,\mathrm{sgn}}-\bm{x}_{\perp}^{t}\big\|_{2}^{2}}\sqrt{\left\Vert \frac{1}{m}\sum_{i=1}^{m}\left|a_{i,1}\right|^{2}\bm{a}_{i,\perp}\bm{a}_{i,\perp}^{\top}\right\Vert \big\|\bm{x}_{\perp}^{t,\mathrm{sgn}}-\bm{x}_{\perp}^{t}\big\|_{2}^{2}}\\
 & \quad\lesssim\left\Vert \bm{x}_{\perp}^{t}-\bm{x}_{\perp}^{\text{sgn}}\right\Vert _{2}^{2},
\end{align*}
where the last inequality comes from Lemma \ref{lemma:hessian-concentration}.
Combine the preceding bounds to reach 
\[
\left|\theta_{2}\right|\lesssim\sqrt{\frac{n\log^{3}m}{m}}\left\Vert \bm{x}_{\perp}^{t}-\bm{x}_{\perp}^{\text{sgn}}\right\Vert _{2}+\left\Vert \bm{x}_{\perp}^{t}-\bm{x}_{\perp}^{\text{sgn}}\right\Vert _{2}^{2}.
\]
Applying the similar arguments as above we get
\begin{align*}
 & \left|\left|x_{\|}^{t,\mathrm{sgn}}\right|^{2}\frac{3}{m}\sum_{i=1}^{m}\big(\xi_{i}^{\mathrm{sgn}}-\xi_{i}\big)\left|a_{i,1}\right|^{3}\bm{a}_{i,\perp}^{\top}\bm{x}_{\perp}^{t,\mathrm{sgn}}\right|\\
 & \quad\lesssim\left|x_{\|}^{t,\mathrm{sgn}}\right|^{2}\left(\sqrt{\frac{\log m}{m}}\left\Vert \bm{x}_{\perp}^{t,\text{sgn}}\right\Vert _{2}+\sqrt{\frac{\log m}{m}}\left\Vert \bm{x}_{\perp}^{t}\right\Vert _{2}+\sqrt{\frac{n\log^{3}m}{m}}\left\Vert \bm{x}_{\perp}^{t}-\bm{x}_{\perp}^{t,\mathrm{sgn}}\right\Vert _{2}\right)\\
 & \quad\lesssim\left|x_{\|}^{t,\mathrm{sgn}}\right|^{2}\left(\sqrt{\frac{\log m}{m}}\left\Vert \bm{x}_{\perp}^{t}\right\Vert _{2}+\sqrt{\frac{n\log^{3}m}{m}}\left\Vert \bm{x}_{\perp}^{t}-\bm{x}_{\perp}^{t,\mathrm{sgn}}\right\Vert _{2}\right),
\end{align*}
where the last line follows from the triangle inequality $\left\Vert \bm{x}_{\perp}^{t,\text{sgn}}\right\Vert _{2}\leq\left\Vert \bm{x}_{\perp}^{t}\right\Vert _{2}+\left\Vert \bm{x}_{\perp}^{t}-\bm{x}_{\perp}^{t,\mathrm{sgn}}\right\Vert _{2}$
and the fact that $\sqrt{\frac{\log m}{m}}\leq\sqrt{\frac{n\log^{3}m}{m}}$.
Putting the above results together yields 
\begin{align*}
\|\bm{r}_{1}\|_{2} & \lesssim\sqrt{\frac{n\log^{3}m}{m}}\left|x_{\parallel}^{t,\text{sgn}}\right|+\sqrt{\frac{\log m}{m}}\left(\left\Vert \bm{x}_{\perp}^{t,\mathrm{sgn}}\right\Vert _{2}+\left\Vert \bm{x}_{\perp}^{t}\right\Vert _{2}\right)+\sqrt{\frac{n\log^{3}m}{m}}\left\Vert \bm{x}_{\perp}^{t}-\bm{x}_{\perp}^{\text{sgn}}\right\Vert _{2}\\
 & \quad+\left\Vert \bm{x}_{\perp}^{t}-\bm{x}_{\perp}^{\text{sgn}}\right\Vert _{2}^{2}+\left|x_{\|}^{t,\mathrm{sgn}}\right|^{2}\left(\sqrt{\frac{\log m}{m}}\left\Vert \bm{x}_{\perp}^{t}\right\Vert _{2}+\sqrt{\frac{n\log^{3}m}{m}}\left\Vert \bm{x}_{\perp}^{t}-\bm{x}_{\perp}^{\text{sgn}}\right\Vert _{2}\right),\ 
\end{align*}
which can be further simplified to 
\begin{align*}
\left\Vert \bm{r}_{1}\right\Vert _{2} & \lesssim\sqrt{\frac{n\log^{3}m}{m}}\left|x_{\parallel}^{t}\right|+\sqrt{\frac{\log m}{m}}\left\Vert \bm{x}_{\perp}^{t}\right\Vert _{2}+\sqrt{\frac{n\log^{3}m}{m}}\left\Vert \bm{x}^{t}-\bm{x}^{\text{sgn}}\right\Vert _{2}+\left\Vert \bm{x}^{t}-\bm{x}^{\text{sgn}}\right\Vert _{2}^{2}.
\end{align*}
\item Combine all of the above estimates to reach 
\begin{align*}
\left\Vert \bm{x}^{t+1}-\bm{x}^{t+1,\mathrm{sgn}}\right\Vert _{2} & \leq\left\Vert \left\{ \bm{I}-\eta\int_{0}^{1}\nabla^{2}f\left(\tilde{\bm{x}}(\tau)\right)\mathrm{d}\tau\right\} \big(\bm{x}^{t}-\bm{x}^{t,\mathrm{sgn}}\big)\right\Vert _{2}+\eta\left\Vert \nabla f^{\mathrm{sgn}}\big(\bm{x}^{t,\mathrm{sgn}}\big)-\nabla f\big(\bm{x}^{t,\mathrm{sgn}}\big)\right\Vert _{2}\\
 & \leq\left\{ 1+3\eta\left(1-\left\Vert \bm{x}^{t}\right\Vert _{2}^{2}\right)+\eta\phi_{2}\right\} \left\Vert \bm{x}^{t}-\bm{x}^{t,\mathrm{sgn}}\right\Vert _{2}+O\left(\eta\sqrt{\frac{\log m}{m}}\left\Vert \bm{x}_{\perp}^{t}\right\Vert _{2}\right)+\eta\sqrt{\frac{n\log^{3}m}{m}}\left|x_{\parallel}^{t}\right|
\end{align*}
for some $|\phi_{2}|\ll\frac{1}{\log m}$. Here the second inequality
follows from the fact (\ref{eq:1-logm-xt-xt-sgn}). Substitute the
induction hypotheses into this bound to reach 
\begin{align*}
\left\Vert \bm{x}^{t+1}-\bm{x}^{t+1,\mathrm{sgn}}\right\Vert _{2} & \leq\left\{ 1+3\eta\left(1-\left\Vert \bm{x}^{t}\right\Vert _{2}^{2}\right)+\eta\phi_{2}\right\} \alpha_{t}\left(1+\frac{1}{\log m}\right)^{t}C_{3}\sqrt{\frac{n\log^{5}m}{m}}\\
 & \quad+\eta\sqrt{\frac{\log m}{m}}\beta_{t}+\eta\sqrt{\frac{n\log^{3}m}{m}}\alpha_{t}\\
 & \overset{\left(\text{i}\right)}{\leq}\left\{ 1+3\eta\left(1-\left\Vert \bm{x}^{t}\right\Vert _{2}^{2}\right)+\eta\phi_{3}\right\} \alpha_{t}\left(1+\frac{1}{\log m}\right)^{t}C_{3}\sqrt{\frac{n\log^{5}m}{m}}\\
 & \overset{\left(\text{ii}\right)}{\leq}\alpha_{t+1}\left(1+\frac{1}{\log m}\right)^{t+1}C_{3}\sqrt{\frac{n\log^{5}m}{m}},
\end{align*}
for some $|\phi_{3}|\ll\frac{1}{\log m}$, where (ii) follows the
same reasoning as in (\ref{eq:beta-t-iterative-reasoning}) and (i)
holds as long as \begin{subequations} 
\begin{align}
\sqrt{\frac{\log m}{m}}\beta_{t} & \ll\frac{1}{\log m}\alpha_{t}\left(1+\frac{1}{\log m}\right)^{t}C_{3}\sqrt{\frac{n\log^{5}m}{m}},\label{eq:xt-xt-sgn-as-long-as-1}\\
\sqrt{\frac{n\log^{3}m}{m}}\alpha_{t} & \ll\frac{1}{\log m}\alpha_{t}\left(1+\frac{1}{\log m}\right)^{t}C_{3}\sqrt{\frac{n\log^{5}m}{m}}.\label{eq:xt-xt-sgn-as-long-as-2}
\end{align}
\end{subequations}Here the first condition (\ref{eq:xt-xt-sgn-as-long-as-1})
results from (see Lemma \ref{lemma:iterative})
\[
\beta_{t}\lesssim\sqrt{n\log m}\cdot\alpha_{t},
\]
and the second one is trivially true with the proviso that $C_{3}>0$
is sufficiently large.
\end{itemize}

\section{Proof of Lemma \ref{lemma:double-diff}\label{sec:Proof-of-Lemma-double-diff}}

Consider any $l\,\left(1\leq l\leq m\right)$. According to the gradient
update rules (\ref{eq:gradient_update-WF}), (\ref{eq:gradient-update-leave-WF}),
(\ref{eq:gradient-update-leave-WF-1}) and (\ref{eq:gradient-update-leave-WF-1-1}),
we have 
\begin{align*}
 & \bm{x}^{t+1}-\bm{x}^{t+1,(l)}-\bm{x}^{t+1,\mathrm{sgn}}+\bm{x}^{t+1,\mathrm{sgn},(l)}\\
 & \quad=\bm{x}^{t}-\bm{x}^{t,(l)}-\bm{x}^{t,\mathrm{sgn}}+\bm{x}^{t,\mathrm{sgn},(l)}-\eta\left[\nabla f\left(\bm{x}^{t}\right)-\nabla f^{(l)}\big(\bm{x}^{t,(l)}\big)-\nabla f^{\mathrm{sgn}}\left(\bm{x}^{t,\mathrm{sgn}}\right)+\nabla f^{\mathrm{sgn},(l)}\left(\bm{x}^{t,\mathrm{sgn},(l)}\right)\right].
\end{align*}
It then boils down to controlling the gradient difference, i.e.~$\nabla f\left(\bm{x}^{t}\right)-\nabla f^{(l)}\big(\bm{x}^{t,(l)}\big)-\nabla f^{\mathrm{sgn}}\left(\bm{x}^{t,\mathrm{sgn}}\right)+\nabla f^{\mathrm{sgn},(l)}\left(\bm{x}^{t,\mathrm{sgn},(l)}\right)$.
To this end, we first see that 
\begin{align}
\nabla f\left(\bm{x}^{t}\right)-\nabla f^{(l)}\big(\bm{x}^{t,(l)}\big) & =\nabla f\left(\bm{x}^{t}\right)-\nabla f\big(\bm{x}^{t,(l)}\big)+\nabla f\big(\bm{x}^{t,(l)}\big)-\nabla f^{(l)}\big(\bm{x}^{t,(l)}\big)\nonumber \\
 & =\left(\int_{0}^{1}\nabla^{2}f\left(\bm{x}\left(\tau\right)\right)\mathrm{d}\tau\right)\left(\bm{x}^{t}-\bm{x}^{t,(l)}\right)+\frac{1}{m}\left[\left(\bm{a}_{l}^{\top}\bm{x}^{t,(l)}\right)^{2}-\left(\bm{a}_{l}^{\top}\bm{x}^{\natural}\right)^{2}\right]\bm{a}_{l}\bm{a}_{l}^{\top}\bm{x}^{t,(l)},\label{eq:grad-diff-l}
\end{align}
where we denote $\bm{x}(\tau):=\bm{x}^{t}+\tau\left(\bm{x}^{t,(l)}-\bm{x}^{t}\right)$
and the last identity results from the fundamental theorem of calculus
\cite[Chapter XIII, Theorem 4.2]{lang1993real}. Similar calculations
yield 
\begin{align}
 & \nabla f^{\mathrm{sgn}}\left(\bm{x}^{t,\mathrm{sgn}}\right)-\nabla f^{\mathrm{sgn},(l)}\big(\bm{x}^{t,\mathrm{sgn},(l)}\big)\nonumber \\
 & =\left(\int_{0}^{1}\nabla^{2}f^{\mathrm{sgn}}\left(\tilde{\bm{x}}\left(\tau\right)\right)\mathrm{d}\tau\right)\left(\bm{x}^{t,\mathrm{sgn}}-\bm{x}^{t,\mathrm{sgn}(l)}\right)+\frac{1}{m}\left[\left(\bm{a}_{l}^{\text{sgn}\top}\bm{x}^{t,\mathrm{sgn},(l)}\right)^{2}-\left(\bm{a}_{l}^{\text{sgn}\top}\bm{x}^{\natural}\right)^{2}\right]\bm{a}_{l}^{\text{sgn}}\bm{a}_{l}^{\text{sgn}\top}\bm{x}^{t,\mathrm{sgn},(l)}\label{eq:grad-diff-sgn-l}
\end{align}
with $\tilde{\bm{x}}(\tau):=\bm{x}^{t,\mathrm{sgn}}+\tau\left(\bm{x}^{t,\mathrm{sgn},(l)}-\bm{x}^{t,\mathrm{sgn}}\right)$.
Combine (\ref{eq:grad-diff-l}) and (\ref{eq:grad-diff-sgn-l}) to
arrive at 
\begin{align}
 & \nabla f\left(\bm{x}^{t}\right)-\nabla f^{(l)}\big(\bm{x}^{t,(l)}\big)-\nabla f^{\mathrm{sgn}}\left(\bm{x}^{t,\mathrm{sgn}}\right)+\nabla f^{\mathrm{sgn},(l)}\big(\bm{x}^{t,\mathrm{sgn},(l)}\big)\nonumber \\
 & \quad=\underset{:=\bm{v}_{1}}{\underbrace{\left(\int_{0}^{1}\nabla^{2}f\big(\bm{x}(\tau)\big)\mathrm{d}\tau\right)\big(\bm{x}^{t}-\bm{x}^{t,(l)}\big)-\left(\int_{0}^{1}\nabla^{2}f^{\mathrm{sgn}}\big(\tilde{\bm{x}}(\tau)\big)\mathrm{d}\tau\right)\big(\bm{x}^{t,\mathrm{sgn}}-\bm{x}^{t,\mathrm{sgn},(l)}\big)}}\nonumber \\
 & \quad\quad+\underset{:=\bm{v}_{2}}{\underbrace{\frac{1}{m}\left[\left(\bm{a}_{l}^{\top}\bm{x}^{t,(l)}\right)^{2}-\left(\bm{a}_{l}^{\top}\bm{x}^{\natural}\right)^{2}\right]\bm{a}_{l}\bm{a}_{l}^{\top}\bm{x}^{t,(l)}-\frac{1}{m}\left[\left(\bm{a}_{l}^{\text{sgn}\top}\bm{x}^{t,\mathrm{sgn},(l)}\right)^{2}-\left(\bm{a}_{l}^{\text{sgn}\top}\bm{x}^{\natural}\right)^{2}\right]\bm{a}_{l}^{\text{sgn}}\bm{a}_{l}^{\text{sgn}\top}\bm{x}^{t,\mathrm{sgn},(l)}}}.\label{eq:dfn-v1-v2}
\end{align}
In what follows, we shall control $\bm{v}_{1}$ and $\bm{v}_{2}$
separately. 
\begin{itemize}
\item We start with the simpler term $\bm{v}_{2}$. In light of the fact
that $\big(\bm{a}_{l}^{\top}\bm{x}^{\natural}\big)^{2}=\big(\bm{a}_{l}^{\text{sgn}\top}\bm{x}^{\natural}\big)^{2}=\big|a_{l,1}\big|^{2}$
(see (\ref{eq:by-construction})), one can decompose $\bm{v}_{2}$
as 
\begin{align*}
m\bm{v}_{2} & =\underbrace{\left[\big(\bm{a}_{l}^{\top}\bm{x}^{t,(l)}\big)^{2}-\big(\bm{a}_{l}^{\text{sgn}\top}\bm{x}^{t,\mathrm{sgn},(l)}\big)^{2}\right]\bm{a}_{l}\bm{a}_{l}^{\top}\bm{x}^{t,(l)}}_{:=\bm{\theta}_{1}}\\
 & \quad+\left[\big(\bm{a}_{l}^{\text{sgn}\top}\bm{x}^{t,\mathrm{sgn},(l)}\big)^{2}-\left|a_{l,1}\right|^{2}\right]\underbrace{\left(\bm{a}_{l}\bm{a}_{l}^{\top}\bm{x}^{t,(l)}-\bm{a}_{l}^{\text{sgn}}\bm{a}_{l}^{\text{sgn}\top}\bm{x}^{t,\mathrm{sgn},(l)}\right)}_{:=\bm{\theta}_{2}}.
\end{align*}
First, it is easy to see from (\ref{eq:max-a-i-1}) and the independence
between $\bm{a}_{l}^{\text{sgn}}$ and $\bm{x}^{t,\text{sgn},\left(l\right)}$
that 
\begin{align}
\left|\big(\bm{a}_{l}^{\text{sgn}\top}\bm{x}^{t,\mathrm{sgn},(l)}\big)^{2}-\left|a_{l,1}\right|^{2}\right| & \leq\big(\bm{a}_{l}^{\text{sgn}\top}\bm{x}^{t,\mathrm{sgn},(l)}\big)^{2}+\left|a_{l,1}\right|^{2}\nonumber \\
 & \lesssim\log m\cdot\big\|\bm{x}^{t,\mathrm{sgn},(l)}\big\|_{2}^{2}+\log m\lesssim\log m\label{eq:double-diff-v-2-factor}
\end{align}
with probability at least $1-O\left(m^{-10}\right)$, where the last
inequality results from the norm condition $\left\Vert \bm{x}^{t,\mathrm{sgn},(l)}\right\Vert _{2}\lesssim1$
(see (\ref{eq:consequence-norm-xt-sgn-l})). Regarding the term $\bm{\theta}_{2}$,
one has 
\[
\bm{\theta}_{2}=\left(\bm{a}_{l}\bm{a}_{l}^{\top}-\bm{a}_{l}^{\text{sgn}}\bm{a}_{l}^{\text{sgn}\top}\right)\bm{x}^{t,\left(l\right)}+\bm{a}_{l}^{\text{sgn}}\bm{a}_{l}^{\text{sgn}\top}\big(\bm{x}^{t,\left(l\right)}-\bm{x}^{t,\text{sgn},\left(l\right)}\big),
\]
which together with the identity (\ref{eq:alal-alsgnalsgn}) gives
\[
\bm{\theta}_{2}=\left(\xi_{l}-\xi_{l}^{\text{sgn}}\right)\left|a_{l,1}\right|\left[\begin{array}{c}
\bm{a}_{l,\perp}^{\top}\bm{x}_{\perp}^{t,\left(l\right)}\\
x_{\parallel}^{t,\left(l\right)}\bm{a}_{l,\perp}
\end{array}\right]+\bm{a}_{l}^{\text{sgn}}\bm{a}_{l}^{\text{sgn}\top}\big(\bm{x}^{t,\left(l\right)}-\bm{x}^{t,\text{sgn},\left(l\right)}\big).
\]
In view of the independence between $\bm{a}_{l}$ and $\bm{x}^{t,\left(l\right)}$,
and between $\bm{a}_{l}^{\text{sgn}}$ and $\bm{x}^{t,\left(l\right)}-\bm{x}^{t,\text{sgn},\left(l\right)}$,
one can again apply standard Gaussian concentration results to obtain
that 
\[
\left|\bm{a}_{l,\perp}^{\top}\bm{x}_{\perp}^{t,\left(l\right)}\right|\lesssim\sqrt{\log m}\left\Vert \bm{x}_{\perp}^{t,\left(l\right)}\right\Vert _{2}\qquad\text{and}\qquad\left|\bm{a}_{l}^{\text{sgn}\top}\big(\bm{x}^{t,\left(l\right)}-\bm{x}^{t,\text{sgn},\left(l\right)}\big)\right|\lesssim\sqrt{\log m}\left\Vert \bm{x}^{t,\left(l\right)}-\bm{x}^{t,\text{sgn},\left(l\right)}\right\Vert _{2}
\]
with probability exceeding $1-O\left(m^{-10}\right)$. Combining these
two with the facts (\ref{eq:max-a-i-1}) and (\ref{eq:max-a-i-norm})
leads to 
\begin{align}
\left\Vert \bm{\theta}_{2}\right\Vert _{2} & \leq\left|\xi_{l}-\xi_{l}^{\text{sgn}}\right|\left|a_{l,1}\right|\left(\left|\bm{a}_{l,\perp}^{\top}\bm{x}_{\perp}^{t,\left(l\right)}\right|+\left|x_{\parallel}^{t,\left(l\right)}\right|\left\Vert \bm{a}_{l,\perp}\right\Vert _{2}\right)+\left\Vert \bm{a}_{l}^{\text{sgn}}\right\Vert _{2}\left|\bm{a}_{l}^{\text{sgn}\top}\big(\bm{x}^{t,\left(l\right)}-\bm{x}^{t,\text{sgn},\left(l\right)}\big)\right|\nonumber \\
 & \lesssim\sqrt{\log m}\left(\sqrt{\log m}\left\Vert \bm{x}_{\perp}^{t,\left(l\right)}\right\Vert _{2}+\sqrt{n}\left|x_{\parallel}^{t,\left(l\right)}\right|\right)+\sqrt{n\log m}\left\Vert \bm{x}^{t,\left(l\right)}-\bm{x}^{t,\text{sgn},\left(l\right)}\right\Vert _{2}\nonumber \\
 & \lesssim\log m\left\Vert \bm{x}_{\perp}^{t,\left(l\right)}\right\Vert _{2}+\sqrt{n\log m}\left(\left|x_{\parallel}^{t,\left(l\right)}\right|+\left\Vert \bm{x}^{t,\left(l\right)}-\bm{x}^{t,\text{sgn},\left(l\right)}\right\Vert _{2}\right).\label{eq:double-diff-theta-2-vec}
\end{align}
We now move on to controlling $\bm{\theta}_{1}$. Use the elementary
identity $a^{2}-b^{2}=\left(a-b\right)\left(a+b\right)$ to get 
\begin{equation}
\bm{\theta}_{1}=\left(\bm{a}_{l}^{\top}\bm{x}^{t,(l)}-\bm{a}_{l}^{\text{sgn}\top}\bm{x}^{t,\mathrm{sgn},(l)}\right)\left(\bm{a}_{l}^{\top}\bm{x}^{t,(l)}+\bm{a}_{l}^{\text{sgn}\top}\bm{x}^{t,\mathrm{sgn},(l)}\right)\bm{a}_{l}\bm{a}_{l}^{\top}\bm{x}^{t,(l)}.\label{eq:double-diff-theta-1}
\end{equation}
The constructions of $\bm{a}_{l}^{\text{sgn}}$ requires that 
\[
\bm{a}_{l}^{\top}\bm{x}^{t,(l)}-\bm{a}_{l}^{\text{sgn}\top}\bm{x}^{t,\mathrm{sgn},(l)}=\xi_{l}\left|a_{l,1}\right|x_{\parallel}^{t,\left(l\right)}-\xi_{l}^{\text{sgn}}\left|a_{l,1}\right|x_{\parallel}^{t,\text{sgn},\left(l\right)}+\bm{a}_{l,\perp}^{\top}\big(\bm{x}_{_{\perp}}^{t,\left(l\right)}-\bm{x}_{\perp}^{t,\text{sgn},\left(l\right)}\big).
\]
Similarly, in view of the independence between $\bm{a}_{l,\perp}$
and $\bm{x}_{\perp}^{t,\left(l\right)}-\bm{x}_{\perp}^{t,\text{sgn},\left(l\right)}$,
and the fact (\ref{eq:max-a-i-1}), one can see that with probability
at least $1-O\left(m^{-10}\right)$ 
\begin{align}
\left|\bm{a}_{l}^{\top}\bm{x}^{t,(l)}-\bm{a}_{l}^{\text{sgn}\top}\bm{x}^{t,\mathrm{sgn},(l)}\right| & \leq\left|\xi_{l}\right|\left|a_{l,1}\right|\left|x_{\parallel}^{t,\left(l\right)}\right|+\left|\xi_{l}^{\text{sgn}}\right|\left|a_{l,1}\right|\left|x_{\parallel}^{t,\text{sgn},\left(l\right)}\right|+\left|\bm{a}_{l,\perp}^{\top}\big(\bm{x}_{_{\perp}}^{t,\left(l\right)}-\bm{x}_{\perp}^{t,\text{sgn},\left(l\right)}\big)\right|\nonumber \\
 & \lesssim\sqrt{\log m}\left(\left|x_{\parallel}^{t,\left(l\right)}\right|+\left|x_{\parallel}^{t,\text{sgn},\left(l\right)}\right|+\left\Vert \bm{x}_{\perp}^{t,\left(l\right)}-\bm{x}_{\perp}^{t,\text{sgn},\left(l\right)}\right\Vert _{2}\right)\nonumber \\
 & \lesssim\sqrt{\log m}\left(\left|x_{\parallel}^{t,\left(l\right)}\right|+\left\Vert \bm{x}^{t,\left(l\right)}-\bm{x}^{t,\text{sgn},\left(l\right)}\right\Vert _{2}\right),\label{eq:double-diff-theta-1-vec-prev}
\end{align}
where the last inequality results from the triangle inequality $\big|x_{\parallel}^{t,\text{sgn},\left(l\right)}\big|\leq\big|x_{\parallel}^{t,\left(l\right)}\big|+\left\Vert \bm{x}^{t,\left(l\right)}-\bm{x}^{t,\text{sgn},\left(l\right)}\right\Vert _{2}$.
Substituting (\ref{eq:double-diff-theta-1-vec-prev}) into (\ref{eq:double-diff-theta-1})
results in 
\begin{align}
\left\Vert \bm{\theta}_{1}\right\Vert _{2} & =\left|\bm{a}_{l}^{\top}\bm{x}^{t,(l)}-\bm{a}_{l}^{\text{sgn}\top}\bm{x}^{t,\mathrm{sgn},(l)}\right|\left|\bm{a}_{l}^{\top}\bm{x}^{t,(l)}+\bm{a}_{l}^{\text{sgn}\top}\bm{x}^{t,\mathrm{sgn},(l)}\right|\left\Vert \bm{a}_{l}\right\Vert _{2}\left|\bm{a}_{l}^{\top}\bm{x}^{t,(l)}\right|\nonumber \\
 & \lesssim\sqrt{\log m}\left(\left|x_{\parallel}^{t,\left(l\right)}\right|+\big\|\bm{x}^{t,\left(l\right)}-\bm{x}^{t,\text{sgn},\left(l\right)}\big\|_{2}\right)\cdot\sqrt{\log m}\cdot\sqrt{n}\cdot\sqrt{\log m}\nonumber \\
 & \asymp\sqrt{n\log^{3}m}\left(\left|x_{\parallel}^{t,\left(l\right)}\right|+\big\|\bm{x}^{t,\left(l\right)}-\bm{x}^{t,\text{sgn},\left(l\right)}\big\|_{2}\right),\label{eq:double-diff-theta-1-vec}
\end{align}
where the second line comes from the simple facts (\ref{eq:max-a-i-norm}),
\[
\left|\bm{a}_{l}^{\top}\bm{x}^{t,(l)}+\bm{a}_{l}^{\text{sgn}\top}\bm{x}^{t,\mathrm{sgn},(l)}\right|\leq\sqrt{\log m}\qquad\text{and}\qquad\left|\bm{a}_{l}^{\top}\bm{x}^{t,\left(l\right)}\right|\lesssim\sqrt{\log m}.
\]
Taking the bounds (\ref{eq:double-diff-v-2-factor}), (\ref{eq:double-diff-theta-2-vec})
and (\ref{eq:double-diff-theta-1-vec}) collectively, we can conclude
that 
\begin{align*}
\left\Vert \bm{v}_{2}\right\Vert _{2} & \leq\frac{1}{m}\left(\left\Vert \bm{\theta}_{1}\right\Vert _{2}+\left|\left(\bm{a}_{l}^{\text{sgn}\top}\bm{x}^{t,\mathrm{sgn},(l)}\right)^{2}-\left|a_{l,1}\right|^{2}\right|\left\Vert \bm{\theta}_{2}\right\Vert _{2}\right)\\
 & \lesssim\frac{\log^{2}m}{m}\left\Vert \bm{x}_{\perp}^{t,\left(l\right)}\right\Vert _{2}+\frac{\sqrt{n\log^{3}m}}{m}\left(\left|x_{\parallel}^{t,\left(l\right)}\right|+\big\|\bm{x}^{t,\left(l\right)}-\bm{x}^{t,\text{sgn},\left(l\right)}\big\|_{2}\right).
\end{align*}
\item To bound $\bm{v}_{1}$, one first observes that 
\begin{align*}
 & \nabla^{2}f\left(\bm{x}\left(\tau\right)\right)\left(\bm{x}^{t}-\bm{x}^{t,(l)}\right)-\nabla^{2}f^{\mathrm{sgn}}\left(\tilde{\bm{x}}\left(\tau\right)\right)\big(\bm{x}^{t,\mathrm{sgn}}-\bm{x}^{t,\text{sgn},\left(l\right)}\big)\\
 & \quad=\underbrace{\nabla^{2}f\left(\bm{x}\left(\tau\right)\right)\left(\bm{x}^{t}-\bm{x}^{t,(l)}-\bm{x}^{t,\mathrm{sgn}}+\bm{x}^{t,\mathrm{sgn},(l)}\right)}_{:=\bm{w}_{1}\left(\tau\right)}+\underbrace{\left[\nabla^{2}f\left(\bm{x}\left(\tau\right)\right)-\nabla^{2}f\left(\tilde{\bm{x}}\left(\tau\right)\right)\right]\big(\bm{x}^{t,\mathrm{sgn}}-\bm{x}^{t,\text{sgn},\left(l\right)}\big)}_{:=\bm{w}_{2}\left(\tau\right)}\\
 & \quad\quad+\underbrace{\left[\nabla^{2}f\left(\tilde{\bm{x}}\left(\tau\right)\right)-\nabla^{2}f^{\text{sgn}}\left(\tilde{\bm{x}}\left(\tau\right)\right)\right]\big(\bm{x}^{t,\mathrm{sgn}}-\bm{x}^{t,\text{sgn},\left(l\right)}\big)}_{:=\bm{w}_{3}\left(\tau\right)}.
\end{align*}
\begin{itemize}
\item The first term $\bm{w}_{1}(\tau)$ satisfies 
\begin{align*}
 & \left\Vert \bm{x}^{t}-\bm{x}^{t,(l)}-\bm{x}^{t,\mathrm{sgn}}+\bm{x}^{t,\mathrm{sgn},(l)}-\eta\int_{0}^{1}\bm{w}_{1}(\tau)\mathrm{d}\tau\right\Vert _{2}\\
 & \quad=\left\Vert \left\{ \bm{I}-\eta\int_{0}^{1}\nabla^{2}f\left(\bm{x}\left(\tau\right)\right)\mathrm{d}\tau\right\} \left(\bm{x}^{t}-\bm{x}^{t,(l)}-\bm{x}^{t,\mathrm{sgn}}+\bm{x}^{t,\mathrm{sgn},(l)}\right)\right\Vert _{2}\\
 & \quad\leq\left\Vert \bm{I}-\eta\int_{0}^{1}\nabla^{2}f\left(\bm{x}\left(\tau\right)\right)\mathrm{d}\tau\right\Vert \cdot\left\Vert \bm{x}^{t}-\bm{x}^{t,(l)}-\bm{x}^{t,\mathrm{sgn}}+\bm{x}^{t,\mathrm{sgn},(l)}\right\Vert _{2}\\
 & \quad\leq\left\{ 1+3\eta\left(1-\left\Vert \bm{x}^{t}\right\Vert _{2}^{2}\right)+O\left(\eta\frac{1}{\log m}\right)+\eta\phi_{1}\right\} \left\Vert \bm{x}^{t}-\bm{x}^{t,(l)}-\bm{x}^{t,\mathrm{sgn}}+\bm{x}^{t,\mathrm{sgn},(l)}\right\Vert _{2},
\end{align*}
for some $|\phi_{1}|\ll\frac{1}{\log m}$, where the last line follows
from the same argument as in (\ref{eq:xt-xt-l-signal-main}). 
\item Regarding the second term $\bm{w}_{2}(\tau)$, it is seen that 
\begin{align}
\left\Vert \nabla^{2}f\left(\bm{x}\left(\tau\right)\right)-\nabla^{2}f\left(\tilde{\bm{x}}\left(\tau\right)\right)\right\Vert  & =\left\Vert \frac{3}{m}\sum_{i=1}^{m}\left[\big(\bm{a}_{i}^{\top}\bm{x}(\tau)\big)^{2}-\big(\bm{a}_{i}^{\top}\tilde{\bm{x}}(\tau)\big)^{2}\right]\bm{a}_{i}\bm{a}_{i}^{\top}\right\Vert \nonumber \\
 & \leq\max_{1\leq i\leq m}\left|\big(\bm{a}_{i}^{\top}\bm{x}(\tau)\big)^{2}-\big(\bm{a}_{i}^{\top}\tilde{\bm{x}}(\tau)\big)^{2}\right|\left\Vert \frac{3}{m}\sum_{i=1}^{m}\bm{a}_{i}\bm{a}_{i}^{\top}\right\Vert \nonumber \\
 & \leq\max_{1\leq i\leq m}\left|\bm{a}_{i}^{\top}\left(\bm{x}\left(\tau\right)-\tilde{\bm{x}}\left(\tau\right)\right)\right|\max_{1\leq i\leq m}\left|\bm{a}_{i}^{\top}\left(\bm{x}\left(\tau\right)+\tilde{\bm{x}}\left(\tau\right)\right)\right|\left\Vert \frac{3}{m}\sum_{i=1}^{m}\bm{a}_{i}\bm{a}_{i}^{\top}\right\Vert \nonumber \\
 & \lesssim\max_{1\leq i\leq m}\left|\bm{a}_{i}^{\top}\left(\bm{x}\left(\tau\right)-\tilde{\bm{x}}\left(\tau\right)\right)\right|\sqrt{\log m},\label{eq:double-hessian-diff}
\end{align}
where the last line makes use of Lemma \ref{lemma:ai-ai-spectral-upper-bound}
as well as the incoherence conditions 
\begin{align}
\max_{1\leq i\leq m}\left|\bm{a}_{i}^{\top}\left(\bm{x}\left(\tau\right)+\tilde{\bm{x}}\left(\tau\right)\right)\right| & \leq\max_{1\leq i\leq m}\left|\bm{a}_{i}^{\top}\bm{x}\left(\tau\right)\right|+\max_{1\leq i\leq m}\left|\bm{a}_{i}^{\top}\tilde{\bm{x}}\left(\tau\right)\right|\lesssim\sqrt{\log m}.\label{eq:double-diff-incoherence-condition}
\end{align}
Note that 
\begin{align*}
\bm{x}(\tau)-\tilde{\bm{x}}(\tau) & =\bm{x}^{t}+\tau\left(\bm{x}^{t,(l)}-\bm{x}^{t}\right)-\left[\bm{x}^{t,\mathrm{sgn}}+\tau\left(\bm{x}^{t,\mathrm{sgn},(l)}-\bm{x}^{t,\mathrm{sgn}}\right)\right]\\
 & =\left(1-\tau\right)\left(\bm{x}^{t}-\bm{x}^{t,\text{sgn}}\right)+\tau\big(\bm{x}^{t,(l)}-\bm{x}^{t,\text{sgn},\left(l\right)}\big).
\end{align*}
This implies for all $0\leq\tau\leq1$, 
\[
\left|\bm{a}_{i}^{\top}\big(\bm{x}(\tau)-\tilde{\bm{x}}(\tau)\big)\right|\leq\left|\bm{a}_{i}^{\top}\big(\bm{x}^{t}-\bm{x}^{t,\mathrm{sgn}}\big)\right|+\left|\bm{a}_{i}^{\top}\big(\bm{x}^{t,(l)}-\bm{x}^{t,\mathrm{sgn},(l)}\big)\right|.
\]
Moreover, the triangle inequality together with the Cauchy-Schwarz
inequality tells us that 
\begin{align*}
\left|\bm{a}_{i}^{\top}\big(\bm{x}^{t,(l)}-\bm{x}^{t,\text{sgn},\left(l\right)}\big)\right| & \leq\left|\bm{a}_{i}^{\top}\big(\bm{x}^{t}-\bm{x}^{t,\mathrm{sgn}}\big)\right|+\left|\bm{a}_{i}^{\top}\big(\bm{x}^{t}-\bm{x}^{t,\mathrm{sgn}}-\bm{x}^{t,(l)}+\bm{x}^{t,\mathrm{sgn},(l)}\big)\right|\\
 & \leq\left|\bm{a}_{i}^{\top}\big(\bm{x}^{t}-\bm{x}^{t,\mathrm{sgn}}\big)\right|+\left\Vert \bm{a}_{i}\right\Vert _{2}\big\|\bm{x}^{t}-\bm{x}^{t,\mathrm{sgn}}-\bm{x}^{t,(l)}+\bm{x}^{t,\mathrm{sgn},(l)}\big\|_{2}
\end{align*}
and 
\begin{align*}
\left|\bm{a}_{i}^{\top}\big(\bm{x}^{t}-\bm{x}^{t,\mathrm{sgn}}\big)\right| & \leq\left|\bm{a}_{i}^{\top}\big(\bm{x}^{t,(i)}-\bm{x}^{t,\mathrm{sgn},(i)}\big)\right|+\left|\bm{a}_{i}^{\top}\left(\bm{x}^{t}-\bm{x}^{t,\mathrm{sgn}}-\bm{x}^{t,(i)}+\bm{x}^{t,\mathrm{sgn},(i)}\right)\right|\\
 & \leq\left|\bm{a}_{i}^{\top}\big(\bm{x}^{t,(i)}-\bm{x}^{t,\mathrm{sgn},(i)}\big)\right|+\left\Vert \bm{a}_{i}\right\Vert _{2}\big\|\bm{x}^{t}-\bm{x}^{t,\mathrm{sgn}}-\bm{x}^{t,(i)}+\bm{x}^{t,\mathrm{sgn},(i)}\big\|_{2}.
\end{align*}
Combine the previous three inequalities to obtain 
\begin{align*}
 & \max_{1\leq i\leq m}\left|\bm{a}_{i}^{\top}\left(\bm{x}\left(\tau\right)-\tilde{\bm{x}}\left(\tau\right)\right)\right|\leq\max_{1\leq i\leq m}\left|\bm{a}_{i}^{\top}\left(\bm{x}^{t}-\bm{x}^{t,\mathrm{sgn}}\right)\right|+\max_{1\leq i\leq m}\left|\bm{a}_{i}^{\top}\big(\bm{x}^{t,(l)}-\bm{x}^{t,\text{sgn},\left(l\right)}\big)\right|\\
 & \quad\leq2\max_{1\leq i\leq m}\left|\bm{a}_{i}^{\top}\big(\bm{x}^{t,(i)}-\bm{x}^{t,\text{sgn},\left(i\right)}\big)\right|+3\max_{1\leq i\leq m}\left\Vert \bm{a}_{i}\right\Vert _{2}\max_{1\leq l\leq m}\big\|\bm{x}^{t}-\bm{x}^{t,\mathrm{sgn}}-\bm{x}^{t,(l)}+\bm{x}^{t,\mathrm{sgn},(l)}\big\|_{2}\\
 & \quad\lesssim\sqrt{\log m}\max_{1\leq i\leq m}\big\|\bm{x}^{t,(i)}-\bm{x}^{t,\mathrm{sgn},(i)}\big\|_{2}+\sqrt{n}\max_{1\leq l\leq m}\big\|\bm{x}^{t}-\bm{x}^{t,\mathrm{sgn}}-\bm{x}^{t,(l)}+\bm{x}^{t,\mathrm{sgn},(l)}\big\|_{2},
\end{align*}
where the last inequality follows from the independence between $\bm{a}_{i}$
and $\bm{x}^{t,\left(i\right)}-\bm{x}^{t,\text{sgn},\left(i\right)}$
and the fact (\ref{eq:max-a-i-norm}). Substituting the above bound
into (\ref{eq:double-hessian-diff}) results in 
\begin{align*}
 & \left\Vert \nabla^{2}f\left(\bm{x}\left(\tau\right)\right)-\nabla^{2}f\left(\tilde{\bm{x}}\left(\tau\right)\right)\right\Vert \\
 & \quad\lesssim\log m\max_{1\leq i\leq m}\left\Vert \bm{x}^{t,(i)}-\bm{x}^{t,\mathrm{sgn},(i)}\right\Vert _{2}+\sqrt{n\log m}\max_{1\leq l\leq m}\big\|\bm{x}^{t}-\bm{x}^{t,\mathrm{sgn}}-\bm{x}^{t,(l)}+\bm{x}^{t,\mathrm{sgn},(l)}\big\|_{2}\\
 & \quad\lesssim\log m\left\Vert \bm{x}^{t}-\bm{x}^{t,\mathrm{sgn}}\right\Vert _{2}+\sqrt{n\log m}\max_{1\leq l\leq m}\big\|\bm{x}^{t}-\bm{x}^{t,\mathrm{sgn}}-\bm{x}^{t,(l)}+\bm{x}^{t,\mathrm{sgn},(l)}\big\|_{2}.
\end{align*}
Here, we use the triangle inequality 
\[
\left\Vert \bm{x}^{t,(i)}-\bm{x}^{t,\mathrm{sgn},(i)}\right\Vert _{2}\leq\left\Vert \bm{x}^{t}-\bm{x}^{t,\mathrm{sgn}}\right\Vert _{2}+\big\|\bm{x}^{t}-\bm{x}^{t,\mathrm{sgn}}-\bm{x}^{t,(i)}+\bm{x}^{t,\mathrm{sgn},(i)}\big\|_{2}
\]
and the fact $\log m\leq\sqrt{n\log m}$. Consequently, we have the
following bound for $\bm{w}_{2}\left(\tau\right)$: 
\begin{align*}
 & \|\bm{w}_{2}(\tau)\|_{2}\leq\left\Vert \nabla^{2}f\left(\bm{x}\left(\tau\right)\right)-\nabla^{2}f\left(\tilde{\bm{x}}\left(\tau\right)\right)\right\Vert \cdot\big\|\bm{x}^{t,\mathrm{sgn}}-\bm{x}^{t,\mathrm{sgn},(l)}\big\|_{2}\\
 & \text{ }\lesssim\left\{ \log m\left\Vert \bm{x}^{t}-\bm{x}^{t,\mathrm{sgn}}\right\Vert _{2}+\sqrt{n\log m}\max_{1\leq l\leq m}\big\|\bm{x}^{t}-\bm{x}^{t,\mathrm{sgn}}-\bm{x}^{t,(l)}+\bm{x}^{t,\mathrm{sgn},(l)}\big\|_{2}\right\} \big\|\bm{x}^{t,\mathrm{sgn}}-\bm{x}^{t,\mathrm{sgn},(l)}\big\|_{2}.
\end{align*}
\item It remains to control $\bm{w}_{3}\left(\tau\right)$. To this end,
one has 
\begin{align*}
\bm{w}_{3}(\tau) & =\frac{1}{m}\sum_{i=1}^{m}\underbrace{\left[3\left(\bm{a}_{i}^{\top}\tilde{\bm{x}}\left(\tau\right)\right)^{2}-\left(\bm{a}_{i}^{\top}\bm{x}^{\natural}\right)^{2}\right]}_{:=\rho_{i}}\bm{a}_{i}\bm{a}_{i}^{\top}\big(\bm{x}^{t,\mathrm{sgn}}-\bm{x}^{t,\mathrm{sgn},(l)}\big)\\
 & \quad-\frac{1}{m}\sum_{i=1}^{m}\underbrace{\left[3\big(\bm{a}_{i}^{\mathrm{sgn}\top}\tilde{\bm{x}}\left(\tau\right)\big)^{2}-\big(\bm{a}_{i}^{\mathrm{sgn}\top}\bm{x}^{\natural}\big)^{2}\right]}_{:=\rho_{i}^{\text{sgn}}}\bm{a}_{i}^{\mathrm{sgn}}\bm{a}_{i}^{\mathrm{sgn}\top}\big(\bm{x}^{t,\mathrm{sgn}}-\bm{x}^{t,\mathrm{sgn},(l)}\big).
\end{align*}
We consider the first entry of $\bm{w}_{3}\left(\tau\right)$, i.e.~$w_{3,\parallel}\left(\tau\right)$,
and the 2nd through the $n$th entries, $\bm{w}_{3,\perp}\left(\tau\right)$,
separately. For the first entry $w_{3,\parallel}\left(\tau\right)$,
we obtain 
\begin{equation}
w_{3,\parallel}\left(\tau\right)=\frac{1}{m}\sum_{i=1}^{m}\rho_{i}\xi_{i}\left|a_{i,1}\right|\bm{a}_{i}^{\top}\big(\bm{x}^{t,\mathrm{sgn}}-\bm{x}^{t,\mathrm{sgn},(l)}\big)-\frac{1}{m}\sum_{i=1}^{m}\rho_{i}^{\text{sgn}}\xi_{i}^{\text{sgn}}\left|a_{i,1}\right|\bm{a}_{i}^{\text{sgn}\top}\big(\bm{x}^{t,\mathrm{sgn}}-\bm{x}^{t,\mathrm{sgn},(l)}\big).\label{eq:double-w-3-signal}
\end{equation}
Use the expansions 
\begin{align*}
\bm{a}_{i}^{\top}\big(\bm{x}^{t,\mathrm{sgn}}-\bm{x}^{t,\mathrm{sgn},(l)}\big) & =\xi_{i}\left|a_{i,1}\right|\left(x_{\parallel}^{t,\text{sgn}}-x_{\parallel}^{t,\text{sgn},\left(l\right)}\right)+\bm{a}_{i,\perp}^{\top}\big(\bm{x}_{\perp}^{t,\mathrm{sgn}}-\bm{x}_{\perp}^{t,\mathrm{sgn},\left(l\right)}\big)\\
\bm{a}_{i}^{\text{sgn}\top}\big(\bm{x}^{t,\mathrm{sgn}}-\bm{x}^{t,\mathrm{sgn},(l)}\big) & =\xi_{i}^{\text{sgn}}\left|a_{i,1}\right|\left(x_{\parallel}^{t,\text{sgn}}-x_{\parallel}^{t,\text{sgn},\left(l\right)}\right)+\bm{a}_{i,\perp}^{\top}\big(\bm{x}_{\perp}^{t,\mathrm{sgn}}-\bm{x}_{\perp}^{t,\mathrm{sgn},\left(l\right)}\big)
\end{align*}
to further obtain 
\begin{align*}
w_{3,\parallel}\left(\tau\right) & =\frac{1}{m}\sum_{i=1}^{m}\left(\rho_{i}-\rho_{i}^{\text{sgn}}\right)\left|a_{i,1}\right|^{2}\big(x_{\parallel}^{t,\text{sgn}}-x_{\parallel}^{t,\text{sgn},\left(l\right)}\big)+\frac{1}{m}\sum_{i=1}^{m}\left(\rho_{i}\xi_{i}-\rho_{i}^{\text{sgn}}\xi_{i}^{\text{sgn}}\right)\left|a_{i,1}\right|\bm{a}_{i,\perp}^{\top}\big(\bm{x}_{\perp}^{t,\mathrm{sgn}}-\bm{x}_{\perp}^{t,\mathrm{sgn},\left(l\right)}\big)\\
 & =\underbrace{\frac{1}{m}\sum_{i=1}^{m}\left(\rho_{i}-\rho_{i}^{\text{sgn}}\right)\left|a_{i,1}\right|^{2}\big(x_{\parallel}^{t,\text{sgn}}-x_{\parallel}^{t,\text{sgn},\left(l\right)}\big)}_{:=\theta_{1}\left(\tau\right)}\\
 & \quad+\underbrace{\frac{1}{m}\sum_{i=1}^{m}\left(\rho_{i}-\rho_{i}^{\text{sgn}}\right)\left(\xi_{i}+\xi_{i}^{\text{sgn}}\right)\left|a_{i,1}\right|\bm{a}_{i,\perp}^{\top}\big(\bm{x}_{\perp}^{t,\mathrm{sgn}}-\bm{x}_{\perp}^{t,\mathrm{sgn},\left(l\right)}\big)}_{:=\theta_{2}\left(\tau\right)}\\
 & \quad+\underbrace{\frac{1}{m}\sum_{i=1}^{m}\rho_{i}^{\text{sgn}}\xi_{i}\left|a_{i,1}\right|\bm{a}_{i,\perp}^{\top}\big(\bm{x}_{\perp}^{t,\mathrm{sgn}}-\bm{x}_{\perp}^{t,\mathrm{sgn},\left(l\right)}\big)}_{:=\theta_{3}\left(\tau\right)}-\underbrace{\frac{1}{m}\sum_{i=1}^{m}\rho_{i}\xi_{i}^{\text{sgn}}\left|a_{i,1}\right|\bm{a}_{i,\perp}^{\top}\big(\bm{x}_{\perp}^{t,\mathrm{sgn}}-\bm{x}_{\perp}^{t,\mathrm{sgn},\left(l\right)}\big)}_{:=\theta_{4}\left(\tau\right)}
\end{align*}
The identity (\ref{eq:identity-a-a-sgn}) reveals that 
\begin{equation}
\rho_{i}-\rho_{i}^{\text{sgn}}=6\left(\xi_{i}-\xi_{i}^{\mathrm{sgn}}\right)\left|a_{i,1}\right|\tilde{x}_{\parallel}\left(\tau\right)\bm{a}_{i,\perp}^{\top}\tilde{\bm{x}}_{\perp}\left(\tau\right),\label{eq:identity-square-diff-double}
\end{equation}
and hence 
\[
\theta_{1}\left(\tau\right)=\tilde{x}_{\parallel}\left(\tau\right)\cdot\frac{6}{m}\sum_{i=1}^{m}\left(\xi_{i}-\xi_{i}^{\mathrm{sgn}}\right)\left|a_{i,1}\right|^{3}\bm{a}_{i,\perp}^{\top}\tilde{\bm{x}}_{\perp}\left(\tau\right)\left(x_{\parallel}^{t,\text{sgn}}-x_{\parallel}^{t,\text{sgn},\left(l\right)}\right),
\]
which together with (\ref{eq:xi-xi-sgn-ai-duplicate}) implies
\begin{align*}
\left|\theta_{1}\left(\tau\right)\right| & \leq6\left|\tilde{x}_{\parallel}\left(\tau\right)\right|\left|x_{\parallel}^{t,\text{sgn}}-x_{\parallel}^{t,\text{sgn},\left(l\right)}\right|\left\Vert \tilde{\bm{x}}_{\perp}\left(\tau\right)\right\Vert _{2}\left\Vert \frac{1}{m}\sum_{i=1}^{m}\left(\xi_{i}-\xi_{i}^{\mathrm{sgn}}\right)\left|a_{i,1}\right|^{3}\bm{a}_{i,\perp}^{\top}\right\Vert \\
 & \lesssim\sqrt{\frac{n\log^{3}m}{m}}\left|\tilde{x}_{\parallel}\left(\tau\right)\right|\left|x_{\parallel}^{t,\text{sgn}}-x_{\parallel}^{t,\text{sgn},\left(l\right)}\right|\left\Vert \tilde{\bm{x}}_{\perp}\left(\tau\right)\right\Vert _{2}\\
 & \lesssim\sqrt{\frac{n\log^{3}m}{m}}\left|\tilde{x}_{\parallel}\left(\tau\right)\right|\left|x_{\parallel}^{t,\text{sgn}}-x_{\parallel}^{t,\text{sgn},\left(l\right)}\right|,
\end{align*}
where the penultimate inequality arises from (\ref{eq:xi-xi-sgn-ai-duplicate})
and the last inequality utilizes the fact that 
\[
\left\Vert \tilde{\bm{x}}_{\perp}\left(\tau\right)\right\Vert _{2}\leq\left\Vert \bm{x}_{\perp}^{t,\text{sgn}}\right\Vert _{2}+\left\Vert \bm{x}_{\perp}^{t,\text{sgn},\left(l\right)}\right\Vert _{2}\lesssim1.
\]
Again, we can use (\ref{eq:identity-square-diff-double}) and the
identity $\left(\xi_{i}-\xi_{i}^{\mathrm{sgn}}\right)\left(\xi_{i}+\xi_{i}^{\mathrm{sgn}}\right)=0$
to deduce that 
\[
\theta_{2}\left(\tau\right)=0.
\]
When it comes to $\theta_{3}\left(\tau\right)$, we exploit the independence
between $\xi_{i}$ and $\rho_{i}^{\text{sgn}}\left|a_{i,1}\right|\bm{a}_{i,\perp}^{\top}\left(\bm{x}_{\perp}^{t,\mathrm{sgn}}-\bm{x}_{\perp}^{t,\mathrm{sgn},\left(l\right)}\right)$
and apply the Bernstein inequality (see Lemma \ref{lemma:bernstein})
to obtain that with probability exceeding $1-O\left(m^{-10}\right)$
\[
\left|\theta_{3}\left(\tau\right)\right|\lesssim\frac{1}{m}\left(\sqrt{V_{1}\log m}+B_{1}\log m\right),
\]
where 
\[
V_{1}:=\sum_{i=1}^{m}\left(\rho_{i}^{\text{sgn}}\right)^{2}\left|a_{i,1}\right|^{2}\left|\bm{a}_{i,\perp}^{\top}\big(\bm{x}_{\perp}^{t,\mathrm{sgn}}-\bm{x}_{\perp}^{t,\mathrm{sgn},\left(l\right)}\big)\right|^{2}\quad\text{and}\quad B_{1}:=\max_{1\leq i\leq m}\left|\rho_{i}^{\text{sgn}}\right|\left|a_{i,1}\right|\left|\bm{a}_{i,\perp}^{\top}\big(\bm{x}_{\perp}^{t,\mathrm{sgn}}-\bm{x}_{\perp}^{t,\mathrm{sgn},\left(l\right)}\big)\right|.
\]
Combine the fact $\left|\rho_{i}^{\text{sgn}}\right|\lesssim\log m$
and Lemma \ref{lemma:hessian-concentration} to see that 
\[
V_{1}\lesssim\big(m\log^{2}m\big)\big\|\bm{x}_{\perp}^{t,\mathrm{sgn}}-\bm{x}_{\perp}^{t,\mathrm{sgn},\left(l\right)}\big\|_{2}^{2}.
\]
In addition, the facts $\left|\rho_{i}^{\text{sgn}}\right|\lesssim\log m$,
(\ref{eq:max-a-i-1}) and (\ref{eq:max-a-i-norm}) tell us that 
\[
B_{1}\lesssim\sqrt{n\log^{3}m}\big\|\bm{x}_{\perp}^{t,\mathrm{sgn}}-\bm{x}_{\perp}^{t,\mathrm{sgn},\left(l\right)}\big\|_{2}.
\]
Continue the derivation to reach 
\begin{equation}
\left|\theta_{3}\left(\tau\right)\right|\lesssim\left(\sqrt{\frac{\log^{3}m}{m}}+\frac{\sqrt{n\log^{5}m}}{m}\right)\big\|\bm{x}_{\perp}^{t,\mathrm{sgn}}-\bm{x}_{\perp}^{t,\mathrm{sgn},\left(l\right)}\big\|_{2}\lesssim\sqrt{\frac{\log^{3}m}{m}}\big\|\bm{x}_{\perp}^{t,\mathrm{sgn}}-\bm{x}_{\perp}^{t,\mathrm{sgn},\left(l\right)}\big\|_{2},\label{eq:double-theta-3}
\end{equation}
provided that $m\gtrsim n\log^{2}m$. This further allows us to obtain
\begin{align}
\left|\theta_{4}\left(\tau\right)\right| & =\left|\frac{1}{m}\sum_{i=1}^{m}\left[3\left(\bm{a}_{i}^{\top}\tilde{\bm{x}}\left(\tau\right)\right)^{2}-\left(\bm{a}_{i}^{\top}\bm{x}^{\natural}\right)^{2}\right]\xi_{i}^{\text{sgn}}\left|a_{i,1}\right|\bm{a}_{i,\perp}^{\top}\big(\bm{x}_{\perp}^{t,\mathrm{sgn}}-\bm{x}_{\perp}^{t,\mathrm{sgn},\left(l\right)}\big)\right|\nonumber \\
 & \leq\left|\frac{1}{m}\sum_{i=1}^{m}\left\{ 3\left(\bm{a}_{i}^{\top}\bm{x}\left(\tau\right)\right)^{2}-\left|a_{i,1}\right|^{2}\right\} \xi_{i}^{\text{sgn}}\left|a_{i,1}\right|\bm{a}_{i,\perp}^{\top}\big(\bm{x}_{\perp}^{t}-\bm{x}_{\perp}^{t,(l)}\big)\right|\nonumber \\
 & \quad+\left|\frac{1}{m}\sum_{i=1}^{m}\left\{ 3\left(\bm{a}_{i}^{\top}\tilde{\bm{x}}\left(\tau\right)\right)^{2}-3\left(\bm{a}_{i}^{\top}\bm{x}\left(\tau\right)\right)^{2}\right\} \xi_{i}^{\text{sgn}}\left|a_{i,1}\right|\bm{a}_{i,\perp}^{\top}\big(\bm{x}_{\perp}^{t,\mathrm{sgn}}-\bm{x}_{\perp}^{t,\mathrm{sgn},\left(l\right)}\big)\right|\nonumber \\
 & \quad+\left|\frac{1}{m}\sum_{i=1}^{m}\left\{ 3\left(\bm{a}_{i}^{\top}\bm{x}\left(\tau\right)\right)^{2}-\left|a_{i,1}\right|^{2}\right\} \xi_{i}^{\text{sgn}}\left|a_{i,1}\right|\bm{a}_{i,\perp}^{\top}\left(\bm{x}_{\perp}^{t}-\bm{x}_{\perp}^{t,(l)}-\bm{x}_{\perp}^{t,\mathrm{sgn}}+\bm{x}_{\perp}^{t,\mathrm{sgn},(l)}\right)\right|\nonumber \\
 & \lesssim\sqrt{\frac{\log^{3}m}{m}}\left\Vert \bm{x}_{\perp}^{t}-\bm{x}_{\perp}^{t,(l)}\right\Vert _{2}+\sqrt{\log m}\left\Vert \bm{x}_{\perp}^{t,\mathrm{sgn}}-\bm{x}_{\perp}^{t,\mathrm{sgn},(l)}\right\Vert _{2}\left\Vert \bm{x}\left(\tau\right)-\tilde{\bm{x}}\left(\tau\right)\right\Vert _{2}\nonumber \\
 & \quad+\frac{1}{\log^{3/2}m}\left\Vert \bm{x}_{\perp}^{t}-\bm{x}_{\perp}^{t,(l)}-\bm{x}_{\perp}^{t,\mathrm{sgn}}+\bm{x}_{\perp}^{t,\mathrm{sgn}(l)}\right\Vert _{2}.\label{eq:double-theta-4}
\end{align}
To justify the last inequality, we first use similar bounds as in
(\ref{eq:double-theta-3}) to show that with probability exceeding
$1-O\left(m^{-10}\right)$, 
\[
\left|\frac{1}{m}\sum_{i=1}^{m}\left\{ 3\big(\bm{a}_{i}^{\top}\bm{x}(\tau)\big)^{2}-\left|a_{i,1}\right|^{2}\right\} \xi_{i}^{\text{sgn}}\left|a_{i,1}\right|\bm{a}_{i,\perp}^{\top}\big(\bm{x}_{\perp}^{t}-\bm{x}_{\perp}^{t,(l)}\big)\right|\lesssim\sqrt{\frac{\log^{3}m}{m}}\left\Vert \bm{x}_{\perp}^{t}-\bm{x}_{\perp}^{t,(l)}\right\Vert _{2}.
\]
In addition, we can invoke the Cauchy-Schwarz inequality to get 
\begin{align*}
 & \left|\frac{1}{m}\sum_{i=1}^{m}\left\{ 3\left(\bm{a}_{i}^{\top}\tilde{\bm{x}}\left(\tau\right)\right)^{2}-3\left(\bm{a}_{i}^{\top}\bm{x}\left(\tau\right)\right)^{2}\right\} \xi_{i}^{\text{sgn}}\left|a_{i,1}\right|\bm{a}_{i,\perp}^{\top}\left(\bm{x}_{\perp}^{t,\mathrm{sgn}}-\bm{x}_{\perp}^{t,\mathrm{sgn},(l)}\right)\right|\\
 & \quad\leq\sqrt{\left(\frac{1}{m}\sum_{i=1}^{m}\left\{ 3\left(\bm{a}_{i}^{\top}\tilde{\bm{x}}\left(\tau\right)\right)^{2}-3\left(\bm{a}_{i}^{\top}\bm{x}\left(\tau\right)\right)^{2}\right\} ^{2}\left|a_{i,1}\right|^{2}\right)\left(\frac{1}{m}\sum_{i=1}^{m}\left|\bm{a}_{i,\perp}^{\top}\left(\bm{x}_{\perp}^{t,\mathrm{sgn}}-\bm{x}_{\perp}^{t,\mathrm{sgn},(l)}\right)\right|^{2}\right)}\\
 & \quad\lesssim\sqrt{\frac{1}{m}\sum_{i=1}^{m}\left\{ \left(\bm{a}_{i}^{\top}\tilde{\bm{x}}\left(\tau\right)\right)^{2}-\left(\bm{a}_{i}^{\top}\bm{x}\left(\tau\right)\right)^{2}\right\} ^{2}\left|a_{i,1}\right|^{2}}\left\Vert \bm{x}_{\perp}^{t,\mathrm{sgn}}-\bm{x}_{\perp}^{t,\mathrm{sgn},(l)}\right\Vert _{2},
\end{align*}
where the last line arises from Lemma \ref{lemma:ai-ai-spectral-upper-bound}.
For the remaining term in the expression above, we have 
\begin{align*}
\sqrt{\frac{1}{m}\sum_{i=1}^{m}\left\{ \left(\bm{a}_{i}^{\top}\tilde{\bm{x}}\left(\tau\right)\right)^{2}-\left(\bm{a}_{i}^{\top}\bm{x}\left(\tau\right)\right)^{2}\right\} ^{2}\left|a_{i,1}\right|^{2}} & =\sqrt{\frac{1}{m}\sum_{i=1}^{m}\left|a_{i,1}\right|^{2}\left[\bm{a}_{i}^{\top}\left(\bm{x}\left(\tau\right)-\tilde{\bm{x}}\left(\tau\right)\right)\right]^{2}\left[\bm{a}_{i}^{\top}\left(\bm{x}\left(\tau\right)+\tilde{\bm{x}}\left(\tau\right)\right)\right]^{2}}\\
 & \overset{\left(\text{i}\right)}{\lesssim}\sqrt{\frac{\log m}{m}\sum_{i=1}^{m}\left|a_{i,1}\right|^{2}\left[\bm{a}_{i}^{\top}\left(\bm{x}\left(\tau\right)-\tilde{\bm{x}}\left(\tau\right)\right)\right]^{2}}\\
 & \overset{\left(\text{ii}\right)}{\lesssim}\sqrt{\log m}\left\Vert \bm{x}\left(\tau\right)-\tilde{\bm{x}}\left(\tau\right)\right\Vert _{2}.
\end{align*}
Here, (i) makes use of the incoherence condition (\ref{eq:double-diff-incoherence-condition}),
whereas (ii) comes from Lemma \ref{lemma:hessian-concentration}.
Regarding the last line in (\ref{eq:double-theta-4}), we have 
\begin{align*}
 & \left|\frac{1}{m}\sum_{i=1}^{m}\left\{ 3\left(\bm{a}_{i}^{\top}\bm{x}\left(\tau\right)\right)^{2}-\left|a_{i,1}\right|^{2}\right\} \xi_{i}^{\text{sgn}}\left|a_{i,1}\right|\bm{a}_{i,\perp}^{\top}\left(\bm{x}_{\perp}^{t}-\bm{x}_{\perp}^{t,(l)}-\bm{x}_{\perp}^{t,\mathrm{sgn}}+\bm{x}_{\perp}^{t,\mathrm{sgn},(l)}\right)\right|\\
 & \quad\leq\left\Vert \frac{1}{m}\sum_{i=1}^{m}\left\{ 3\left(\bm{a}_{i}^{\top}\bm{x}\left(\tau\right)\right)^{2}-\left|a_{i,1}\right|^{2}\right\} \xi_{i}^{\text{sgn}}\left|a_{i,1}\right|\bm{a}_{i,\perp}^{\top}\right\Vert _{2}\left\Vert \bm{x}_{\perp}^{t}-\bm{x}_{\perp}^{t,(l)}-\bm{x}_{\perp}^{t,\mathrm{sgn}}+\bm{x}_{\perp}^{t,\mathrm{sgn},(l)}\right\Vert _{2}.
\end{align*}
Since $\xi_{i}^{\text{sgn}}$ is independent of $\left\{ 3\left(\bm{a}_{i}^{\top}\bm{x}\left(\tau\right)\right)^{2}-\left|a_{i,1}\right|^{2}\right\} \left|a_{i,1}\right|\bm{a}_{i,\perp}^{\top}$,
one can apply the Bernstein inequality (see Lemma \ref{lemma:bernstein})
to deduce that 
\[
\left\Vert \frac{1}{m}\sum_{i=1}^{m}\left\{ 3\left(\bm{a}_{i}^{\top}\bm{x}\left(\tau\right)\right)^{2}-\left|a_{i,1}\right|^{2}\right\} \xi_{i}^{\text{sgn}}\left|a_{i,1}\right|\bm{a}_{i,\perp}^{\top}\right\Vert _{2}\lesssim\frac{1}{m}\left(\sqrt{V_{2}\log m}+B_{2}\log m\right),
\]
where 
\begin{align*}
V_{2} & :=\sum_{i=1}^{m}\left\{ 3\left(\bm{a}_{i}^{\top}\bm{x}\left(\tau\right)\right)^{2}-\left|a_{i,1}\right|^{2}\right\} ^{2}\left|a_{i,1}\right|^{2}\bm{a}_{i,\perp}^{\top}\bm{a}_{i,\perp}\lesssim mn\log^{3}m;\\
B_{2} & :=\max_{1\leq i\leq m}\left|3\left(\bm{a}_{i}^{\top}\bm{x}\left(\tau\right)\right)^{2}-\left|a_{i,1}\right|^{2}\right|\left|a_{i,1}\right|\left\Vert \bm{a}_{i,\perp}\right\Vert _{2}\lesssim\sqrt{n}\log^{3/2}m.
\end{align*}
This further implies 
\[
\left\Vert \frac{1}{m}\sum_{i=1}^{m}\left\{ 3\left(\bm{a}_{i}^{\top}\bm{x}\left(\tau\right)\right)^{2}-\left|a_{i,1}\right|^{2}\right\} \xi_{i}^{\text{sgn}}\left|a_{i,1}\right|\bm{a}_{i,\perp}^{\top}\right\Vert _{2}\lesssim\sqrt{\frac{n\log^{4}m}{m}}+\frac{\sqrt{n}\log^{5/2}m}{m}\lesssim\frac{1}{\log^{3/2}m},
\]
as long as $m\gg n\log^{7}m$. Take the previous bounds on $\theta_{1}\left(\tau\right)$,
$\theta_{2}\left(\tau\right)$, $\theta_{3}\left(\tau\right)$ and
$\theta_{4}\left(\tau\right)$ collectively to arrive at 
\begin{align*}
\left|w_{3,\parallel}\left(\tau\right)\right| & \lesssim\sqrt{\frac{n\log^{3}m}{m}}\left|\tilde{x}_{\parallel}\left(\tau\right)\right|\left|x_{\parallel}^{t,\text{sgn}}-x_{\parallel}^{t,\text{sgn},\left(l\right)}\right|+\sqrt{\frac{\log^{3}m}{m}}\left\Vert \bm{x}_{\perp}^{t,\mathrm{sgn}}-\bm{x}_{\perp}^{t,\mathrm{sgn},\left(l\right)}\right\Vert _{2}\\
 & \quad+\sqrt{\frac{\log^{3}m}{m}}\left\Vert \bm{x}_{\perp}^{t}-\bm{x}_{\perp}^{t,(l)}\right\Vert _{2}+\sqrt{\log m}\left\Vert \bm{x}_{\perp}^{t,\mathrm{sgn}}-\bm{x}_{\perp}^{t,\mathrm{sgn},(l)}\right\Vert _{2}\left\Vert \bm{x}\left(\tau\right)-\tilde{\bm{x}}\left(\tau\right)\right\Vert _{2}\\
 & \quad+\frac{1}{\log^{3/2}m}\left\Vert \bm{x}_{\perp}^{t}-\bm{x}_{\perp}^{t,(l)}-\bm{x}_{\perp}^{t,\mathrm{sgn}}+\bm{x}_{\perp}^{t,\mathrm{sgn}(l)}\right\Vert _{2}\\
 & \lesssim\sqrt{\frac{n\log^{3}m}{m}}\left|\tilde{x}_{\parallel}\left(\tau\right)\right|\left|x_{\parallel}^{t,\text{sgn}}-x_{\parallel}^{t,\text{sgn},\left(l\right)}\right|\\
 & \quad+\sqrt{\frac{\log^{3}m}{m}}\left\Vert \bm{x}_{\perp}^{t}-\bm{x}_{\perp}^{t,(l)}\right\Vert _{2}+\sqrt{\log m}\left\Vert \bm{x}_{\perp}^{t,\mathrm{sgn}}-\bm{x}_{\perp}^{t,\mathrm{sgn},(l)}\right\Vert _{2}\left\Vert \bm{x}\left(\tau\right)-\tilde{\bm{x}}\left(\tau\right)\right\Vert _{2}\\
 & \quad+\frac{1}{\log^{3/2}m}\left\Vert \bm{x}_{\perp}^{t}-\bm{x}_{\perp}^{t,(l)}-\bm{x}_{\perp}^{t,\mathrm{sgn}}+\bm{x}_{\perp}^{t,\mathrm{sgn}(l)}\right\Vert _{2},
\end{align*}
where the last inequality follows from the triangle inequality 
\[
\left\Vert \bm{x}_{\perp}^{t,\mathrm{sgn}}-\bm{x}_{\perp}^{t,\mathrm{sgn},\left(l\right)}\right\Vert _{2}\leq\left\Vert \bm{x}_{\perp}^{t}-\bm{x}_{\perp}^{t,(l)}\right\Vert _{2}+\left\Vert \bm{x}_{\perp}^{t}-\bm{x}_{\perp}^{t,(l)}-\bm{x}_{\perp}^{t,\mathrm{sgn}}+\bm{x}_{\perp}^{t,\mathrm{sgn}(l)}\right\Vert _{2}
\]
and the fact that $\sqrt{\frac{\log^{3}m}{m}}\leq\frac{1}{\log^{3/2}m}$
for $m$ sufficiently large. Similar to (\ref{eq:double-w-3-signal}),
we have the following identity for the 2nd through the $n$th entries
of $\bm{w}_{3}\left(\tau\right)$: 
\begin{align*}
\bm{w}_{3,\perp}\left(\tau\right) & =\frac{1}{m}\sum_{i=1}^{m}\rho_{i}\bm{a}_{i,\perp}\bm{a}_{i}^{\top}\left(\bm{x}^{t,\mathrm{sgn}}-\bm{x}^{t,\mathrm{sgn},(l)}\right)-\frac{1}{m}\sum_{i=1}^{m}\rho_{i}^{\text{sgn}}\bm{a}_{i,\perp}\bm{a}_{i}^{\text{sgn}\top}\left(\bm{x}^{t,\mathrm{sgn}}-\bm{x}^{t,\mathrm{sgn},(l)}\right)\\
 & =\frac{3}{m}\sum_{i=1}^{m}\left[\left(\bm{a}_{i}^{\top}\tilde{\bm{x}}\left(\tau\right)\right)^{2}\xi_{i}-\left(\bm{a}_{i}^{\text{sgn}\top}\tilde{\bm{x}}\left(\tau\right)\right)^{2}\xi_{i}^{\mathrm{sgn}}\right]\left|a_{i,1}\right|\bm{a}_{i,\perp}\left(x_{\parallel}^{t,\text{sgn}}-x_{\parallel}^{t,\text{sgn},\left(l\right)}\right)\\
 & \quad+\frac{3}{m}\sum_{i=1}^{m}\left|a_{i,1}\right|^{2}\left(\xi_{i}-\xi_{i}^{\mathrm{sgn}}\right)\left|a_{i,1}\right|\bm{a}_{i,\perp}\left(x_{\parallel}^{t,\text{sgn}}-x_{\parallel}^{t,\text{sgn},\left(l\right)}\right)\\
 & \quad+\frac{3}{m}\sum_{i=1}^{m}\left[\left(\bm{a}_{i}^{\top}\tilde{\bm{x}}\left(\tau\right)\right)^{2}-\left(\bm{a}_{i}^{\mathrm{sgn}\top}\tilde{\bm{x}}\left(\tau\right)\right)^{2}\right]\bm{a}_{i,\perp}\bm{a}_{i,\perp}^{\top}\left(\bm{x}_{\perp}^{t,\mathrm{sgn}}-\bm{x}_{\perp}^{t,\mathrm{sgn},(l)}\right).
\end{align*}
It is easy to check by Lemma \ref{lemma:hessian-concentration} and
the incoherence conditions $\left|\bm{a}_{i}^{\top}\tilde{\bm{x}}\left(\tau\right)\right|\lesssim\sqrt{\log m}\left\Vert \tilde{\bm{x}}\left(\tau\right)\right\Vert _{2}$
and $\left|\bm{a}_{i}^{\text{sgn}\top}\tilde{\bm{x}}\left(\tau\right)\right|\lesssim\sqrt{\log m}\left\Vert \tilde{\bm{x}}\left(\tau\right)\right\Vert _{2}$
that 
\[
\frac{1}{m}\sum_{i=1}^{m}\left(\bm{a}_{i}^{\top}\tilde{\bm{x}}\left(\tau\right)\right)^{2}\xi_{i}\left|a_{i,1}\right|\bm{a}_{i,\perp}=2\tilde{x}_{1}\left(\tau\right)\tilde{\bm{x}}_{\perp}\left(\tau\right)+O\left(\sqrt{\frac{n\log^{3}m}{m}}\right),
\]
and
\begin{align*}
\frac{1}{m}\sum_{i=1}^{m}\left(\bm{a}_{i}^{\text{sgn}\top}\tilde{\bm{x}}\left(\tau\right)\right)^{2}\xi_{i}^{\text{sgn}}\left|a_{i,1}\right|\bm{a}_{i,\perp} & =2\tilde{x}_{1}\left(\tau\right)\tilde{\bm{x}}_{\perp}\left(\tau\right)+O\left(\sqrt{\frac{n\log^{3}m}{m}}\right).
\end{align*}
Besides, in view of (\ref{eq:xi-xi-sgn-ai-duplicate}), we have 
\[
\left\Vert \frac{3}{m}\sum_{i=1}^{m}\left|a_{i,1}\right|^{2}\left(\xi_{i}-\xi_{i}^{\mathrm{sgn}}\right)\left|a_{i,1}\right|\bm{a}_{i,\perp}\right\Vert _{2}\lesssim\sqrt{\frac{n\log^{3}m}{m}}.
\]
We are left with controlling $\left\Vert \frac{3}{m}\sum_{i=1}^{m}\left[\left(\bm{a}_{i}^{\top}\tilde{\bm{x}}\left(\tau\right)\right)^{2}-\left(\bm{a}_{i}^{\mathrm{sgn}\top}\tilde{\bm{x}}\left(\tau\right)\right)^{2}\right]\bm{a}_{i,\perp}\bm{a}_{i,\perp}^{\top}\left(\bm{x}_{\perp}^{t,\mathrm{sgn}}-\bm{x}_{\perp}^{t,\mathrm{sgn},(l)}\right)\right\Vert _{2}$.
To this end, one can see from (\ref{eq:identity-square-diff-double})
that
\begin{align*}
 & \left\Vert \frac{3}{m}\sum_{i=1}^{m}\left[\left(\bm{a}_{i}^{\top}\tilde{\bm{x}}\left(\tau\right)\right)^{2}-\left(\bm{a}_{i}^{\mathrm{sgn}\top}\tilde{\bm{x}}\left(\tau\right)\right)^{2}\right]\bm{a}_{i,\perp}\bm{a}_{i,\perp}^{\top}\left(\bm{x}_{\perp}^{t,\mathrm{sgn}}-\bm{x}_{\perp}^{t,\mathrm{sgn},(l)}\right)\right\Vert _{2}\\
 & \quad=\left\Vert \tilde{x}_{\parallel}\left(\tau\right)\cdot\frac{6}{m}\sum_{i=1}^{m}\left(\xi_{i}-\xi_{i}^{\mathrm{sgn}}\right)\left|a_{i,1}\right|\bm{a}_{i,\perp}\bm{a}_{i,\perp}^{\top}\tilde{\bm{x}}_{\perp}\left(\tau\right)\bm{a}_{i,\perp}^{\top}\left(\bm{x}_{\perp}^{t,\mathrm{sgn}}-\bm{x}_{\perp}^{t,\mathrm{sgn},(l)}\right)\right\Vert \\
 & \quad\leq12\max_{1\leq i\leq m}\left|a_{i,1}\right|\left|\tilde{x}_{\parallel}\left(\tau\right)\right|\max_{1\leq i\leq m}\left|\bm{a}_{i,\perp}^{\top}\tilde{\bm{x}}_{\perp}\left(\tau\right)\right|\left\Vert \frac{1}{m}\sum_{i=1}^{m}\bm{a}_{i,\perp}\bm{a}_{i,\perp}^{\top}\right\Vert \left\Vert \bm{x}_{\perp}^{t,\mathrm{sgn}}-\bm{x}_{\perp}^{t,\mathrm{sgn},(l)}\right\Vert _{2}\\
 & \quad\lesssim\log m\left|\tilde{x}_{\parallel}\left(\tau\right)\right|\left\Vert \bm{x}_{\perp}^{t,\mathrm{sgn}}-\bm{x}_{\perp}^{t,\mathrm{sgn},(l)}\right\Vert _{2},
\end{align*}
where the last relation arises from (\ref{eq:max-a-i-1}), the incoherence
condition $\max_{1\leq i\leq m}\left|\bm{a}_{i,\perp}^{\top}\tilde{\bm{x}}_{\perp}\left(\tau\right)\right|\lesssim\sqrt{\log m}$
and Lemma \ref{lemma:ai-ai-spectral-upper-bound}. Hence the 2nd through
the $n$th entries of $\bm{w}_{3}\left(\tau\right)$ obey 
\[
\left\Vert \bm{w}_{3,\perp}\left(\tau\right)\right\Vert _{2}\lesssim\sqrt{\frac{n\log^{3}m}{m}}\left|x_{\parallel}^{t,\text{sgn}}-x_{\parallel}^{t,\text{sgn},\left(l\right)}\right|+\log m\left|\tilde{x}_{\parallel}\left(\tau\right)\right|\left\Vert \bm{x}_{\perp}^{t,\mathrm{sgn}}-\bm{x}_{\perp}^{t,\mathrm{sgn},(l)}\right\Vert _{2}.
\]
Combine the above estimates to arrive at 
\begin{align*}
\left\Vert \bm{w}_{3}\left(\tau\right)\right\Vert _{2} & \leq\left|w_{3,\parallel}\left(\tau\right)\right|+\left\Vert \bm{w}_{3,\perp}\left(\tau\right)\right\Vert _{2}\\
 & \leq\log m\left|\tilde{x}_{\parallel}\left(\tau\right)\right|\left\Vert \bm{x}_{\perp}^{t,\mathrm{sgn}}-\bm{x}_{\perp}^{t,\mathrm{sgn},(l)}\right\Vert _{2}+\sqrt{\frac{n\log^{3}m}{m}}\left|x_{\parallel}^{t,\text{sgn}}-x_{\parallel}^{t,\text{sgn},\left(l\right)}\right|\\
 & \quad+\sqrt{\frac{\log^{3}m}{m}}\left\Vert \bm{x}_{\perp}^{t}-\bm{x}_{\perp}^{t,(l)}\right\Vert _{2}+\sqrt{\log m}\left\Vert \bm{x}_{\perp}^{t,\mathrm{sgn}}-\bm{x}_{\perp}^{t,\mathrm{sgn},(l)}\right\Vert _{2}\left\Vert \bm{x}\left(\tau\right)-\tilde{\bm{x}}\left(\tau\right)\right\Vert _{2}\\
 & \quad+\frac{1}{\log^{3/2}m}\left\Vert \bm{x}_{\perp}^{t}-\bm{x}_{\perp}^{t,(l)}-\bm{x}_{\perp}^{t,\mathrm{sgn}}+\bm{x}_{\perp}^{t,\mathrm{sgn}(l)}\right\Vert _{2}.
\end{align*}
\end{itemize}
\item Putting together the preceding bounds on $\bm{v}_{1}$ and $\bm{v}_{2}$
($\bm{w}_{1}\left(\tau\right)$, $\bm{w}_{2}\left(\tau\right)$ and
$\bm{w}_{3}\left(\tau\right)$), we can deduce that 
\begin{align}
 & \left\Vert \bm{x}^{t+1}-\bm{x}^{t+1,(l)}-\bm{x}^{t+1,\mathrm{sgn}}+\bm{x}^{t+1,\mathrm{sgn},(l)}\right\Vert _{2}\nonumber \\
 & \quad=\left\Vert \bm{x}^{t}-\bm{x}^{t,(l)}-\bm{x}^{t,\mathrm{sgn}}+\bm{x}^{t,\mathrm{sgn},(l)}-\eta\left(\int_{0}^{1}\bm{w}_{1}\left(\tau\right)\mathrm{d}\tau+\int_{0}^{1}\bm{w}_{2}\left(\tau\right)\mathrm{d}\tau+\int_{0}^{1}\bm{w}_{3}\left(\tau\right)\mathrm{d}\tau\right)-\eta\bm{v}_{2}\right\Vert _{2}\nonumber \\
 & \quad\leq\left\Vert \bm{x}^{t}-\bm{x}^{t,(l)}-\bm{x}^{t,\mathrm{sgn}}+\bm{x}^{t,\mathrm{sgn},(l)}-\eta\int_{0}^{1}\bm{w}_{1}\left(\tau\right)\mathrm{d}\tau\right\Vert _{2}+\eta\sup_{0\leq\tau\leq1}\left\Vert \bm{w}\left(\tau\right)\right\Vert _{2}+\eta\sup_{0\leq\tau\leq1}\left\Vert \bm{w}_{3}\left(\tau\right)\right\Vert _{2}+\eta\left\Vert \bm{v}_{2}\right\Vert _{2}\nonumber \\
 & \quad\leq\left\{ 1+3\eta\left(1-\left\Vert \bm{x}^{t}\right\Vert _{2}^{2}\right)+\eta\phi_{1}\right\} \left\Vert \bm{x}^{t}-\bm{x}^{t,(l)}-\bm{x}^{t,\mathrm{sgn}}+\bm{x}^{t,\mathrm{sgn},(l)}\right\Vert _{2}\nonumber \\
 & \quad\quad+O\left(\eta\left\{ \sqrt{n\log m}\max_{1\leq l\leq m}\big\|\bm{x}^{t}-\bm{x}^{t,\mathrm{sgn}}-\bm{x}^{t,(l)}+\bm{x}^{t,\mathrm{sgn},(l)}\big\|_{2}+\log m\left\Vert \bm{x}^{t}-\bm{x}^{t,\mathrm{sgn}}\right\Vert _{2}\right\} \left\Vert \bm{x}^{t,\text{sgn}}-\bm{x}^{t,\text{sgn},(l)}\right\Vert _{2}\right)\nonumber \\
 & \quad\quad+O\left(\eta\log m\sup_{0\leq\tau\leq1}\left|\tilde{x}_{\parallel}\left(\tau\right)\right|\left\Vert \bm{x}^{t,\mathrm{sgn}}-\bm{x}^{t,\mathrm{sgn},(l)}\right\Vert _{2}\right)+O\left(\eta\sqrt{\frac{n\log^{3}m}{m}}\left|x_{\parallel}^{t,\text{sgn}}-x_{\parallel}^{t,\text{sgn},\left(l\right)}\right|\right)\nonumber \\
 & \quad\quad+O\left(\eta\sqrt{\frac{\log^{3}m}{m}}\left\Vert \bm{x}_{\perp}^{t}-\bm{x}_{\perp}^{t,(l)}\right\Vert _{2}\right)+O\left(\eta\sqrt{\log m}\left\Vert \bm{x}_{\perp}^{t,\mathrm{sgn}}-\bm{x}_{\perp}^{t,\mathrm{sgn},(l)}\right\Vert _{2}\sup_{0\leq\tau\leq1}\left\Vert \bm{x}\left(\tau\right)-\tilde{\bm{x}}\left(\tau\right)\right\Vert _{2}\right).\nonumber \\
 & \quad\quad+O\left(\eta\frac{\log^{2}m}{m}\left\Vert \bm{x}_{\perp}^{t,\left(l\right)}\right\Vert _{2}\right)+O\left(\eta\frac{\sqrt{n\log^{3}m}}{m}\left(\left|x_{\parallel}^{t,\left(l\right)}\right|+\left\Vert \bm{x}^{t,\left(l\right)}-\bm{x}^{t,\text{sgn},\left(l\right)}\right\Vert _{2}\right)\right).\label{eq:double-diff-fist-combine}
\end{align}
To simplify the preceding bound, we first make the following claim,
whose proof is deferred to the end of this subsection. \begin{claim}\label{claim:double-diff}For
$t\leq T_{0}$, the following inequalities hold: 
\begin{align*}
\sqrt{n\log m}\left\Vert \bm{x}^{t,\text{sgn}}-\bm{x}^{t,\text{sgn},(l)}\right\Vert _{2} & \ll\frac{1}{\log m};\\
\log m\sup_{0\leq\tau\leq1}\left|\tilde{x}_{\parallel}\left(\tau\right)\right|+\log m\left\Vert \bm{x}^{t}-\bm{x}^{t,\mathrm{sgn}}\right\Vert _{2}+\sqrt{\log m}\sup_{0\leq\tau\leq1}\left\Vert \bm{x}\left(\tau\right)-\tilde{\bm{x}}\left(\tau\right)\right\Vert _{2} & +\frac{\sqrt{n\log^{3}m}}{m}\lesssim\alpha_{t}\log m;\\
\alpha_{t}\log m & \ll\frac{1}{\log m}.
\end{align*}
\end{claim}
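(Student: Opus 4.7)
The plan is to dispatch the three inequalities one at a time, in each case chaining triangle inequalities to the four induction hypotheses \eqref{eq:induction-xt-l}--\eqref{eq:induction-double} and then invoking the Phase~I sample complexity $m\gg n\log^{13}m$ together with the defining bound $\alpha_t\leq c_6/\log^5 m$ for $t\leq T_0$. Throughout, I will freely use $(1+1/\log m)^t\lesssim 1$ since $t\leq T_0\lesssim\log n$. This is a routine verification, and I do not anticipate any real obstruction.

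For the first inequality, I will decompose
\[
\bm{x}^{t,\mathrm{sgn}}-\bm{x}^{t,\mathrm{sgn},(l)}=\bigl(\bm{x}^{t}-\bm{x}^{t,(l)}\bigr)-\bigl(\bm{x}^{t}-\bm{x}^{t,(l)}-\bm{x}^{t,\mathrm{sgn}}+\bm{x}^{t,\mathrm{sgn},(l)}\bigr)
\]
and bound each piece with \eqref{eq:induction-xt-l} and \eqref{eq:induction-double} respectively, producing
$\|\bm{x}^{t,\mathrm{sgn}}-\bm{x}^{t,\mathrm{sgn},(l)}\|_2\lesssim\beta_t\sqrt{n\log^5m}/m+\alpha_t\sqrt{n\log^9m}/m$. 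Multiplying by $\sqrt{n\log m}$ yields two terms of order $n\log^3m/m$ and $\alpha_t\cdot n\log^5m/m$, both $\ll 1/\log m$ once $m\gg n\log^5m$ and $\alpha_t\ll 1/\log^5m$.

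For the second inequality, the three quantities on the left will each be reduced to multiples of $\alpha_t$. Writing $\tilde{\bm{x}}(\tau)=\bm{x}^{t,\mathrm{sgn}}+\tau(\bm{x}^{t,\mathrm{sgn},(l)}-\bm{x}^{t,\mathrm{sgn}})$, I get $|\tilde{x}_\parallel(\tau)|\leq\max\{|x_\parallel^{t,\mathrm{sgn}}|,|x_\parallel^{t,\mathrm{sgn},(l)}|\}$; applying the triangle inequality and \eqref{eq:induction-xt-sgn}, \eqref{eq:induction-xt-l-signal}, \eqref{eq:induction-double} controls each of these by $\alpha_t(1+o(1))$. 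The term $\|\bm{x}^t-\bm{x}^{t,\mathrm{sgn}}\|_2$ is handled directly by \eqref{eq:induction-xt-sgn}, and $\|\bm{x}(\tau)-\tilde{\bm{x}}(\tau)\|_2$ is bounded via the identity
\[
\bm{x}(\tau)-\tilde{\bm{x}}(\tau)=(1-\tau)(\bm{x}^{t}-\bm{x}^{t,\mathrm{sgn}})+\tau(\bm{x}^{t,(l)}-\bm{x}^{t,\mathrm{sgn},(l)}),
\]
which by the same hypotheses is $\lesssim\alpha_t\sqrt{n\log^5m}/m$. Putting the pieces together, the dominant contribution is $\log m\cdot\alpha_t$; the remaining logarithmic factors carry extra $\sqrt{n\log^{c}m}/m$ suppression that is $\ll 1$ under the sample complexity. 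Finally, the residual $\sqrt{n\log^3m}/m$ is absorbed into $\alpha_t\log m$ using the lower bound $\alpha_t\geq 1/(2\sqrt{n\log n})$ from Lemma~\ref{lemma:iterative}, which forces $\alpha_t\log m\gtrsim\log m/\sqrt{n\log n}\gg\sqrt{n\log^3m}/m$ as long as $m\gg n\sqrt{\log n\log m}\cdot\log m$, which is implied by $m\gg n\log^{13}m$.

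The third inequality is immediate from the defining property of Phase I: since $t\leq T_0$, one has $\alpha_t\leq c_6/\log^5m$ (from \eqref{eq:defn-T0-analysis} and Lemma \ref{lemma:iterative}), so $\alpha_t\log m\lesssim 1/\log^4m\ll 1/\log m$. The only subtlety in the whole argument is the bookkeeping in the second inequality, where one must track which factors of $\alpha_t$ versus $\beta_t$ appear in each of the four induction hypotheses; the key structural observation making everything work is precisely that $\|\bm{x}^t-\bm{x}^{t,\mathrm{sgn}}\|_2$, $|x_\parallel^t-x_\parallel^{t,(l)}|$, and the double difference all scale with $\alpha_t$ rather than $\beta_t$, so after summation the right-hand side retains an overall factor of $\alpha_t$.
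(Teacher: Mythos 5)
Your proposal is correct and follows essentially the same route as the paper: for each inequality you chain triangle inequalities back to the four Phase~I induction hypotheses (\ref{eq:induction-xt-l})--(\ref{eq:induction-double}), then invoke $\alpha_t\lesssim1/\log^5 m$ for $t\leq T_0$ and the sample complexity, which is precisely what the paper does. (A small LaTeX typo: the bound on $\|\bm{x}(\tau)-\tilde{\bm{x}}(\tau)\|_2$ should read $\alpha_t\sqrt{n\log^5 m/m}$ rather than $\alpha_t\sqrt{n\log^5 m}/m$, since the dominant contribution comes from $\|\bm{x}^t-\bm{x}^{t,\mathrm{sgn}}\|_2$ which scales as $1/\sqrt{m}$; the conclusion is unaffected.)
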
Armed with Claim \ref{claim:double-diff}, one can rearrange
terms in (\ref{eq:double-diff-fist-combine}) to obtain for some $|\phi_{2}|,|\phi_{3}|\ll\frac{1}{\log m}$
\begin{align*}
 & \left\Vert \bm{x}^{t+1}-\bm{x}^{t+1,(l)}-\bm{x}^{t+1,\mathrm{sgn}}+\bm{x}^{t+1,\mathrm{sgn},(l)}\right\Vert _{2}\\
 & \quad\leq\left\{ 1+3\eta\left(1-\left\Vert \bm{x}^{t}\right\Vert _{2}^{2}\right)+\eta\phi_{2}\right\} \max_{1\leq l\leq m}\left\Vert \bm{x}^{t}-\bm{x}^{t,(l)}-\bm{x}^{t,\mathrm{sgn}}+\bm{x}^{t,\mathrm{sgn},(l)}\right\Vert _{2}\\
 & \quad\quad+\eta O\left(\log m\cdot\alpha_{t}+\sqrt{\frac{\log^{3}m}{m}}+\frac{\log^{2}m}{m}\right)\left\Vert \bm{x}^{t}-\bm{x}^{t,(l)}\right\Vert _{2}\\
 & \quad\quad+\eta O\left(\sqrt{\frac{n\log^{3}m}{m}}+\frac{\sqrt{n\log^{3}m}}{m}\right)\left|x_{\parallel}^{t}-x_{\parallel}^{t,\left(l\right)}\right|+\eta\frac{\log^{2}m}{m}\left\Vert \bm{x}_{\perp}^{t}\right\Vert _{2}\\
 & \quad\quad+\eta O\left(\frac{\sqrt{n\log^{3}m}}{m}\right)\left(\left|x_{\parallel}^{t}\right|+\left\Vert \bm{x}^{t}-\bm{x}^{t,\text{sgn}}\right\Vert _{2}\right)\\
 & \quad\leq\left\{ 1+3\eta\left(1-\left\Vert \bm{x}^{t}\right\Vert _{2}^{2}\right)+\eta\phi_{3}\right\} \left\Vert \bm{x}^{t}-\bm{x}^{t,(l)}-\bm{x}^{t,\mathrm{sgn}}+\bm{x}^{t,\mathrm{sgn},(l)}\right\Vert _{2}\\
 & \quad\quad+O\left(\eta\log m\right)\cdot\alpha_{t}\left\Vert \bm{x}^{t}-\bm{x}^{t,(l)}\right\Vert _{2}\\
 & \quad\quad+O\left(\eta\sqrt{\frac{n\log^{3}m}{m}}\right)\left|x_{\parallel}^{t}-x_{\parallel}^{t,\left(l\right)}\right|+O\left(\eta\frac{\log^{2}m}{m}\right)\left\Vert \bm{x}_{\perp}^{t}\right\Vert _{2}\\
 & \quad\quad+O\left(\eta\frac{\sqrt{n\log^{3}m}}{m}\right)\left(\left|x_{\parallel}^{t}\right|+\left\Vert \bm{x}^{t}-\bm{x}^{t,\text{sgn}}\right\Vert _{2}\right).
\end{align*}
Substituting in the hypotheses (\ref{subeq:induction}), we can arrive
at 
\begin{align*}
 & \left\Vert \bm{x}^{t+1}-\bm{x}^{t+1,(l)}-\bm{x}^{t+1,\mathrm{sgn}}+\bm{x}^{t+1,\mathrm{sgn},(l)}\right\Vert _{2}\\
 & \quad\leq\left\{ 1+3\eta\left(1-\left\Vert \bm{x}^{t}\right\Vert _{2}^{2}\right)+\eta\phi_{3}\right\} \alpha_{t}\left(1+\frac{1}{\log m}\right)^{t}C_{4}\frac{\sqrt{n\log^{9}m}}{m}\\
 & \quad+O\left(\eta\log m\right)\alpha_{t}\beta_{t}\left(1+\frac{1}{\log m}\right)^{t}C_{1}\frac{\sqrt{n\log^{5}m}}{m}\\
 & \quad+O\left(\eta\sqrt{\frac{\log^{3}m}{m}}\right)\beta_{t}\left(1+\frac{1}{\log m}\right)^{t}C_{1}\frac{\sqrt{n\log^{5}m}}{m}\\
 & \quad+O\left(\eta\sqrt{\frac{n\log^{3}m}{m}}\right)\alpha_{t}\left(1+\frac{1}{\log m}\right)^{t}C_{2}\frac{\sqrt{n\log^{12}m}}{m}\\
 & \quad+O\left(\eta\frac{\log^{2}m}{m}\right)\beta_{t}+O\left(\eta\frac{\sqrt{n\log^{3}m}}{m}\right)\alpha_{t}\\
 & \quad+O\left(\eta\frac{\sqrt{n\log^{3}m}}{m}\right)\alpha_{t}\left(1+\frac{1}{\log m}\right)^{t}C_{3}\sqrt{\frac{n\log^{5}m}{m}}\\
 & \quad\overset{\left(\text{i}\right)}{\leq}\left\{ 1+3\eta\left(1-\left\Vert \bm{x}^{t}\right\Vert _{2}^{2}\right)+\eta\phi_{4}\right\} \alpha_{t}\left(1+\frac{1}{\log m}\right)^{t}C_{4}\frac{\sqrt{n\log^{9}m}}{m}\\
 & \quad\overset{\left(\text{ii}\right)}{\leq}\alpha_{t+1}\left(1+\frac{1}{\log m}\right)^{t+1}C_{4}\frac{\sqrt{n\log^{9}m}}{m}
\end{align*}
for some $|\phi_{4}|\ll\frac{1}{\log m}$. Here, the last relation
(ii) follows the same argument as in (\ref{eq:beta-t-iterative-reasoning})
and (i) holds true as long as \begin{subequations} 
\begin{align}
\left(\log m\right)\alpha_{t}\beta_{t}\left(1+\frac{1}{\log m}\right)^{t}C_{1}\frac{\sqrt{n\log^{5}m}}{m} & \ll\frac{1}{\log m}\alpha_{t}\left(1+\frac{1}{\log m}\right)^{t}C_{4}\frac{\sqrt{n\log^{9}m}}{m};\label{eq:double-as-long-as-1}\\
\sqrt{\frac{n\log^{3}m}{m}}\alpha_{t}\left(1+\frac{1}{\log m}\right)^{t}C_{2}\frac{\sqrt{n\log^{12}m}}{m} & \ll\frac{1}{\log m}\alpha_{t}\left(1+\frac{1}{\log m}\right)^{t}C_{4}\frac{\sqrt{n\log^{9}m}}{m};\label{eq:double-as-long-as-2}\\
\frac{\log^{2}m}{m}\beta_{t} & \ll\frac{1}{\log m}\alpha_{t}\left(1+\frac{1}{\log m}\right)^{t}C_{4}\frac{\sqrt{n\log^{9}m}}{m};\label{eq:double-as-long-as-3}\\
\frac{\sqrt{n\log^{3}m}}{m}\alpha_{t}\left(1+\frac{1}{\log m}\right)^{t}C_{3}\sqrt{\frac{n\log^{5}m}{m}} & \ll\frac{1}{\log m}\alpha_{t}\left(1+\frac{1}{\log m}\right)^{t}C_{4}\frac{\sqrt{n\log^{9}m}}{m};\label{eq:double-as-long-as-4}\\
\frac{\sqrt{n\log^{3}m}}{m}\alpha_{t} & \ll\frac{1}{\log m}\alpha_{t}\left(1+\frac{1}{\log m}\right)^{t}C_{4}\frac{\sqrt{n\log^{9}m}}{m},\label{eq:double-as-long-as-5}
\end{align}
\end{subequations}where we recall that $t\leq T_{0}\lesssim\log n$.
The first condition (\ref{eq:double-as-long-as-1}) can be checked
using $\beta_{t}\lesssim1$ and the assumption that $C_{4}>0$ is
sufficiently large. The second one is valid if $m\gg n\log^{8}m$.
In addition, the third condition follows from the relationship (see
Lemma \ref{lemma:iterative})
\[
\beta_{t}\lesssim\alpha_{t}\sqrt{n\log m}.
\]
It is also easy to see that the last two are both valid. \begin{proof}[Proof
of Claim \ref{claim:double-diff}]For the first claim, it is east
to see from the triangle inequality that 
\begin{align*}
 & \sqrt{n\log m}\left\Vert \bm{x}^{t,\text{sgn}}-\bm{x}^{t,\text{sgn},\left(l\right)}\right\Vert _{2}\\
 & \leq\sqrt{n\log m}\left(\left\Vert \bm{x}^{t}-\bm{x}^{t,\left(l\right)}\right\Vert _{2}+\left\Vert \bm{x}^{t}-\bm{x}^{t,\left(l\right)}-\bm{x}^{t,\text{sgn}}+\bm{x}^{t,\text{sgn},\left(l\right)}\right\Vert _{2}\right)\\
 & \leq\sqrt{n\log m}\beta_{t}\left(1+\frac{1}{\log m}\right)^{t}C_{1}\frac{\sqrt{n\log^{5}m}}{m}+\sqrt{n\log m}\alpha_{t}\left(1+\frac{1}{\log m}\right)^{t}C_{4}\frac{\sqrt{n\log^{9}m}}{m}\\
 & \lesssim\frac{n\log^{3}m}{m}+\frac{n\log^{5}m}{m}\ll\frac{1}{\log m},
\end{align*}
as long as $m\gg n\log^{6}m$. Here, we have invoked the upper bounds
on $\alpha_{t}$ and $\beta_{t}$ provided in Lemma \ref{lemma:iterative}.
Regarding the second claim, we have 
\begin{align*}
\left|\tilde{x}_{\parallel}\left(\tau\right)\right| & \leq\left|x_{\parallel}^{t,\text{sgn}}\right|+\left|x_{\parallel}^{t,\text{sgn},\left(l\right)}\right|\leq2\left|x_{\parallel}^{t,\text{sgn}}\right|+\left|x_{\parallel}^{t,\text{sgn}}-x_{\parallel}^{t,\text{sgn},\left(l\right)}\right|\\
 & \leq2\left|x_{\parallel}^{t}\right|+2\left\Vert \bm{x}^{t}-\bm{x}^{t,\text{sgn}}\right\Vert _{2}+\left|x_{\parallel}^{t}-x_{\parallel}^{t,\left(l\right)}\right|+\left\Vert \bm{x}^{t}-\bm{x}^{t,\left(l\right)}-\bm{x}^{t,\text{sgn}}+\bm{x}^{t,\text{sgn},\left(l\right)}\right\Vert _{2}\\
 & \lesssim\alpha_{t}\left(1+\sqrt{\frac{n\log^{5}m}{m}}+\frac{\sqrt{n\log^{12}m}}{m}+\frac{\sqrt{n\log^{9}m}}{m}\right)\lesssim\alpha_{t},
\end{align*}
as long as $m\gg n\log^{5}m$. Similar arguments can lead us to conclude
that the remaining terms on the left-hand side of the second inequality
in the claim are bounded by $O(\alpha_{t})$. The third claim is an
immediate consequence of the fact $\alpha_{t}\ll\frac{1}{\log^{5}m}$
(see Lemma \ref{lemma:iterative}). \end{proof} 
\end{itemize}

\section{Proof of Lemma \ref{lemma:xt-signal-phase-2}\label{sec:Proof-of-Lemma-xt-signal-phase-2}}

Recall from Appendix \ref{sec:Proof-of-Lemma-xt-signal} that 
\[
x_{\parallel}^{t+1}=\left\{ 1+3\eta\left(1-\left\Vert \bm{x}^{t}\right\Vert _{2}^{2}\right)+O\left(\eta\sqrt{\frac{n\log^{3}m}{m}}\right)\right\} x_{\parallel}^{t}+J_{2}-J_{4},
\]
where $J_{2}$ and $J_{4}$ are defined respectively as 
\begin{align*}
J_{2} & :=\eta\left[1-3\big(x_{\parallel}^{t}\big)^{2}\right]\cdot\frac{1}{m}\sum_{i=1}^{m}a_{i,1}^{3}\bm{a}_{i,\perp}^{\top}\bm{x}_{\perp}^{t};\\
J_{4} & :=\eta\cdot\frac{1}{m}\sum_{i=1}^{m}\left(\bm{a}_{i,\perp}^{\top}\bm{x}_{\perp}^{t}\right)^{3}a_{i,1}.
\end{align*}
Instead of resorting to the leave-one-out sequence $\left\{ \bm{x}^{t,\text{sgn}}\right\} $
as in Appendix \ref{sec:Proof-of-Lemma-xt-signal}, we can directly
apply Lemma \ref{lemma:ai-uniform-concentration} and the incoherence
condition (\ref{eq:incoherence-phase-2}) to obtain 
\begin{align*}
\left|J_{2}\right| & \leq\eta\left|1-3\big(x_{\parallel}^{t}\big)^{2}\right|\left|\frac{1}{m}\sum_{i=1}^{m}a_{i,1}^{3}\bm{a}_{i,\perp}^{\top}\bm{x}_{\perp}^{t}\right|\ll\eta\frac{1}{\log^{6}m}\left\Vert \bm{x}_{\perp}^{t}\right\Vert _{2}\ll\eta\frac{1}{\log m}\alpha_{t};\\
\left|J_{4}\right| & \leq\eta\left|\frac{1}{m}\sum_{i=1}^{m}\big(\bm{a}_{i,\perp}^{\top}\bm{x}_{\perp}^{t}\big)a_{i,1}\right|\ll\eta\frac{1}{\log^{6}m}\left\Vert \bm{x}_{\perp}^{t}\right\Vert _{2}^{3}\ll\eta\frac{1}{\log m}\alpha_{t}
\end{align*}
with probability at least $1-O\left(m^{-10}\right)$, as long as $m\gg n\log^{13}m$.
Here, the last relations come from the fact that $\alpha_{t}\geq\frac{c}{\log^{5}m}$
(see Lemma \ref{lemma:iterative}). Combining the previous estimates
gives
\[
\alpha_{t+1}=\left\{ 1+3\eta\left(1-\left\Vert \bm{x}^{t}\right\Vert _{2}^{2}\right)+\eta\zeta_{t}\right\} \alpha_{t},
\]
with $|\zeta_{t}|\ll\frac{1}{\log m}$. This finishes the proof. 

\section{Proof of Lemma \ref{lemma:xt-xt-l-phase-2}\label{sec:Proof-of-Lemma-xt-xt-l-phase-2}}

In view of Appendix \ref{sec:Proof-of-Lemma-xt-xt-l}, one has 
\begin{align*}
\left\Vert \bm{x}^{t+1}-\bm{x}^{t+1,\left(l\right)}\right\Vert _{2} & \leq\left\{ 1+3\eta\left(1-\left\Vert \bm{x}^{t}\right\Vert _{2}^{2}\right)+\eta\phi_{1}\right\} \left\Vert \bm{x}^{t}-\bm{x}^{t,\left(l\right)}\right\Vert _{2}+O\left(\eta\frac{\sqrt{n\log^{3}m}}{m}\left\Vert \bm{x}^{t}\right\Vert _{2}\right),
\end{align*}
for some $|\phi_{1}|\ll\frac{1}{\log m}$, where we use the trivial
upper bound 
\[
2\eta\left|x_{\parallel}^{t}-x_{\parallel}^{t,\left(l\right)}\right|\leq2\eta\left\Vert \bm{x}^{t}-\bm{x}^{t,\left(l\right)}\right\Vert _{2}.
\]
Under the hypotheses (\ref{eq:induction-xt-xt-l-phase-2}), we can
obtain 
\begin{align*}
\left\Vert \bm{x}^{t+1}-\bm{x}^{t+1,\left(l\right)}\right\Vert _{2} & \leq\left\{ 1+3\eta\left(1-\left\Vert \bm{x}^{t}\right\Vert _{2}^{2}\right)+\eta\phi_{1}\right\} \alpha_{t}\left(1+\frac{1}{\log m}\right)^{t}C_{6}\frac{\sqrt{n\log^{15}m}}{m}+O\left(\eta\frac{\sqrt{n\log^{3}m}}{m}\left(\alpha_{t}+\beta_{t}\right)\right)\\
 & \leq\left\{ 1+3\eta\left(1-\left\Vert \bm{x}^{t}\right\Vert _{2}^{2}\right)+\eta\phi_{2}\right\} \alpha_{t}\left(1+\frac{1}{\log m}\right)^{t}C_{6}\frac{\sqrt{n\log^{15}m}}{m}\\
 & \leq\alpha_{t+1}\left(1+\frac{1}{\log m}\right)^{t+1}C_{6}\frac{\sqrt{n\log^{15}m}}{m},
\end{align*}
for some $|\phi_{2}|\ll\frac{1}{\log m}$, as long as $\eta$ is sufficiently
small and
\[
\frac{\sqrt{n\log^{3}m}}{m}\left(\alpha_{t}+\beta_{t}\right)\ll\frac{1}{\log m}\alpha_{t}\left(1+\frac{1}{\log m}\right)^{t}C_{6}\frac{\sqrt{n\log^{15}m}}{m}.
\]
This is satisfied since, according to Lemma \ref{lemma:iterative},
\[
\frac{\sqrt{n\log^{3}m}}{m}\left(\alpha_{t}+\beta_{t}\right)\lesssim\frac{\sqrt{n\log^{3}m}}{m}\lesssim\frac{\sqrt{n\log^{13}m}}{m}\alpha_{t}\ll\frac{1}{\log m}\alpha_{t}\left(1+\frac{1}{\log m}\right)^{t}C_{6}\frac{\sqrt{n\log^{15}m}}{m},
\]
as long as $C_{6}>0$ is sufficiently large.

\section{Proof of Lemma \ref{lemma:ai-uniform-concentration}\label{sec:Proof-of-Lemma-ai-uniform-concentration}}

Without loss of generality, it suffices to consider all the\emph{
}unit vectors $\bm{z}$ obeying $\left\Vert \bm{z}\right\Vert _{2}=1$.
To begin with, for any given $\bm{z}$, we can express the quantities of interest as $\frac{1}{m}\sum_{i=1}^{m}\left(g_{i}\left(\bm{z}\right)-G\left(\bm{z}\right)\right),$
where $g_{i}(\bm{z})$ depends only on $\bm{z}$ and $\bm{a}_{i}$.
Note that 
\[
g_{i}\left(\bm{z}\right)=a_{i,1}^{\theta_{1}}\left(\bm{a}_{i,\perp}^{\top}\bm{z}\right)^{\theta_{2}}
\]
for different $\theta_{1},\theta_{2}\in\left\{ 1,2,3,4,6\right\} $
in each of the cases considered herein. It can be easily verified
from Gaussianality that in all of these cases, for any fixed \emph{unit}
vector $\bm{z}$ one has 
\begin{align}
\EE\left[g_{i}^{2}\left(\bm{z}\right)\right] & \lesssim\left(\EE\left[\left|g_{i}\left(\bm{z}\right)\right|\right]\right)^{2};\label{eq:variance-upper}\\
\EE\left[\left|g_{i}\left(\bm{z}\right)\right|\right] & \asymp1;\label{eq:uniform-mean}\\
\left|\EE\left[g_{i}\left(\bm{z}\right)\ind_{\left\{ \left|\bm{a}_{i,\perp}^{\top}\bm{z}\right|\leq\beta\left\Vert \bm{z}\right\Vert _{2},\left|a_{i,1}\right|\leq5\sqrt{\log m}\right\} }\right]-\EE\left[g_{i}\left(\bm{z}\right)\right]\right| & \leq\frac{1}{n}\EE\left[\left|g_{i}\left(\bm{z}\right)\right|\right].\label{eq:tail-bound}
\end{align}
In addition, on the event $\left\{\max_{1\leq i\leq m}\left\Vert \bm{a}_{i}\right\Vert _{2}\leq\sqrt{6n}\right\}$
which has probability at least $1-O\left(me^{-1.5n}\right)$, one
has, for any fixed unit vectors $\bm{z},\bm{z}_{0}$, that
\begin{equation}
\left|g_{i}\left(\bm{z}\right)-g_{i}\left(\bm{z}_{0}\right)\right|\leq n^{\alpha}\left\Vert \bm{z}-\bm{z}_{0}\right\Vert _{2}\label{eq:uniform-Lip}
\end{equation}
for some parameter $\alpha=O\left(1\right)$ in all cases. In light
of these properties, we will proceed by controlling $\frac{1}{m}\sum_{i=1}^{m}g_{i}\left(\bm{z}\right)-\EE\left[g_{i}\left(\bm{z}\right)\right]$
in a unified manner.

We start by looking at any fixed vector $\bm{z}$ independent of $\left\{ \bm{a}_{i}\right\} $.
Recognizing that 
\[
\frac{1}{m}\sum_{i=1}^{m}g_{i}\left(\bm{z}\right)\ind_{\left\{ \left|\bm{a}_{i,\perp}^{\top}\bm{z}\right|\leq\beta\left\Vert \bm{z}\right\Vert _{2},\left|a_{i,1}\right|\leq5\sqrt{\log m}\right\} }-\EE\left[g_{i}\left(\bm{z}\right)\ind_{\left\{ \left|\bm{a}_{i,\perp}^{\top}\bm{z}\right|\leq\beta\left\Vert \bm{z}\right\Vert _{2},\left|a_{i,1}\right|\leq5\sqrt{\log m}\right\} }\right]
\]
is a sum of $m$ i.i.d.~random variables, one can thus apply the
Bernstein inequality to obtain 
\begin{align*}
 & \mathbb{P}\left\{ \left|\frac{1}{m}\sum_{i=1}^{m}g_{i}\left(\bm{z}\right)\ind_{\left\{ \left|\bm{a}_{i,\perp}^{\top}\bm{z}\right|\leq\beta\left\Vert \bm{z}\right\Vert _{2},\left|a_{i,1}\right|\leq5\sqrt{\log m}\right\} }-\EE\left[g_{i}\left(\bm{z}\right)\ind_{\left\{ \left|\bm{a}_{i,\perp}^{\top}\bm{z}\right|\leq\beta\left\Vert \bm{z}\right\Vert _{2},\left|a_{i,1}\right|\leq5\sqrt{\log m}\right\} }\right]\right|\geq\tau\right\} \\
 & \quad\leq2\exp\left(-\frac{\tau^{2}/2}{V+\tau B/3}\right),
\end{align*}
where the two quantities $V$ and $B$ obey 
\begin{align}
V & :=\frac{1}{m^{2}}\sum_{i=1}^{m}\mathbb{E}\left[g_{i}^2 \left(\bm{z}\right)\ind_{\left\{ \left|\bm{a}_{i,\perp}^{\top}\bm{z}\right|\leq\beta\left\Vert \bm{z}\right\Vert _{2},\left|a_{i,1}\right|\leq5\sqrt{\log m}\right\} } \right]  \leq\frac{1}{m}\EE\left[g_{i}^{2}\left(\bm{z}\right)\right]\lesssim\frac{1}{m}\left(\EE\left[\left|g_{i}\left(\bm{z}\right)\right|\right]\right)^{2};\label{eq:defn-V-poly}\\
B & :=\frac{1}{m}\max_{1\leq i\leq m}\left\{ \left|g_{i}\left(\bm{z}\right)\right|\ind_{\left\{ \left|\bm{a}_{i,\perp}^{\top}\bm{z}\right|\leq\beta\left\Vert \bm{z}\right\Vert _{2},\left|a_{i,1}\right|\leq5\sqrt{\log m}\right\} }\right\} .\label{eq:defn-B-poly}
\end{align}
Here the penultimate relation of \eqref{eq:defn-V-poly} follows from (\ref{eq:variance-upper}).
Taking $\tau=\epsilon\,\EE\left[\left|g_{i}\left(\bm{z}\right)\right|\right]$,
we can deduce that 
\begin{equation}
\left|\frac{1}{m}\sum_{i=1}^{m}g_{i}\left(\bm{z}\right)\ind_{\left\{ \left|\bm{a}_{i,\perp}^{\top}\bm{z}\right|\leq\beta\left\Vert \bm{z}\right\Vert _{2},\left|a_{i,1}\right|\leq5\sqrt{\log m}\right\} }-\EE\left[g_{i}\left(\bm{z}\right)\ind_{\left\{ \left|\bm{a}_{i,\perp}^{\top}\bm{z}\right|\leq\beta\left\Vert \bm{z}\right\Vert _{2},\left|a_{i,1}\right|\leq5\sqrt{\log m}\right\} }\right]\right|\leq\epsilon\,\EE\left[\left|g_{i}\left(\bm{z}\right)\right|\right]\label{eq:even1-UB-2}
\end{equation}
with probability exceeding $1-2\min\left\{ \exp\left(-c_{1}m\epsilon^{2}\right),\exp\left(-\frac{c_{2}\epsilon\EE\left[\left|g_{i}\left(\bm{z}\right)\right|\right]}{B}\right)\right\} $
for some constants $c_{1},c_{2}>0$. In particular, when $m\epsilon^{2}/(n\log n)$
and $\epsilon\EE\left[\left|g_{i}\left(\bm{z}\right)\right|\right]/(Bn\log n)$
are both sufficiently large, the inequality (\ref{eq:even1-UB-2})
holds with probability exceeding $1-2\exp\left(-c_{3}n\log n\right)$
for some constant $c_{3}>0$ sufficiently large.

We then move on to extending this result to a uniform bound. Let $\mathcal{N}_{\theta}$
be a $\theta$-net of the unit sphere with cardinality $\left|\mathcal{N}_{\theta}\right|\leq\left(1+\frac{2}{\theta}\right)^{n}$
such that for any $\bm{z}$ on the unit sphere, one can find a point
$\bm{z}_{0}\in\mathcal{N}_{\theta}$ such that $\left\Vert \bm{z}-\bm{z}_{0}\right\Vert _{2}\leq\theta$.
Apply the triangle inequality to obtain
\begin{align*}
 & \left|\frac{1}{m}\sum_{i=1}^{m}g_{i}\left(\bm{z}\right)\ind_{\left\{ \left|\bm{a}_{i,\perp}^{\top}\bm{z}\right|\leq\beta\left\Vert \bm{z}\right\Vert _{2},\left|a_{i,1}\right|\leq5\sqrt{\log m}\right\} }-\EE\left[g_{i}\left(\bm{z}\right)\ind_{\left\{ \left|\bm{a}_{i,\perp}^{\top}\bm{z}\right|\leq\beta\left\Vert \bm{z}\right\Vert _{2},\left|a_{i,1}\right|\leq5\sqrt{\log m}\right\} }\right]\right|\\
 & \quad\leq\underbrace{\left|\frac{1}{m}\sum_{i=1}^{m}g_{i}\left(\bm{z}_{0}\right)\ind_{\left\{ \left|\bm{a}_{i,\perp}^{\top}\bm{z}_{0}\right|\leq\beta\left\Vert \bm{z}_0\right\Vert _{2},\left|a_{i,1}\right|\leq5\sqrt{\log m}\right\} }-\EE\left[g_{i}\left(\bm{z}_{0}\right)\ind_{\left\{ \left|\bm{a}_{i,\perp}^{\top}\bm{z}_{0}\right|\leq\beta\left\Vert \bm{z}_0\right\Vert _{2},\left|a_{i,1}\right|\leq5\sqrt{\log m}\right\} }\right]\right|}_{:=I_{1}}\\
 & \quad\quad+\underbrace{\left|\frac{1}{m}\sum_{i=1}^{m}\left[g_{i}\left(\bm{z}\right)\ind_{\left\{ \left|\bm{a}_{i,\perp}^{\top}\bm{z}\right|\leq\beta\left\Vert \bm{z}\right\Vert _{2},\left|a_{i,1}\right|\leq5\sqrt{\log m}\right\} }-g_{i}\left(\bm{z}_{0}\right)\ind_{\left\{ \left|\bm{a}_{i,\perp}^{\top}\bm{z}_{0}\right|\leq\beta\left\Vert \bm{z}_0\right\Vert _{2},\left|a_{i,1}\right|\leq5\sqrt{\log m}\right\} } \right]\right|}_{:=I_{2}},
\end{align*}
where the second line arises from the fact that 
\[
\EE\left[g_{i}\left(\bm{z}\right)\ind_{\left\{ \left|\bm{a}_{i,\perp}^{\top}\bm{z}\right|\leq\beta\left\Vert \bm{z}\right\Vert _{2},\left|a_{i,1}\right|\leq5\sqrt{\log m}\right\} }\right]=\EE\left[g_{i}\left(\bm{z}_{0}\right)\ind_{\left\{ \left|\bm{a}_{i,\perp}^{\top}\bm{z}_{0}\right|\leq\beta\left\Vert \bm{z}_{0}\right\Vert _{2},\left|a_{i,1}\right|\leq5\sqrt{\log m}\right\} }\right].
\]
With regard to the first term $I_{1}$, by the union bound, with probability
at least $1-2\left(1+\frac{2}{\theta}\right)^{n}\exp\left(-c_{3}n\log n\right)$,
one has 
\[
I_{1}\leq\epsilon\,\EE\left[\left|g_{i}\left(\bm{z}_{0}\right)\right|\right].
\]
It remains to bound $I_{2}$. Denoting $\mathcal{S}_{i}=\left\{ \bm{z}\mid\big|\bm{a}_{i,\perp}^{\top}\bm{z}\big|\leq\beta\left\Vert \bm{z}\right\Vert _{2},|a_{i,1}|\leq5\sqrt{\log m}\right\} $,
we have 
\begin{align}
I_{2} & =\left|\frac{1}{m}\sum_{i=1}^{m}g_{i}\left(\bm{z}\right)\ind_{\left\{ \bm{z}\in\mathcal{S}_{i}\right\} }-g_{i}\left(\bm{z}_{0}\right)\ind_{\left\{ \bm{z}_{0}\in\mathcal{S}_{i}\right\} }\right|\nonumber \\
 & \leq\left|\frac{1}{m}\sum_{i=1}^{m}\left(g_{i}\left(\bm{z}\right)-g_{i}\left(\bm{z}_{0}\right)\right)\ind_{\left\{ \bm{z}\in\mathcal{S}_{i},\bm{z}_{0}\in\cS_{i}\right\} }\right|+\left|\frac{1}{m}\sum_{i=1}^{m}g_{i}\left(\bm{z}\right)\ind_{\left\{ \bm{z}\in\mathcal{S}_{i},\bm{z}_{0}\notin\mathcal{S}_{i}\right\} }\right|+\left|\frac{1}{m}\sum_{i=1}^{m}g_{i}\left(\bm{z}_{0}\right)\ind_{\left\{ \bm{z}\notin\mathcal{S}_{i},\bm{z}_{0}\in\mathcal{S}_{i}\right\} }\right|\nonumber \\
 & \leq\frac{1}{m}\sum_{i=1}^{m}\left|g_{i}\left(\bm{z}\right)-g_{i}\left(\bm{z}_{0}\right)\right|+\frac{1}{m}\max_{1\leq i\leq m}\left|g_{i}\left(\bm{z}\right)\ind_{\left\{ \bm{z}\in\mathcal{S}_{i}\right\} }\right|\cdot\sum_{i=1}^{m}\ind_{\left\{ \bm{z}\in\mathcal{S}_{i},\bm{z}_{0}\notin\mathcal{S}_{i}\right\} }\nonumber \\
 & \quad+\frac{1}{m}\max_{1\leq i\leq m}\left|g_{i}\left(\bm{z}_{0}\right)\ind_{\left\{ \bm{z}_{0}\in\mathcal{S}_{i}\right\} }\right|\cdot\sum_{i=1}^{m}\ind_{\left\{ \bm{z}\notin\mathcal{S}_{i},\bm{z}_{0}\in\mathcal{S}_{i}\right\} }.\label{eq:uniform-I-2}
\end{align}
For the first term in (\ref{eq:uniform-I-2}), it follows from (\ref{eq:uniform-Lip})
that 
\[
\frac{1}{m}\sum_{i=1}^{m}\left|g_{i}\left(\bm{z}\right)-g_{i}\left(\bm{z}_{0}\right)\right|\leq n^{\alpha}\left\Vert \bm{z}-\bm{z}_{0}\right\Vert _{2}\leq n^{\alpha}\theta.
\]
For the second term of (\ref{eq:uniform-I-2}), we have 
\begin{align}
\ind_{\left\{ \bm{z}\in\mathcal{S}_{i},\bm{z}_{0}\notin\mathcal{S}_{i}\right\} } & \leq\ind_{\left\{ \left|\bm{a}_{i,\perp}^{\top}\bm{z}\right|\leq\beta,\left|\bm{a}_{i,\perp}^{\top}\bm{z}_{0}\right|\geq\beta\right\} }\nonumber \\
 & =\ind_{\left\{ \left|\bm{a}_{i,\perp}^{\top}\bm{z}\right|\leq\beta\right\} }\left(\ind_{\left\{ \left|\bm{a}_{i,\perp}^{\top}\bm{z}_{0}\right|\geq\beta+\sqrt{6n}\theta\right\} }+\ind_{\left\{ \beta\leq\left|\bm{a}_{i,\perp}^{\top}\bm{z}_{0}\right|<\beta+\sqrt{6n}\theta\right\} }\right)\nonumber \\
 & =\ind_{\left\{ \left|\bm{a}_{i,\perp}^{\top}\bm{z}\right|\leq\beta\right\} }\ind_{\left\{ \beta\leq\left|\bm{a}_{i,\perp}^{\top}\bm{z}_{0}\right|\leq\beta+\sqrt{6n}\theta\right\} }\label{eq:third-identity-I}\\
 & \leq\ind_{\left\{ \beta\leq\left|\bm{a}_{i,\perp}^{\top}\bm{z}_{0}\right|\leq\beta+\sqrt{6n}\theta\right\} }.\nonumber 
\end{align}
Here, the identity \eqref{eq:third-identity-I} holds due to the fact
that 
\[
\ind_{\left\{ \left|\bm{a}_{i,\perp}^{\top}\bm{z}\right|\leq\beta\right\} }\ind_{\left\{ \left|\bm{a}_{i,\perp}^{\top}\bm{z}_{0}\right|\geq\beta+\sqrt{6n}\theta\right\} }=0;
\]
in fact, under the condition $\big|\bm{a}_{i,\perp}^{\top}\bm{z}_{0}\big|\geq\beta+\sqrt{6n}\theta$
one has
\[
\left|\bm{a}_{i,\perp}^{\top}\bm{z}\right|\geq\left|\bm{a}_{i,\perp}^{\top}\bm{z}_{0}\right|-\left|\bm{a}_{i,\perp}^{\top}\left(\bm{z}-\bm{z}_{0}\right)\right|\geq\beta+\sqrt{6n}\theta-\left\Vert \bm{a}_{i,\perp}\right\Vert _{2}\left\Vert \bm{z}-\bm{z}_{0}\right\Vert _{2}>\beta+\sqrt{6n}\theta-\sqrt{6n}\theta\geq\beta,
\]
which is contradictory to $\left|\bm{a}_{i,\perp}^{\top}\bm{z}\right|\leq\beta$.
As a result, one can obtain 
\[
\sum_{i=1}^{m}\ind_{\left\{ \bm{z}\in\mathcal{S}_{i},\bm{z}_{0}\notin\mathcal{S}_{i}\right\} }\leq\sum_{i=1}^{m}\ind_{\left\{ \beta\leq\left|\bm{a}_{i,\perp}^{\top}\bm{z}_{0}\right|\leq\beta+\sqrt{6n}\theta\right\} }\leq2Cn\log n,
\]
with probability at least $1-e^{-\frac{2}{3}Cn\log n}$ for a sufficiently
large constant $C>0$, where the last inequality follows from the
Chernoff bound (see Lemma \ref{lemma:chernoff}). This together with
the union bound reveals that with probability exceeding $1-\left(1+\frac{2}{\theta}\right)^{n}e^{-\frac{2}{3}Cn\log n}$,
\[
\frac{1}{m}\max_{1\leq i\leq m}\left|g_{i}\left(\bm{z}\right)\ind_{\left\{ \bm{z}\in\mathcal{S}_{i}\right\} }\right|\cdot\sum_{i=1}^{m}\ind_{\left\{ \bm{z}\in\mathcal{S}_{i},\bm{z}_{0}\notin\mathcal{S}_{i}\right\} }\leq B\cdot2Cn\log n
\]
with $B$ defined in \eqref{eq:defn-B-poly}. Similarly, one can show
that 
\[
\frac{1}{m}\max_{1\leq i\leq m}\left|g_{i}\left(\bm{z}_{0}\right)\ind_{\left\{ \bm{z}_{0}\in\mathcal{S}_{i}\right\} }\right|\cdot\sum_{i=1}^{m}\ind_{\left\{ \bm{z}\notin\mathcal{S}_{i},\bm{z}_{0}\in\mathcal{S}_{i}\right\} }\leq B\cdot2Cn\log n.
\]
Combine the above bounds to reach that 
\[
I_{1}+I_{2}\leq\epsilon\,\EE\left[\left|g_{i}\left(\bm{z}_{0}\right)\right|\right]+n^{\alpha}\theta+4B\cdot Cn\log n\leq2\epsilon\,\EE\left[\left|g_{i}\left(\bm{z}\right)\right|\right],
\]
as long as 
\[
n^{\alpha}\theta\leq\frac{\epsilon}{2}\,\EE\left[\left|g_{i}\left(\bm{z}\right)\right|\right]\qquad\text{and}\qquad4B\cdot Cn\log n\leq\frac{\epsilon}{2}\,\EE\left[\left|g_{i}\left(\bm{z}\right)\right|\right].
\]
In view of the fact (\ref{eq:uniform-mean}), one can take $\theta\asymp\epsilon n^{-\alpha}$
to conclude that 
\begin{equation}
\left|\frac{1}{m}\sum_{i=1}^{m}g_{i}\left(\bm{z}\right)\ind_{\left\{ \left|\bm{a}_{i,\perp}^{\top}\bm{z}\right|\leq\beta\left\Vert \bm{z}\right\Vert _{2},\left|a_{i,1}\right|\leq5\sqrt{\log m}\right\} }-\EE\left[g_{i}\left(\bm{z}\right)\ind_{\left\{ \left|\bm{a}_{i,\perp}^{\top}\bm{z}\right|\leq\beta\left\Vert \bm{z}\right\Vert _{2},\left|a_{i,1}\right|\leq5\sqrt{\log m}\right\} }\right]\right|\leq2\epsilon\,\EE\left[\left|g_{i}\left(\bm{z}\right)\right|\right]\label{eq:even1-UB-1}
\end{equation}
holds for all $\bm{z}\in\RR^{n}$ with probability at least $1-2\exp\left(-c_{4}n\log n\right)$
for some constant $c_{4}>0$, with the proviso that $\epsilon\geq\frac{1}{n}$
and that $\epsilon\,\EE\left[\left|g_{i}\left(\bm{z}\right)\right|\right]/\left(Bn\log n\right)$
sufficiently large.

Further, we note that $\{\max_{i}\left|a_{i,1}\right|\leq5\sqrt{\log m}\}$
occurs with probability at least $1-O(m^{-10})$. Therefore, on an
event of probability at least $1-O(m^{-10})$, one has 
\begin{equation}
\frac{1}{m}\sum_{i=1}^{m}g_{i}\left(\bm{z}\right)=\frac{1}{m}\sum_{i=1}^{m}g_{i}\left(\bm{z}\right)\ind_{\left\{ \left|\bm{a}_{i,\perp}^{\top}\bm{z}\right|\leq\beta\left\Vert \bm{z}\right\Vert _{2},\left|a_{i,1}\right|\leq5\sqrt{\log m}\right\} }\label{eq:even1-UB-1-1}
\end{equation}
for all $\bm{z}\in\mathbb{R}^{n-1}$ obeying $\max_{i}\big|\bm{a}_{i,\perp}^{\top}\bm{z}\big|\leq\beta\left\Vert \bm{z}\right\Vert _{2}$.
On this event, one can use the triangle inequality to obtain 
\begin{align*}
\left|\frac{1}{m}\sum_{i=1}^{m}g_{i}\left(\bm{z}\right)-\EE\left[g_{i}\left(\bm{z}\right)\right]\right| & =\left|\frac{1}{m}\sum_{i=1}^{m}g_{i}\left(\bm{z}\right)\ind_{\left\{ \left|\bm{a}_{i,\perp}^{\top}\bm{z}\right|\leq\beta\left\Vert \bm{z}\right\Vert _{2},\left|a_{i,1}\right|\leq5\sqrt{\log m}\right\} }-\EE\left[g_{i}\left(\bm{z}\right)\right]\right|\\
 & \leq\left|\frac{1}{m}\sum_{i=1}^{m}g_{i}\left(\bm{z}\right)\ind_{\left\{ \left|\bm{a}_{i,\perp}^{\top}\bm{z}\right|\leq\beta\left\Vert \bm{z}\right\Vert _{2},\left|a_{i,1}\right|\leq5\sqrt{\log m}\right\} }-\EE\left[g_{i}\left(\bm{z}\right)\ind_{\left\{ \left|\bm{a}_{i,\perp}^{\top}\bm{z}\right|\leq\beta\left\Vert \bm{z}\right\Vert _{2},|a_{i,1}|\leq5\sqrt{\log m}\right\} }\right]\right|\\
 & \quad+\left|\EE\left[g_{i}\left(\bm{z}\right)\ind_{\left\{ \left|\bm{a}_{i,\perp}^{\top}\bm{z}\right|\leq\beta\left\Vert \bm{z}\right\Vert _{2},\left|a_{i,1}\right|\leq5\sqrt{\log m}\right\} }\right]-\EE\left[g_{i}\left(\bm{z}\right)\right]\right|\\
 & \leq2\epsilon\,\EE\left[\left|g_{i}\left(\bm{z}\right)\right|\right]+\frac{1}{n}\EE\left[\left|g_{i}\left(\bm{z}\right)\right|\right]\\
 & \leq3\epsilon\,\EE\left[\left|g_{i}\left(\bm{z}\right)\right|\right],
\end{align*}
as long as $\epsilon>1/n$, where the penultimate line follows from \eqref{eq:tail-bound}. This leads to the desired uniform upper
bound for $\frac{1}{m}\sum_{i=1}^{m}g_{i}\left(\bm{z}\right)-\EE\left[g_{i}\left(\bm{z}\right)\right]$,
namely, with probability at least $1-O\left(m^{-10}\right)$, 
\[
\left|\frac{1}{m}\sum_{i=1}^{m}g_{i}\left(\bm{z}\right)-\EE\left[g_{i}\left(\bm{z}\right)\right]\right|\leq3\epsilon\,\EE\left[\left|g_{i}\left(\bm{z}\right)\right|\right]
\]
holds uniformly for all $\bm{z}\in\mathbb{R}^{n-1}$ obeying $\max_{i}\big|\bm{a}_{i,\perp}^{\top}\bm{z}\big|\leq\beta\left\Vert \bm{z}\right\Vert _{2}$,
provided that
\[
m\epsilon^{2}/(n\log n)\quad\text{and}\quad\epsilon\EE\left[\left|g_{i}\left(\bm{z}\right)\right|\right]/\left(Bn\log n\right)
\]
are both sufficiently large (with $B$ defined in \eqref{eq:defn-B-poly}).

To finish up, we provide the bounds on $B$ and the resulting sample
complexity conditions for each case as follows. 
\begin{itemize}
\item For $g_{i}\left(\bm{z}\right)=a_{i,1}^{3}\bm{a}_{i,\perp}^{\top}\bm{z}$,
one has $B\lesssim\frac{1}{m}\beta\log^{\frac{3}{2}}m$, and hence
we need $m\gg\max\left\{ \frac{1}{\epsilon^{2}}n\log n,\text{ }\frac{1}{\epsilon}\beta n\log^{\frac{5}{2}}m\right\} $; 
\item For $g_{i}\left(\bm{z}\right)=a_{i,1}\left(\bm{a}_{i,\perp}^{\top}\bm{z}\right)^{3}$,
one has $B\lesssim\frac{1}{m}\beta^{3}\log^{\frac{1}{2}}m$, and hence
we need $m\gg\max\left\{ \frac{1}{\epsilon^{2}}n\log n,\text{ }\frac{1}{\epsilon}\beta^{3}n\log^{\frac{3}{2}}m\right\} $ 
\item For $g_{i}\left(\bm{z}\right)=a_{i,1}^{2}\left(\bm{a}_{i,\perp}^{\top}\bm{z}\right)^{2}$,
we have $B\lesssim\frac{1}{m}\beta^{2}\log m$, and hence $m\gg\max\left\{ \frac{1}{\epsilon^{2}}n\log n,\text{ }\frac{1}{\epsilon}\beta^{2}n\log^{2}m\right\} $; 
\item For $g_{i}\left(\bm{z}\right)=a_{i,1}^{6}\left(\bm{a}_{i,\perp}^{\top}\bm{z}\right)^{2}$,
we have $B\lesssim\frac{1}{m}\beta^{2}\log^{3}m$, and hence $m\gg\max\left\{ \frac{1}{\epsilon^{2}}n\log n,\text{ }\frac{1}{\epsilon}\beta^{2}n\log^{4}m\right\} $; 
\item For $g_{i}\left(\bm{z}\right)=a_{i,1}^{2}\left(\bm{a}_{i,\perp}^{\top}\bm{z}\right)^{6}$
, one has $B\lesssim\frac{1}{m}\beta^{6}\log m$, and hence $m\gg\max\left\{ \frac{1}{\epsilon^{2}}n\log n,\text{ }\frac{1}{\epsilon}\beta^{6}n\log^{2}m\right\} $; 
\item For $g_{i}\left(\bm{z}\right)=a_{i,1}^{2}\left(\bm{a}_{i,\perp}^{\top}\bm{z}\right)^{4}$,
one has $B\lesssim\frac{1}{m}\beta^{4}\log m$, and hence $m\gg\max\left\{ \frac{1}{\epsilon^{2}}n\log n,\text{ }\frac{1}{\epsilon}\beta^{4}n\log^{2}m\right\} $. 
\end{itemize}
Given that $\epsilon$ can be arbitrary quantity above $1/n$, we
establish the advertised results.

\section{Proof of Lemma \ref{lemma:hessian-concentration}\label{sec:Proof-of-Lemma-hessian-concentration}}

Note that if the second claim \eqref{eq:hessian-second-claim} holds, we can readily use it to justify
the first one \eqref{eq:hessian-first-claim} by observing that 
\[
\max_{1\leq i\leq m}\left|\bm{a}_{i}^{\top}\bm{x}^{\natural}\right|\leq5\sqrt{\log m}\left\Vert \bm{x}^{\natural}\right\Vert _{2}
\]
holds with probability at least $1-O\left(m^{-10}\right)$. As a consequence,
the proof is devoted to justifying the second claim in the lemma. 

First, notice that it suffices to consider all $\bm{z}$'s with unit
norm, i.e.~$\left\Vert \bm{z}\right\Vert _{2}=1$. We can then apply
the triangle inequality to obtain 
\begin{align*}
\left\Vert \frac{1}{m}\sum_{i=1}^{m}\left(\bm{a}_{i}^{\top}\bm{z}\right)^{2}\bm{a}_{i}\bm{a}_{i}^{\top}-\bm{I}_{n}-2\bm{z}\bm{z}^{\top}\right\Vert  & \leq\underbrace{\left\Vert \frac{1}{m}\sum_{i=1}^{m}\left(\bm{a}_{i}^{\top}\bm{z}\right)^{2}\bm{a}_{i}\bm{a}_{i}^{\top}\ind_{\left\{ \left|\bm{a}_{i}^{\top}\bm{z}\right|\leq c_{2}\sqrt{\log m}\right\} }-\left(\beta_{1}\bm{I}_{n}+\beta_{2}\bm{z}\bm{z}^{\top}\right)\right\Vert }_{:=\theta_{1}}\\
 & \quad+\underbrace{\left\Vert \beta_{1}\bm{I}_{n}+\beta_{2}\bm{z}\bm{z}^{\top}-\left(\bm{I}_{n}+2\bm{z}\bm{z}^{\top}\right)\right\Vert }_{:=\theta_{2}},
\end{align*}
where 
\[
\beta_{1}:=\EE\left[\xi^{2}\ind_{\left\{ \left|\xi\right|\leq c_{2}\sqrt{\log m}\right\} }\right]\qquad\text{and}\qquad\beta_{2}:=\EE\left[\xi^{4}\ind_{\left\{ \left|\xi\right|\leq c_{2}\sqrt{\log m}\right\} }\right]-\beta_{1}
\]
with $\xi\sim N\left(0,1\right)$. 
\begin{itemize}
\item For the second term $\theta_{2}$, we can further bound it as follows
\begin{align*}
\theta_{2} & \leq\left\Vert \beta_{1}\bm{I}_{n}-\bm{I}_{n}\right\Vert +\left\Vert \beta_{2}\bm{z}\bm{z}^{\top}-2\bm{z}\bm{z}^{\top}\right\Vert \\
 & \leq\left|\beta_{1}-1\right|+\left|\beta_{2}-2\right|,
\end{align*}
which motivates us to bound $\left|\beta_{1}-1\right|$ and $\left|\beta_{2}-2\right|$.
Towards this end, simple calculation yields 
\begin{align*}
1-\beta_{1} & =\sqrt{\frac{2}{\pi}}\cdot c_{2}\sqrt{\log m}e^{-\frac{c_{2}^{2}\log m}{2}}+\text{erfc}\left(\frac{c_{2}\sqrt{\log m}}{2}\right)\\
 & \overset{\left(\text{i}\right)}{\leq}\sqrt{\frac{2}{\pi}}\cdot c_{2}\sqrt{\log m}e^{-\frac{c_{2}^{2}\log m}{2}}+\frac{1}{\sqrt{\pi}}\frac{2}{c_{2}\sqrt{\log m}}e^{-\frac{c_{2}^{2}\log m}{4}}\\
 & \overset{\left(\text{ii}\right)}{\leq}\frac{1}{m},
\end{align*}
where (i) arises from the fact that for all $x>0$, $\text{erfc}\left(x\right)\leq\frac{1}{\sqrt{\pi}}\frac{1}{x}e^{-x^{2}}$and
(ii) holds as long as $c_{2}>0$ is sufficiently large. Similarly,
for the difference $\left|\beta_{2}-2\right|$, one can easily show
that
\begin{align}
\left|\beta_{2}-2\right| & \leq\left|\EE\left[\xi^{4}\ind_{\left\{ \left|\xi\right|\leq c_{2}\sqrt{\log m}\right\} }\right]-3\right|+\left|\beta_{1}-1\right|\leq\frac{2}{m}.\label{eq:uniform-beta-2}
\end{align}
Take the previous two bounds collectively to reach 
\[
\theta_{2}\leq\frac{3}{m}.
\]
\item With regards to $\theta_{1}$, we resort to the standard covering
argument. First, fix some $\bm{x},\bm{z}\in\RR^{n}$ with $\left\Vert \bm{x}\right\Vert _{2}=\left\Vert \bm{z}\right\Vert _{2}=1$
and notice that 
\[
\frac{1}{m}\sum_{i=1}^{m}\left(\bm{a}_{i}^{\top}\bm{z}\right)^{2}\left(\bm{a}_{i}^{\top}\bm{x}\right)^{2}\ind_{\left\{ \left|\bm{a}_{i}^{\top}\bm{z}\right|\leq c_{2}\sqrt{\log m}\right\} }-\beta_{1}-\beta_{2}\left(\bm{z}^{\top}\bm{x}\right)^{2}
\]
is a sum of $m$ i.i.d.~random variables with bounded sub-exponential
norms. To see this, one has 
\[
\left\Vert \left(\bm{a}_{i}^{\top}\bm{z}\right)^{2}\left(\bm{a}_{i}^{\top}\bm{x}\right)^{2}\ind_{\left\{ \left|\bm{a}_{i}^{\top}\bm{z}\right|\leq c_{2}\sqrt{\log m}\right\} }\right\Vert _{\psi_{1}}\leq c_{2}^{2}\log m\left\Vert \left(\bm{a}_{i}^{\top}\bm{x}\right)^{2}\right\Vert _{\psi_{1}}\leq c_{2}^{2}\log m,
\]
where $\|\cdot\|_{\psi_{1}}$ denotes the sub-exponential norm \cite{Vershynin2012}.
This further implies that
\[
\left\Vert \left(\bm{a}_{i}^{\top}\bm{z}\right)^{2}\left(\bm{a}_{i}^{\top}\bm{x}\right)^{2}\ind_{\left\{ \left|\bm{a}_{i}^{\top}\bm{z}\right|\leq c_{2}\sqrt{\log m}\right\} }-\beta_{1}-\beta_{2}\left(\bm{z}^{\top}\bm{x}\right)^{2}\right\Vert _{\psi_{1}}\leq2c_{2}^{2}\log m.
\]
Apply the Bernstein's inequality to show that for any $0\leq\epsilon\leq1$,
\[
\PP\left(\left|\frac{1}{m}\sum_{i=1}^{m}\left(\bm{a}_{i}^{\top}\bm{z}\right)^{2}\left(\bm{a}_{i}^{\top}\bm{x}\right)^{2}\ind_{\left\{ \left|\bm{a}_{i}^{\top}\bm{z}\right|\leq c_{2}\sqrt{\log m}\right\} }-\beta_{1}-\beta_{2}\left(\bm{z}^{\top}\bm{x}\right)^{2}\right|\geq2\epsilon c_{2}^{2}\log m\right)\leq2\exp\left(-c\epsilon^{2}m\right),
\]
where $c>0$ is some absolute constant. Taking $\epsilon\asymp\sqrt{\frac{n\log m}{m}}$
reveals that with probability exceeding $1-2\exp\left(-c_{10}n\log m\right)$
for some $c_{10}>0$, one has 
\begin{equation}
\left|\frac{1}{m}\sum_{i=1}^{m}\left(\bm{a}_{i}^{\top}\bm{z}\right)^{2}\left(\bm{a}_{i}^{\top}\bm{x}\right)^{2}\ind_{\left\{ \left|\bm{a}_{i}^{\top}\bm{z}\right|\leq c_{2}\sqrt{\log m}\right\} }-\beta_{1}-\beta_{2}\left(\bm{z}^{\top}\bm{x}\right)^{2}\right|\lesssim c_2^2\sqrt{\frac{n\log^{3}m}{m}}.\label{eq:uniform-fix-x-z}
\end{equation}
One can then apply the covering argument to extend the above result
to all unit vectors $\bm{x},\bm{z}\in\RR^{n}$. Let $\mathcal{N}_{\theta}$ be
a $\theta$-net of the unit sphere, which has cardinality at most
$\left(1+\frac{2}{\theta}\right)^{n}$. Then for every $\bm{x},\bm{z}\in\RR$
with unit norm, we can find $\bm{x}_{0},\bm{z}_{0}\in\mathcal{N}_{\theta}$
such that $\|\bm{x}-\bm{x}_{0}\|_{2}\leq\theta$ and $\|\bm{z}-\bm{z}_{0}\|_{2}\leq\theta$.
The triangle inequality reveals that 
\begin{align*}
 & \left|\frac{1}{m}\sum_{i=1}^{m}\left(\bm{a}_{i}^{\top}\bm{z}\right)^{2}\left(\bm{a}_{i}^{\top}\bm{x}\right)^{2}\ind_{\left\{ \left|\bm{a}_{i}^{\top}\bm{z}\right|\leq c_{2}\sqrt{\log m}\right\} }-\beta_{1}-\beta_{2}\left(\bm{z}^{\top}\bm{x}\right)^{2}\right|\\
 & \quad\leq\underbrace{\left|\frac{1}{m}\sum_{i=1}^{m}\left(\bm{a}_{i}^{\top}\bm{z}_{0}\right)^{2}\left(\bm{a}_{i}^{\top}\bm{x}_{0}\right)^{2}\ind_{\left\{ \left|\bm{a}_{i}^{\top}\bm{z}_{0}\right|\leq c_{2}\sqrt{\log m}\right\} }-\beta_{1}-\beta_{2}\left(\bm{z}_{0}^{\top}\bm{x}_{0}\right)^{2}\right|}_{:=I_{1}}+\underbrace{\beta_{2}\left|\left(\bm{z}^{\top}\bm{x}\right)^{2}-\left(\bm{z}_{0}^{\top}\bm{x}_{0}\right)^{2}\right|}_{:=I_{2}}\\
 & \quad\quad+\underbrace{\left|\frac{1}{m}\sum_{i=1}^{m}\left[\left(\bm{a}_{i}^{\top}\bm{z}\right)^{2}\left(\bm{a}_{i}^{\top}\bm{x}\right)^{2}\ind_{\left\{ \left|\bm{a}_{i}^{\top}\bm{z}\right|\leq c_{2}\sqrt{\log m}\right\} }-\left(\bm{a}_{i}^{\top}\bm{z}_{0}\right)^{2}\left(\bm{a}_{i}^{\top}\bm{x}_{0}\right)^{2}\ind_{\left\{ \left|\bm{a}_{i}^{\top}\bm{z}_{0}\right|\leq c_{2}\sqrt{\log m}\right\} }\right]\right|}_{:=I_{3}}.
\end{align*}
Regarding $I_{1}$, one sees from (\ref{eq:uniform-fix-x-z}) and
the union bound that with probability at least $1-2(1+\frac{2}{\theta})^{2n}\exp\left(-c_{10}n\log m\right)$,
one has 
\[
I_{1}\lesssim c_{2}^{2}\sqrt{\frac{n\log^{3}m}{m}}.
\]
 For the second term $I_{2}$, we can deduce from (\ref{eq:uniform-beta-2})
that $\beta_{2}\leq3$ and 
\begin{align*}
\left|\left(\bm{z}^{\top}\bm{x}\right)^{2}-\left(\bm{z}_{0}^{\top}\bm{x}_{0}\right)^{2}\right| & =\left|\bm{z}^{\top}\bm{x}-\bm{z}_{0}^{\top}\bm{x}_{0}\right|\left|\bm{z}^{\top}\bm{x}+\bm{z}_{0}^{\top}\bm{x}_{0}\right|\\
 & =\left|\left(\bm{z}-\bm{z}_{0}\right)^{\top}\bm{x}+\bm{z}_{0}\left(\bm{x}-\bm{x}_{0}\right)\right|\left|\bm{z}^{\top}\bm{x}+\bm{z}_{0}^{\top}\bm{x}_{0}\right|\\
 & \leq2\left(\left\Vert \bm{z}-\bm{z}_{0}\right\Vert _{2}+\left\Vert \bm{x}-\bm{x}_{0}\right\Vert _{2}\right)\leq2\theta,
\end{align*}
where the last line arises from the Cauchy-Schwarz inequality and
the fact that $\bm{x},\bm{z},\bm{x}_{0},\bm{z}_{0}$ are all unit
norm vectors. This further implies 
\[
I_{2}\leq6\theta.
\]
Now we move on to control the last term $I_{3}$. Denoting 
\[
\cS_{i}:=\left\{ \bm{u}\mid\left|\bm{a}_{i}^{\top}\bm{u}\right|\leq c_{2}\sqrt{\log m}\right\} 
\]
allows us to rewrite $I_{3}$ as 
\begin{align}
I_{3} & =\left|\frac{1}{m}\sum_{i=1}^{m}\left[\left(\bm{a}_{i}^{\top}\bm{z}\right)^{2}\left(\bm{a}_{i}^{\top}\bm{x}\right)^{2}\ind_{\left\{ \bm{z}\in\cS_{i}\right\} }-\left(\bm{a}_{i}^{\top}\bm{z}_{0}\right)^{2}\left(\bm{a}_{i}^{\top}\bm{x}_{0}\right)^{2}\ind_{\left\{ \bm{z}_{0}\in\mathcal{S}_{i}\right\} }\right]\right|\nonumber \\
 & \leq\left|\frac{1}{m}\sum_{i=1}^{m}\left[\left(\bm{a}_{i}^{\top}\bm{z}\right)^{2}\left(\bm{a}_{i}^{\top}\bm{x}\right)^{2}-\left(\bm{a}_{i}^{\top}\bm{z}_{0}\right)^{2}\left(\bm{a}_{i}^{\top}\bm{x}_{0}\right)^{2}\right]\ind_{\left\{ \bm{z}\in\mathcal{S}_{i},\bm{z}_{0}\in\mathcal{S}_{i}\right\} }\right|\nonumber \\
 & +\left|\frac{1}{m}\sum_{i=1}^{m}\left(\bm{a}_{i}^{\top}\bm{z}\right)^{2}\left(\bm{a}_{i}^{\top}\bm{x}\right)^{2}\ind_{\left\{ \bm{z}\in\cS_{i},\bm{z}_{0}\notin\mathcal{S}_{i}\right\} }\right|+\left|\frac{1}{m}\sum_{i=1}^{m}\left(\bm{a}_{i}^{\top}\bm{z}_{0}\right)^{2}\left(\bm{a}_{i}^{\top}\bm{x}_{0}\right)^{2}\ind_{\left\{ \bm{z}_{0}\in\mathcal{S}_{i},\bm{z}\notin\mathcal{S}_{i}\right\} }\right|.\label{eq:uniform-I-3}
\end{align}
Here the decomposition is similar to what we have done in (\ref{eq:uniform-I-2}).
For the first term in (\ref{eq:uniform-I-3}), one has 
\begin{align*}
\left|\frac{1}{m}\sum_{i=1}^{m}\left[\left(\bm{a}_{i}^{\top}\bm{z}\right)^{2}\left(\bm{a}_{i}^{\top}\bm{x}\right)^{2}-\left(\bm{a}_{i}^{\top}\bm{z}_{0}\right)^{2}\left(\bm{a}_{i}^{\top}\bm{x}_{0}\right)^{2}\right]\ind_{\left\{ \bm{z}\in\mathcal{S}_{i},\bm{z}_{0}\in\mathcal{S}_{i}\right\} }\right| & \leq\frac{1}{m}\sum_{i=1}^{m}\left|\left(\bm{a}_{i}^{\top}\bm{z}\right)^{2}\left(\bm{a}_{i}^{\top}\bm{x}\right)^{2}-\left(\bm{a}_{i}^{\top}\bm{z}_{0}\right)^{2}\left(\bm{a}_{i}^{\top}\bm{x}_{0}\right)^{2}\right|\\
 & \leq n^{\alpha}\theta,
\end{align*}
for some $\alpha=O(1)$. Here the last line follows from the smoothness
of the function $g\left(\bm{x},\bm{z}\right)=\left(\bm{a}_{i}^{\top}\bm{z}\right)^{2}\left(\bm{a}_{i}^{\top}\bm{x}\right)^{2}$.
Proceeding to the second term in (\ref{eq:uniform-I-3}), we see from
(\ref{eq:third-identity-I}) that 
\[
\ind_{\left\{ \bm{z}\in\cS_{i},\bm{z}_{0}\notin\mathcal{S}_{i}\right\} }\leq\ind_{\left\{ c_{2}\sqrt{\log m}\leq\left|\bm{a}_{i}^{\top}\bm{z}_{0}\right|\leq c_{2}\sqrt{\log m}+\sqrt{6n}\theta\right\} },
\]
which implies that 
\begin{align*}
\left|\frac{1}{m}\sum_{i=1}^{m}\left(\bm{a}_{i}^{\top}\bm{z}\right)^{2}\left(\bm{a}_{i}^{\top}\bm{x}\right)^{2}\ind_{\left\{ \bm{z}\in\cS_{i},\bm{z}_{0}\notin\mathcal{S}_{i}\right\} }\right| & \leq\max_{1\leq i\leq m}\left(\bm{a}_{i}^{\top}\bm{z}\right)^{2}\ind_{\left\{ \bm{z}\in\cS_{i}\right\} }\left|\frac{1}{m}\sum_{i=1}^{m}\left(\bm{a}_{i}^{\top}\bm{x}\right)^{2}\ind_{\left\{ \bm{z}\in\cS_{i},\bm{z}_{0}\notin\mathcal{S}_{i}\right\} }\right|\\
 & \leq c_{2}^{2}\log m\left|\frac{1}{m}\sum_{i=1}^{m}\left(\bm{a}_{i}^{\top}\bm{x}\right)^{2}\ind_{\left\{ c_{2}\sqrt{\log m}\leq\left|\bm{a}_{i}^{\top}\bm{z}_{0}\right|\leq c_{2}\sqrt{\log m}+\sqrt{6n}\theta\right\} }\right|.
\end{align*}
With regard to the above quantity, we have the following claim. \begin{claim}\label{claim:uniform}With
probability at least $1-c_{2}e^{-c_{3}n\log m}$ for some constants
$c_{2},c_{3}>0$, one has 
\[
\left|\frac{1}{m}\sum_{i=1}^{m}\left(\bm{a}_{i}^{\top}\bm{x}\right)^{2}\ind_{\left\{ c_{2}\sqrt{\log m}\leq\left|\bm{a}_{i}^{\top}\bm{z}_{0}\right|\leq c_{2}\sqrt{\log m}+\sqrt{6n}\theta\right\} }\right|\lesssim\sqrt{\frac{n\log m}{m}}
\]
for all $\bm{x}\in\RR^{n}$ with unit norm and for all $\bm{z}_{0}\in\mathcal{N}_{\theta}$.
\end{claim}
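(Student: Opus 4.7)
The plan is to apply Bernstein's inequality pointwise, then union-bound over $\bm{z}_{0}\in\mathcal{N}_{\theta}$ and over a net of unit vectors $\bm{x}$, and finally extend from that net to all unit $\bm{x}$ by a Lipschitz estimate. The key structural observation is that the indicator event confines $|\bm{a}_{i}^{\top}\bm{z}_{0}|$ to an annulus of width $\sqrt{6n}\,\theta$ starting at the large threshold $c_{2}\sqrt{\log m}$; a standard Gaussian assigns probability at most $O\bigl(\sqrt{n}\,\theta\cdot m^{-c_{2}^{2}/2}\bigr)$ to this annulus, so once $c_{2}$ is chosen large enough, both the mean and the second moment of each summand become negligible relative to the target scale $\sqrt{n\log m/m}$.

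Concretely, fix unit vectors $\bm{x},\bm{z}_{0}$ and set $X_{i}:=(\bm{a}_{i}^{\top}\bm{x})^{2}\ind_{\{c_{2}\sqrt{\log m}\leq|\bm{a}_{i}^{\top}\bm{z}_{0}|\leq c_{2}\sqrt{\log m}+\sqrt{6n}\theta\}}$. Conditional on $\bm{a}_{i}^{\top}\bm{z}_{0}=t$, the variable $\bm{a}_{i}^{\top}\bm{x}$ is Gaussian with mean $t(\bm{z}_{0}^{\top}\bm{x})$ and variance at most one, so integrating $(t^{2}+1)\phi(t)$ and $(t^{4}+1)\phi(t)$ over the annulus yields both $\EE[X_{i}]$ and $\EE[X_{i}^{2}]$ much smaller than $\sqrt{n\log m/m}$ whenever $c_{2}$ is a sufficiently large absolute constant. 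Since also $\|X_{i}\|_{\psi_{1}}\lesssim 1$, the sub-exponential Bernstein inequality then gives
\begin{equation*}
\Bigl|\tfrac{1}{m}\sum_{i=1}^{m}X_{i}-\EE[X_{i}]\Bigr|\ \lesssim\ \sqrt{\tfrac{n\log m}{m}}
\end{equation*}
with probability at least $1-2e^{-Cn\log m}$ for any prescribed constant $C>0$, provided $c_{2}$ is taken large enough. Combined with $|\EE[X_{i}]|\ll\sqrt{n\log m/m}$, this yields the desired pointwise estimate.

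Next, apply a union bound over $\bm{z}_{0}\in\mathcal{N}_{\theta}$ (cardinality at most $(3/\theta)^{n}$) and over a $\theta'$-net of unit vectors $\bm{x}$; for $\theta,\theta'\geq n^{-O(1)}$, which holds in the intended application, picking $C$ sufficiently large leaves a net probability of at least $1-e^{-c_{3}n\log m}$. Because the indicator depends only on $\bm{z}_{0}$ and not on $\bm{x}$, extending from the $\theta'$-net to all unit $\bm{x}$ is a one-line Lipschitz argument using $|(\bm{a}_{i}^{\top}\bm{x})^{2}-(\bm{a}_{i}^{\top}\bm{x}')^{2}|\leq 2\max_{i}\|\bm{a}_{i}\|_{2}^{2}\cdot\|\bm{x}-\bm{x}'\|$ together with the already-established bound $\max_{i}\|\bm{a}_{i}\|_{2}\leq\sqrt{6n}$ from \eqref{eq:max-a-i-norm}; the discretization error is polynomial in $n$ times $\theta'$, which is absorbed by choosing $\theta'$ polynomially small. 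The main obstacle is really just the bookkeeping around $c_{2}$: it must be large enough that $m^{-c_{2}^{2}/2}$ dominates both the annulus-width factor $\sqrt{n}\,\theta$ and the exponential net cardinality $e^{Cn\log(1/\theta)}$. Since $c_{2}$ is a free absolute constant in the statement of the surrounding lemma, this causes no trouble in substance.
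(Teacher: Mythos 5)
Your proof follows the same architecture as the paper's: pointwise Bernstein at a fixed pair $(\bm{x}_{0},\bm{z}_{0})$ with $\tau\asymp\sqrt{n\log m/m}$, a union bound over the nets (over both $\bm{x}_{0}$ and $\bm{z}_{0}$, giving the $(1+2/\theta)^{2n}$ factor), and a Lipschitz extension in $\bm{x}$ using $\max_{i}\|\bm{a}_{i}\|_{2}\leq\sqrt{6n}$. The only substantive difference is how you bound the mean $\EE[X_{i}]$: you compute the conditional Gaussian law of $\bm{a}_{i}^{\top}\bm{x}$ given $\bm{a}_{i}^{\top}\bm{z}_{0}$ and integrate over the annulus, concluding that the tiny annulus probability $\lesssim\sqrt{n}\,\theta\, m^{-c_{2}^{2}/2}$ kills the mean once $c_{2}>1$; the paper instead uses Cauchy--Schwarz ($\EE[X_{i}]\leq\sqrt{\EE[(\bm{a}_{i}^{\top}\bm{x}_{0})^{4}]}\sqrt{\PP(\text{annulus})}$) and kills the mean by taking $\theta$ sufficiently small. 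Both are valid since the paper ultimately sets $\theta\asymp m^{-c_{1}}$ and requires $c_{2}>1$ anyway. Your conditional-Gaussian calculation is slightly finer, but it buys nothing here. One cosmetic muddle worth noting: when you attribute the balancing of the net cardinality $(3/\theta)^{n}$ to ``$c_{2}$ large enough,'' the correct mechanism is that you absorb a larger constant into the $\tau\asymp$ relation so that the Bernstein exponent $e^{-Cn\log m}$ beats $e^{O(n\log(1/\theta))}$ for $\theta$ polynomial in $m^{-1}$; the constant $c_{2}$ governs the annulus probability and hence $\EE[X_{i}]$, not the deviation exponent. You do seem to understand this (you invoke ``picking $C$ sufficiently large'') so this is a phrasing slip rather than a gap.
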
With this claim in place, we arrive at 
\[
\left|\frac{1}{m}\sum_{i=1}^{m}\left(\bm{a}_{i}^{\top}\bm{z}\right)^{2}\left(\bm{a}_{i}^{\top}\bm{x}\right)^{2}\ind_{\left\{ \bm{z}\in\cS_{i},\bm{z}_{0}\notin\mathcal{S}_{i}\right\} }\right|\lesssim c_{2}^{2}\sqrt{\frac{n\log^{3}m}{m}}
\]
with high probability. Similar arguments lead us to conclude that
with high probability 
\[
\left|\frac{1}{m}\sum_{i=1}^{m}\left(\bm{a}_{i}^{\top}\bm{z}_{0}\right)^{2}\left(\bm{a}_{i}^{\top}\bm{x}_{0}\right)^{2}\ind_{\left\{ \bm{z}_{0}\in\mathcal{S}_{i},\bm{z}\notin\mathcal{S}_{i}\right\} }\right|\lesssim c_{2}^{2}\sqrt{\frac{n\log^{3}m}{m}}.
\]
Taking the above bounds collectively and setting $\theta\asymp m^{-\alpha-1}$
yield with high probability for all unit vectors $\bm{z}$'s and $\bm{x}$'s
\[
\left|\frac{1}{m}\sum_{i=1}^{m}\left(\bm{a}_{i}^{\top}\bm{z}\right)^{2}\left(\bm{a}_{i}^{\top}\bm{x}\right)^{2}\ind_{\left\{ \left|\bm{a}_{i}^{\top}\bm{z}\right|\leq c_{2}\sqrt{\log m}\right\} }-\beta_{1}-\beta_{2}\left(\bm{z}^{\top}\bm{x}\right)^{2}\right|\lesssim c_{2}^{2}\sqrt{\frac{n\log^{3}m}{m}},
\]
which is equivalent to saying that 
\[
\theta_{1}\lesssim c_{2}^{2}\sqrt{\frac{n\log^{3}m}{m}}.
\]
The proof is complete by combining the upper bounds on $\theta_{1}$
and $\theta_{2}$, and the fact $\frac{1}{m}=o\left(\sqrt{\frac{n\log^{3}m}{m}}\right)$. 
\end{itemize}
\begin{proof}[Proof of Claim \ref{claim:uniform}]We first apply
the triangle inequality to get 
\begin{align*}
 \left|\frac{1}{m}\sum_{i=1}^{m}\left(\bm{a}_{i}^{\top}\bm{x}\right)^{2}\ind_{\left\{ c_{2}\sqrt{\log m}\leq\left|\bm{a}_{i}^{\top}\bm{z}_{0}\right|\leq c_{2}\sqrt{\log m}+\sqrt{6n}\theta\right\} }\right|  &\leq\underbrace{\left|\frac{1}{m}\sum_{i=1}^{m}\left(\bm{a}_{i}^{\top}\bm{x}_{0}\right)^{2}\ind_{\left\{ c_{2}\sqrt{\log m}\leq\left|\bm{a}_{i}^{\top}\bm{z}_{0}\right|\leq c_{2}\sqrt{\log m}+\sqrt{6n}\theta\right\} }\right|}_{:=J_{1}}\\
 &\hspace{-0.5in} +\underbrace{\left|\frac{1}{m}\sum_{i=1}^{m}\left[\left(\bm{a}_{i}^{\top}\bm{x}\right)^{2}-\left(\bm{a}_{i}^{\top}\bm{x}_{0}\right)^{2}\right]\ind_{\left\{ c_{2}\sqrt{\log m}\leq\left|\bm{a}_{i}^{\top}\bm{z}_{0}\right|\leq c_{2}\sqrt{\log m}+\sqrt{6n}\theta\right\} }\right|}_{:=J_{2}},
\end{align*}
where $\bm{x}_{0}\in\mathcal{N}_{\theta}$ and $\|\bm{x}-\bm{x}_{0}\|_{2}\leq\theta$.
The second term can be controlled as follows 
\[
J_{2}\leq\frac{1}{m}\sum_{i=1}^{m}\left|\left(\bm{a}_{i}^{\top}\bm{x}\right)^{2}-\left(\bm{a}_{i}^{\top}\bm{x}_{0}\right)^{2}\right|\leq n^{O\left(1\right)}\theta,
\]
where we utilize the smoothness property of the function $h\left(\bm{x}\right)=\left(\bm{a}_{i}^{\top}\bm{x}\right)^{2}$.
It remains to bound $J_{1}$, for which we first fix $\bm{x}_{0}$
and $\bm{z}_{0}$. Take the Bernstein inequality to get 
\begin{align*}
 & \PP\left(\left|\frac{1}{m}\sum_{i=1}^{m}\left(\bm{a}_{i}^{\top}\bm{x}_{0}\right)^{2}\ind_{\left\{ c_{2}\sqrt{\log m}\leq\left|\bm{a}_{i}^{\top}\bm{z}_{0}\right|\leq c_{2}\sqrt{\log m}+\sqrt{6n}\theta\right\} }-\EE\left[\left(\bm{a}_{i}^{\top}\bm{x}_{0}\right)^{2}\ind_{\left\{ c_{2}\sqrt{\log m}\leq\left|\bm{a}_{i}^{\top}\bm{z}_{0}\right|\leq c_{2}\sqrt{\log m}+\sqrt{6n}\theta\right\} }\right]\right|\geq\tau\right)\\
 & \leq2e^{-cm\tau^{2}}
\end{align*}
for some constant $c>0$ and any sufficiently small $\tau>0$. Taking
$\tau\asymp\sqrt{\frac{n\log m}{m}}$ reveals that with probability
exceeding $1-2e^{-Cn\log m}$ for some large enough constant $C>0$,
\[
J_{1}\lesssim\EE\left[\left(\bm{a}_{i}^{\top}\bm{x}_{0}\right)^{2}\ind_{\left\{ c_{2}\sqrt{\log m}\leq\left|\bm{a}_{i}^{\top}\bm{z}_{0}\right|\leq c_{2}\sqrt{\log m}+\sqrt{6n}\theta\right\} }\right]+\sqrt{\frac{n\log m}{m}}.
\]
Regarding the expectation term, it follows from Cauchy-Schwarz that
\begin{align*}
\EE\left[\left(\bm{a}_{i}^{\top}\bm{x}_{0}\right)^{2}\ind_{\left\{ c_{2}\sqrt{\log m}\leq\left|\bm{a}_{i}^{\top}\bm{z}_{0}\right|\leq c_{2}\sqrt{\log m}+\sqrt{6n}\theta\right\} }\right] & \leq\sqrt{\EE\left[\left(\bm{a}_{i}^{\top}\bm{x}_{0}\right)^{4}\right]}\sqrt{\EE\left[\ind_{\left\{ c_{2}\sqrt{\log m}\leq\left|\bm{a}_{i}^{\top}\bm{z}_{0}\right|\leq c_{2}\sqrt{\log m}+\sqrt{6n}\theta\right\} }\right]}\\
 & \asymp\EE\left[\ind_{\left\{ c_{2}\sqrt{\log m}\leq\left|\bm{a}_{i}^{\top}\bm{z}_{0}\right|\leq c_{2}\sqrt{\log m}+\sqrt{6n}\theta\right\} }\right]\\
 & \leq1/m,
\end{align*}
as long as $\theta$ is sufficiently small. Combining the preceding
bounds with the union bound, we can see that with probability at least
$1-2\left(1+\frac{2}{\theta}\right)^{2n}e^{-Cn\log m}$
\[
J_{1}\lesssim\sqrt{\frac{n\log m}{m}}+\frac{1}{m}.
\]
Picking $\theta\asymp m^{-c_{1}}$ for some large enough constant
$c_{1}>0$, we arrive at with probability at least $1-c_{2}e^{-c_{3}n\log m}$
\[
\left|\frac{1}{m}\sum_{i=1}^{m}\left(\bm{a}_{i}^{\top}\bm{x}\right)^{2}\ind_{\left\{ c_{2}\sqrt{\log m}\leq\left|\bm{a}_{i}^{\top}\bm{z}_{0}\right|\leq c_{2}\sqrt{\log m}+\sqrt{6n}\theta\right\} }\right|\lesssim\sqrt{\frac{n\log m}{m}}
\]
for all unit vectors $\bm{x}$'s and for all $\bm{z}_{0}\in\mathcal{N}_{\theta}$,
where $c_{2},c_{3}>0$ are some absolute constants. \end{proof}

\section{Proof of Lemma \ref{lemma:Hessian-UB-Stage1}\label{sec:Proof-of-Lemma-Hessian-UB-Stage1}}

Recall that the Hessian matrix is given by 
\begin{align*}
\nabla^{2}f\left(\bm{z}\right) & =\frac{1}{m}\sum_{i=1}^{m}\left[3\left(\bm{a}_{i}^{\top}\bm{z}\right)^{2}-\left(\bm{a}_{i}^{\top}\bm{x}^{\natural}\right)^{2}\right]\bm{a}_{i}\bm{a}_{i}^{\top}.
\end{align*}
Lemma \ref{lemma:hessian-concentration} implies that with probability
at least $1-O\left(m^{-10}\right)$, 
\begin{equation}
\left\Vert \nabla^{2}f\left(\bm{z}\right)-6\bm{z}\bm{z}^{\top}-3\left\Vert \bm{z}\right\Vert _{2}^{2}\bm{I}_{n}+2\bm{x}^{\natural}\bm{x}^{\natural\top}+\left\Vert \bm{x}^{\natural}\right\Vert _{2}^{2}\bm{I}_{n}\right\Vert \lesssim\sqrt{\frac{n\log^{3}m}{m}}\max\left\{ \left\Vert \bm{z}\right\Vert _{2}^{2},\left\Vert \bm{x}^{\natural}\right\Vert _{2}^{2}\right\} \label{eq:nabla2-f-bound}
\end{equation}
holds simultaneously for all $\bm{z}$ obeying $\max_{1\leq i\leq m}\left|\bm{a}_{i}^{\top}\bm{z}\right|\leq c_{0}\sqrt{\log m}\left\Vert \bm{z}\right\Vert _{2}$,
with the proviso that $m\gg n\log^{3}m$. This together with the fact
$\left\Vert \bm{x}^{\natural}\right\Vert _{2}=1$ leads to 
\begin{align*}
-\nabla^{2}f\left(\bm{z}\right) & \succeq-6\bm{z}\bm{z}^{\top}-\left\{ 3\left\Vert \bm{z}\right\Vert _{2}^{2}-1+O\left(\sqrt{\frac{n\log^{3}m}{m}}\max\left\{ \left\Vert \bm{z}\right\Vert _{2}^{2},1\right\} \right)\right\} \bm{I}_{n}\\
 & \succeq-\left\{ 9\left\Vert \bm{z}\right\Vert _{2}^{2}-1+O\left(\sqrt{\frac{n\log^{3}m}{m}}\max\left\{ \left\Vert \bm{z}\right\Vert _{2}^{2},1\right\} \right)\right\} \bm{I}_{n}.
\end{align*}
As a consequence, if we pick $0<\eta<\frac{c_{2}}{\max\left\{ \left\Vert \bm{z}\right\Vert _{2}^{2},1\right\} }$
for $c_{2}>0$ sufficiently small, then $\bm{I}_{n}-\eta\nabla^{2}f\left(\bm{z}\right)\succeq\bm{0}$.
This combined with (\ref{eq:nabla2-f-bound}) gives 
\[
\left\Vert \left(\bm{I}_{n}-\eta\nabla^{2}f\left(\bm{z}\right)\right)-\left\{ \left(1-3\eta\left\Vert \bm{z}\right\Vert _{2}^{2}+\eta\right)\bm{I}_{n}+2\eta\bm{x}^{\natural}\bm{x}^{\natural\top}-6\eta\bm{z}\bm{z}^{\top}\right\} \right\Vert \lesssim\sqrt{\frac{n\log^{3}m}{m}}\max\left\{ \left\Vert \bm{z}\right\Vert _{2}^{2},1\right\} .
\]

Additionally, it follows from (\ref{eq:nabla2-f-bound}) that
\begin{align*}
\left\Vert \nabla^{2}f\left(\bm{z}\right)\right\Vert  & \leq\left\Vert 6\bm{z}\bm{z}^{\top}+3\left\Vert \bm{z}\right\Vert _{2}^{2}\bm{I}_{n}+2\bm{x}^{\natural}\bm{x}^{\natural\top}+\left\Vert \bm{x}^{\natural}\right\Vert _{2}^{2}\bm{I}_{n}\right\Vert +O\left(\sqrt{\frac{n\log^{3}m}{m}}\right)\max\left\{ \left\Vert \bm{z}\right\Vert _{2}^{2},\left\Vert \bm{x}^{\natural}\right\Vert _{2}^{2}\right\} \\
 & \leq9\|\bm{z}\|_{2}^{2}+3+O\left(\sqrt{\frac{n\log^{3}m}{m}}\right)\max\left\{ \left\Vert \bm{z}\right\Vert _{2}^{2},1\right\} \\
 & \leq10\|\bm{z}\|_{2}^{2}+4
\end{align*}
as long as $m\gg n\log^{3}m$. 

\section{Proof of Lemma \ref{lemma:consequence}\label{sec:Proof-of-Lemma-consequence}}

Note that when $t\lesssim\log n$, one naturally has 
\begin{equation}\label{eq:growth_logm}
\left(1+\frac{1}{\log m}\right)^{t}\lesssim1.
\end{equation}

Regarding the first set of consequences \eqref{subeq:consequence-norm}, one sees via the triangle
inequality that 
\begin{align*}
\max_{1\leq l\leq m}\big\Vert \bm{x}^{t,\left(l\right)}\big\Vert _{2} & \leq\left\Vert \bm{x}^{t}\right\Vert _{2}+\max_{1\leq l\leq m}\big\Vert \bm{x}^{t}-\bm{x}^{t,\left(l\right)}\big\Vert _{2}\\
 & \overset{\left(\text{i}\right)}{\leq}C_{5}+\beta_{t}\left(1+\frac{1}{\log m}\right)^{t}C_{1}\eta\frac{\sqrt{n\log^{5}m}}{m}\\
 & \overset{\left(\text{ii}\right)}{\leq}C_{5}+O\left(\frac{\sqrt{n\log^{5}m}}{m}\right)\\
 & \overset{\left(\text{iii}\right)}{\leq}2C_{5},
\end{align*}
where (i) follows from the induction hypotheses (\ref{eq:induction-xt-l})
and (\ref{eq:induction-norm-size}). The second inequality (ii) holds
true since $\beta_{t}\lesssim1$ and \eqref{eq:growth_logm}. The last one (iii) is valid as long
as $m\gg\sqrt{n\log^{5}m}$. Similarly, for the lower bound, one can
show that for each $1\leq l\leq m$, 
\begin{align*}
\big\Vert \bm{x}_{\perp}^{t,\left(l\right)} \big\Vert _{2} & \geq\left\Vert \bm{x}_{\perp}^{t}\right\Vert _{2}-\big\Vert \bm{x}_{\perp}^{t}-\bm{x}_{\perp}^{t,\left(l\right)}\big\Vert _{2}\\
 & \geq\left\Vert \bm{x}_{\perp}^{t}\right\Vert _{2}-\max_{1\leq l\leq m}\big\Vert \bm{x}^{t}-\bm{x}^{t,\left(l\right)}\big\Vert _{2}\\
 & \geq c_{5}-\beta_{t}\left(1+\frac{1}{\log m}\right)^{t}C_{1}\eta\frac{\sqrt{n\log^{3}m}}{m}\geq\frac{c_{5}}{2},
\end{align*}
as long as $m\gg\sqrt{n\log^{5}m}$. Using similar arguments ($\alpha_{t}\lesssim1$),
we can prove the lower and upper bounds for $\bm{x}^{t,\text{sgn}}$
and $\bm{x}^{t,\text{sgn},\left(l\right)}$.

For the second set of consequences \eqref{subeq:consequence-incoherence}, namely the incoherence consequences,
first notice that it is sufficient to show that the inner product
(for instance $|\bm{a}_{l}^{\top}\bm{x}^{t}|$) is upper bounded by
$C_{7}\log m$ in magnitude for some absolute constants $C_{7}>0$.
To see this, suppose for now 
\begin{equation}
\max_{1\leq l\leq m}\left|\bm{a}_{l}^{\top}\bm{x}^{t}\right|\leq C_{7}\sqrt{\log m}.\label{eq:consequnce-inner-product}
\end{equation}
One can further utilize the lower bound on $\left\Vert \bm{x}^{t}\right\Vert _{2}$
to deduce that 
\[
\max_{1\leq l\leq m}\big|\bm{a}_{l}^{\top}\bm{x}^{t}\big|\leq\frac{C_{7}}{c_{5}}\sqrt{\log m}\big\Vert \bm{x}^{t}\big\Vert _{2}.
\]
This justifies the claim that we only need to obtain bounds as in
(\ref{eq:consequnce-inner-product}). Once again we can invoke the
triangle inequality to deduce that with probability at least $1-O\left(m^{-10}\right)$,
\begin{align*}
\max_{1\leq l\leq m}\left|\bm{a}_{l}^{\top}\bm{x}^{t}\right| & \leq\max_{1\leq l\leq m}\big|\bm{a}_{l}^{\top}\big(\bm{x}^{t}-\bm{x}^{t,\left(l\right)}\big)\big|+\max_{1\leq l\leq m}\big|\bm{a}_{l}^{\top}\bm{x}^{t,\left(l\right)}\big|\\
 & \overset{\left(\text{i}\right)}{\leq}\max_{1\leq l\leq m}\left\Vert \bm{a}_{l}\right\Vert _{2}\max_{1\leq l\leq m}\big\Vert \bm{x}^{t}-\bm{x}^{t,\left(l\right)}\big\Vert _{2}+\max_{1\leq l\leq m}\left|\bm{a}_{l}^{\top}\bm{x}^{t,\left(l\right)}\right|\\
 & \overset{\left(\text{ii}\right)}{\lesssim}\sqrt{n}\beta_{t}\left(1+\frac{1}{\log m}\right)^{t}C_{1}\eta\frac{\sqrt{n\log^{5}m}}{m}+\sqrt{\log m} \max_{1\leq l\leq m}\big\Vert \bm{x}^{t,\left(l\right)}\big\Vert _{2}\\
 & \lesssim\frac{n\log^{5/2}m}{m}+C_{5}\sqrt{\log m}\lesssim C_{5}\sqrt{\log m}.
\end{align*}
Here, the first relation (i) results from the Cauchy-Schwarz inequality
and (ii) utilizes the induction hypothesis (\ref{eq:induction-xt-l}),
the fact (\ref{eq:max-a-i-norm}) and the standard Gaussian concentration,
namely, $\max_{1\leq l\leq m}\left|\bm{a}_{l}^{\top}\bm{x}^{t,\left(l\right)}\right|\lesssim\sqrt{\log m}\max_{1\leq l\leq m}\big\Vert \bm{x}^{t,\left(l\right)}\big\Vert _{2}$
with probability at least $1-O\left(m^{-10}\right)$. The last line
is a direct consequence of the fact (\ref{eq:consequence-norm-xt-l})
established above and \eqref{eq:growth_logm}. In
regard to the incoherence w.r.t.~$\bm{x}^{t,\text{sgn}}$, we resort
to the leave-one-out sequence $\bm{x}^{t,\text{sgn},\left(l\right)}$.
Specifically, we have 
\begin{align*}
\left|\bm{a}_{l}^{\top}\bm{x}^{t,\text{sgn}}\right| & \leq\left|\bm{a}_{l}^{\top}\bm{x}^{t}\right|+\left|\bm{a}_{l}^{\top}\big(\bm{x}^{t,\text{sgn}}-\bm{x}^{t}\big)\right|\\
 & \leq\left|\bm{a}_{l}^{\top}\bm{x}^{t}\right|+\left|\bm{a}_{l}^{\top}\big(\bm{x}^{t,\text{sgn}}-\bm{x}^{t}-\bm{x}^{t,\text{sgn},\left(l\right)}+\bm{x}^{t,\left(l\right)}\big)\right|+\left|\bm{a}_{l}^{\top}\big(\bm{x}^{t,\text{sgn},\left(l\right)}-\bm{x}^{t,\left(l\right)}\big)\right|\\
 & \lesssim\sqrt{\log m}+\sqrt{n}\alpha_{t}\left(1+\frac{1}{\log m}\right)^{t}C_{4}\frac{\sqrt{n\log^{9}m}}{m}+\sqrt{\log m}\\
 & \lesssim\sqrt{\log m}.
\end{align*}
The remaining incoherence conditions can be obtained through similar
arguments. For the sake of conciseness, we omit the details here. 

With regard to the third set of consequences \eqref{eq:consequence-upper-bounds}, we can directly use
the induction hypothesis and obtain 
\begin{align*}
\max_{1\leq l\leq m}\big\Vert \bm{x}^{t}-\bm{x}^{t,\left(l\right)}\big\Vert _{2} & \leq\beta_{t}\left(1+\frac{1}{\log m}\right)^{t}C_{1}\frac{\sqrt{n\log^{3}m}}{m}\\
 & \lesssim\frac{\sqrt{n\log^{3}m}}{m}\lesssim\frac{1}{\log m},
\end{align*}
as long as $m\gg\sqrt{n\log^{5}m}$. Apply similar arguments to get
the claimed bound on $\|\bm{x}^{t}-\bm{x}^{t,\text{sgn}}\|_{2}$.
For the remaining one, we have 
\begin{align*}
\max_{1\leq l\leq m}\left|x_{\parallel}^{t,\left(l\right)}\right| & \leq\max_{1\leq l\leq m}\left|x_{\parallel}^{t}\right|+\max_{1\leq l\leq m}\left|x_{\parallel}^{t,\left(l\right)}-x_{\parallel}^{t}\right|\\
 & \leq\alpha_{t}+\alpha_{t}\left(1+\frac{1}{\log m}\right)^{t}C_{2}\eta\frac{\sqrt{n\log^{12}m}}{m}\\
 & \leq2\alpha_{t},
\end{align*}
with the proviso that $m\gg\sqrt{n\log^{12}m}$.

\section{Proof of Theorem \ref{thm:dependent-init}\label{sec:Proof-of-Theorem-dependent}}

A key observation is that in the proof of Theorem \ref{thm:main}, we do not require
independence between $\bm{x}^{0}$ and the data $\{\bm{a}_{i,}y_{i}\}_{1\leq i\leq m}$.
Instead, what we really need are: 
\begin{enumerate}
\item $\bm{x}^{0,\text{sgn}}$ is independent of $\{\xi_{i}=\text{sgn}(a_{i,1})\}_{1\leq i\leq m}$; 
\item $\bm{x}^{0,(l)}$ is independent of $(\bm{a}_{l},y_{l})$ for all
$1\leq l\leq m$ and 
\item $\bm{x}^{0,\text{sgn},(l)}$ is independent of both $\{\xi_{i}\}_{1\leq i\leq m}$
and $\{\bm{a}_{l},y_{l}\}$ for all $1\leq l\leq m$.
\end{enumerate}
With this observation in mind, one can see that the claim on the convergence
holds true as long as the initialization $\bm{x}^{0}$ satisfies (\ref{eq:initilization-condition})
and we can construct $\bm{x}^{0,\text{sgn}}$, $\bm{x}^{0,(l)}$ and
$\bm{x}^{0,\text{sgn},(l)}$, which obey the required independence
mentioned above as well as the base case specified in (\ref{subeq:induction}). In the
following, we show that for \[
	\bm{x}^{0} = \sqrt{\frac{1}{m}\sum_{i=1}^{m}y_{i}} \cdot \bm{u},
\]
where $\bm{u}$ is uniformly distributed over the unit sphere, the requirements can all be satisfied.
\begin{enumerate}
\item The first restriction (\ref{eq:initilization-condition}) can be easily verified by concentration
inequalities for spherical distribution and the fact that $\frac{1}{m}\sum_{i=1}^{m}y_{i}$
sharply concentrates around $\|\bm{x}^{\natural}\|_{2}^{2}$.
\item Next, we move on to demonstrating how to construct $\bm{x}^{0,\text{sgn}}$,
$\bm{x}^{0,(l)}$ and $\bm{x}^{0,\text{sgn},(l)}$ with prescribed
independence. In view of the initialization, we have 
\[
\bm{x}^{0}=\lambda\cdot\bm{u},
\]
where $\bm{u}$ is a unit vector uniformly distributed over the unit
sphere in $\mathbb{R}^{n}$ and $\lambda=\sqrt{\sum_{i=1}^{m}y_{i}/m}$.
Moreover, one has $\lambda$ is independent of $\bm{u}$. This together with the fact
that 
\[
y_{i}=\left(\bm{a}_{i}^{\top}\bm{x}^{\natural}\right)^{2}=\left|a_{i,1}\right|^{2}
\]
reveals that $\lambda$ depends on $\{|a_{i,1}|\}_{1\leq i\leq m}$
only and $\bm{u}$ is independent of the data $\{\bm{a}_{i},y_{i}\}_{1\leq i\leq m}$.
Therefore, one can set 
\[
\bm{x}^{0,(l)}=\lambda^{(l)}\cdot\bm{u},
\]
where $\bm{u}$ is the same vector as in $\bm{x}^{0}$ and $\lambda^{(l)}=\sqrt{\sum_{i:i\neq l}^{m}y_{i}/m}$.
One can see from this construction that $\bm{x}^{0,(l)}$is independent of $\{\bm{a}_{l},y_{l}\}$.
Regarding $\bm{x}^{0,\text{sgn}}$ and $\bm{x}^{0,\text{sgn},(l)}$,
we set 
\[
\bm{x}^{0,\text{sgn}}=\bm{x}^{0},\qquad\text{and}\qquad\bm{x}^{0,\text{sgn},(l)}=\bm{x}^{0,(l)}.
\]
Since $\bm{x}^{0}$ is independent of $\{\xi_{i}=\text{sgn}(a_{i,1})\}_{1\leq i\leq m}$,
so is $\bm{x}^{0,\text{sgn}}$. The same reasoning can be applied
to show independence between $\bm{x}^{0,\text{sgn},(l)}$ and $\{\xi_{i}\}_{1\leq i\leq m}$
and $\{\bm{a}_{l},y_{l}\}$. 
\item We are left with checking the base case, i.e. (\ref{subeq:induction}): 
\begin{enumerate}
\item For the difference between $\bm{x}^{0}$ and $\bm{x}^{0,(l)}$, we
have 
\begin{align*}
\left\Vert \bm{x}^{0}-\bm{x}^{0,(l)}\right\Vert _{2} & =\left\Vert \lambda\bm{u}-\lambda^{(l)}\bm{u}\right\Vert _{2}=\left|\lambda-\lambda^{(l)}\right|\\
 & =\sqrt{\frac{1}{m}\sum_{i=1}^{m}y_{i}}-\sqrt{\frac{1}{m}\sum_{i:i\neq l}^{m}y_{i}}\\
 & =\frac{\frac{1}{m}y_{l}}{\sqrt{\frac{1}{m}\sum_{i=1}^{m}y_{i}}+\sqrt{\frac{1}{m}\sum_{i:i\neq l}^{m}y_{i}}},
\end{align*}
where the last relation holds due to the basic identity $\sqrt{a}-\sqrt{b}=(a-b)/(\sqrt{a}+\sqrt{b})$
for $a,b>0$. Noting that $\frac{1}{m}\sum_{i=1}^{m}y_{i}$ sharply
concentrates around 1 and $|y_{l}|\lesssim\log m$ with high probability,
one arrives at 
\[
\left\Vert \bm{x}^{0}-\bm{x}^{0,(l)}\right\Vert _{2}=\left|\lambda-\lambda^{(l)}\right|\lesssim\frac{\log m}{m}\leq\beta_{0}C_{1}\frac{\sqrt{n\log^{5}m}}{m}.
\]
This finishes the proof of (\ref{eq:induction-xt-l}). 
\item The base case for (\ref{eq:induction-xt-l-signal}) can be easily deduced due to 
\[
\left|x_{\parallel}^{0}-x_{\parallel}^{0,(l)}\right|\leq\left\Vert \bm{x}^{0}-\bm{x}^{0,(l)}\right\Vert _{2}\lesssim\frac{\log m}{m}\leq\alpha_{0}C_{2}\frac{\sqrt{n\log^{12}m}}{m}.
\]
\item By construction, we have $\bm{x}^{0,\text{sgn}}=\bm{x}^{0}$ and $\bm{x}^{0,\text{sgn},(l)}=\bm{x}^{0,(l)}$.
Therefore (\ref{eq:induction-xt-sgn}) and (\ref{eq:induction-double}) trivially hold.
\item The last two relations (\ref{eq:induction-norm-size}) and (\ref{eq:induction-norm-relative}) can be verified using (\ref{eq:initilization-condition}).
\end{enumerate}
\end{enumerate}
Combining all and repeating the proof of Theorem \ref{thm:main}, we finish the proof of Theorem \ref{thm:dependent-init}.

\end{document}